\documentclass{article}

\usepackage{microtype}
\usepackage{graphicx}
\usepackage{booktabs} 
\usepackage{xcolor}
\usepackage{setspace}
\usepackage{multirow}
\usepackage{multicol}
\usepackage{threeparttable}
\usepackage{tabulary}
\usepackage{colortbl}
\usepackage{ragged2e}
\usepackage{enumitem}
\usepackage{amsmath,bm}
\usepackage{xspace}
\usepackage{mathrsfs}  
\usepackage{graphicx}
\usepackage{wrapfig}
\usepackage[table]{xcolor}

\usepackage{hyperref}

\usepackage[accepted]{icml2026}

\usepackage{amsmath}
\usepackage{amssymb}
\usepackage{multirow}
\usepackage{adjustbox}
\usepackage{subcaption}
\usepackage{caption}
\usepackage{pifont}
\usepackage{etoc}

\usepackage{mathtools}
\usepackage{amsthm}

\usepackage[capitalize,noabbrev]{cleveref}

\usepackage{lipsum}
\usepackage{preamble}

\newcommand{\xxcomment}[4]{\textcolor{#1}{[$^{\textsc{#2}}_{\textsc{#3}}$ #4]}}

\newcommand{\xinyu}[1]{\xxcomment{orange}{Xinyu}{Y}{#1}}
\newcommand{\harry}[1]{\xxcomment{green}{huoyu}{G}{#1}}

\newcommand{\method}[1]{PerturbDiff\xspace}
\newcommand{\methodscratch}[1]{PerturbDiff (From Scratch)\xspace}
\newcommand{\methodfinetune}[1]{PerturbDiff (Finetuned)\xspace}

\theoremstyle{plain}
\newtheorem{theorem}{Theorem}[section]
\newtheorem{proposition}[theorem]{Proposition}
\newtheorem{lemma}[theorem]{Lemma}

\theoremstyle{definition}
\newtheorem{definition}[theorem]{Definition}

\theoremstyle{remark}
\newtheorem{remark}[theorem]{Remark}

\usepackage[textsize=tiny]{todonotes}

\icmltitlerunning{{\method&}: Functional Diffusion for Single-Cell Perturbation Modeling}

\begin{document}

\twocolumn[
\icmltitle{{\method&}: 
Functional Diffusion for Single-Cell Perturbation Modeling  
}



\icmlsetsymbol{equal}{*}
\icmlsetsymbol{corresponding}{$\dagger$}

\begin{icmlauthorlist}
\icmlauthor{Xinyu Yuan}{mila,udem,equal}
\icmlauthor{Xixian Liu}{mila,udem,equal}
\icmlauthor{Yashi Zhang}{mila,udem,equal}
\icmlauthor{Zuobai Zhang}{mila,udem}
\icmlauthor{Hongyu Guo}{ott,nrc}
\icmlauthor{Jian Tang}{mila,hec,cifar,corresponding}
\end{icmlauthorlist}

\icmlaffiliation{mila}{Mila - Qu\'ebec AI Institute}
\icmlaffiliation{udem}{University of Montr\'eal}
\icmlaffiliation{hec}{HEC Montr\'eal}
\icmlaffiliation{ott}{University of Ottawa}
\icmlaffiliation{nrc}{National Research Council of Canada}
\icmlaffiliation{cifar}{CIFAR AI Chair}

\icmlcorrespondingauthor{Jian Tang}{tangjian@mila.quebec}

\icmlkeywords{Machine Learning, ICML}

\vskip 0.3in
]



\printAffiliationsAndNotice{\icmlEqualContribution} 

\begin{abstract}

{Building \emph{Virtual Cells} that can accurately simulate cellular responses to perturbations is a long-standing goal in systems biology. A fundamental challenge is that high-throughput  single-cell sequencing is destructive: the same cell cannot be observed both before and after a perturbation. Thus, perturbation prediction requires mapping unpaired control and perturbed populations. Existing models address this by learning maps between distributions, but typically assume a single fixed response distribution when conditioned on observed cellular context (\emph{e.g.}, cell type) and the perturbation type. In reality, responses vary systematically due to unobservable latent factors such as microenvironmental fluctuations and complex batch effects, forming a \emph{manifold} of possible distributions for the same observed conditions.  To account for this variability, we introduce \method{}, which shifts modeling from individual cells to entire distributions. By embedding distributions as points in a Hilbert space, we define a diffusion-based generative process operating directly over probability distributions. This allows \method{} to capture population-level response shifts across hidden factors. 
Benchmarks on established datasets show that \method{} achieves state-of-the-art performance in single-cell response prediction and generalizes substantially better to unseen perturbations. See our \href{https://katarinayuan.github.io/PerturbDiff-ProjectPage/}{project page}, where \href{https://github.com/DeepGraphLearning/PerturbDiff}{code and data} will be made publicly available.
}

\if false
Each cell in the human body is an evolutionarily honed unit, sensing and reacting to its environment. 
Virtual cell models, which aim to predict  how individual cells  respond to  perturbations such as gene knockdown or drug treatment, have the potential to revolutionize our understanding of cellular behavior. 
Existing methods, however, focus on \textit{local} response variability between individual cells  while  implicitly assuming a fixed \textit{global} population-level response. As a result, they fail to account for hidden biological, chemical, and technical factors that reshape cellular responses across entire cell populations, giving rise to a family of possible response distributions.  
To address this, we introduce \method{}. \method{}  
explicitly model the population distribution itself 
using a diffusion-based framework defined in Hilbert space. This enables  \method{} to capture 
entire population response shifts across hidden factors, 
leading to improved generalization. 
Benchmarking on established datasets shows that \method{} achieves state-of-the-art accuracy in single-cell response prediction and  substantially improves generalization to previously unseen perturbations. 
\fi 
\end{abstract}

\vspace{-6mm}
\section{Introduction} 
\label{sec:intro}
\vspace{-3pt}

\begin{figure*}
\vspace{-8pt}
\centering
    \begin{minipage}[t]{\textwidth}
        \centering
\includegraphics[width=1\textwidth]{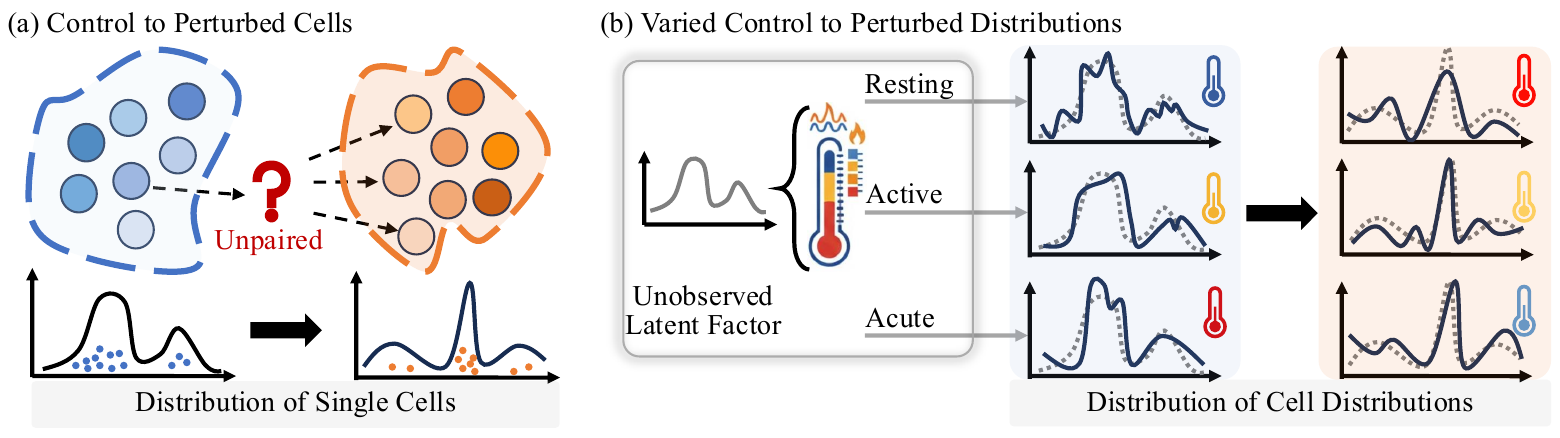}
\vspace{-15pt}
\caption{\small \textbf{Distributional variability in single-cell perturbation data.}
(a) Traditional methods operate on unpaired control and perturbed cells, learning to map a control cell distribution to a perturbed one.
(b) However, variations in cell distributions arise from unobserved latent factors, inducing a family of distinct cell distributions and shifting the objective to learning a distribution over cell distributions.}
        \label{fig:main_motivation}
    \end{minipage}
    \vspace{-10pt}
\end{figure*}

{The human body is an ensemble of cells, each evolutionarily shaped to respond to disturbances. Recent} 
advances in high-throughput single-cell perturbation sequencing have enabled systematic characterization of cellular responses to controlled interventions~\cite{zhang2025tahoe}. 
This gives a powerful tool to study relationships between diverse perturbations and phenotypic outcomes, including immune signaling through cytokines~\cite{biosciences10million}, drug responses for therapeutic discovery~\cite{zhang2025tahoe}, and genetic modifications to uncover gene function~\cite{nadig2025transcriptome}.

Nevertheless, although modern platforms can assay hundreds or even thousands of perturbation conditions, the space of possible interventions—spanning genes, drugs, dosages, and cellular contexts—grows combinatorially and remains far beyond what can be explored exhaustively. As a result, experimental measurements remain costly and labor-intensive.  This gap creates a strong need for computational methods that generalize beyond observed conditions to predict cellular responses. 
In response, \emph{virtual cell models}~\cite{bunne2024build} have emerged as a new frontier at the intersection of artificial intelligence and biology. These models aim
to simulate  cellular state changes under unseen perturbations by predicting transcriptomic gene expressions. 

\if false

\xinyu{challenges should be short}

Attaining this goal, however, requires these virtual cell models to overcome several fundamental challenges, as follows. 
\begin{itemize}
    \item 
First, destructive measurements. High-throughput single-cell sequencing destroys the cell being measured, making it impossible to directly pair the same cell before and after a perturbation  (Figure~\ref{fig:Challenges} a). 
    \item 
Second,  single-cell stochasticity. Even cells with highly similar transcriptomic profiles can respond differently to the same perturbation, influenced by hidden factors 
{within cells} such as chromatin accessibility, signaling state, or microenvironmental cues  (Figure~\ref{fig:Challenges} b). 
    \item 
Third, 
structured cellular 
heterogeneity. 
Different cell types or states within a sample may respond systematically differently to the same \textit{measured} perturbation, creating structured variability (e.g., multimodality) 
within the cellular population (Figure~\ref{fig:Challenges}c). 
    \item 
Fourth, distribution-level variability. Beyond individual cells or cell types, the overall response distribution can shift across experiments due to \textit{unmeasured}  biological or technical factors, such as variation in inflammation severity or experimental batch effects, resulting in a family of response distributions  (Figure~\ref{fig:Challenges}d). 
\xinyu{differentiate (b) and (d)}
\end{itemize}
\fi 

{Early methods, such as linear models~\cite{ahlmann2024deep} and foundation models like scGPT~\cite{cui2024scgpt}, often rely on performing regression between randomly paired control and perturbed cells. This paradigm is fundamentally limited because single-cell sequencing is destructive: the same cell cannot be observed both before and after perturbation. With no true cell-to-cell correspondence in the data, random pairing drives models to learn an average response across cells, thereby failing to capture cellular heterogeneity. To address this, more recent methods shift towards learning transitions between 
control and perturbed distributions. For example, STATE~\cite{adduri2025predicting} aligns populations using a kernel-based objective, CellFlow~\cite{klein2025cellflow} uses flow matching, and Squidiff~\cite{he2025squidiff} leverages diffusion-based modeling. While effective, these methods commonly assume that, conditioned on observed cellular context, the control or perturbed cell distribution is fixed (Fig.~\ref{fig:main_motivation}(a)). In reality, however, unobserved latent factors, such as microenvironmental fluctuations and complex batch effects,
introduce systematic variability that reshapes the underlying cell distributions (Fig.~\ref{fig:main_motivation}(b)). By overlooking this distribution variability, current methods tend to predict a static response distribution that marginalizes over these unobserved latents, limiting generalization to unseen perturbations or cellular contexts.}



To address this limitation, we introduce \method{}. 
By treating the population distribution itself as a random variable, \method{} models a family of plausible response distributions, corresponding to variability induced by unobserved latent factors.  
In particular,  
as illustrated in Fig.~\ref{fig:main_schema}, we first embed cell distributions into a reproducing kernel Hilbert space (RKHS)~\cite{berlinet2011reproducing}, where each distribution is represented by its kernel mean embedding as a single point in this function space. We then define a diffusion process directly over these function-valued representations, yielding a principled generative framework at the distribution level rather than the level of individual cells, as in existing methods. Importantly, this construction  naturally induces a Maximum Mean Discrepancy (MMD) objective for diffusion training, providing a direct and well-founded measure for aligning source and target cell populations. Compared to pointwise MSE-based losses in standard diffusions, the resulting MMD loss is better suited to sparse, high-dimensional single-cell features. 
Empirically, our method achieves state-of-the-art performance on large-scale signaling, drug and genetic perturbation benchmarks, including PBMC~\cite{biosciences10million}, Tahoe100M~\cite{zhang2025tahoe} and Replogle~\cite{nadig2025transcriptome}, demonstrating strong and well-balanced results across 14 diverse metrics (see Fig.~\ref{fig:exp_perturbation_radar}).

Furthermore, to address the scarcity of perturbed data, such as limited cell types, we introduce a novel pretraining strategy that 
learns marginal single-cell distributions by integrating perturbation data with large-scale unperturbed RNA-seq data, thereby covering a broader range of cellular states. 
Empirically, 
the pretrained model  exhibits non-trivial zero-shot predictive capability, and fine-tuning consistently outperforms training from scratch when data is scarce.


\newcommand{\bx}{\mathbf{x}}
\newcommand{\bI}{\mathrm{I}}
\vspace{-5pt}
\section{Preliminary}
\vspace{-3pt}

\subsection{Notations: Cells, Populations, and Distributions}
\label{sec:prelminary_notation}
\vspace{-3pt}

We consider a fixed set of genes $\mathcal{G}$, with size $|\mathcal{G}|$ ranging from thousands to tens of thousands. A \emph{{single cell}} is represented by its gene expression vector $\bx \in \mathbb{R}^{|\mathcal{G}|}$, where each dimension corresponds to the expression of one gene. The set of all possible cell profiles constitutes the \emph{cell space} $\mathcal{X} \!\subset\! \mathbb{R}^{|\mathcal{G}|}$. A finite collection of cells $X \!=\! \{\bx_1,\ldots,\bx_N\}$ is referred to as a \emph{cell population}. We use $P$ to denote a \emph{probability distribution} over $\mathcal{X}$, and let $\mathcal{P}(\mathcal{X})$ be the space of all such distributions.

\vspace{-5pt}
\subsection{Problem Formulation}
\vspace{-3pt}

In perturbation datasets, we observe two cell populations: the control cells $X_{\text{ctrl}}$ and the perturbed cells $X_{\text{pert}}$. Cells in $X_{\text{ctrl}}$ and $X_{\text{pert}}$ are unpaired: no cell-to-cell correspondence is assumed. Given $X_{\text{ctrl}}$, the goal of perturbation modeling is to predict the cellular response after perturbation.

Both $X_{\text{ctrl}}$ and $X_{\text{pert}}$ are associated with \emph{cellular context} labels $c$, which describes \emph{observable} experimental and biological conditions such as cell type, donor, experimental batch, or their combinations.
$X_{\text{pert}}$ is additionally associated with \emph{perturbation} labels $\tau$ (\emph{e.g.}, a cytokine, drug, or genetic intervention).
We use $P_c$ and $P_{c,\tau}$ to denote the (unknown) cell distributions conditioned on  $c$ and $(c,\tau)$, respectively.

\vspace{-5pt}
\subsection{Diffusion Models}
\vspace{-3pt}

Diffusion models~\citep{ho2020denoising, SONG2021SCOREBASED} are a family of generative models that learn data distributions from empirical samples by reversing a fixed forward Markov chain $q(\bx_t|\bx_{t-1})$ that gradually corrupts data $\bx_0 \sim p_{\text{data}}$. The transition kernel at timestep $t \in \{1,\ldots,T\}$ admits the closed form: $
q(\bx_t|\bx_0)\!=\!\mathcal{N}\!\bigl(\bx_t ; \gamma_t \bx_0, \sigma_t^2\mathbf{I}\bigr),$ where $\gamma_t$ and $\sigma_t$ define the noise schedule. One way to parameterize the reverse process is to reconstruct the clean sample $\bx_0$ from its noisy state $\bx_t$~\citep{salimans2022progressive}, given time $t$ and all the condition information $\eta$, trained via the simplified denoising objective~\cite{ho2020denoising}:
\vspace{-4pt}
\begin{equation}\label{eqn:mseloss}
\mathbb{E}_{\bx_0 \sim p_{\text{data}},\varepsilon \sim \mathcal{N}(0,\mathbf{I}), t}
\Bigl[
\bigl\|\bx_0 - \bx_\theta(\bx_t, t, \eta)\bigr\|_2^2
\Bigr].
\vspace{-4pt}
\end{equation}
To strengthen conditional fidelity, classifier-free guidance (CFG)~\citep{ho2022classifier} is widely used. The model is trained with a mixture of conditional and unconditional inputs by replacing $\eta$ with a null token $\varnothing$ with probability $p_{\text{drop}}$. At sampling time, the prediction is steered toward the condition by linearly extrapolating between the conditional and unconditional estimates:
$
\hat{\bx}_\theta(\bx_t, t, \eta)
= (1 + w)\,\bx_\theta(\bx_t, t, \eta)
  - w\,\bx_\theta(\bx_t, t, \varnothing),
$
  where $w \ge 0$ controls the guidance strength.

\vspace{-5pt}
\section{Related Work}
\vspace{-3pt}

Single-cell perturbation prediction has been extensively studied. Early methods perform deterministic cell-wise mapping via random pairing, differing mainly in modeling design. Linear models~\cite{ahlmann2024deep} regress perturbed expressions, offering a simple yet strong baseline. GEARS~\cite{roohani2024predicting} incorporates gene--gene interaction graphs. scGPT~\cite{cui2024scgpt} finetunes pretrained cell foundation models. Due to random pairing, these models are driven to capture average perturbation effects. 
Probabilistic conditional generative models treat single cells as random variables, capturing single-cell stochasticity. scGen~\cite{lotfollahi2019scgen} models perturbation as latent shifts in a VAE, CPA~\cite{lotfollahi2023predicting} further disentangles perturbation and biological covariates in latent space. Squidiff~\cite{he2025squidiff} applies diffusion models to capture nonlinear effects.
Population-level methods avoid random pairing by directly mapping control and perturbed distributions. CellOT~\cite{bunne2023learning} uses an optimal transport framework. STATE~\cite{adduri2025predicting} aligns populations via a kernel-based discrepancy objective. CellFlow~\cite{klein2025cellflow} learns a continuous-time flow via vector fields, while Unlasting~\cite{chi2025unlasting} adopts diffusion bridges to model stochastic transitions between distributions.

Most existing methods assume a single perturbed distribution $P_{c,\tau}$ conditioned on observed context $c$ and perturbation type $\tau$, implicitly marginalizing variability from unobserved factors. In contrast, we directly model the distribution-level variability driven by latent factors (Fig.~\ref{fig:main_motivation}(b)), by learning a diffusion process directly over the space of cell distributions.

\vspace{-5pt}
\section{Method}
\label{sec:method}
\vspace{-3pt}

We present {\method&}, a diffusion framework over cell distributions for perturbation prediction (Fig.~\ref{fig:main_schema}). We model \emph{distributional variability} by treating cell distributions as random variables (Sec.~\ref{sec:method_motivation}), represent them via kernel mean embeddings (Sec.~\ref{sec:method_empirical}), and define diffusion over these embeddings (Sec.~\ref{sec:method_diffusion}) with a tractable training and sampling scheme (Sec.~\ref{sec:method_training_sampling}). We then describe the model architecture (Sec.~\ref{sec:method_architecture_design_conditioning}), a pretraining strategy using large-scale single-cell atlases (Sec.~\ref{sec:method_marginal_pretraining}), and conclude with a discussion (Sec.~\ref{sec:method_discussion}).

\begin{figure*}
\vspace{-8pt}
    \begin{minipage}[t]{1\textwidth}
        \centering
\includegraphics[width=\textwidth]{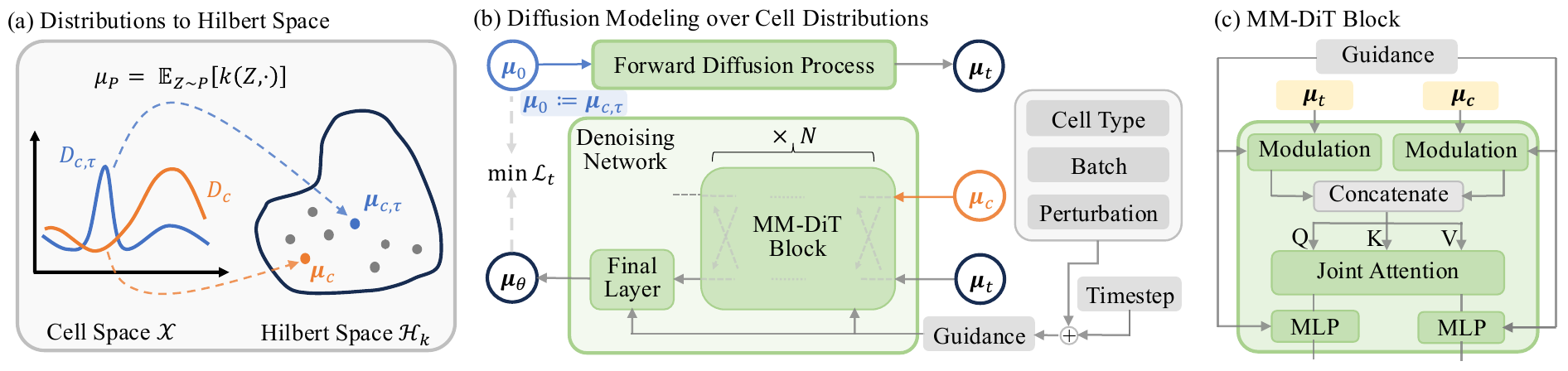}
\vspace{-15pt}
\caption{\small \textbf{Overview of the {\method&} framework.} (a) Distribution-valued random variables $D_{c,\tau}$ and $D_{c,\tau}$ in cell space are mapped to Hilbert-space elements $\boldsymbol{\mu}_{{c,\tau}}$ and $\boldsymbol{\mu}_{{c}} \!\in \!\mathcal{H}_k$ via kernel mean embedding. (b) Diffusion is defined on perturbed embeddings $\boldsymbol{\mu}_{0\!}\!:=\!\boldsymbol{\mu}_{c,\tau}$, with a denoising network predicting the target  $\boldsymbol{\mu}_\theta$. (c) Each MM-DiT block performs joint attention over control and perturbed token streams.}
        \label{fig:main_schema}
    \end{minipage}
    \vspace{-10pt}
\end{figure*}

\vspace{-5pt}
\subsection{Motivation}
\label{sec:method_motivation}
\vspace{-3pt}

We frame perturbation prediction as a conditional diffusion-based generative modeling problem. A key design choice in diffusion is the definition of the \emph{random variable} on which the stochastic process is defined. Most existing methods take a single-cell state $\bx \!\in\! \mathcal{X}$ as the random variable and learn a conditional distribution over cells~\cite{he2025squidiff}~\cite{klein2025cellflow}. Under this formulation, for a given observed condition $\!(c,\tau)$, these methods model a \emph{single} perturbed cell distribution $P_{c,\tau}$ (Fig.~\ref{fig:main_motivation}(a)). 
In practice, however, cell samples collected under the same $(c,\tau)$ can be influenced by unobservable latent factors, such as microenvironmental fluctuations and complex batch effects. These latents induce variability at the \emph{distribution level}, corresponding to a family of plausible response distributions associated with the same $(c,\tau)$ (Fig.~\ref{fig:main_motivation}(b)). Existing methods do not model this \emph{distributional variability}, but collapse heterogeneous responses into a single distribution $P_{c,\tau}$, potentially limiting generalization to unseen conditions.



\textbf{Cell distribution as random variable.} To capture such distributional variability, we shift the random variable from individual cells to cell populations. Concretely, instead of assuming a single perturbed distribution $P_{c,\tau}$, we consider the perturbed population to be a 
random variable $D_{c,\tau\!}$ taking values in $\mathcal{P}(\mathcal{X})$. The realizations of $D_{c,\tau\!}$ are cell distributions induced by unobserved latents.
Analogously, we denote the corresponding control distribution-valued random variable by $D_{c}$. Under this view, $D_{c,\tau\!}$ captures the family of possible perturbed distributions observed under the same condition $\!(c,\tau)$; the commonly learned target $P_{c, \tau}$ corresponds to the expectation of this random variable $\mathbb{E}[D_{c,\tau}]$. 

\textbf{Goal.} We aim to model the conditional law of this perturbed distribution-valued random variable conditioned on the control one, with $\mathcal{F}_\theta$ defining a diffusion-based generative process on the space of distributions:
\vspace{-4pt} \begin{equation} D_{c,\tau} \sim \mathcal{F}_\theta(\cdot \mid D_{c}, c, \tau). \vspace{-4pt} 
\label{eqn:F_param_diffusion_abstraction}
\end{equation}

\vspace{-5pt}
\subsection{Representing Cell Distributions}
\label{sec:method_empirical}
\vspace{-3pt}

\textbf{Empirical distributions from cell batches.} 
While $D_{c,\tau}$ is conceptually a distribution-valued random variable supported on $\mathcal{P}(\mathcal{X})$, it is not directly observable. In practice, experiments under a fixed condition $(c,\tau)$ provide only \emph{finite batches of cells}, each offering a partial and noisy view of the underlying perturbed population. Importantly, different batches might be implicitly influenced by different latent factors. 
Given a perturbed cell batch $B_{\text{pert}}=\{\bx_{1},\dots,\bx_{m}\}$, we construct the empirical distribution 
$\tilde P{_{\text{pert}}} := \frac{1}{m}\sum\nolimits_{j=1}^m \delta_{\bx_{j}}$,
which serves as a finite-sample Monte-Carlo estimate of a realization of $D_{c,\tau}$. Note that $\delta_\bx$ is the Dirac delta distribution at point $\bx$. Analogously, batches of control cells yield empirical distributions $\tilde P{_{\text{ctrl}}}$, corresponding to a realization of the control variable $D_{c}$. 

For a given condition $(c,\tau)$, the collections $\{\tilde P{_{\text{pert}}}\}$ and $\{\tilde P{_{\text{ctrl}}}\}$ provide observed samples of the distribution-valued objects $D_{c,\tau}$ and $D_{c}$, respectively. These empirical distributions form the data used to learn our generative model.

\textbf{Hilbert space representation of cell distributions.} To enable a diffusion process over distributions, we embed probability distributions into a reproducing kernel Hilbert space (RKHS)~\cite{berlinet2011reproducing} $\mathcal{H}_k$ induced by a positive-definite kernel $k: \mathcal{X} \times \mathcal{X} \to \mathbb{R}$. Each probability distribution $P \in \mathcal{P}(\mathcal{X})$ is mapped to its \emph{kernel mean embedding} ${\boldsymbol{\mu}}_P := \mathbb{E}_{Z\sim P}[k(Z, \cdot)] \in \mathcal{H}_k$, which is well-defined under mild conditions (see App.~\ref{app:cell_dist_hilbert_space}). 
This mapping places distributions as points in a well-behaved function space, enabling operations between distributions such as interpolation and discrepancy directly in $\mathcal{H}_k$. We summarize the key properties below. 

\begin{remark}[\textbf{Geometric Properties}] \label{rem:geometry_kernel_mean_embedding} For distributions $P,Q$ on $\mathcal X$, kernel mean embeddings satisfy: \textbf{(i)} linearity: ${\boldsymbol{\mu}}_{\alpha P+(1-\alpha)Q}\!=\!\alpha{\boldsymbol{\mu}}_P\!+\!(1\!-\!\alpha){\boldsymbol{\mu}}_Q$; \textbf{(ii)} kernel geometry: $\langle{\boldsymbol{\mu}}_P, {\boldsymbol{\mu}}_Q\rangle_{\mathcal{H}_k}\!=\! \mathbb{E}_{Z\sim P, Z'\sim Q} [k(Z, Z’)]$; \textbf{(iii)} induced distance: $\!\|{\boldsymbol{\mu}}_P\!-\!{\boldsymbol{\mu}}_Q\|_{\mathcal H_k}^2\!=\!\text{MMD}^2_k(P\!,Q),$ where $\text{MMD}$ denotes Maximum Mean Discrepancy~\cite{borgwardt2006integrating}.
\vspace{-16pt}
\end{remark} 

Applying the embedding to $D_{c,\tau}\!$ and $D_{c}$ yields random elements $\mathbf{{\boldsymbol{\mu}}}_{D_{c,\tau}}\!$ and $\mathbf{{\boldsymbol{\mu}}}_{D_{c}}\!$ taking values in $\mathcal{H}_k$, which define the state space for our diffusion model (denoted ${\boldsymbol{\mu}}_{c,\tau}\!$ and ${\boldsymbol{\mu}}_{c}\!$ for brevity). Accordingly, 
the observed data for training, which is originally a collection of empirical cell distributions $\!\{\tilde P_{{\text{pert}}}\}$ and $\!\{\tilde P_{{\text{ctrl}}}\}$, is mapped into kernel mean embeddings in $\mathcal{H}_k$, $\{{\boldsymbol{\mu}}_{\tilde P_{{\text{pert}}}}\}$ and $\{{\boldsymbol{\mu}}_{\tilde P_{{\text{ctrl}}}}\}$, respectively. We discuss tractable training and sampling of $\boldsymbol{\mu}_{c,\tau}$ and $\boldsymbol{\mu}_{c}$ in Sec.~\ref{sec:method_training_sampling}.

\vspace{-5pt}
\subsection{Diffusion Modeling on Cell Distributions}
\label{sec:method_diffusion}
\vspace{-3pt}


\textbf{Forward diffusion process over cell distributions.} 
We construct a diffusion model directly in the RKHS $\mathcal{H}_k$, designed to mirror the standard DDPM~\cite{ho2020denoising}, but operating on \emph{kernel mean embeddings of cell distributions} rather than on finite-dimensional vectors. The forward diffusion process starts from the perturbed embedding ${\boldsymbol{\mu}}_0:=\mathbf{{\boldsymbol{\mu}}}_{{c,\tau}}$, and progressively injects noise to transform it towards a Gaussian-like reference measure\footnote{A \emph{measure} refers to a probability distribution, not a density.} in $\mathcal{H}_k$.


In Euclidean space, sampling noise from Gaussian distributions is tractable and straightforward. However, $\calH_k$ is infinite-dimensional. \emph{Gaussian random elements}, which are generalized Gaussian random variables that take values in $\calH_k$, are easy to characterize with a mean element and covariance operator (see App.~\ref{app:Gaussian_random_element_and_measure}).

\begin{definition}[Forward diffusion process]
Let $\{\beta_t\}_{t=1}^T\subset(0,1)$ be a variance schedule and let $C:\mathcal H_k\to\mathcal H_k$ be a self-adjoint, positive semi-definite, trace-class covariance operator.
We define the forward Markov chain
\vspace{-2.2pt}
\begin{equation}
{\boldsymbol{\mu}}_t=\sqrt{1-\beta_t}{\boldsymbol{\mu}}_{t-1}+\sqrt{\beta_t}\,\Xi_t,
\quad
\Xi_t\sim\mathcal N_{\mathcal H_k}(0,C),
\vspace{-2.5pt}
\end{equation}
where each $\Xi_t$ is a Gaussian random element in $\mathcal H_k$ following the Gaussian measure $\mathcal N_{\mathcal H_k}(0, C)$ (App.~\ref{app:diffusion_forward_rkhs}). We discuss tractable sampling of $\Xi_t$ in Sec.~\ref{ssec:approx_hs_noise}.
\vspace{-3pt}
\end{definition}

By closure of Gaussian measures under affine transformation and summation (Lem.~\ref{lemma:affine} and Lem.~\ref{lemma:sum_independent}) in $\mathcal{H}_k$, the forward process remains Gaussian across steps $t$, yielding the closed-form marginal
\vspace{-3pt}
\begin{equation}
{\boldsymbol{\mu}}_t\mid{\boldsymbol{\mu}}_0\sim\mathcal N_{\mathcal H_k}\big(\sqrt{\alpha_t}{\boldsymbol{\mu}}_0,(1-\alpha_t)C\big),
\vspace{-3pt}
\end{equation}
where $\alpha_t:=\prod_{s=1}^t(1-\beta_s)$.
Thus, this establishes a diffusion process over cell distributions, seamlessly extending the DDPM framework to $\mathcal{H}_k$.

\textbf{Conditional reverse process and variational objective.} Given the forward diffusion defined above, the true reverse conditional $P_{t-1\mid t,0}({\boldsymbol{\mu}}_{t-1} | {\boldsymbol{\mu}}_t, {\boldsymbol{\mu}}_0)$ admits a closed-form Gaussian measure in $\mathcal{H}_k$, whose mean is a linear combination of ${\boldsymbol{\mu}}_t$ and ${\boldsymbol{\mu}}_0$ and whose covariance is determined by the forward noise schedule (detailed in Prop.~\ref{prop:true_reverse_conditional}). 

Directly sampling from this conditional distribution is intractable since ${\boldsymbol{\mu}}_0$ is unknown during generation. As in DDPM, we therefore adopt a variational formulation and parameterize the reverse process using a learnable mean function ${\boldsymbol{\mu}}_\theta(\cdot)$ (details in App.~\ref{app:diffusion_backward_rkhs}). This yields a simplified denoising objective, expressed in the RKHS norm:
\vspace{-8pt}
\begin{equation}
    \mathcal{L}_{t} \propto ||{\boldsymbol{\mu}}_0 - {\boldsymbol{\mu}}_{\theta}({\boldsymbol{\mu}}_t, t)||^2_{\mathcal{H}_k}.
\end{equation}
To model perturbation responses, we  extend the unconditional reverse process above to a conditional one by incorporating the control embedding ${\boldsymbol{\mu}}_{c}$, and the covariates $(c,\tau)$. We implement this via classifier-free guidance~\cite{ho2022classifier}, which leads to  a conditional denoising objective (full derivations in App.~\ref{app:diffusion_backward_rkhs}):
\vspace{-5pt}
\begin{equation}
\mathcal{L}_{t}
\;\propto\;
\big\|
{\boldsymbol{\mu}}_0 - {\boldsymbol{\mu}}_\theta({\boldsymbol{\mu}}_t, t, {\boldsymbol{\mu}}_{c}, c, \tau)
\big\|_{\mathcal{H}_k}^2.
\vspace{-2pt}
\label{eqn:conditional_Hk_objective}
\end{equation}
This reverse process enables stochastic, conditional generation of perturbed cell distributions, while remaining well-defined in the infinite-dimensional RKHS. 


 

\vspace{-5pt}
\subsection{Training and Sampling}
\label{sec:method_training_sampling}
\vspace{-3pt}

\textbf{Distribution-aware loss via cell batches.} Our training objective (Eqn.~\eqref{eqn:conditional_Hk_objective}) is defined in $\mathcal{H}_k$, which is infinite-dimensional and therefore intractable. We thus instantiate all \emph{kernel mean embedding random elements} appearing in the objective (namely $\boldsymbol{\mu}_0, \boldsymbol{\mu}_{c}$, and $\boldsymbol{\mu}_{\theta}$) via empirical cell batches. Recall that $\boldsymbol{\mu}_0$ is the perturbed kernel mean embedding $\boldsymbol{\mu}_{c,\tau}$. A perturbed batch $B_{\text{pert}}$ of size $m$ yields a finite-sample realization of this random element by inducing an empirical distribution $\tilde P_{\text{pert}}$, whose empirical kernel mean embedding $\boldsymbol{\mu}_{\tilde P_{\text{pert}}}$ acts as a Monte Carlo estimate of $\boldsymbol{\mu}_{c,\tau}$, converging as $m \to \infty$. Analogously, the control embedding $\boldsymbol{\mu}_{c}$ is instantiated from a control batch $B_{\text{ctrl}}$. 
And we let a neural network output a batch of predicted perturbed cells $B_\theta = \{\mathbf{x}^\theta_{1},\dots,\mathbf{x}^\theta_{m}\}$, which is associated with $\tilde P_\theta$, to model $\boldsymbol{\mu}_\theta$. Leveraging the RKHS distance property in Rem.~\ref{rem:geometry_kernel_mean_embedding}, the reverse objective admits an exact equivalence:
\vspace{-1pt}
\begin{equation}
    ||{\boldsymbol{\mu}}_0 - {\boldsymbol{\mu}}_{\theta}({\boldsymbol{\mu}}_t, t, {\boldsymbol{\mu}}_{\mathbf{P}_c}, c, \tau)||_{\mathcal{H}_k}^2 \!=\! \text{MMD}_k^2(\tilde P_{{\text{pert}}}, \tilde P_{{\theta}}).
    \label{eqn:equivelant_mmd}
    \vspace{-1pt}
\end{equation}
As a result, training reduces to matching the predicted and real cell populations via
$\text{MMD}$, providing a distribution-aware objective beyond pointwise cell-wise
reconstruction. Network details for constructing $B_\theta$ are deferred to Sec.~\ref{sec:method_architecture_design_conditioning}.

\textbf{Tractable approximation of Gaussian noise in RKHS.}\label{ssec:approx_hs_noise} Our diffusion process is defined over kernel mean embeddings in $\mathcal{H}_k$, which requires adding Gaussian random elements $\Xi_t$ during the forward noising process. Direct sampling from function-valued Gaussian measures is intractable. 
Therefore, we construct a tractable approximation by injecting additive Gaussian noise independently to each cell in the original space and recomputing the empirical kernel mean embeddings.
As proved in App.~\ref{app:approximation_loss_and_noise}, under a first-order linearization of the kernel feature map, this procedure induces an $\mathcal{H}_k$-valued Gaussian random element whose covariance operator can be characterized explicitly (Prop.~\ref{prop:exact_covariance_form}). This yields a theoretically grounded and tractable realization of the Gaussian random element required for diffusion noising.

\textbf{Multi-scale training objective.} In practice, we use a hybrid loss combining the distribution-aware loss (Eqn.~\eqref{eqn:equivelant_mmd}) with a standard DDPM loss that treats single cells as random variables to make our framework more compact: 
\vspace{-6pt}
\begin{equation}
\mathcal{L}_{\text{total}} = \text{MMD}_k^2(\tilde P_{{\text{pert}}}, \tilde P_{{\theta}}) + \frac{1}{m} \sum_{j=1}^m ||\bx_{j} - \bx_{j}^{\theta}||_2^2.
\vspace{-6pt}
\label{eqn:total_loss_mmd_mse}
\end{equation}
Empirically, we observe that optimization is dominated by the MMD term and the MSE term adds simple regularization on the global batch centroid (see Sec.~\ref{sec:exp_ablation}). 

\textbf{Sampling.} Since $\mathcal{H}_k$ is intractable, we perform deterministic DDIM~\cite{song2020denoising} sampling directly in cell space, consistent with our training procedure. 

\vspace{-5pt}
\subsection{Architecture Design} 
\label{sec:method_architecture_design_conditioning}
\vspace{-3pt}

As shown in Fig.~\ref{fig:main_schema}(c), our denoising network jointly encodes the perturbed and control cell batches $\!(B_{\text{pert}}, B_{\text{ctrl}})\!$ via a Multi-Modal Diffusion Transformer (MM-DiT)~\cite{esser2024scaling}, which maintains two parallel token streams that interact via joint attention in each block. This design represents perturbation effects as structured deviations from the control distribution. The timestep $t$ and covariate $(c, \tau)$ embeddings are combined and injected into every block via adaptive LayerNorm with zero initialization (AdaLN-Zero)~\cite{peebles2023scalable}, following standard DiT-style conditioning. Architectural details are provided in App.~\ref{app:architecture_details}.




\vspace{-5pt}
\subsection{Marginal Pretraining as a Prior}
\label{sec:method_marginal_pretraining}
\vspace{-3pt}

While perturbation datasets are often limited in cell type and experimental batch diversity, large-scale single-cell atlases span a much broader range of cell states across both observable and latent contexts. Although unperturbed, these data capture rich population structures and biologically plausible cell states, providing a broad prior over the cell manifold.

Motivated by this, we introduce a \emph{marginal pretraining} stage in {\method&}, where the model first learns the unperturbed cell distribution  conditioned solely on cellular context $c$,
\vspace{-3pt}
\begin{equation}
D_{c} \sim \mathcal{F}_\theta(\cdot \mid c),
\vspace{-3pt}
\end{equation}
before being finetuned on perturbation data to model perturbation-induced distributional shifts. As in Eqn.~\eqref{eqn:F_param_diffusion_abstraction}, $\mathcal{F}_\theta$ denotes the diffusion process defined in $\mathcal{H}_k$.

This stage uses all available training cells from perturbation datasets spanning dozens of cell types, alongside 61 million cells across hundreds of cell types from single-cell RNA-seq data curated in CellxGene~\cite{czi2025cz}. This two-stage paradigm improves data efficiency and generalization, particularly when perturbation data are scarce.

\vspace{-5pt}
\subsection{Discussion}
\label{sec:method_discussion}
\vspace{-3pt}

\begin{figure*}
\vspace{-10pt}
    \begin{minipage}[t]{0.70\textwidth}
        \centering
        \begin{subfigure}[b]{0.95\textwidth}            \centering
\includegraphics[width=\textwidth]{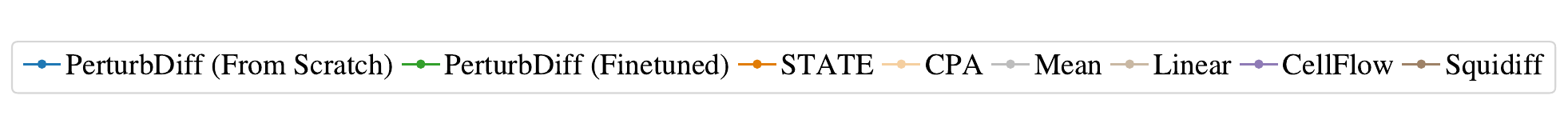}
\vspace{-1.1cm}
         \end{subfigure}
         
        \begin{subfigure}[b]{0.325\textwidth}            
\centering
\includegraphics[width=\textwidth]{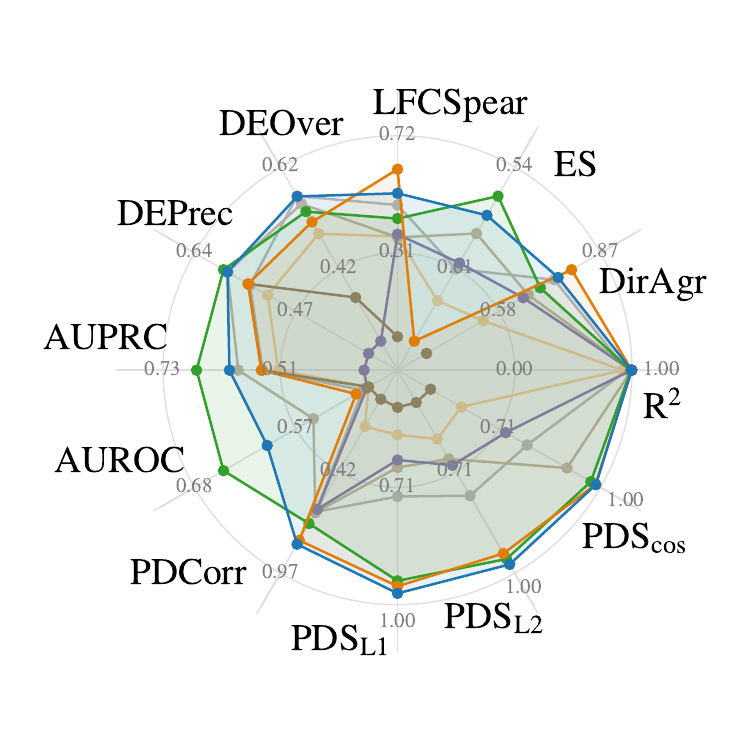}
\vspace{-1cm}
            \caption{PBMC.}
         \end{subfigure}
         \begin{subfigure}[b]{0.325\textwidth}            
\centering
\includegraphics[width=\textwidth]{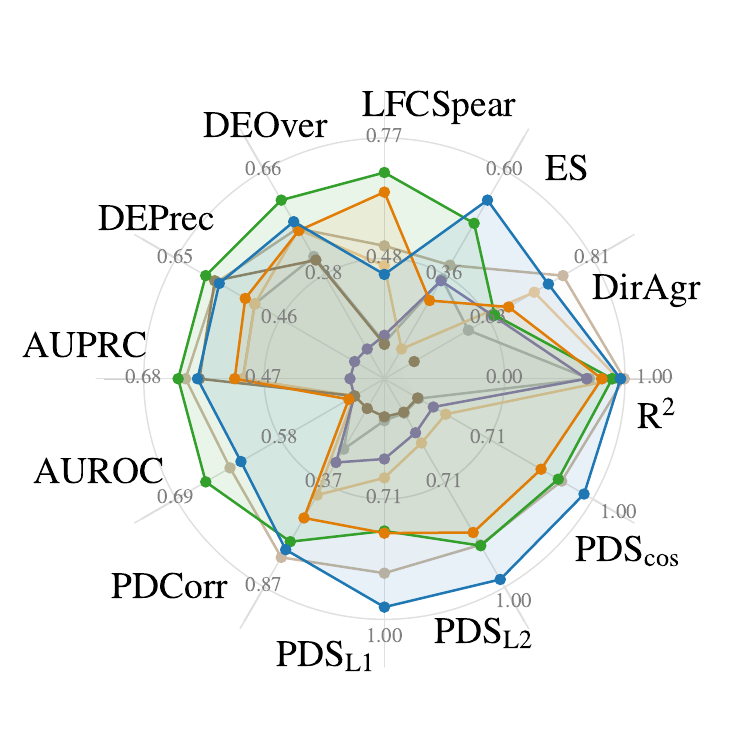}
\vspace{-1cm}
            \caption{Tahoe100M.}
         \end{subfigure}
         \begin{subfigure}[b]{0.325\textwidth}            
\centering
\includegraphics[width=\textwidth]{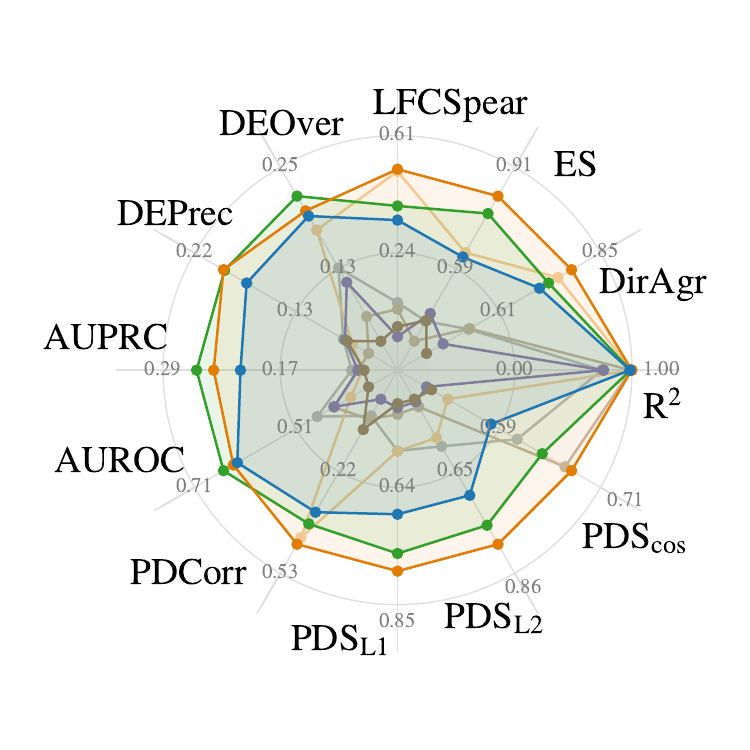}
\vspace{-1cm}
            \caption{Replogle.}
         \end{subfigure}
         \vspace{-4pt}
\caption{\small \textbf{Perturbation prediction results across methods, metrics, and datasets.} Each axis represents a performance metric, with higher values indicating better performance.}
        \label{fig:exp_perturbation_radar}
    \end{minipage}
    \begin{minipage}[t]{0.3\textwidth}
    \captionof{table}{\small \textbf{Data statistics.} We report the number of cells (\#Cells), perturbations (\#Pert.), cell types (\#CT.), and experimental batches (\#Batches) for each dataset, highlighting data diversity. See detailed data analyses in App.~\ref{app:data_analysis}, including zero-expression sparsity, sample imbalance, and distribution shifts.}
    \vspace{-2.5pt}
\label{tab:exp_data_stats}
    \begin{adjustbox}{max width=1\linewidth}
    \begin{tabular}{lcccc}
\toprule
Dataset & \#Cells & \#Pert. & \#CT. & \#Batches \\
\midrule
PBMC      & 9.7M & 90    & 18 & 12  \\
Tahoe100M & 101.2M & 1,137 & 50 & 14 \\
Replogle  & 0.6M & 2,023 & 4 & 56  \\
CellxGene & 60.9M & / & 662 & 10,887 \\ 

\bottomrule
\end{tabular}
\end{adjustbox}
    \end{minipage}
    \vspace{-15pt}
\end{figure*}

\textbf{Relationship with function space diffusion.} Existing functional diffusion models~\cite{kerrigan2022diffusion} aim to generate a \emph{single continuous function} (\emph{e.g.}, audio waves) from finite and noisy observations. In contrast, our method  generates \emph{cell  distributions}, which are represented by kernel mean embeddings in  RKHS, to solve perturbation predictions. Although both methods define diffusion processes in Hilbert space, they fundamentally differ in their modeling targets and the specific spaces they operate in (\emph{i.e.}, Sobolev spaces \emph{v.s.} RKHS). As a result, diffusion formulations, derivations, and training objectives are inherently different.



\textbf{Relationship with STATE.} STATE~\cite{adduri2025predicting}, as an emerging and increasingly adopted method, uses MMD for population matching to train transformers that align control and perturbed cell populations. Our work offers a fundamentally different modeling perspective. Rather than treating population matching as a deterministic objective, we formulate perturbation prediction as learning a \emph{stochastic distribution over cell distributions}. In our framework, MMD is not externally imposed, but arises naturally from the equivalence property of squared RKHS distance between kernel mean embeddings, which coincides with MMD by construction. We also discuss a score-matching~\cite{song2021maximum} perspective to interpret our MMD in App~\ref{ssec:scorematching}.

\vspace{-5pt}
\section{Experiment}
\vspace{-3pt}

\subsection{Experimental Setup}
\vspace{-3pt}

\begin{figure*}
\vspace{-8pt}
    \hspace{-6pt}
    \begin{minipage}[t]{0.49\textwidth}
        \centering
\begin{subfigure}[b]{0.43\textwidth}     
\centering
\includegraphics[width=\textwidth]
{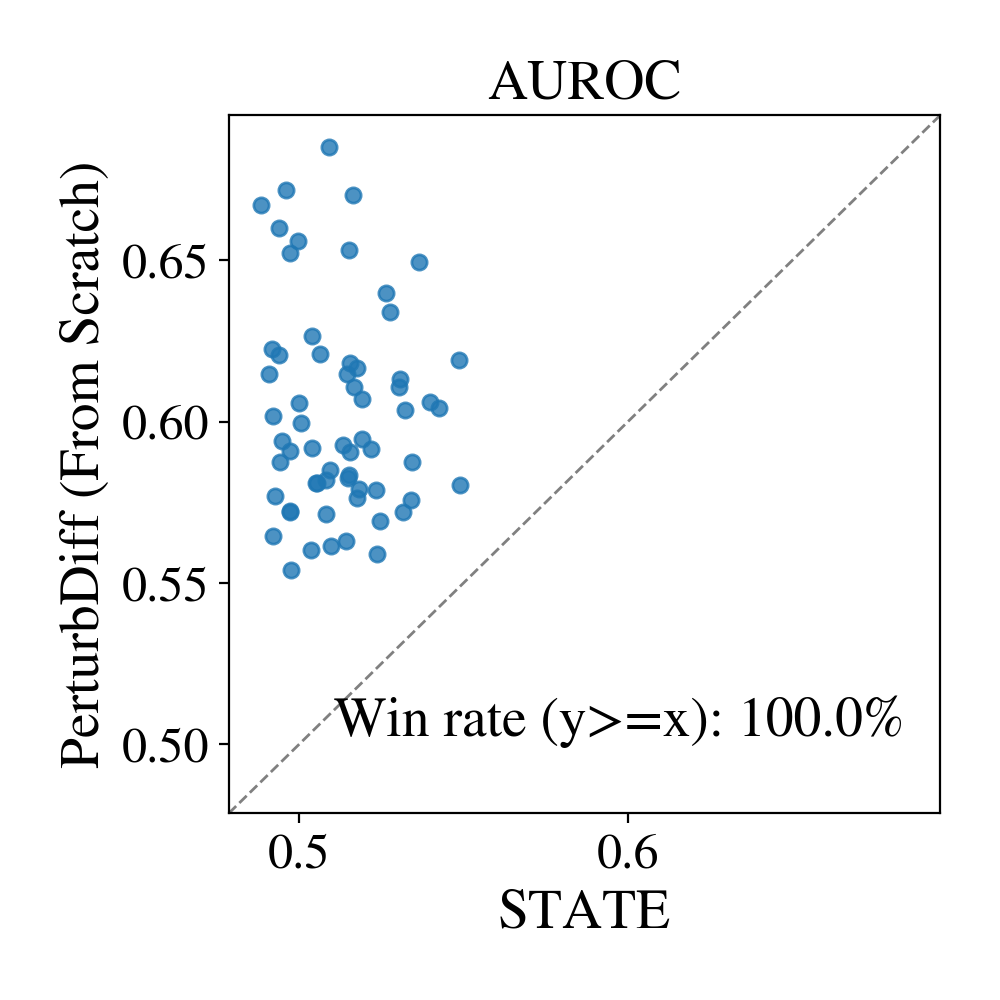}
\vspace{-0.8cm}
            \caption{PBMC.}
         \end{subfigure}
         \begin{subfigure}[b]{0.43\textwidth}           \centering\includegraphics[width=\textwidth]{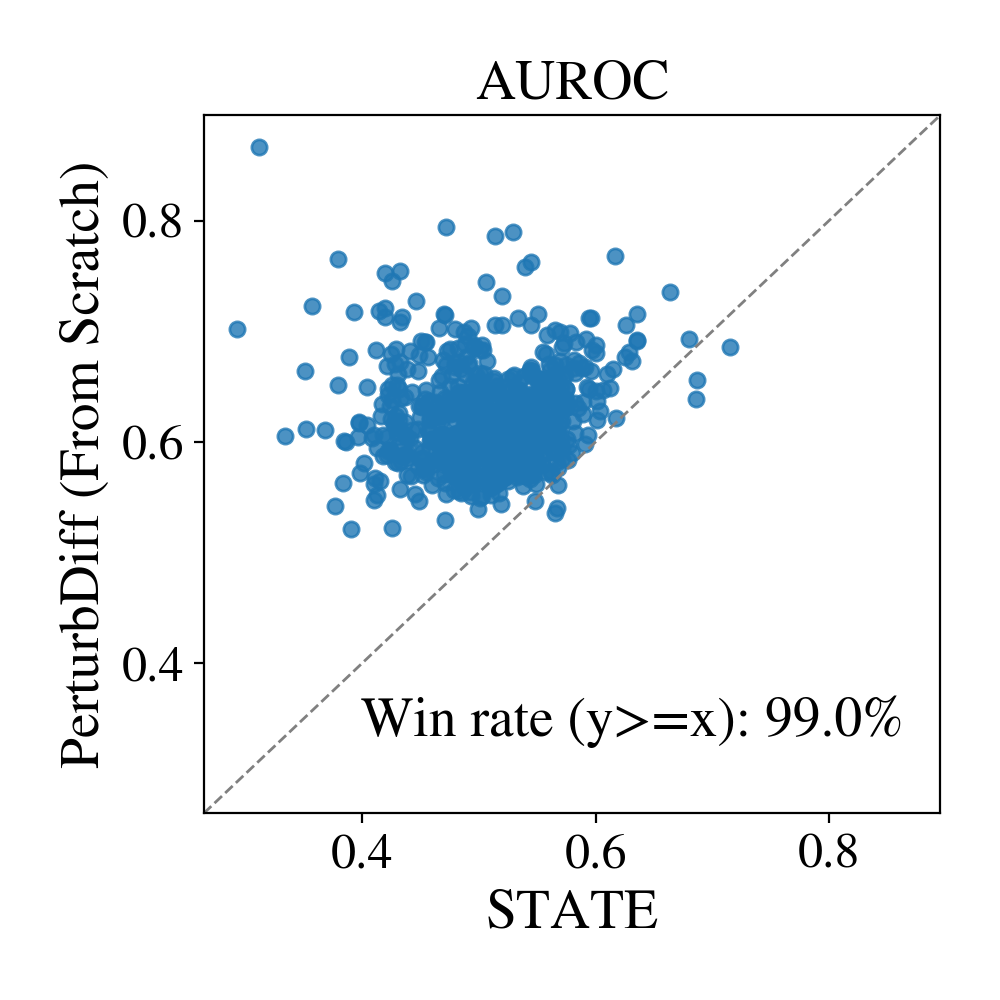}
         \vspace{-0.8cm}
            \caption{Tahoe100M.}
         \end{subfigure}
         \vspace{-0.2cm}
    \captionsetup{width=0.97\textwidth}
\caption{\small \textbf{Per-metric scatter plots comparing {\methodscratch&} and STATE}. Each point denotes one held-out condition (62 conditions for PBMC; 735 for Tahoe100M).}
        \label{fig:exp_scatter_per_pert}
    \end{minipage}
\centering
\hspace{8pt}
    \begin{minipage}[t]{0.490\textwidth}
        \centering
\begin{subfigure}[b]{0.49\textwidth}     \vspace{-12pt}
\centering
\includegraphics[width=\textwidth]
{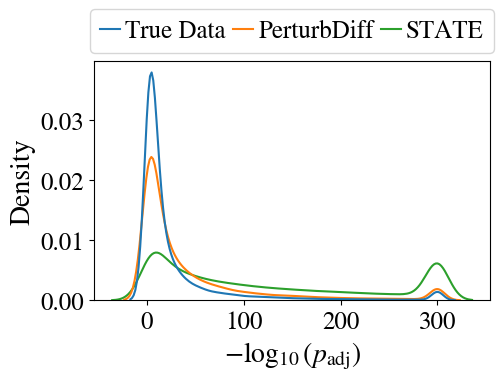}
\vspace{-0.6cm}
            \caption{Density over DE genes.}
         \end{subfigure}
         \begin{subfigure}[b]{0.49\textwidth}           \centering\includegraphics[width=\textwidth]{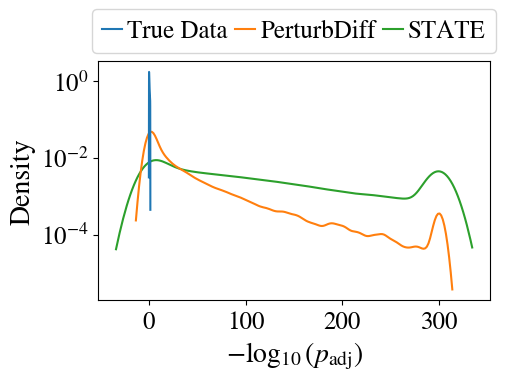}
         \vspace{-0.6cm}
            \caption{Density over non-DE genes.}
         \end{subfigure}
         \vspace{-0.2cm}
         \captionsetup{width=0.97\textwidth}
\caption{\small \textbf{Distribution of}  $-\!\log_{\text{10}}(p_{\text{adj\!}})$ \textbf{for true DE and non-DE genes on PBMC} across true data, {\methodscratch&}, and STATE.  Larger values indicate a gene is more likely DE.}
        \label{fig:exp_dist_DE_nonDE}
    \end{minipage}
    \vspace{-14pt}    
\end{figure*}



\textbf{Downstream and pretraining datasets.} We evaluate perturbation prediction on three widely used benchmark datasets following STATE, including signaling (PBMC), drug (Tahoe100M), and genetic (Replogle) perturbations. All tasks predict perturbed gene expressions over $2{,}000$ highly variable genes (HVGs), with preprocessing and test splits following STATE (App.~\ref{app:data_downstream}). For pretraining, we combine all training cells from these three perturbation datasets with 61 million single-cell RNA-seq cells from CellxGene (statistics in Tab.~\ref{tab:exp_data_stats}). To enable cross-dataset pretraining, we unify gene spaces with a merge-then-select strategy (App.~\ref{app:data_pretraining_hvg}), yielding 12,626 informative genes with high overlap across datasets (Tab.~\ref{tab:shared_genes}; $>$98\% overlap for PBMC and Tahoe100M, $>$45\% for Replogle and CellxGene), enabling effective knowledge transfer. 

\textbf{Training Configuration.} We report two variants of our method: \emph{{\methodscratch&}} (trained from scratch) and \emph{{\methodfinetune&}} (pretrained then finetuned).$\ $ Unless otherwise specified in App.~\ref{app:exp_setup_details}, all training stages (training from scratch, pretraining and finetuning) share the same diffusion pipeline, optimizer,  training and sampling procedures, and checkpoint selection. Diffusion-specific settings are detailed in App.~\ref{app:diffusion_details}.

\textbf{Baselines.} We compare against diverse strong baselines, including the widely used \emph{mean} baseline~\cite{kernfeld2025comparison} and a \emph{linear} model~\cite{ahlmann2024deep}, both previously shown to outperform many existing methods, including foundation models such as scGPT~\cite{cui2024scgpt} and scFoundation~\cite{hao2024large}. For the mean baseline, we average expression profiles per perturbation over training cells (variants aggregated by cell type or experimental batch are reported in App.~\ref{app:exp_more_pert_pred_result}). We further compare against leading perturbation models\footnote{Unlasting~\cite{chi2025unlasting} is excluded (no public code).}: CPA~\cite{lotfollahi2023predicting}, STATE~\cite{adduri2025predicting}, CellFlow~\cite{klein2025cellflow}, and Squidiff~\cite{he2025squidiff}. All baselines are trained under the same data split using their official implementations and evaluated under a unified protocol for fair comparison.

\textbf{Evaluation Metrics.} We adopt the Cell-Eval framework~\cite{adduri2025predicting} for evaluation following STATE, alongside the Coefficient of Determination (R$^2$) metric from CellFlow. Metrics assess performance from two perspectives: (1) \textbf{average expression accuracy across cells}---measuring how well the predicted average expression shift matches the ground truth, using {R$^2$}, Pearson Delta Correlation ({PDCorr}), Mean Absolute Error ({MAE}), Mean Squared Error ({MSE}), and the Perturbation Discrimination Score ({PDS$_{\text{L1}}$}, {PDS$_{\text{L2}}$}, {PDS$_{\text{cos}}$}) (2) \textbf{biologically meaningful differential gene patterns across genes}---evaluating whether predicted cells capture true differential expression (DE) patterns, using DE overlap ({DEOver}), DE precision ({DEPrec}), direction agreement ({DirAgr}), log-fold-change Spearman correlation ({LFCSpear}), {AUROC}, {AUPRC}, and effect size correlation ({ES}). Among these, \emph{DE-related metrics} are particularly important, as the core of perturbation modeling is to recover biologically meaningful gene responses rather than merely matching average expression levels. See detailed metric definitions in App.~\ref{app:metric_def}.

\vspace{-5pt}
\subsection{Main Results: Perturbation Performance}
\label{sec:exp_main_result}
\vspace{-3pt}

\textbf{Overall performance across datasets.} Across 12 diverse metrics\footnote{We report higher-is-better metrics in Fig.~\ref{fig:exp_perturbation_radar}; lower-is-better metrics (MSE, MAE) and full numerical results are in App~\ref{app:exp_more_pert_pred_result}.} (Fig.~\ref{fig:exp_perturbation_radar}), {\methodscratch&} consistently outperforms all baselines across nearly all metrics on the large-scale PBMC and Tahoe100M datasets, and achieves the second best performance on the smaller Replogle dataset. This demonstrates the effectiveness of our diffusion framework that models cell distributions as random variables rather than single cells.  In particular, {\methodscratch&} shows substantial gains on differential expression (DE)-related metrics like AUPRC and AUROC, indicating improved capture of biologically meaningful, population-level response shifts beyond average expression matching.

While {\methodscratch&} slightly trails behind STATE on Reploge, its pretrained and finetuned counterpart, {\methodfinetune&}, consistently improves performance across all metrics and closes the gap, matching STATE on Replogle. This highlights the benefit of marginal pretraining for learning transferable biological priors. On PBMC and Tahoe100M, {\methodfinetune&} further improves DE-related metrics, while showing mixed behavior on average expression accuracy metrics: it remains comparable to {\methodscratch&} on PBMC, but slightly underperforms on Tahoe100M. This suggests that finetuning primarily enhances conditional distributional shifts rather than averaged cell-level accuracy.


Squidiff and CellFlow exhibit the weakest overall performance, ranking last on multiple evaluation metrics across datasets. While simpler baselines may perform well on a single dataset, their performance often collapses when evaluated on others. For example, the Linear model achieves competitive results on Tahoe100M but collapses on Replogle, with all three PDS metrics dropping to around 0.5, comparable to random predictions (App.~\ref{app:metric_accuracy}). Similarly, the Mean baseline performs reasonably on PBMC but degrades substantially on Tahoe100M, with both AUPRC and AUROC approaching random levels. Together, these results underscore the need for a unified model that generalizes robustly across datasets and metrics.

\textbf{Consistent performance across perturbations.} We further analyze per-perturbation performance on PBMC and Tahoe100M (Fig.~\ref{fig:exp_scatter_per_pert}): {\methodscratch&} outperforms STATE on nearly all test perturbation types for DE-related metrics (like AUROC, AUPRC, and DEPrec), achieving win rates above 96–100\%. This indicates improved model capability to understand differential expression patterns. These metrics require coherent modeling of perturbation effects across cells, rather than matching average expressions, further supporting the advantage of diffusion over cell distributions. More metrics are reported in App.~\ref{app:exp_more_scatter}.


\textbf{Accurate recovery of perturbation-driven differential expression (DE) patterns.} We evaluate gene-level recovery of perturbation-driven DE patterns to compare our model with the strongest baseline, STATE. In Cell-Eval, genes are identified as DE using a Wilcoxon rank-sum test~\cite{soneson2018bias} with adjusted $p$-value $p_{\text{adj}}\!<\!0.05$. In Fig.~\ref{fig:exp_dist_DE_nonDE}, {\methodscratch&} clearly separates DE and non-DE genes, closely matching the ground truth. In contrast, STATE predicts most genes as DE by assigning large $-\!\log_{\text{10\!}}(p_{\text{adj\!}})$ values, including many true non-DE genes. This systematic overconfidence suggests that STATE fails to learn the perturbation driven DE patterns.


\vspace{-5pt}
\subsection{Pretraining Improves Low-Data Adaption}
\vspace{-3pt}

\textbf{Leveraging pretrained marginal manifolds for zero-shot prediction.} Marginal pretraining induces cell state manifolds that may serve as priors for perturbation prediction. We test this hypothesis by probe the pretrained model in a zero-shot setting, where no perturbation-specific supervision is provided at test time. On PBMC and Replogle, the pretrained model yields substantially higher $R^2$ (Fig.~\ref{fig:exp_zeroshot}) and consistently better performance across other metrics (App.~\ref{app:exp_more_zeroshot}) compared to random initialization. This performance gap implies that biological perturbations do not move cells into arbitrary regions of the expression space; rather, perturbation-induced shifts partially align with structured directions already encoded in the marginal cell distributions.

\begin{figure*}
\vspace{-6pt}
\hspace{-6pt}
    \begin{minipage}[t]{0.49\textwidth}
        \centering
\begin{subfigure}[b]{0.49\textwidth} 
\centering
\includegraphics[width=\textwidth]{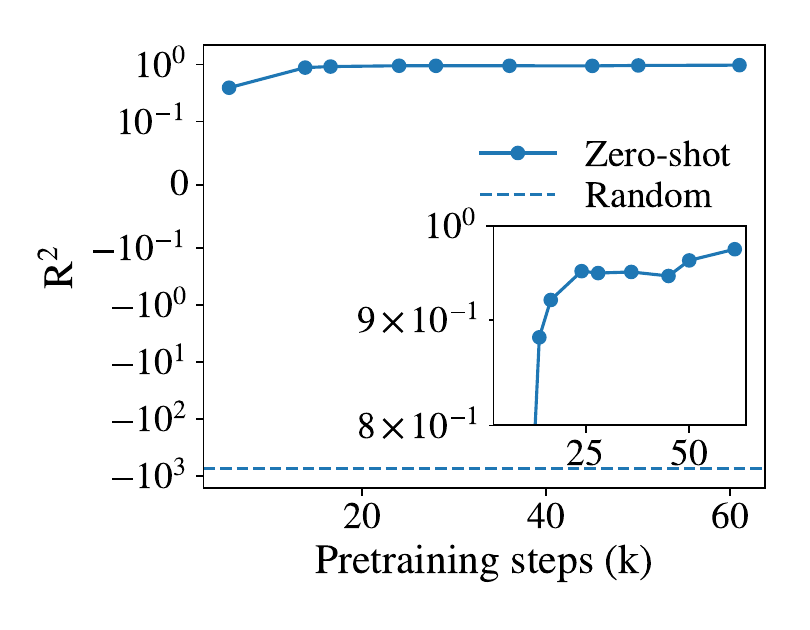}
\vspace{-0.8cm}
            \caption{PBMC.}
         \end{subfigure}
         \begin{subfigure}[b]{0.49\textwidth}      
         \centering
         \includegraphics[width=\textwidth]{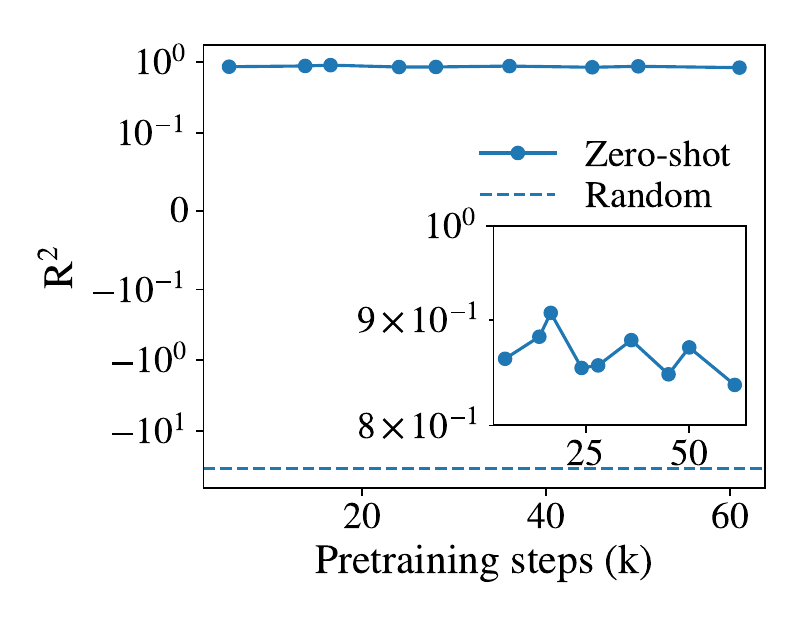}
         \vspace{-0.8cm}
            \caption{Replogle.}
         \end{subfigure}
         \vspace{-0.2cm}
        \captionsetup{width=0.97\textwidth}
\caption{\small \textbf{Zero-shot $\!R^2\!$ performance across pretraining steps} versus a random baseline. Insets plot the curve on a linear $y$-axis.}
        \label{fig:exp_zeroshot}
    \end{minipage}
    \hspace{8pt}
    \begin{minipage}[t]{0.49\textwidth}
        \centering
\begin{subfigure}[b]{0.49\textwidth}     
\centering
\includegraphics[width=\textwidth]
{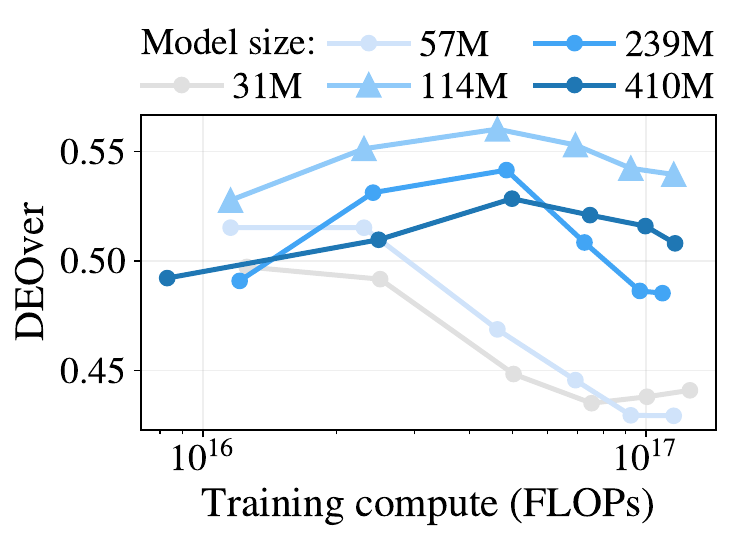}
\vspace{-0.6cm}
            \caption{DEOver Performance.}
         \end{subfigure}
         \begin{subfigure}[b]{0.49\textwidth}           \centering\includegraphics[width=\textwidth]{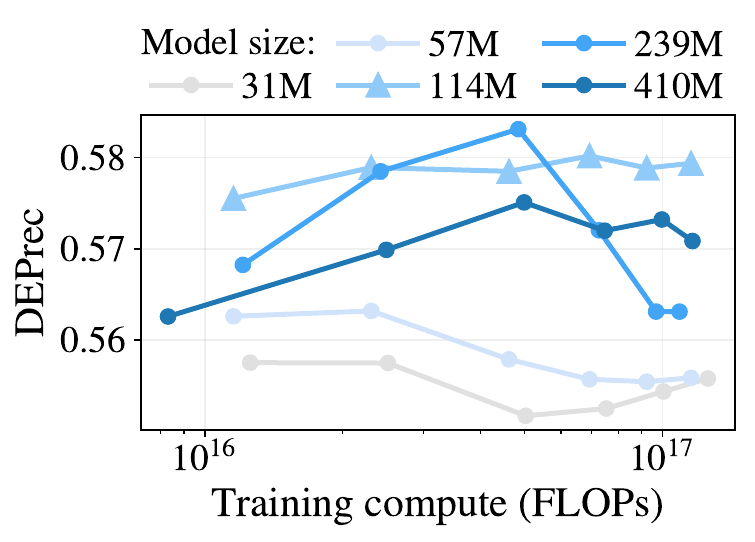}
         \vspace{-0.6cm}
            \caption{DEPrec Performance.}
         \end{subfigure}
         \vspace{-0.2cm}
         \captionsetup{width=0.97\textwidth}
\caption{\small \textbf{Scaling behavior of {\methodscratch&}} on PBMC with respect to training compute (FLOPs) and model size.}
        \label{fig:main_scaling}
    \end{minipage}
    \begin{minipage}[t]{0.49\textwidth}
        \centering
\begin{subfigure}[b]
{0.47\textwidth}            
\centering
\includegraphics[width=\textwidth]{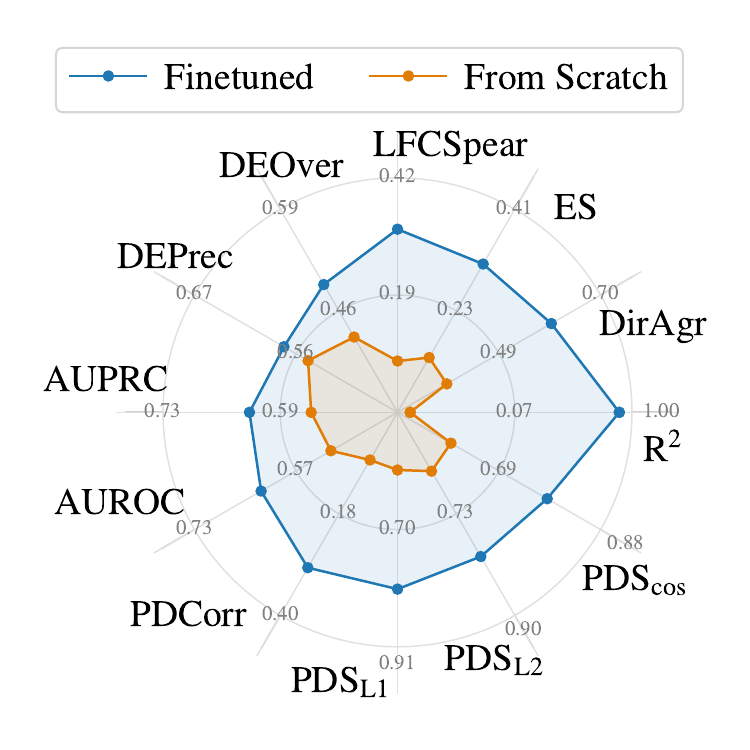}
\vspace{-0.8cm}
            \caption{Sample ratio 1\%.}
         \end{subfigure}
         \begin{subfigure}[b]{0.47\textwidth}            \centering\includegraphics[width=\textwidth]{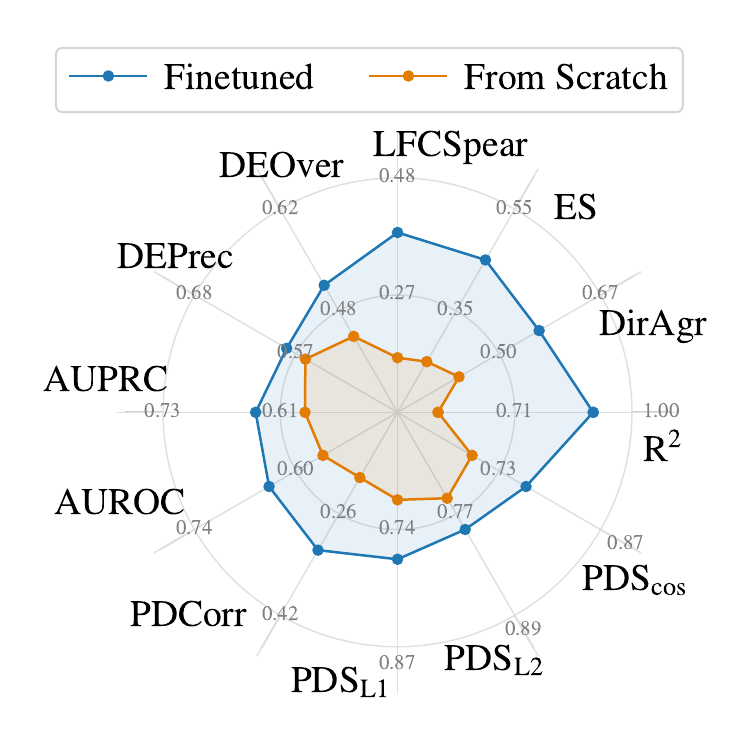}
         \vspace{-0.8cm}
            \caption{Sample ratio 5\%.}
         \end{subfigure}
         \vspace{-0.2cm}
         \captionsetup{width=\textwidth}
\caption{\small \textbf{Performance on downsampled PBMC}, comparing {\methodscratch&} and {\methodfinetune&} across metrics.}
        \label{fig:exp_fewshot_radar}
    \end{minipage}
    \hspace{10pt}
    \begin{minipage}[t]{0.49\textwidth}
        \centering
\begin{subfigure}[b]{0.47\textwidth}            
\centering
\includegraphics[width=\textwidth]{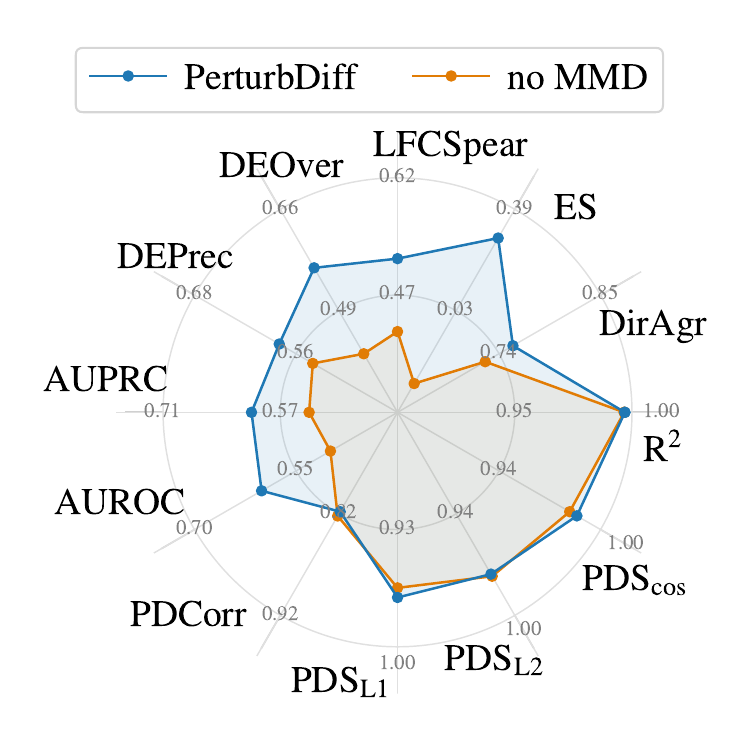}
\vspace{-0.8cm}
            \caption{PBMC.}
         \end{subfigure}
         \begin{subfigure}[b]{0.47\textwidth}            \centering\includegraphics[width=\textwidth]{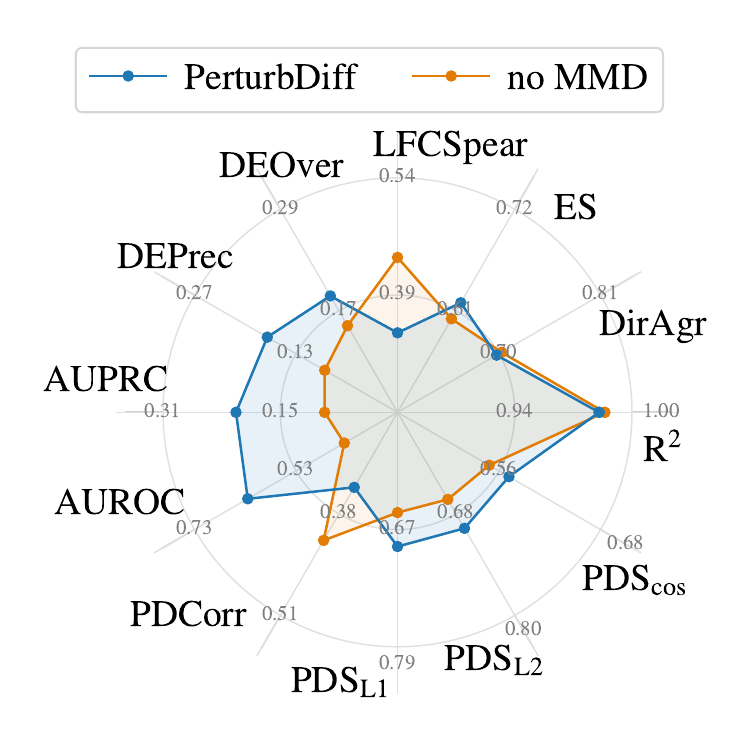}
         \vspace{-0.8cm}
            \caption{Replogle.}
         \end{subfigure}
         \vspace{-0.2cm}
         \captionsetup{width=\textwidth}
\caption{\small \textbf{Ablation study of {\methodscratch&}} with and without the MMD objective.}
        \label{fig:exp_ablation}
    \end{minipage}
    \vspace{-14pt}
\end{figure*}

\textbf{Adaptation under Naturally Limited Data.} Unlike PBMC and Tahoe100M, Replogle provides a naturally low-data scenario, with only $\sim$300 cells per gene perturbation on average (Tab.~\ref{tab:exp_data_stats}). In this setting, as discussed in Sec.~\ref{sec:exp_main_result}, {\methodscratch&} achieves competitive performance but trails STATE, indicating that training from scratch struggles to learn stable representations from limited samples. Pretraining, however, substantially improves performance and closes the gap to STATE, improving sample efficiency during downstream fine-tuning.


\textbf{Adaptation under Controlled Low-Data Regimes.} To examine data scarcity in a controlled setting, we downsample PBMC by uniformly subsampling cells per perturbation at fixed ratios (1\%, 5\%), ensuring that all perturbation types remain represented. In Fig.~\ref{fig:exp_fewshot_radar}, across both ratios, {\methodfinetune&} substantially outperforms {\methodscratch&} on all metrics, confirming the sample-efficiency advantage of marginal pretraining. Performance improves for both models as the data ratio increases, and the performance gap between them narrows as expected. Additional comparisons across training steps are provided in App.~\ref{app:exp_more_fewshot}.

These results show that pretraining provides transferable priors, enabling adaptation when only few perturbed cells are available. Together with the natural low-data regime in Replogle, these support that pretraining enhances perturbation modeling under limited data. 




\vspace{-6pt}
\subsection{Scaling Study}
\vspace{-3pt}

To evaluate the scaling of {\methodscratch&}, we conduct a study on PBMC by varying model size and training compute. Results are in Fig.~\ref{fig:main_scaling}. 

As shown in Fig.~\ref{fig:main_scaling}, scaling is not monotonic along either dimension. Increasing model size does not uniformly improve performance: the medium model (114M) achieves the best and most stable results across compute budges. Increasing compute also does not uniformly help: the large model (239M) peaks at intermediate compute and then degrades, exhibiting commonly observed overfitting behaviors~\cite{nichol2021improved}. Thus, perturbation modeling benefits from moderate model capacity and compute, rather than aggressively scaling. This also motivates our marginal pretraining strategy, which aims to extract transferable biological priors to improve data efficiency without relying on brute-force scaling, by simply enlarging models and mixing all datasets. More per-metric curves are  in App.~\ref{app:exp_more_scaling}.

\vspace{-6pt}
\subsection{Ablation Study}
\label{sec:exp_ablation}
\vspace{-3pt}

In Fig.~\ref{fig:exp_ablation}, we compare {\methodscratch&} with a variant where the MMD loss is removed from Eqn.~\eqref{eqn:total_loss_mmd_mse}. 

Fig.~\ref{fig:exp_ablation} indicates that removing MMD consistently degrades performance across datasets, especially on DE-metrics. 
Our further analysis shows that 
MSE performs poorly on highly sparse single-cell data: 
over 95\% of expressions are zero in PBMC and over 60\% in Replogle (App.~\ref{app:data_analysis}), likely causing the MSE loss to be dominated by shared zeros.




\vspace{-6pt}
\section{Conclusion}
\vspace{-3pt}
We proposed {\method&}, a functional diffusion framework for single-cell perturbation modeling. In contrast to existing methods, which 
treat individual cells as random variables, {\method&} operates on entire cell distributions as random variables, allowing the model to capture response variability induced by unobserved biological and technical latent factors, rather than collapsing responses into a single static distribution. To do so, we embed empirical cell distributions into a RKHS and define diffusion directly in this function space, which naturally induces an MMD-based denoising objective with a tractable approach for scalable learning. 

Empirically, across signaling, drug, and genetic perturbation benchmarks, PerturbDiff achieves state-of-the-art performance, particularly in recovering perturbation-driven differential expression and efficiently adapting to low-data regimes. These results establish {\method&} as an effective and principled paradigm for virtual cell modeling.



\newpage
\clearpage
\section*{Impact Statement}

This paper focuses on improving single-cell perturbation modeling by introducing a functional diffusion framework defined over cell populations, aiming to better predict population-level responses for unseen perturbations. Such models have the potential to accelerate biological discovery and experimental design in areas such as functional genomics and drug discovery, where large-scale perturbation assays are costly and limited. While the work is methodological, its downstream use requires care: biases in training data and over-interpretation of predicted effects could lead to misleading biological conclusions. Accordingly, these models should be used as decision-support tools alongside experimental validation.

\bibliography{src/ref}
\bibliographystyle{icml2026}

\newpage
\appendix
\onecolumn

\section*{Appendix Overview}
\textbf{\ref{app:content_dataset} Datasets}
\vspace{-12pt}
\begin{itemize}
\vspace{-2pt}
  \item \textbf{\ref{app:content_data_source} Data Sources}
  \vspace{-6pt}
  \begin{itemize}
    \item \ref{app:content_perturbation_data} Single Cell Perturbation Data
    \vspace{-2pt}
    \item \ref{app:content_rnaseq_data} Single Cell RNA-seq Data
  \end{itemize}
    \vspace{-8pt}
  \item \textbf{\ref{app:content_downstream_data} Downstream Data for Perturbation Prediction}
  \vspace{-6pt}
  \begin{itemize}
    \item \ref{app:content_raw_data_process} Raw Count Preprocessing
    \vspace{-2pt}
    \item \ref{app:content_hvg_selection} Highly Variable Gene (HVG) Selection
    \vspace{-2pt}
    \item \ref{app:content_data_split} Data Splits for Task Evaluation
  \end{itemize}
    \vspace{-8pt}
  \item \textbf{\ref{app:content_pretrain_data} Pretraining Data for Marginal Distribution Learning}
  \vspace{-6pt}
  \begin{itemize}
    \item \ref{app:content_pretrain_data_raw_count} Raw Count Preprocessing
    \vspace{-2pt}
    \item \ref{app:content_pretrain_data_hvg_selection} Highly Variable Gene (HVG) Selection
    
  \end{itemize}
\vspace{-8pt}
  \item \textbf{\ref{app:content_data_analysis} Data Statistics Analysis for Perturbation Datasets}
  \vspace{-6pt}
\end{itemize}

\textbf{\ref{app:context_method_details} Method Details}
\begin{itemize}
\vspace{-8pt}
  \item \textbf{\ref{app:context_method_diffusion} Diffusion over Cell Distributions in Reproducing Kernel Hilbert Space (RKHS)}
  \vspace{-6pt}
  \begin{itemize}
    \item \ref{app:context_method_diffusion_part1} Cell Distributions as Random Variables
    \vspace{-2pt}
    \item \ref{app:context_method_diffusion_part2} Cell Distributions as Points in a Hilbert Space via Kernel Mean Embedding
    \vspace{-2pt}
    \item \ref{app:context_method_diffusion_part3} Empirical Distributions as Instantiations of Distribution-Valued Random Variables
    \vspace{-2pt}
    \item \ref{app:context_method_diffusion_part4} Gaussian Random Elements and Gaussian Measures in RKHS
    \vspace{-2pt}
    \item \ref{app:context_method_diffusion_part5} Forward Diffusion on Kernel Mean Embeddings in RKHS
    \vspace{-2pt}
    \item \ref{app:context_method_diffusion_part6} Reverse Process and Denoising Objective
    \vspace{-2pt}
    \item \ref{app:context_method_diffusion_part7} Tractable Training and Sampling.
  \end{itemize}
\vspace{-6pt}
  \item \textbf{\ref{app:context_diffusion_implementation} Diffusion Implementation Details}
  \vspace{-6pt}
  \begin{itemize}
    \item \ref{app:context_diffusion_implementation_training} Training
    \vspace{-2pt}
    \item \ref{app:context_diffusion_implementation_sampling} Sampling
  \end{itemize}
\vspace{-6pt}
  \item \textbf{\ref{app:context_architecture_implementation} Architecture Implementation Details}
    \vspace{-6pt}
  \item \textbf{\ref{app:context_score_matching} Score Matching Discussion Details}
  \vspace{-6pt}
\end{itemize}
\textbf{\ref{app:context_experiment} Experimental Details}
\begin{itemize}
\vspace{-8pt}
  \item \textbf{\ref{app:context_experiment_training} Training Details}
  \vspace{-6pt}
  \item \textbf{\ref{app:context_experiment_metrics} Metrics}
  \vspace{-4pt}
  \begin{itemize}
    \item \ref{app:context_experiment_metrics_acc} Averaged Expression Accuracy
    \vspace{-2pt}
    \item \ref{app:context_experiment_metrics_depattern} Biologically Meaningful Differential Patterns
  \end{itemize}  
  \vspace{-6pt}
  \item \textbf{\ref{app:context_experiment_perturbation_result} More Perturbation Prediction Results}
  \vspace{-6pt}
  \item \textbf{\ref{app:context_experiment_scatter_result} More Scatter Plot Comparison Results}
  \vspace{-6pt}
  \item \textbf{\ref{app:context_experiment_zeroshot_result} More Zero-shot Results}
  \vspace{-6pt}
  \item \textbf{\ref{app:context_experiment_fewshot_result} More Limited Data Results}
  \vspace{-6pt}
  \item \textbf{\ref{app:context_experiment_scaling_result} More Scaling Results}
  
  \vspace{-4pt}
\end{itemize}

\section{Datasets}
\label{app:content_dataset}

\subsection{Data Sources}
\label{app:content_data_source}

\subsubsection{Single Cell Perturbation Data}
\label{app:content_perturbation_data}

\textbf{PBMC}~\cite{biosciences10million}\textbf{.} The PBMC dataset is a large-scale single-cell signaling perturbation dataset, comprising approximately 10 million human cells collected from 12 healthy donors. It covers 18 immune cell types and includes 90 distinct cytokines perturbation conditions alongside a \textit{PBS} control condition. After quality control, around 9.7 million cells were retained, with a mean sequencing depth of $\sim$31,000 reads per cell, ensuring high data quality. Prior analyses also show that many cytokine treatments induce strong transcriptional responses, characterized by the upregulation of more than 50 genes~\cite{Norman2019}. Owning to its scale, diversity of perturbations, and strong perturbation-induced expression shifts, this dataset provides a representative benchmark for studying perturbation-driven distributional changes in cellular populations, and has been widely adopted in many prior works~\citep{adduri2025predicting,klein2025cellflow}.

\textbf{Tahoe100M}~\cite{zhang2025tahoe}\textbf{.} The Tahoe100M atlas represents a transformative leap in single-cell perturbation research. As the largest perturbation dataset ever constructed, it comprises over 100 million single cell expression profiles collected from 50 human cancer cell lines exposed to more than 1,100 small-molecule drug treatment and dosage combinations. Critically, its innovative multiplexed cell-village experimental design during sequencing substantially reduced batch effects, yielding good data quality. Collectively, Tahoe100M has been recognized as a key resource for AI-driven cellular modeling, and has been widely adopted in large-scale foundation models for single-cell biology~\citep{adduri2025predicting,Rizvi2025}.

\textbf{Replogle}~\cite{nadig2025transcriptome}\textbf{.} The Replogle dataset is a landmark, large-scale resource for studying single-cell genetic perturbation effects. It integrates multiple experiments across four human cell lines (K562, RPE1, Jurkat, and HepG2), providing rich cellular-context diversity. Generated via pooled CRISPR interference (CRISPRi)~\cite{Gilbert2013} to systematically repress thousands of essential genes, this data resource captures transcriptome-wide expression profiles. Prior analyses indicate that its perturbed essential genes often lead to broad transcriptional changes, impacting the expression of hundreds of downstream target genes~\cite{nadig2025transcriptome}.

\subsubsection{Single Cell RNA-seq Data}
\label{app:content_rnaseq_data}
\textbf{CellxGene}~\cite{czi2025cz}\textbf{.} 
CellxGene is a large-scale, community-driven single-cell transcriptomics platform that aggregates and standardizes publicly available single-cell RNA-seq datasets across diverse experiments. As one of the earliest initiatives to systematically consolidate single-cell datasets at the scale of tens of millions of cells, CellxGene played a pivotal role in enabling the pretraining of large-scale foundation models for single-cell biology. For example, Geneformer~\cite{Theodoris2023}, one of the first transcripotmic foundation models published in \textit{Nature}, relied on CellxGene as a primary data source. Importantly, CellxGene covers more than 600 cell types and 10,000 experimental batches, and a wide range of biological context as its meta data, including cellular development stage and disease, providing exceptionally rich contextual diversity.

\subsection{Downstream Data for Perturbation Prediction}
\label{app:content_downstream_data}
\label{app:data_downstream}

For the perturbation response prediction task, we consider three perturbation datasets (PBMC, Tahoe100M, and Replogle) for evaluation following STATE~\cite{adduri2025predicting}. As detailed in Tab.~\ref{tab:detailed_data_summary}(a), the sample sizes for these datasets range from hundreds of thousands to tens of millions of single cells, together representing a comprehensive evaluation across multiple scales. Furthermore, all these datasets exhibit high heterogeneity across perturbation conditions, cell types, and experimental batches, providing a rigorous testbed for modeling complex context-dependent perturbation responses.

\begin{table*}[t]
\caption{Summary statistics of both perturbation and single-cell RNA-seq datasets used in this study.}
\label{tab:detailed_data_summary}
\centering

\begin{subtable}{\textwidth}
\centering
\caption{{Perturbation datasets}. Statistics include counts of perturbed cell across train/valid/test splits and counts of control cells,
total gene vocabulary size and HVGs, and dataset diversity in terms of perturbations, cell types, and experimental batches.}

\begin{adjustbox}{max width=1.0\linewidth}
\begin{tabular}{lcccccccccc}
\toprule
\multirow{2}{*}{Dataset} & \multirow{2}{*}{Perturbation Type} & \multicolumn{3}{c}{\#Perturbed Cells} & \multirow{2}{*}{\#Control Cells} & \multirow{2}{*}{\#Genes} & \multirow{2}{*}{\#HVGs} & \multirow{2}{*}{\#Perturbations} & \multirow{2}{*}{\#Cell Types} & \multirow{2}{*}{\#Batches}  \\
\cmidrule{3-5}
& & Train & Valid & Test & & & & &  \\
\midrule
PBMC & cytokine	& 6,657,336 &  150,484 &  2,260,453	& 629,701& 	62,710& 	2,000& 	90& 	18& 	12 \\
Tahoe100M & drug &	89,495,239 & 553,076 & 8,790,272 &	2,330,156 &	40,352 &	2,000 &	1,137 &	50 &	14  \\
Replogle & gene &	572,545 & 4,825 & 26,878 &	39,165 &	6,642 &	2,000 &	2,023 &	4 &	56 \\
\bottomrule
\end{tabular}
\end{adjustbox}
\label{tab:data_summary_perturb}
\end{subtable}
\begin{subtable}{\textwidth}
\centering
\vspace{3mm}
\caption{{Single-cell RNA-seq datasets}. Statistics include counts of cells and the number of datasets before and after merging,
averaged gene vocabulary size across datasets, the union of HVGs for pretraining, the average number of genes shared between each dataset and the pretraining HVG set, and dataset diversity in terms of cell types and experimental batches.}
    \begin{adjustbox}{max width=1.0\linewidth}
\begin{tabular}{lcccccccc}
\toprule
Dataset & \#Cells & \#Original Datasets & {\#Merged Datasets} & {Avg. \#Genes} & {\#HVGs} & {Avg. \#Shared Genes} & {\#Cell Types} & {\#Batches}  \\
\midrule
CellxGene & 60,901,077 & 1,139 & 23 & 7,344.3 & 10,045 &  6,867.5 & 662	& 10,887\\
\bottomrule
\end{tabular}
\end{adjustbox}
\label{tab:data_summary_rnaseq}
\end{subtable}
\vspace{-15pt}
\end{table*}

\subsubsection{Raw Count Preprocessing}
\label{app:data_downstream_preprocess}
\label{app:content_raw_data_process}

We applied standard preprocessing steps to the raw transcriptomic expression counts. Specifically, for PBMC and Tahoe100M, we performed per-cell library-size normalization (using `scanpy.pp.normalize\_total'), followed by a log1p transformation (using `scanpy.pp.log1p'). The transformation brings gene expression values to a moderate range, which lies within $[0, 10)$ empirically. We further rescaled the log-transformed expressions by a constant factor of 10 to map the values approximately into the range $[0, 1)$, which enables effective learning for the diffusion models.

For the Replogle dataset, we followed the preprocessing protocol of STATE~\cite{adduri2025predicting}, and directly adopted their processed data. In particular, STATE first applies a filtering procedure based on on-target knockdown efficacy to retain only perturbations and cells exhibiting sufficient repression of the targeted gene, thereby improving the signal-to-noise ratio of perturbation effects. After filtering, they also applied library-size normalization prior to log-transformation. We later similarly applied the constant rescaling by a factor of 10 to their processed data.

\subsubsection{Highly Variable Gene (HVG) Selection}
\label{app:data_downstream_hvg}
\label{app:content_hvg_selection}

After raw count preprocessing for each dataset, we followed the standard practice~\cite{Wolf2018} and extracted the top 2,000 HVGs from the data's original transcriptome gene vocabulary (using `sc.pp.highly\_variable\_genes'). Excluding low-variance genes can effectively enhance the signal-to-noise ratio without sacrificing biological fidelity, and ensure that the task focuses on the most information-rich features of the data. 

\subsubsection{Data Splits for Task Evaluation}
\label{app:content_data_split}

We follow the same data-splitting method as STATE~\cite{adduri2025predicting}, which adopt dataset-specific holdout strategies to evaluate model generalization. For the PBMC dataset, 4 of the 12 donor cell lines were reserved for testing. To test the models under conditions of partial perturbation coverage, 30\% of the perturbations from these held-out donors were moved into the training set, leaving the remaining 70\% for testing. For Tahoe100M, PCA was first performed on pseudobulked expression profiles of the 50 cell lines to identify distinct phenotypic clusters. From these clusters, five cell lines were chosen as a held-out test set. For the Replogle dataset, one cell line (hepg2) was held out for testing, while models were trained on the remaining three cell lines together with 30\% of perturbations randomly sampled from the held-out cell line. The number of cells across splits is summarized in Tab.~\ref{tab:detailed_data_summary}(a).

\subsection{Pretraining Data for Marginal Distribution Learning}
\label{app:data_pretraining}
\label{app:content_pretrain_data}

\begin{table*}[t]
    \centering
\caption{Overlap between dataset-specific gene vocabularies and the pretraining gene set (12,626 genes). Shared gene counts and ratios are reported; CellxGene values are averaged across the 23 merged single-cell RNA-seq datasets.}
\label{tab:shared_genes}
    \begin{adjustbox}{max width=0.5\linewidth}
\begin{tabular}{lcccc}
\toprule
Dataset & PBMC & Tahoe100M & Replogle & CellxGene \\
\midrule
\#Shared Genes & 12,478 & 12,578 & 5,760 & 6,958.0 \\
Shared Ratio (\%) & 98.8 & 99.6 & 45.6 & 55.1 \\
\bottomrule
\end{tabular}
\end{adjustbox}
\vspace{-20pt}
\end{table*}

For the unconditional pretraining stage, we incorporated two data sources: both the three aforementioned perturbation datasets and the single-cell RNA-seq data. For the former, we used the same train/validation/test split to ensure that there is no information leakage. For the latter, we followed the same curation protocol as the SE cell foundation model~\cite{adduri2025predicting} and aggregated expression profiles from 1,139 diverse studies in CellxGene~\cite{czi2025cz}. This corpus contains over 60 million cells spanning more than 600 cell types and 10,000 experimental batches, offering substantially broader biological coverage and experimental diversity compared to the perturbation datasets alone (Tab.~\ref{tab:detailed_data_summary}(b)). As such, it provides a strong prior for learning general-purpose cellular representations, together with the observable perturbation cells.

\subsubsection{Raw Count Preprocessing}
\label{app:content_pretrain_data_raw_count}

In the pretraining stage, the three perturbation datasets applied the same preprocessing pipeline as discussed in App.~\ref{app:data_downstream_preprocess}. And all the single-cell RNA-seq datasets were preprocessed using the same pipeline as PBMC and Tahoe100M.

\subsubsection{Highly Variable Gene (HVG) Selection}
\label{app:data_pretraining_hvg}
\label{app:content_pretrain_data_hvg_selection}

\textbf{HVG selection for perturbation datasets.} We follow the same HVG selection strategy for preprocessing perturbation downstream data as in App.~\ref{app:data_downstream_hvg}.

\textbf{HVG selection for single-cell RNA-seq datasets.} However, retrieving a unified HVG feature space across such a diverse collection of studies presents significant challenges. Directly computing HVGs on the entire corpus is computationally prohibitive and risks out-of-memory failures. On the other hand, taking the union of top 2,000 HVGs computed independently for all 1,139 studies leads to an excessively high-dimensional vocabulary (more than 30k genes), which would be likely dominated by study-specific noise and hinder effective representation learning.

To balance HVG coverage and computational tractability, we adopted a \emph{merge-then-select} HVG selection strategy. Specifically, we merged the 1,139 studies into 23 composite datasets and independently selected the top 2,000 HVGs with each composite. Taking the union of these HVG sets yields a global vocabulary of 10,045 genes for the CellxGene corpus. On average, each merged dataset covers approximately 6,867 genes from this global HVG vocabulary.

\textbf{Construction of the unified pretraining gene vocabulary.} Finally, we constructed the gene space used for pretraining by taking the union of the three 2,000-HVG sets from the perturbation datasets and the 10,045 global HVGs from CellxGene, resulting in the pretraining gene vocabulary $\mathcal{G_{\text{pretrain}}}$ of 12,626 genes. Notably, even when considering only the HVGs derived from the perturbation datasets, Tab.~\ref{tab:shared_genes} shows that PBMC and Tahoe100M share more than 98\% of their genes within this pretraining vocabulary, while Replogle and CellxGene retain around 50\% coverage. This substantial shared. gene space provides a solid foundation for learning transferable representations from large-scale single-cell RNA-seq data and effectively adapting them to perturbation settings.

\subsection{Data Statistics Analysis for Perturbation Datasets}
\label{app:data_analysis}
\label{app:content_data_analysis}

Perturbation datasets are highly structured, heterogeneous, and severely imbalanced across multiple levels, including individual cells, perturbations, cell types, and experimental batches. These characteristics fundamentally motivate modeling perturbation responses at the distribution level, rather than at the level of individual cells alone.

\begin{figure}[t]
\centering
    \begin{minipage}[t]{1.0\textwidth}
        \centering
        \begin{subfigure}[b]{0.49\textwidth}
        \includegraphics[width=\textwidth]{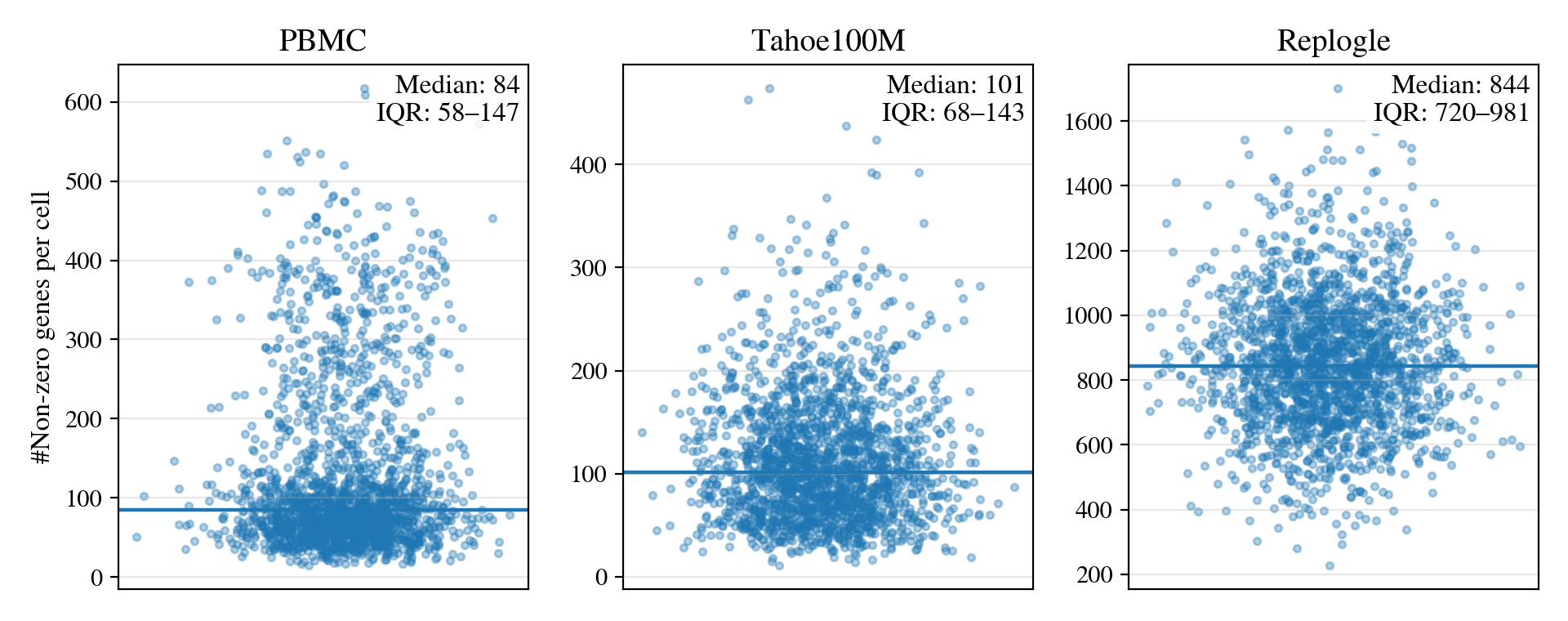}
        \caption{Using the downstream HVG gene set ($2,000$ genes).}
         \end{subfigure}
         \begin{subfigure}[b]{0.49\textwidth}
        \includegraphics[width=\textwidth]{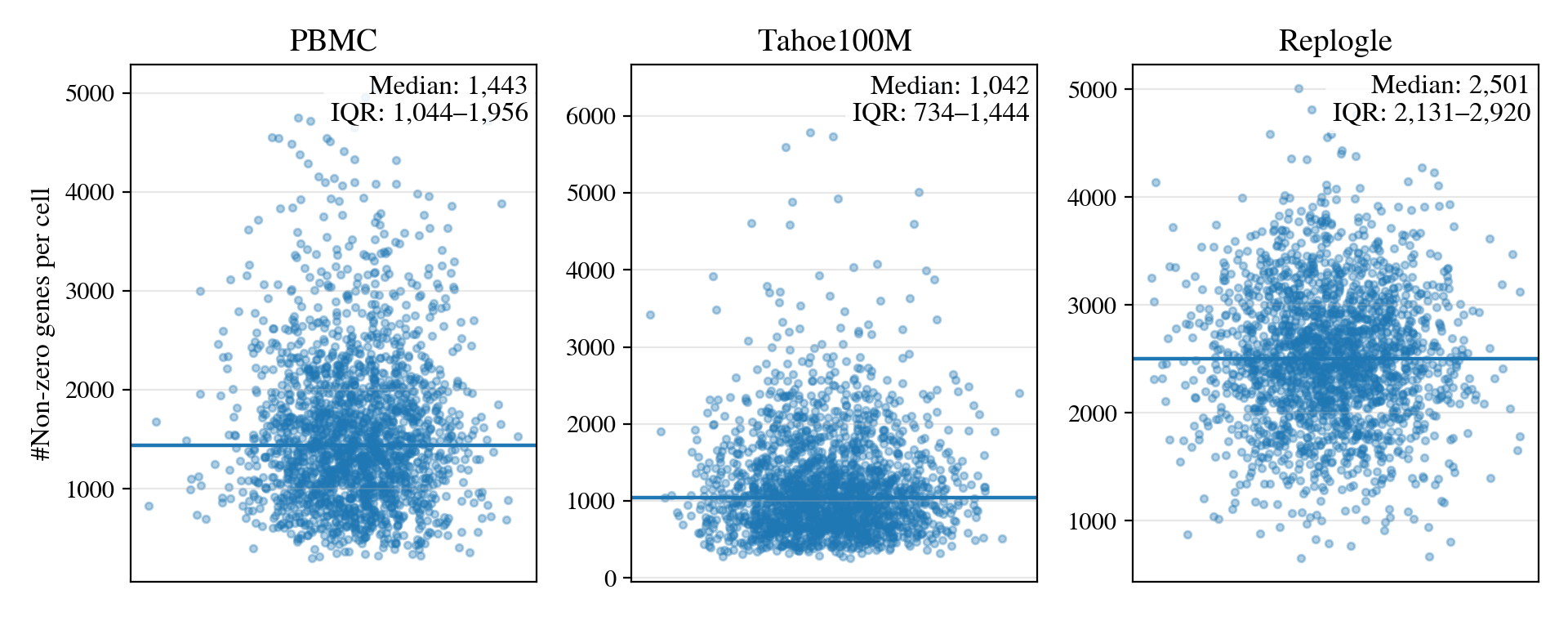}
        \caption{Using the pretraining gene set ($12,626$  genes).}
         \end{subfigure}
    \end{minipage}
    \caption{The number of non-zero genes per cell across datasets under different gene sets: (a) downstream HVGs (b) pretraining gene set. }
        \label{fig:app_dataset_sparsity}
        \vspace{-10pt}
\end{figure}

\textbf{Cell-level sparsity.} As shown in Fig.~\ref{fig:app_dataset_sparsity}(a), when considering 2,000 HVGs, PBMC and Tahoe100M contain on average 100 non-zero genes per cell, resulting in extreme sparsity. Such sparsity makes cell-level objectives, \emph{i.e.}, the per-cell MSE loss for diffusion model, dominated by zero entries and prone to suboptimal solutions. This observation directly motivates modeling distributions of cells besides individual cells alone. 
When considering the full pretraining gene vocabulary $\mathcal{G}_{\text{pretrain}}$ (Fig.~\ref{fig:app_dataset_sparsity}(b)), a consistent pattern emerges across datasets: PBMC and Tahoe100M remain highly sparse (approximately 5\% non-zero entries), while Replogle exhibits substantially lower sparsity (around 40\% non-zero entries). These dataset-specific zero-inflation patterns further complicate joint pretraining and underscore the necessity of distribution-aware modeling.

\begin{figure}
    \begin{minipage}[t]{1.0\textwidth}
        \centering
        \begin{subfigure}[b]{0.33\textwidth}
        \includegraphics[width=\textwidth]{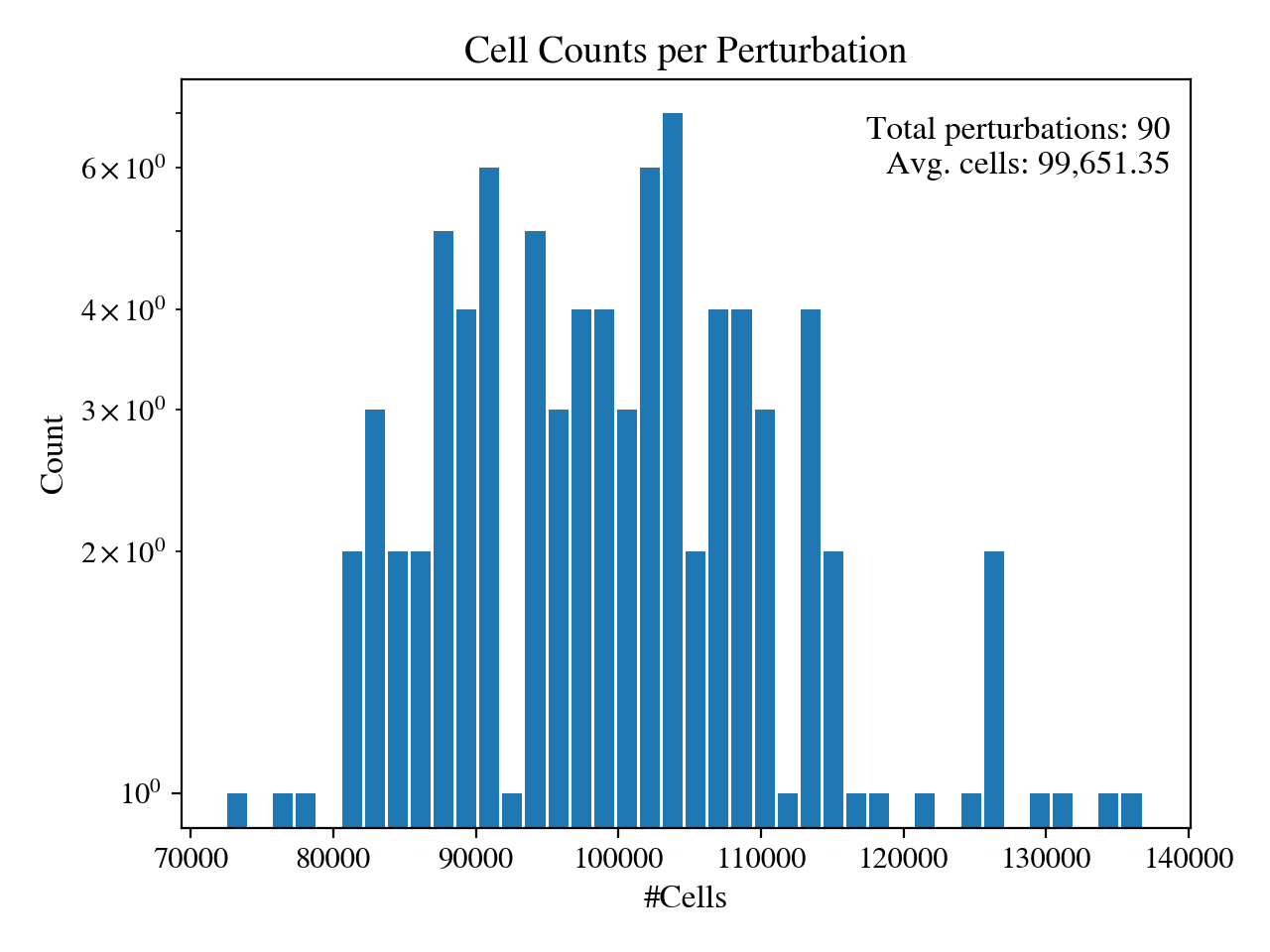}
            \caption{PBMC.}
         \end{subfigure}
         \begin{subfigure}[b]{0.33\textwidth}            \includegraphics[width=\textwidth]{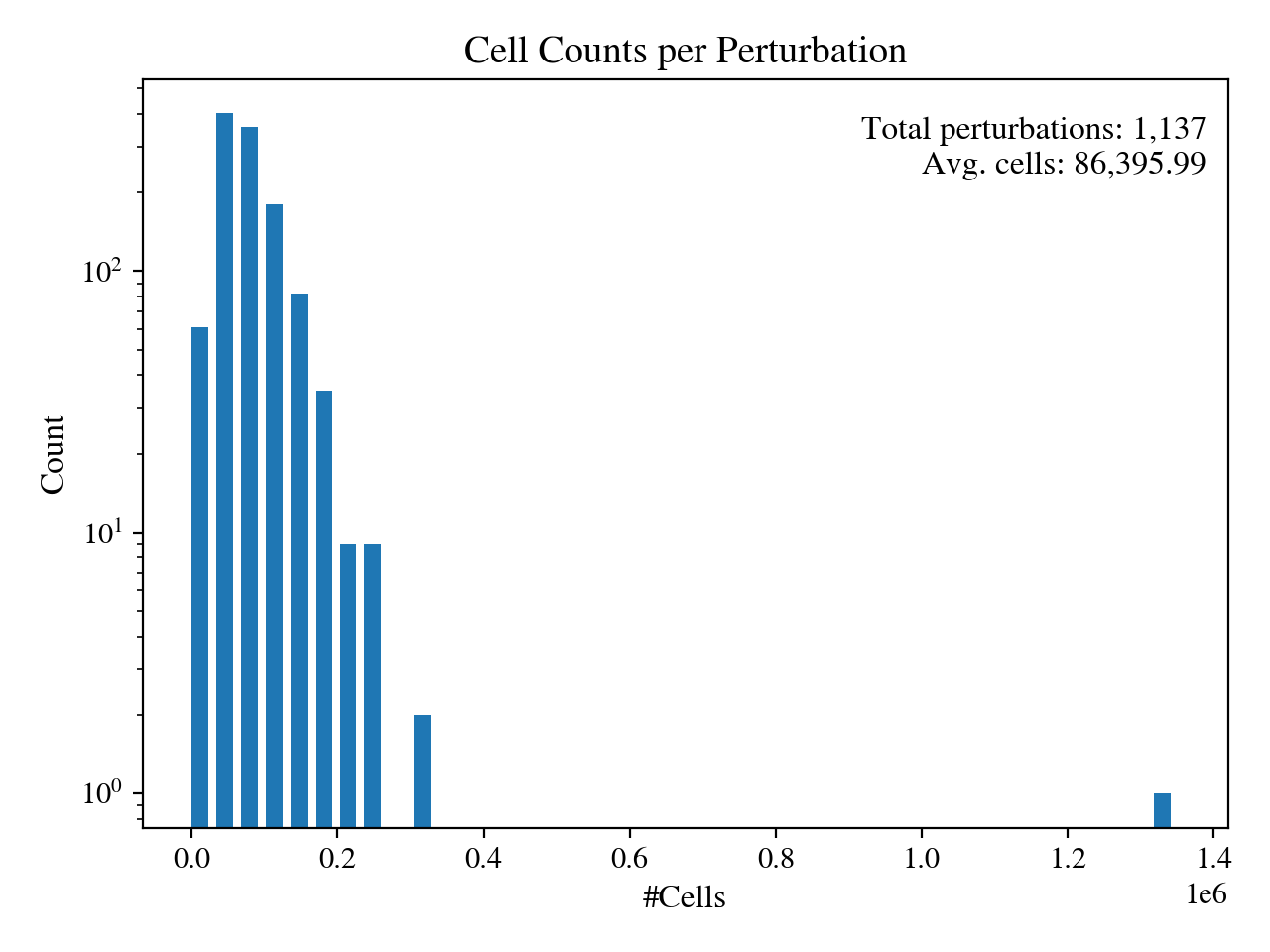}
            \caption{Tahoe100M.}
         \end{subfigure}
         \begin{subfigure}[b]{0.33\textwidth}            \includegraphics[width=\textwidth]{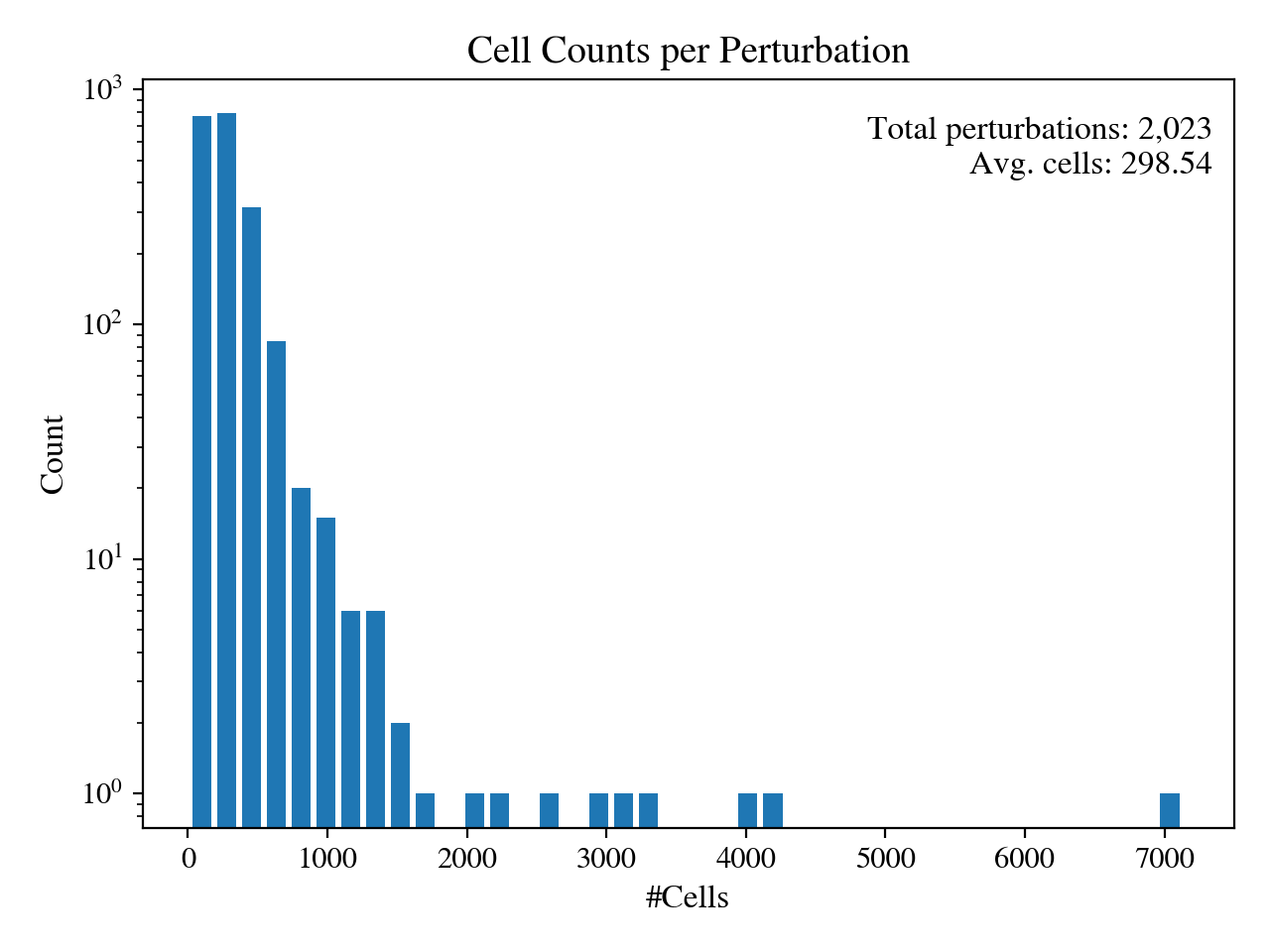}
            \caption{Replogle.}
         \end{subfigure}
         \caption{Distribution of cell counts per perturbation condition across datasets. Insets report the total number of perturbations and the average number of cells per perturbation. Lower within-perturbation MMD reflects regular  cellular responses under the same perturbation, while higher between-perturbation MMD indicates distinct perturbation-specific  effects. }
        \label{fig:app_cell_count_per_perturbation}
    \end{minipage}
    \vspace{-20pt}
\end{figure}

\textbf{Imbalance across perturbations and their cell-type- and batch-specific subpopulations.} Perturbation datasets also exhibit severe imbalance in sample sizes. As shown in Fig.~\ref{fig:app_cell_count_per_perturbation}, the number of cells per perturbation follows a heavy-tailed distribution in Replogle and Tahoe100M, with many perturbations supported by only a limited number of cells. This imbalance becomes even more pronounced when conditioning on finer-grained subpopulations, such as (perturbation, cell type, experimental batch) combinations (Fig.~\ref{fig:app_cell_count_per_perturbation_celltype_batch}). These observations substantially complicate learning at the individual-cell level, as sparse subpopulations provide noisy and unstable supervision. This also motivates modeling perturbation effects at the distribution level, which allows aggregating information across cells and stabilizing learning under severe data imbalance.

\textbf{Perturbation heterogeneity.}
Fig.~\ref{fig:app_distribution_difference_mmd} compares distributional differences measured by Maximum Mean Discrepancy (MMD) within and between perturbations. Across datasets, inter-perturbation distances consistently and substantially exceed intra-perturbation distances, indicating that perturbations induce evident structured shifts in cellular distributions rather than stochastic cell-level noise.

Notably, PBMC and Tahoe100M exhibit both substantially smaller mean inter-perturbation MMD and smaller variance compared to Replogle. This is likely not indicative of weaker biological effects, but rather reflects the high sparsity of single-cell profiles, which compresses the effective support of these distributions and suppresses the observable distribution divergence. For Replogle, the large variance of inter-perturbation MMD further suggests pronounced heterogeneity in perturbation response strength, where some perturbations induce dramatic distributional shifts while others lead to milder effects. Overall, these results indicate that perturbation responses are characterized by structured, context-dependent shifts in cellular populations.

\textbf{Cell-type composition and induced distributional shifts across perturbations.} Furthermore, perturbation datasets exhibit highly imbalanced cell-type compositions (Fig.~\ref{fig:app_cell_type_compo}). For example, in PBMC, the \textit{CD4 Naive} population contains over one million cells, whereas the rare \textit{Plasmablasts} population comprises fewer than ten thousand cells. Despite this extreme imbalance, the magnitude of inter-perturbation distributional shifts,  measured by MMD within each cell type, shows no correlation with cell abundance.  As illustrated in Fig.~\ref{fig:app_cell_type_compo}, although \textit{CD4 Naive} cells dominate the dataset,  their inter-perturbation MMD remains relatively small, whereas \textit{CD14 Monocytes}  exhibit substantially larger distributional shifts despite having fewer cells.  This aligns with our understanding that perturbation sensitivity is governed by intrinsic, cell-type-specific regulatory programs while has no relationship with sample size or population frequency.  It also helps explain why simple cell-type-specific mean baseline can perform reasonably well. More importantly, this observation highlights that perturbation responses emerge as structured, context-dependent transformations at the distribution level, necessitating modeling approaches that explicitly capture conditional distributional shifts.

\begin{figure}
    \begin{minipage}[t]{1.0\textwidth}
        \centering
        \begin{subfigure}[b]{0.33\textwidth}
        \includegraphics[width=\textwidth]{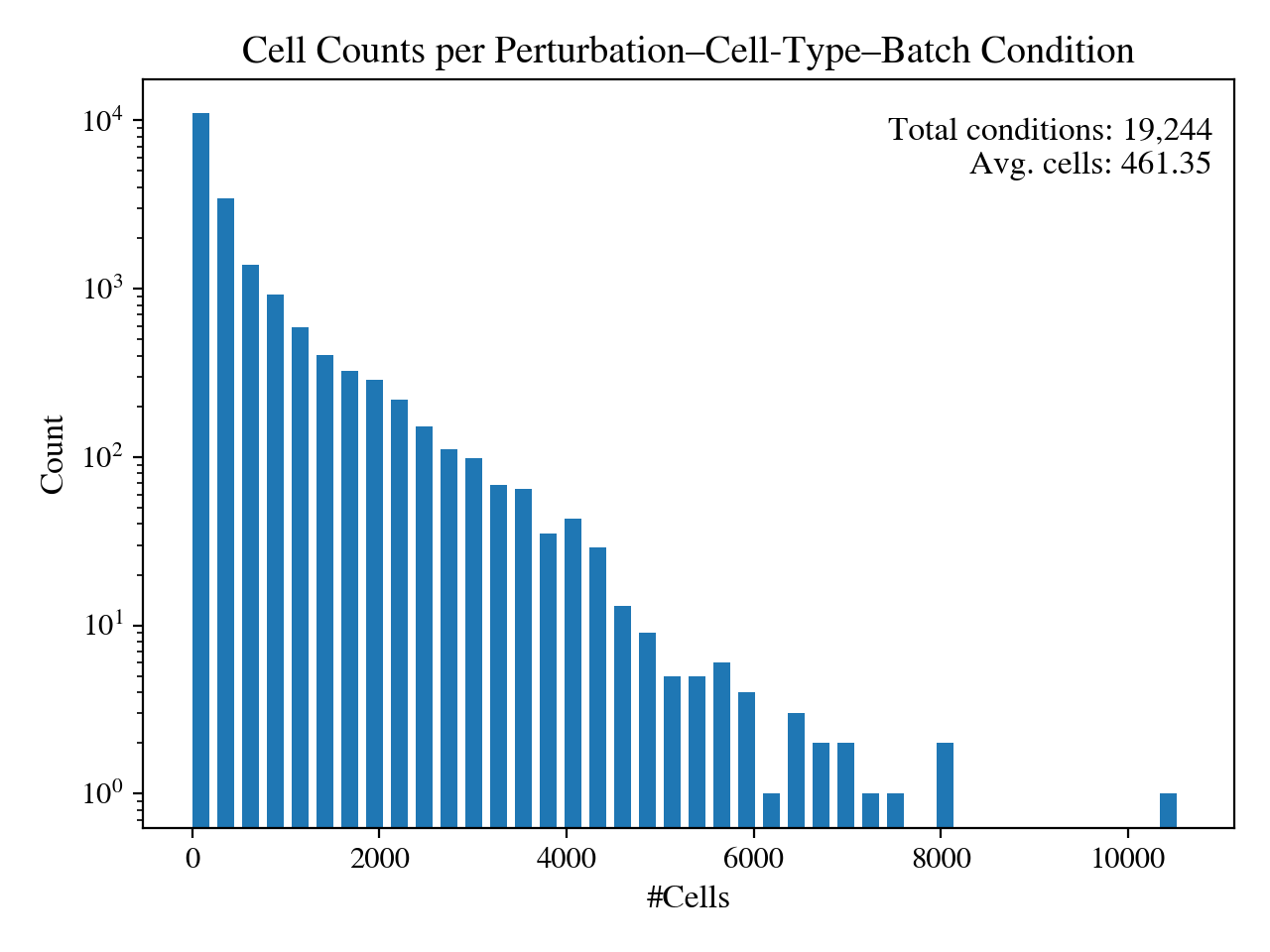}
            \caption{PBMC.}
         \end{subfigure}
         \begin{subfigure}[b]{0.33\textwidth}            \includegraphics[width=\textwidth]{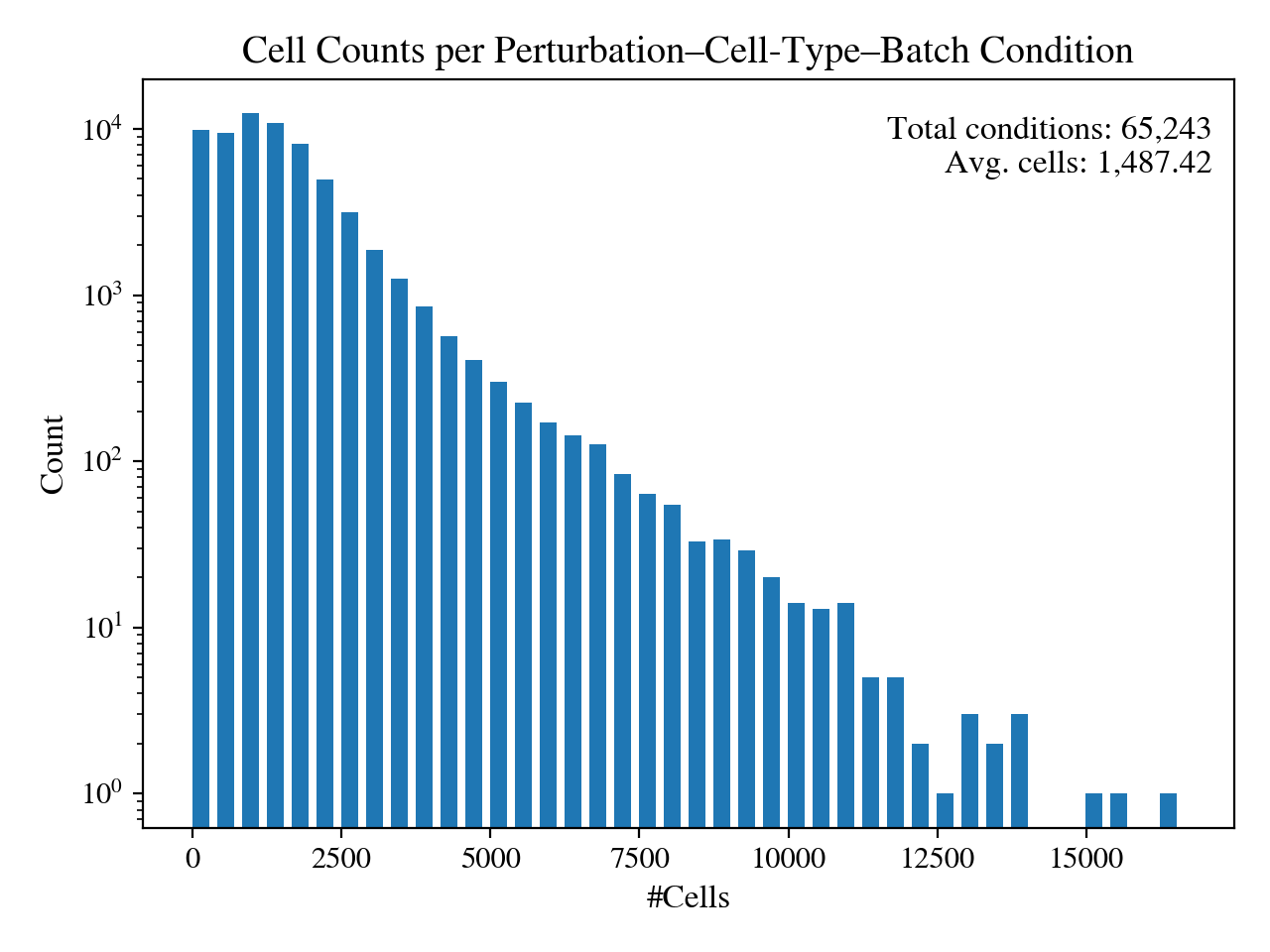}
            \caption{Tahoe100M.}
         \end{subfigure}
         \begin{subfigure}[b]{0.33\textwidth}            \includegraphics[width=\textwidth]{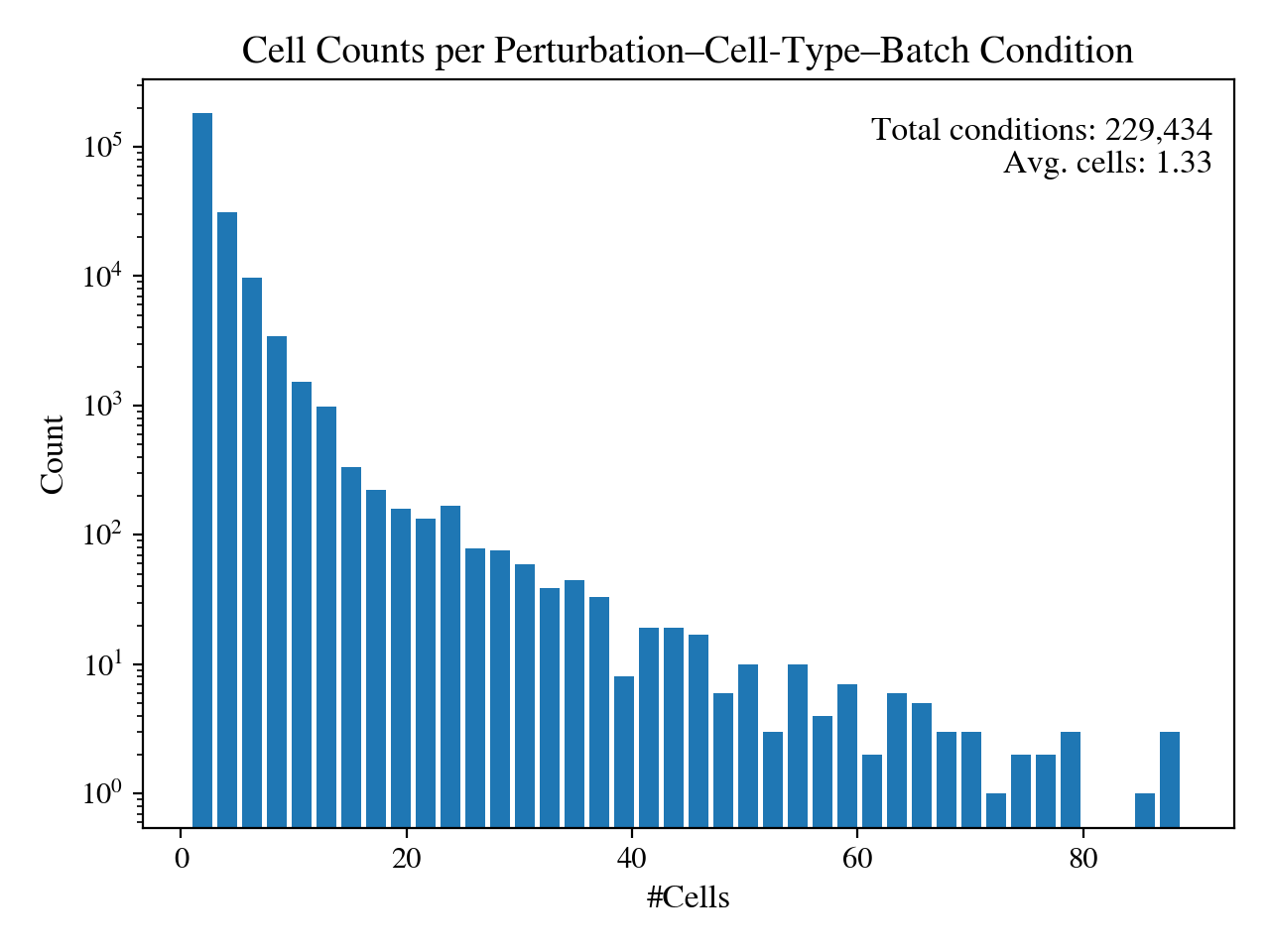}
            \caption{Replogle.}
         \end{subfigure}
         \caption{Distribution of cell counts per  (perturbation, cell type, experimental batch) combination  across datasets. Insets report the total number of perturbations and the average number of cells per combination. Lower within-combination MMD reflects intrinsic cellular variability under identical perturbation, cell-type, and batch conditions, while higher between-combination MMD indicates perturbation-induced distributional differences beyond this baseline variability.}
        \label{fig:app_cell_count_per_perturbation_celltype_batch}
    \end{minipage}
    \vspace{-20pt}
\end{figure}

\begin{figure}
    \begin{minipage}[t]{1.0\textwidth}
        \centering
        \begin{subfigure}[b]{0.33\textwidth}
        \includegraphics[width=\textwidth]{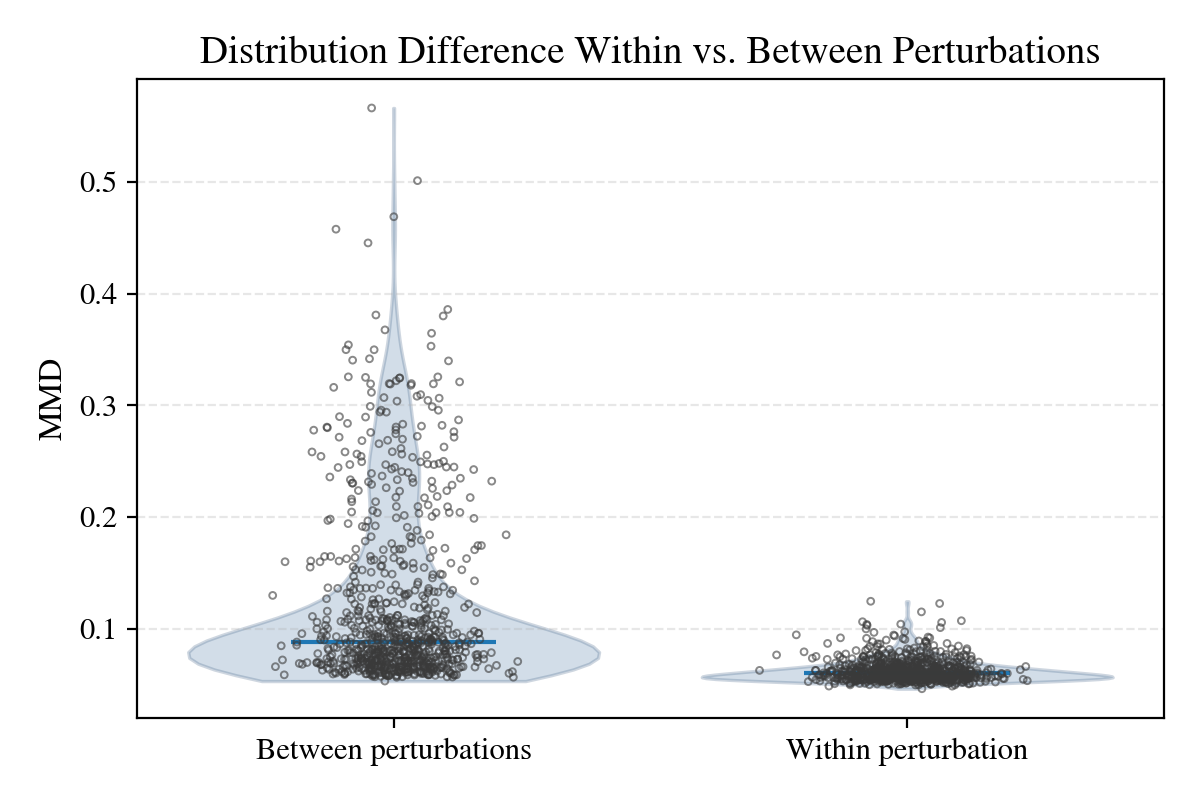}
            \caption{PBMC.}
         \end{subfigure}
         \begin{subfigure}[b]{0.33\textwidth}            \includegraphics[width=\textwidth]{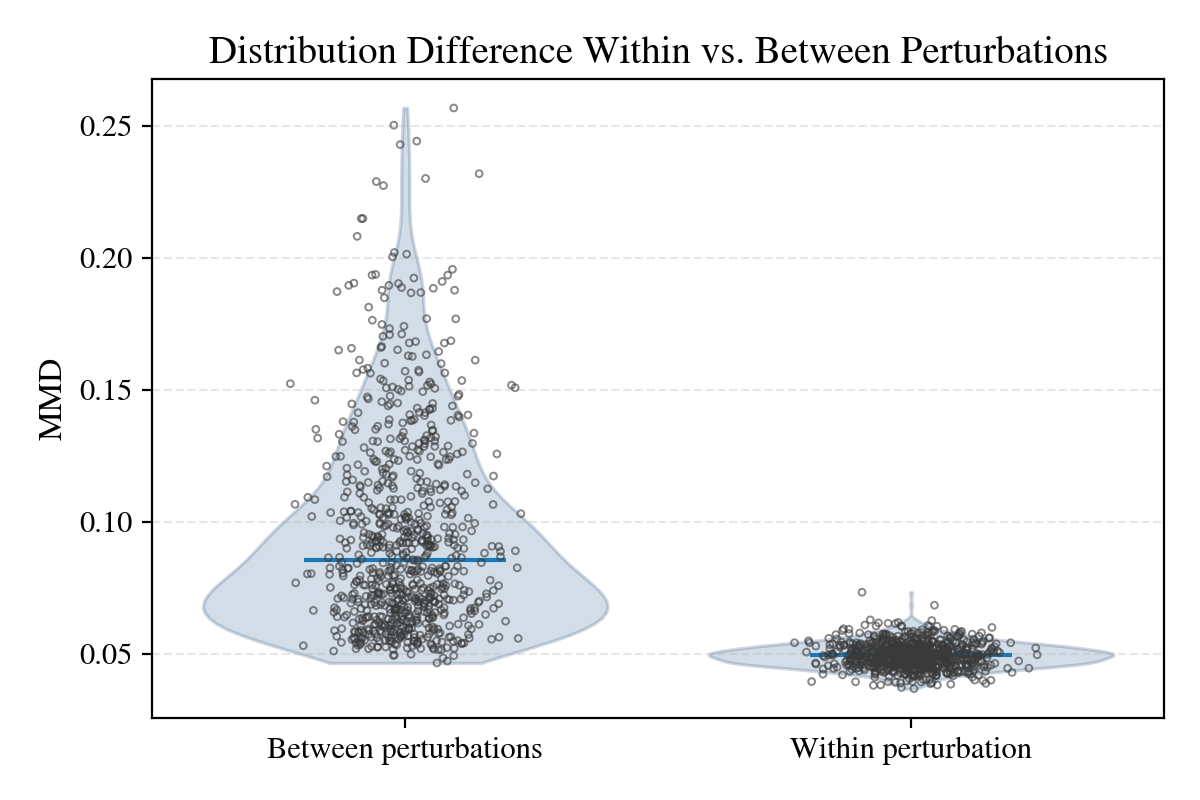}
            \caption{Tahoe100M.}
         \end{subfigure}
         \begin{subfigure}[b]{0.33\textwidth}            \includegraphics[width=\textwidth]{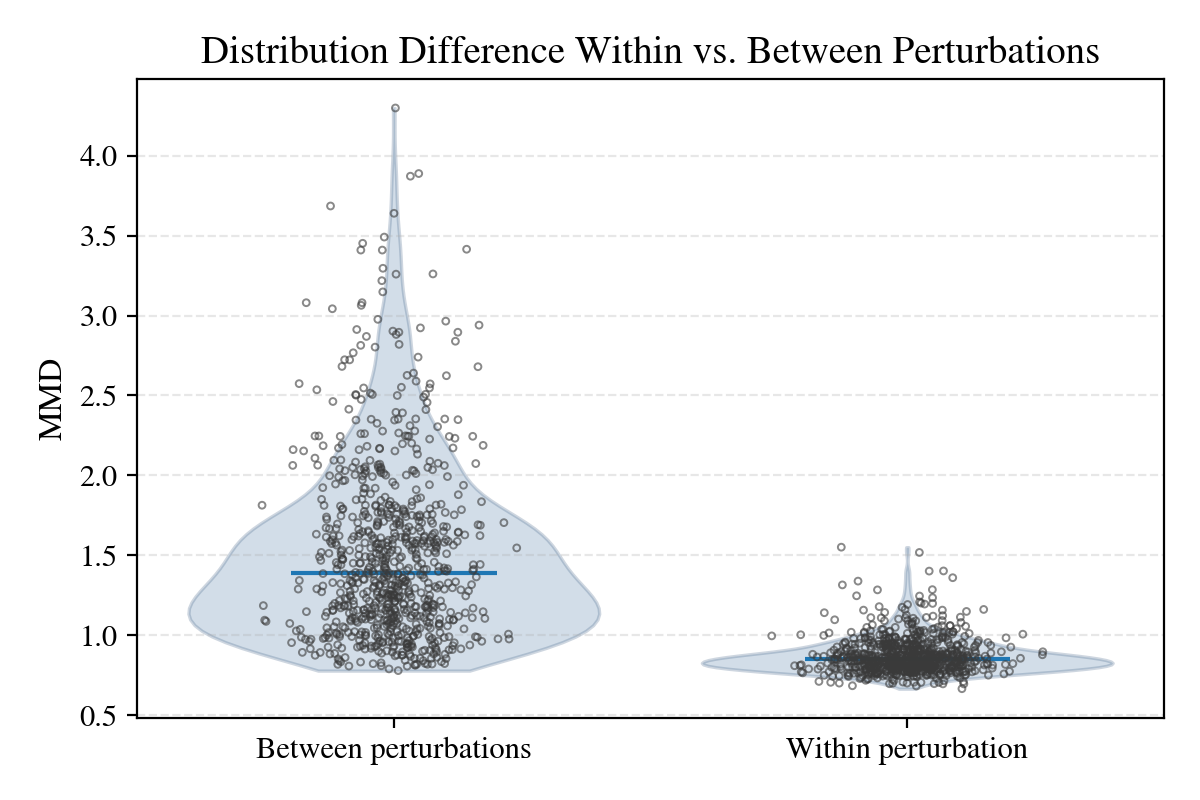}
            \caption{Replogle.}
         \end{subfigure}
         \caption{Distribution differences within and between perturbations. Violin plots show MMD values computed between cell distributions from the same perturbation (within) and from different perturbations (between) across perturbation datasets. }
        \label{fig:app_distribution_difference_mmd}
    \end{minipage}
    \vspace{-20pt}
\end{figure}

\begin{figure}
    \begin{minipage}[t]{1.0\textwidth}
        \centering
        \begin{subfigure}[b]{0.5\textwidth}
        \includegraphics[width=\textwidth]{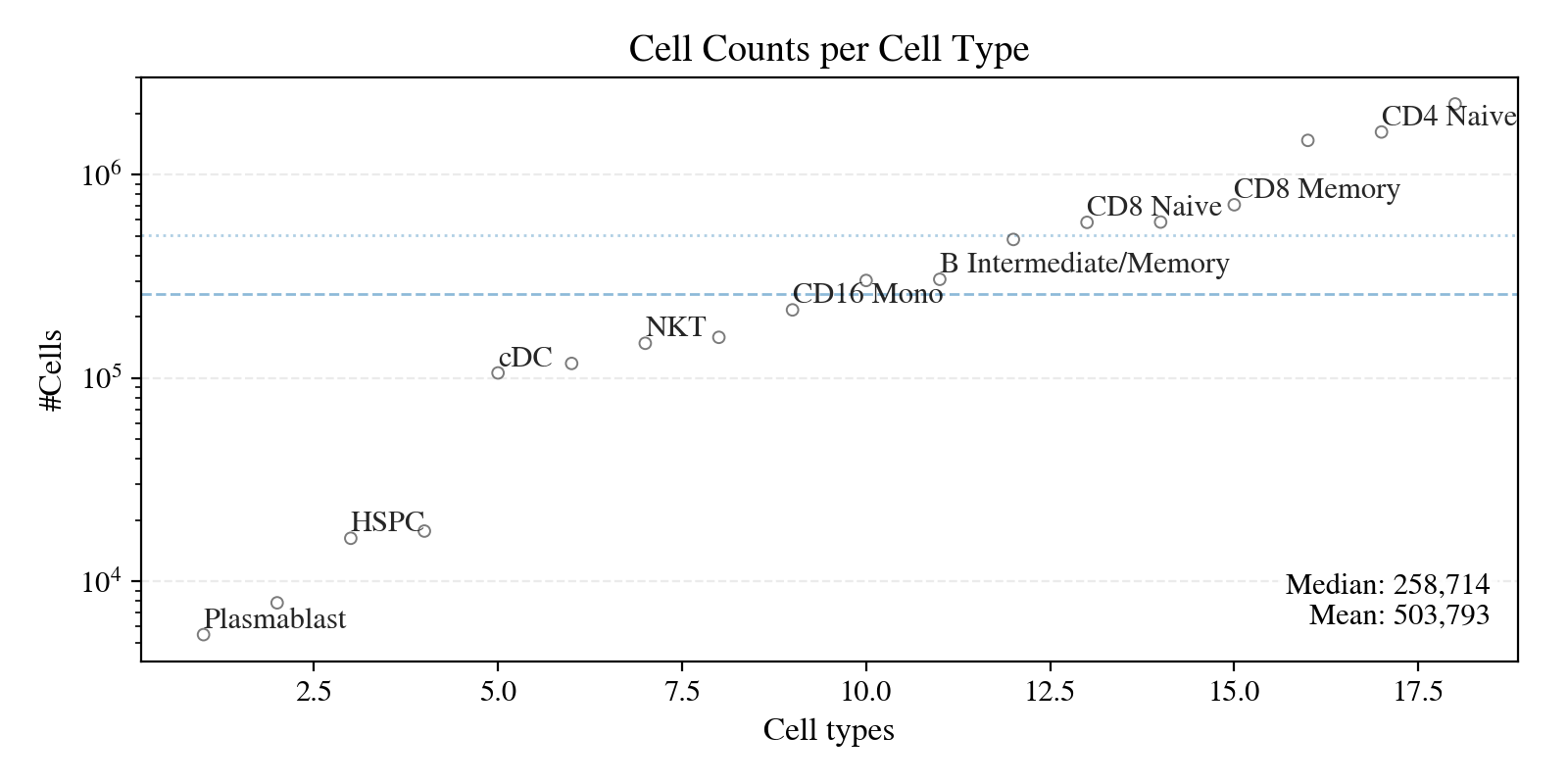}
            \caption{PBMC.}
         \end{subfigure}
        \begin{subfigure}[b]{0.45\textwidth}
        \includegraphics[width=\textwidth]{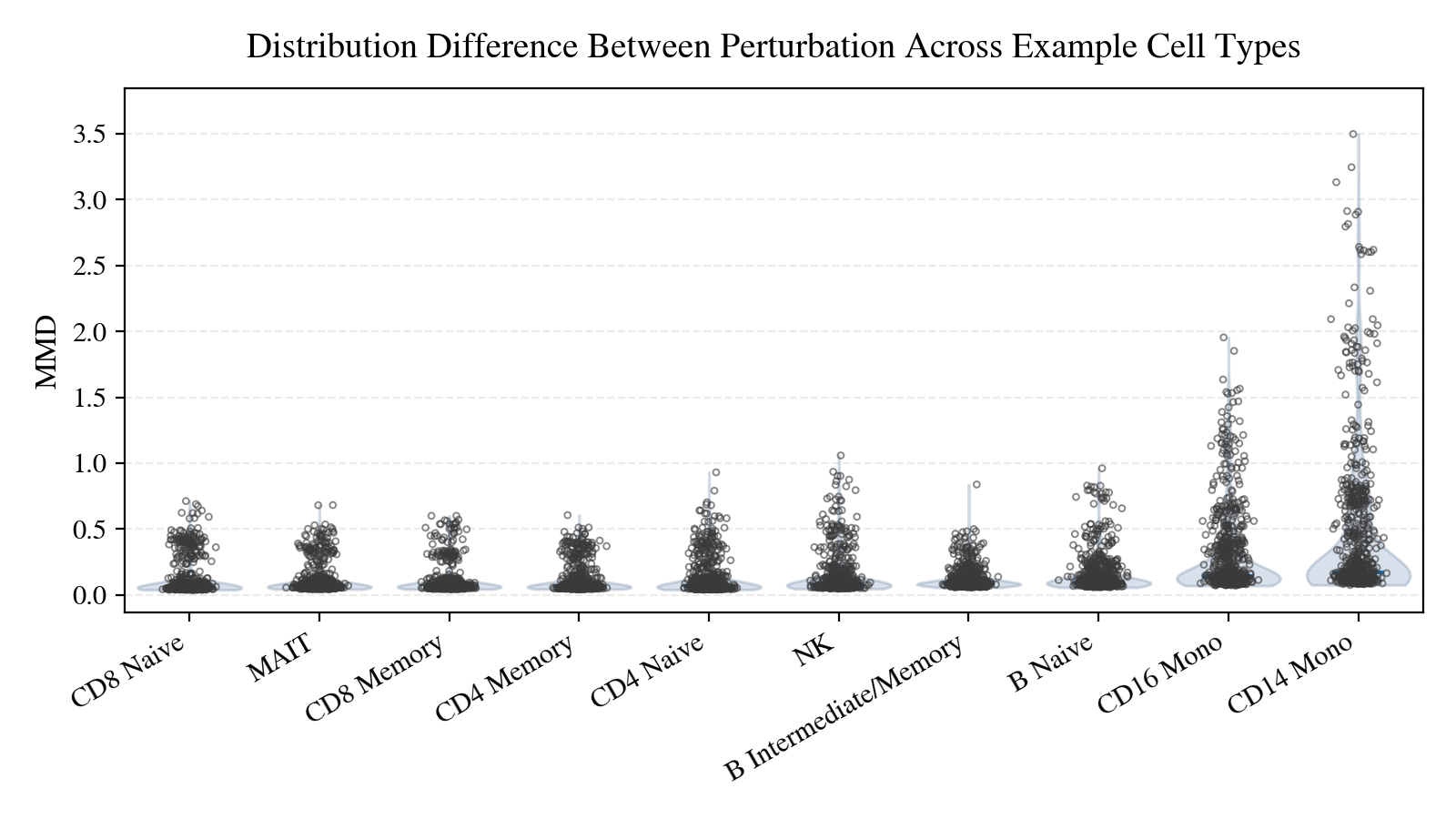}
            \caption{PBMC.}
         \end{subfigure}

         \begin{subfigure}[b]{0.5\textwidth}            \includegraphics[width=\textwidth]{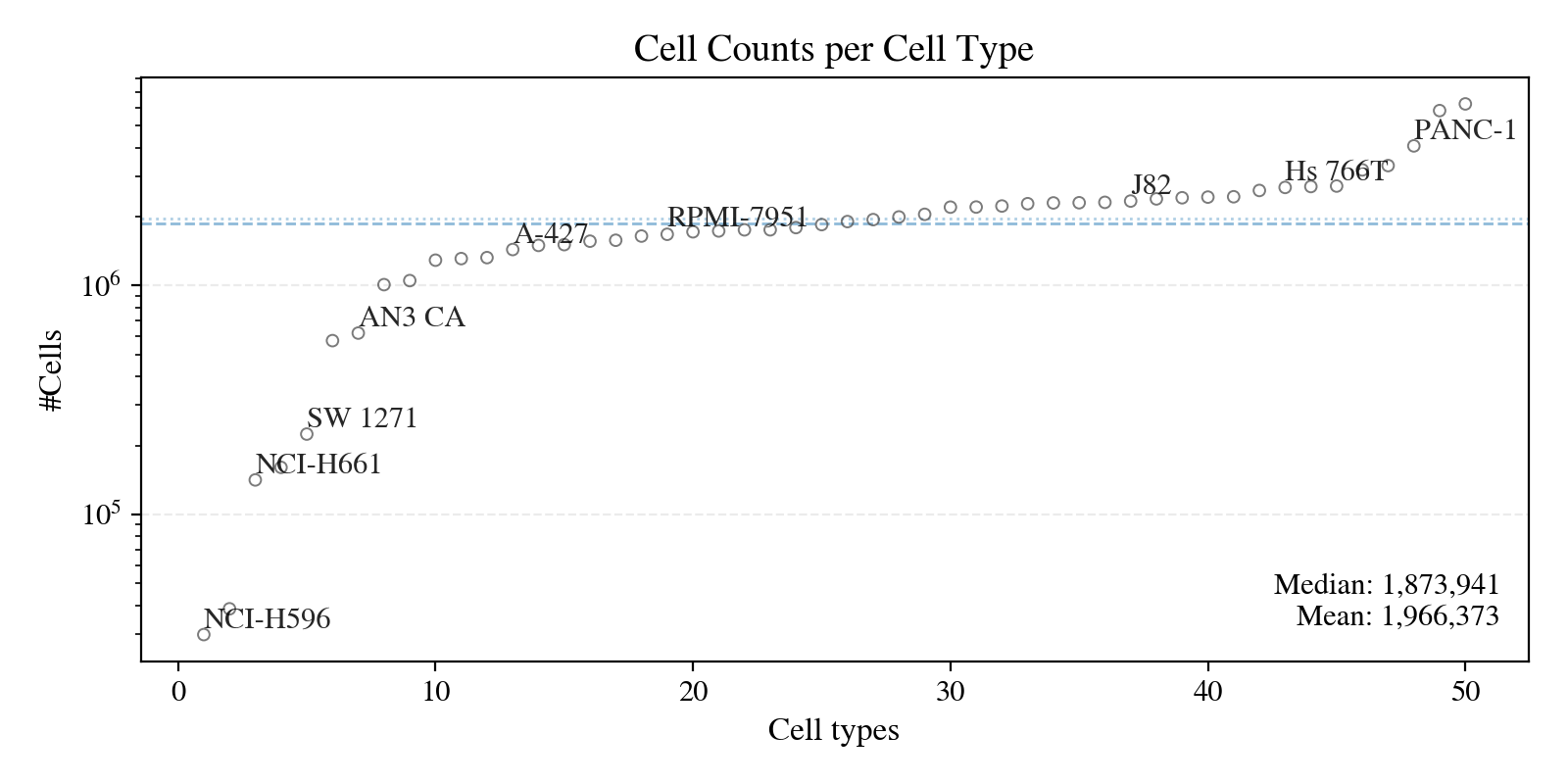}
            \caption{Tahoe100M.}
         \end{subfigure}
         \begin{subfigure}[b]{0.45\textwidth}            \includegraphics[width=\textwidth]{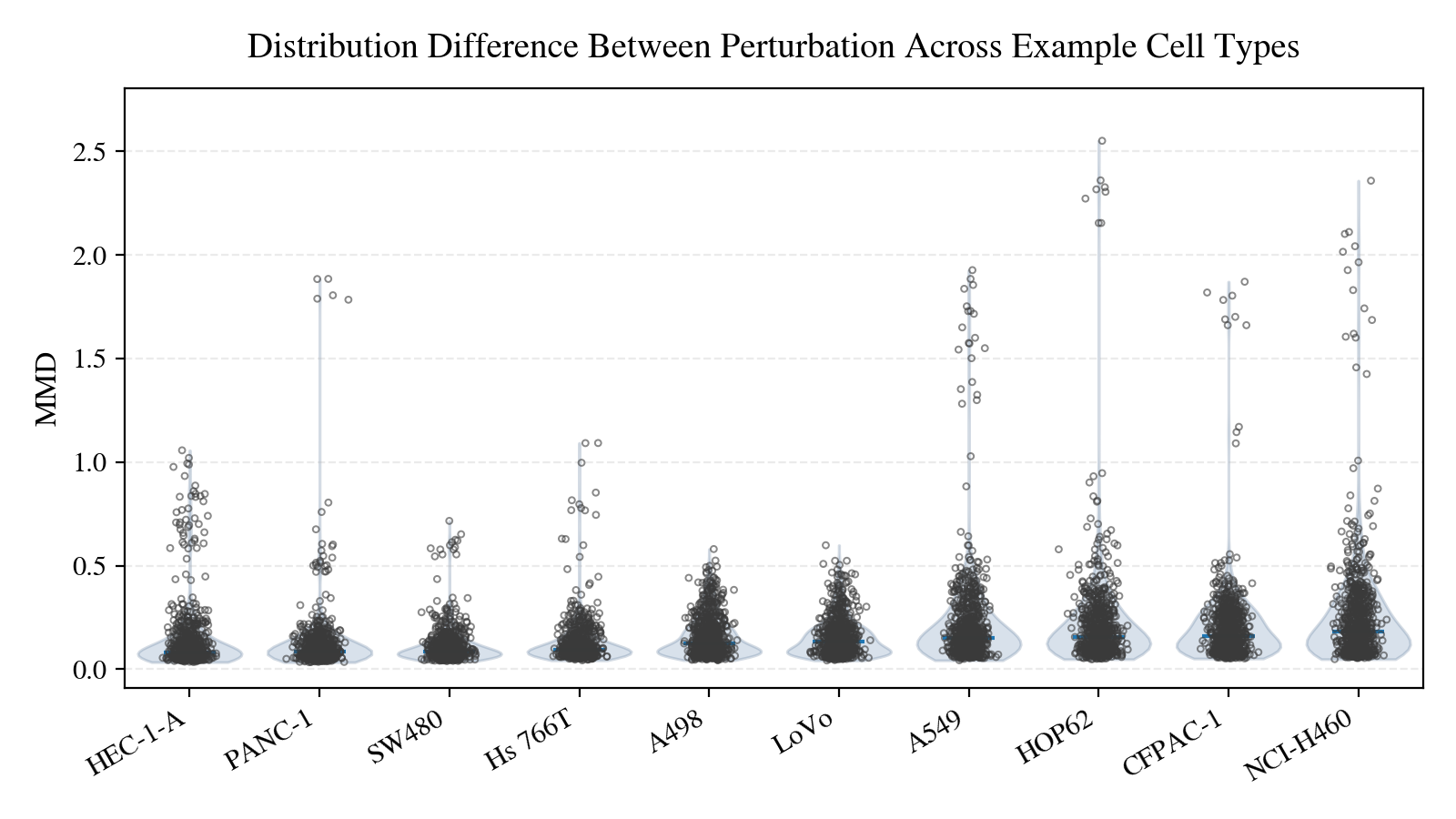}
            \caption{Tahoe100M.}
         \end{subfigure}
         \begin{subfigure}[b]{0.5\textwidth}            \includegraphics[width=\textwidth]{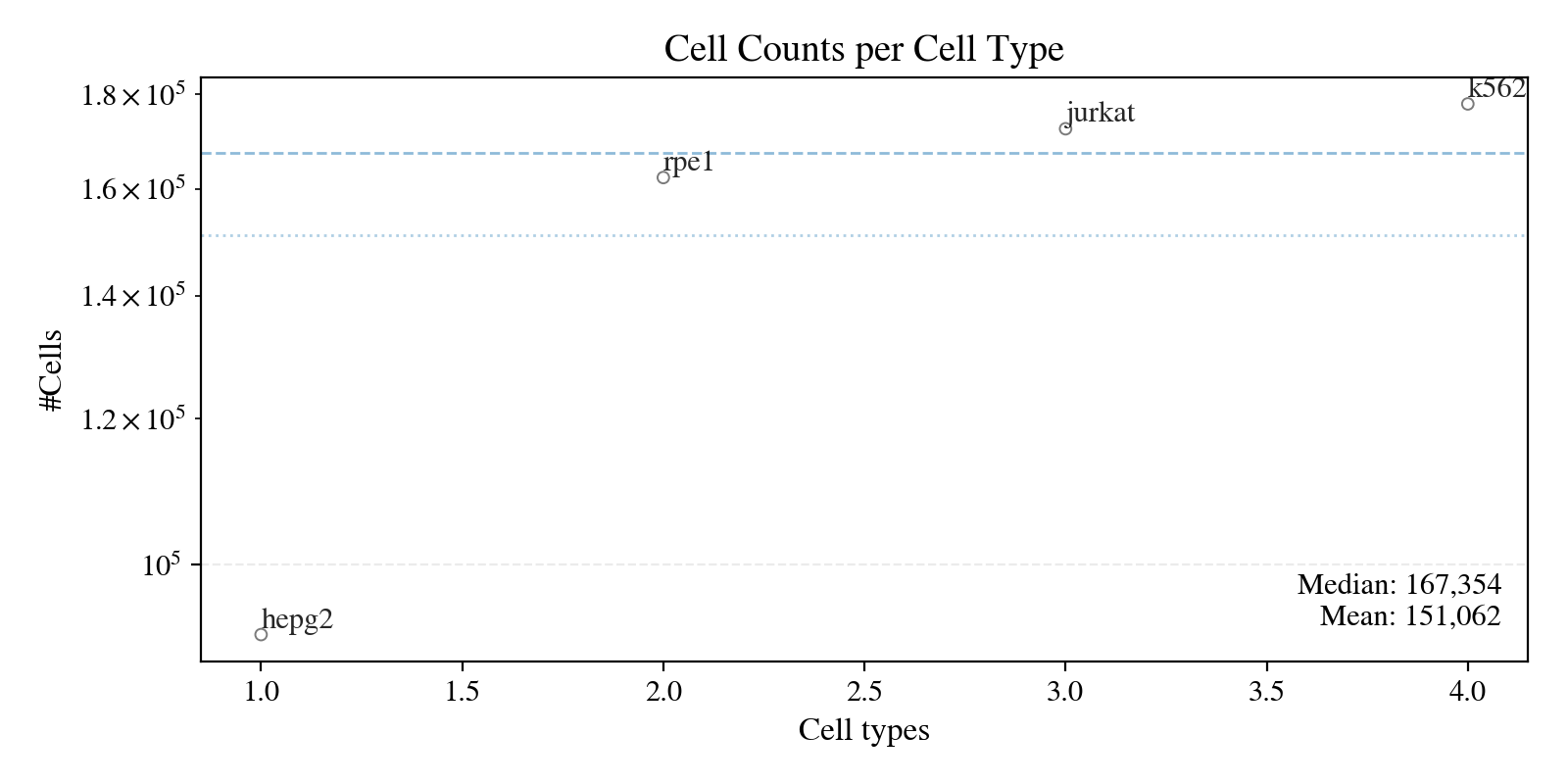}
            \caption{Replogle.}
         \end{subfigure}
         \begin{subfigure}[b]{0.45\textwidth}            \includegraphics[width=\textwidth]{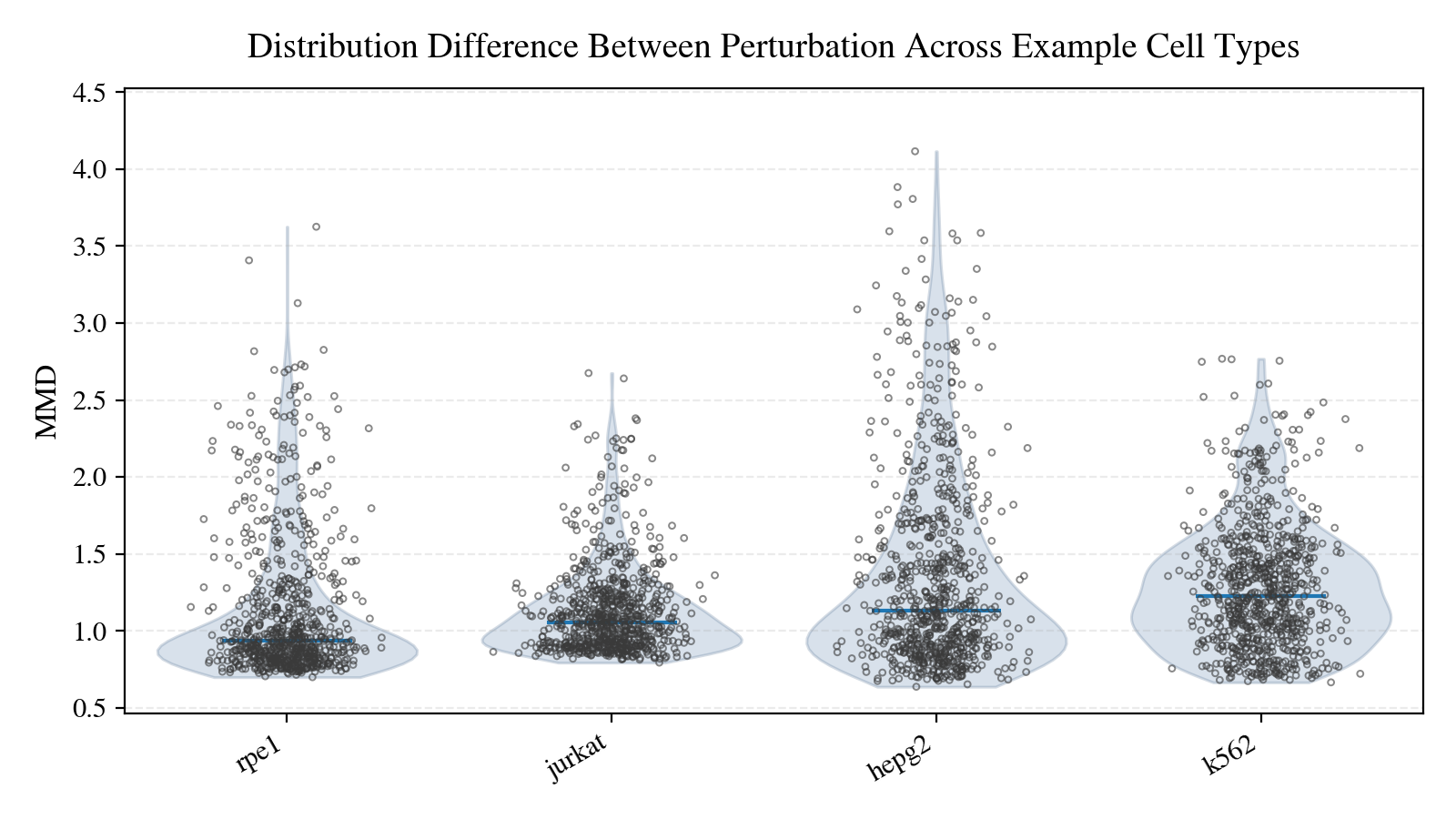}
            \caption{Replogle.}
         \end{subfigure}
         \caption{Cell counts per cell type (left panels (a), (c), and (e)) and between-perturbation MMD across example cell types (right panels (b), (d) and (f)), showing the heterogeneity in cell-type coverage and distributional differences across perturbations for specific cell types.}
        \label{fig:app_cell_type_compo}
    \end{minipage}
\end{figure}

\section{Method Details}
\label{app:context_method_details}
\subsection{Diffusion over Cell Distributions in Reproducing Kernel Hilbert Space (RKHS)}
\label{app:context_method_diffusion}

Our goal is to model the distribution shift between control and perturbed single-cell populations. To this end, in App.~\ref{app:cell_dist_random_variable}, we formally define cell distribution-valued random variables. In App.~\ref{app:cell_dist_hilbert_space}, we begin by embedding the control and perturbed cell distributions into a structured RKHS~\cite{Hastie2009}, which provides a principled space for defining diffusion processes directly at the distribution level. 
In App.~\ref{app:minibatch_empirical}, we then connect this functional formulation to observed data by constructing empirical distributions from finite batches of single cells. These empirical kernel mean embeddings serve as Monte Carlo samples of the underlying distribution-valued random variables and constitute the objects to which noise is added and removed during diffusion.
Since our Hilbert space is infinite-dimensional, in App.~\ref{app:Gaussian_random_element_and_measure}, we introduce \emph{Gaussian Random Elements} as a generalization of Gaussian measures to infinite-dimensional Hilbert spaces, which play an analogous role to Gaussian noise vectors in Euclidean diffusion models. 

Building on this, in App.~\ref{app:diffusion_forward_rkhs} and App.~\ref{app:diffusion_backward_rkhs}, we develop the forward and backward diffusion processes in RKHS, respectively. Finally, in App.~\ref{app:approximation_loss_and_noise}, we discuss a tractable implementation for the proposed diffusion framework over cell distributions. We show that both the reverse-time objective and the forward-time noise injection can be carried out through operations on empirical cell sets, while remaining consistent with the underlying RKHS diffusion formulation. 

Taken together, these sections establish a theoretical framework for generative modeling directly over distributions in RHKS, along with a simple and effective procedure for carrying out the corresponding computations in cell space.

\subsubsection{Cell distributions as random variables}
\label{app:cell_dist_random_variable}
\label{app:context_method_diffusion_part1}

For a fixed cellular context $c$ and perturbation $\tau$, we model the perturbed population not as a single deterministic distribution, but as a \emph{distribution-valued random variable}
\begin{equation}
    D_{c,\tau} \sim \bbP_{\,c,\tau},
\end{equation}
where $\bbP_{c,\tau}$ is the law of cell distributions.
Analogously, we define the control population as a random variable $D_c \sim \mathbb{P}_{c}$.

This formulation reflects the fact that cell populations collected under the same observed condition $(c,\tau)$ may be influenced by unobserved biological and technical factors, resulting in a family of plausible response distributions. Under this view, the commonly assumed target distribution $P_{c,\tau}$ corresponds to the expectation of the random variable $D_{c,\tau}$, i.e.\ $P_{c,\tau} = \mathbb{E}[D_{c,\tau}]$.

\subsubsection{Cell Distributions as Points in a Hilbert Space via Kernel Mean Embedding.} 
\label{app:cell_dist_hilbert_space}
\label{app:context_method_diffusion_part2}

Let $k: \mathcal{X} \times \mathcal{X} \to \mathbb{R}$ be a positive definite kernel. Then, by the Moore-Aronszajn theorem, there exists a unique reproducing kernel Hilbert space (RKHS) $\calH_k$ of functions on $\calX$, for which $k$ is a reproducing kernel. In particular, for all $x\in\calX$, there exists a unique element $k_x\in\calH_k$ such that $f(x)=L_x(f)=\langle f, k_x\rangle_{\calH_k}\ \forall f\in\calH_k$, where $L_x\in\calH$ is the point-evaluation functional defined as $L_x(f)\coloneqq f(x)$. For any probability measure $P$ over $\calX$, its \emph{kernel mean embedding} is defined as \begin{equation} \boldsymbol{\mu}_P\coloneqq \bbE_{Z\sim P}[k(Z,\cdot)]\in\calH_k, \end{equation} provided that the expectation exists if $\mathbb{E}_{Z\sim P}[\sqrt{k(Z,Z)}] < \infty$. In our work, we employ the energy distance kernel, which satisfies this condition under mild assumptions on the data distribution. Specifically, although we formally denote cell states as $\bx \in \mathbb{R}^{|\mathcal{G}|}$ (Sec.~\ref{sec:prelminary_notation}) for convenience, in practice, gene expression values are non-negative integers with finite dynamic range, implying that the support of the cell space $\mathcal{X}$ is bounded. Consequently, the kernel is bounded on $\mathcal{X}$, ensuring the existence of the kernel mean embedding $\boldsymbol{\mu}_P$.


This kernel mean embedding endows the space of probability measures with a Hilbert-space structure with the following standard properties~\cite{Muandet2017}.




\begin{lemma}[Basic properties of kernel mean embeddings]
For any distribution $P$ and $Q$ on $\mathcal{X}$ and any $\alpha \in [0,1]$, the following hold:

(1) Linearity under mixing: ${\boldsymbol{\mu}}_{(\alpha P + (1-\alpha) Q)} = \alpha {\boldsymbol{\mu}}_P + (1-\alpha) {\boldsymbol{\mu}}_Q$, ensuring that convex combinations of distributions correspond to the convex combinations of the kernel mean embeddings in $\mathcal{H}_k$.

(2) {Kernel geometry}: $\langle{\boldsymbol{\mu}}_P, {\boldsymbol{\mu}}_Q\rangle_{\mathcal{H}_k} = \mathbb{E}_{Z\sim P, Z'\sim Q} k(Z, Z’)$, meaning the Hilbert inner product coincides with an expected kernel similarity. 

(3) {Distributional distance through MMD}: $||{\boldsymbol{\mu}}_P - {\boldsymbol{\mu}}_Q||_{\mathcal{H}_k} = \text{MMD}_k(P, Q)$, where $\text{MMD}_k$ denotes the maximum mean discrepancy induced by $k$~\cite{gretton2012mmd}, and gives the embedding space a well-understood geometric interpretation.

\label{lemma:app_kernel_property}
\end{lemma}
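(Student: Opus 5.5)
The plan is to work from the Bochner-integral definition $\boldsymbol{\mu}_P=\mathbb{E}_{Z\sim P}[k(Z,\cdot)]$, which exists because $\mathbb{E}_{Z\sim P}[\sqrt{k(Z,Z)}]=\mathbb{E}_{Z\sim P}\|k(Z,\cdot)\|_{\mathcal{H}_k}<\infty$; this holds since $k$ is bounded on the bounded support $\mathcal{X}$. The one tool used throughout is that Bochner integrals commute with bounded linear operators, in particular with the functionals $\langle\cdot, g\rangle_{\mathcal{H}_k}$. Combined with the reproducing property $\langle f, k(x,\cdot)\rangle_{\mathcal{H}_k}=f(x)$, this gives $\langle f,\boldsymbol{\mu}_P\rangle_{\mathcal{H}_k}=\mathbb{E}_{Z\sim P}[f(Z)]$ for every $f\in\mathcal{H}_k$. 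I will establish this identity first and derive all three items from it.

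For (1), the integral of $k(z,\cdot)$ against the mixture $\alpha P+(1-\alpha)Q$ splits by linearity of integration in the measure as $\alpha\int k(z,\cdot)\,dP+(1-\alpha)\int k(z,\cdot)\,dQ$. A cleaner way to state it is weakly: for every $f$, $\langle f,\boldsymbol{\mu}_{\alpha P+(1-\alpha)Q}\rangle_{\mathcal{H}_k}=\alpha\mathbb{E}_P f+(1-\alpha)\mathbb{E}_Q f=\langle f,\alpha\boldsymbol{\mu}_P+(1-\alpha)\boldsymbol{\mu}_Q\rangle_{\mathcal{H}_k}$. Since an element is determined by its inner products with all $f\in\mathcal{H}_k$, the two embeddings are equal.

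For (2), apply the key identity with $f=\boldsymbol{\mu}_Q$ to get $\langle\boldsymbol{\mu}_P,\boldsymbol{\mu}_Q\rangle_{\mathcal{H}_k}=\mathbb{E}_{Z\sim P}[\boldsymbol{\mu}_Q(Z)]$. Then evaluate $\boldsymbol{\mu}_Q(z)=\langle\boldsymbol{\mu}_Q,k(z,\cdot)\rangle_{\mathcal{H}_k}=\mathbb{E}_{Z'\sim Q}k(Z',z)$. Fubini, which is justified because $|k(z,z')|\le\sqrt{k(z,z)k(z',z')}$ is integrable, turns the iterated expectation into $\mathbb{E}_{Z\sim P,Z'\sim Q}k(Z,Z')$ with $Z$ and $Z'$ independent.

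For (3), recall the definition $\mathrm{MMD}_k(P,Q)=\sup_{\|f\|_{\mathcal{H}_k}\le1}|\mathbb{E}_P f-\mathbb{E}_Q f|$. By the key identity this equals $\sup_{\|f\|\le1}|\langle f,\boldsymbol{\mu}_P-\boldsymbol{\mu}_Q\rangle_{\mathcal{H}_k}|$. Cauchy--Schwarz bounds it by $\|\boldsymbol{\mu}_P-\boldsymbol{\mu}_Q\|_{\mathcal{H}_k}$, and the choice $f=(\boldsymbol{\mu}_P-\boldsymbol{\mu}_Q)/\|\boldsymbol{\mu}_P-\boldsymbol{\mu}_Q\|$ attains the bound; the case where the difference is zero is trivial. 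Expanding the squared norm with (2) also recovers the closed-form $\mathbb{E}k(Z,Z')+\mathbb{E}k(W,W')-2\mathbb{E}k(Z,W)$, which is the form used in the training loss.

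The only real obstacle is the measure-theoretic justification: existence of the Bochner integral, and interchanging it with inner products and with Fubini. Boundedness of $k$ on $\mathcal{X}$ handles both. For the energy-distance kernel, which is only conditionally positive definite unless it is centered at a base point, I will assume the paper's centered positive-definite version so that $\mathcal{H}_k$ exists as stated.
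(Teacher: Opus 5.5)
Your proof is correct, and it is the standard argument. The paper gives no proof of its own and cites these as standard properties of kernel mean embeddings (Muandet et al.\ 2017); your route matches that literature: the identity $\langle f,\boldsymbol{\mu}_P\rangle_{\mathcal{H}_k}=\mathbb{E}_{Z\sim P}[f(Z)]$ from Bochner integrability and the reproducing property, Fubini for the inner product, and Cauchy--Schwarz with the normalized witness $(\boldsymbol{\mu}_P-\boldsymbol{\mu}_Q)/\|\boldsymbol{\mu}_P-\boldsymbol{\mu}_Q\|_{\mathcal{H}_k}$ for the MMD identity. Your caveat is also well placed: the energy-distance kernel has to be taken in its centered (distance-induced) positive-definite form for $\mathcal{H}_k$ to exist, a point the paper glosses over.
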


 Together, these properties allow us to represent entire cellular populations as single points in a Hilbert space, with inner produces and norms corresponding to population-level similarities and discrepancies. This makes $\mathcal{H}_k$ a natural state space for defining diffusion processes over distributions rather than individual cells.

\subsubsection{Empirical distributions as instantiations of distribution-valued random variables}
\label{app:context_method_diffusion_part3}

\label{app:minibatch_empirical}

In practice, the distribution-valued random variable $D_{c,\tau}$ is not directly observable. Instead, experiments conducted under a fixed cellular context $c$ and perturbation condition $\tau$ provide only \emph{finite batches of single cells}, each offering a partial and noisy observation of one realization of $D_{c,\tau}$. Different cell batches may be implicitly influenced by different unobserved biological or technical factors, and thus correspond to different realizations of the underlying random variable.

Given a batch of perturbed cells $B_{\text{pert}} = \{\bx_{1},\dots,\bx_{m}\} \subset X_{\text{pert}}$, we construct the corresponding empirical distribution
\begin{equation}
    \tilde{P}_{\text{pert}} := \frac{1}{m}\sum_{j=1}^m \delta_{\bx_{j}},
\end{equation}
which serves as a Monte Carlo approximation of a realization drawn from $D_{c,\tau}$. Its associated kernel mean embedding is given by
\begin{equation}
{\boldsymbol{\mu}}_{\tilde{P}_{\text{pert}}} = \frac{1}{m}\sum_{j=1}^{m} k(\bx_{j}, \cdot) \in \mathcal{H}_k.
\end{equation}


In total, we consider $M_{\text{pert}}$ such perturbed batches, yielding a collection of empirical kernel mean embeddings $\{{\boldsymbol{\mu}}_{\tilde P_{\text{pert}}}\}$, which constitute empirical samples the random variable ${\boldsymbol{\mu}}_{c,\tau}$ in $\mathcal{H}_k$.

Similarly, control cells are sampled from realizations of the control distribution-valued random variable $D_c$. We denote $B_{\text{ctrl}} = \{\bx_{1}, \dots, \bx_{m}\} \subset X_{\text{ctrl}}$ as a batch of control cells, with a total of $M_{\text{ctrl}}$ batches. Each batch induces an empirical distribution and its kernel mean embedding ${\boldsymbol{\mu}}_{\tilde P_{\text{ctrl}}}$, forming observable samples of the random variable ${\boldsymbol{\mu}}_{c}$ in $\mathcal{H}_k$.



Furthermore, our empirical cell distribution has exponential convergence in the supremum norm to the ground truth cell distribution as the size of the cell batch grows, the proof of which we refer to~\citet{multivariate_dkw}. 
\begin{proposition}
    \label{prop:multivariate_dkw}
    For some cell population distribution $P$ and its empirical cell distribution $P_m$ formed by $m$ cells $\bx_1,\ldots,\bx_m\overset{i.i.d.}{\sim} P$ and number of genes $|\calG|$, we have 
    $$
    \bbP_{P}[\sup_{x\in\bbR^{|\calG|}}|F_n(x) - F(x)| > t] \leq 2|\calG| \exp{(-2mt^2)},
    $$
    for all $t\geq0$ and $m$ sufficiently large, where $F$ and $F_n$ is the multivariate cumulative distribution function corresponding to $P$ and $P_m$, respectively. 
\end{proposition}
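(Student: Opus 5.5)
I would prove this by reducing it to concentration for empirical measures of a specific family of sets. The family is the lower orthants $O_x=\{z\in\mathbb{R}^{|\mathcal{G}|}: z\le x\}$ (coordinatewise). For each $x$ we have $F(x)=P(O_x)$ and $F_m(x)=P_m(O_x)=\frac1m\sum_{j=1}^m \mathbf{1}\{\bx_j\le x\}$, where I write $F_m$ for the statement's $F_n$.

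\textbf{Reduction to countably many orthants.} The supremum over $x$ is measurable and can be taken over a countable dense set, because both $F$ and $F_m$ are right-continuous in each coordinate. This step is routine.

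\textbf{A dimension-free rate needs a symmetrization argument.} A naive route would be a union bound over coordinates. That would apply the one-dimensional DKW inequality with Massart's constant, $\mathbb{P}[\sup_s|F^{(g)}_m(s)-F^{(g)}(s)|>t]\le 2e^{-2mt^2}$, to each marginal and sum, giving $2|\mathcal{G}|e^{-2mt^2}$. However, this only controls the marginal CDFs, not the joint one. So I would instead follow the multivariate DKW argument of \citet{multivariate_dkw}, which I take as the route the paper intends.
\begin{itemize}
\item First, symmetrize with a ghost sample, so the deviation is bounded by twice the deviation between two independent empirical measures. This is valid once $mt^2$ is large enough, which is where the hypothesis ``$m$ sufficiently large'' enters.
\item Second, condition on the pooled $2m$ points. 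The orthant class picks out subsets determined by the coordinatewise ranks, and a chaining or peeling argument in each coordinate controls the resulting supremum.
\item The key structural fact is that orthants form a VC class of dimension $|\mathcal{G}|$, and that they decompose as intersections of $|\mathcal{G}|$ half-lines. This lets one carry out the one-dimensional Massart-type reflection argument coordinatewise and pay only a factor linear in $|\mathcal{G}|$ in the prefactor, rather than a polynomial $m^{|\mathcal{G}|}$ growth-function factor.
\end{itemize}

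\textbf{Main obstacle.} The hard step is keeping the exponent at the sharp $2mt^2$ while the dimension enters only the prefactor, as $2|\mathcal{G}|$. Standard VC bounds such as Vapnik–Chervonenkis would give something like $(2m+1)^{|\mathcal{G}|}e^{-mt^2/8}$, which is correct but weaker than the statement. To get the sharp form I would invoke the cited theorem of \citet{multivariate_dkw} directly, and check that its hypotheses are met: i.i.d.\ sampling from $P$ on $\mathbb{R}^{|\mathcal{G}|}$, with no continuity assumption on $F$ required. Gene counts are discrete, so this last point matters.

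\textbf{Identification with the empirical cell distribution.} Finally, I would note that $P_m$ is exactly $\tilde P_{\text{pert}}$ from the preceding construction. The bound therefore transfers verbatim to the empirical cell distributions used to instantiate realizations of $D_{c,\tau}$.
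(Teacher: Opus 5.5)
Deferring to \citet{multivariate_dkw} is exactly what the paper does; it gives no argument beyond that pointer. The problem is that the mechanism you sketch around the citation cannot produce the stated bound, and ``check its hypotheses and invoke it'' hides the real issue.

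\textbf{Why the sketch fails.} Ghost-sample symmetrization either loses a constant in the exponent (the classical Vapnik--Chervonenkis route), or, with a large ghost sample as in Devroye's variant, keeps $2mt^2$ but pays a shatter-coefficient factor polynomial in $m$. Neither gives a prefactor like $2|\mathcal{G}|$. Running Massart's argument ``coordinatewise at a price linear in $|\mathcal{G}|$'' is just the marginal union bound you rightly rejected.

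\textbf{The inequality fails under your reading.} If ``$m$ sufficiently large'' is meant uniformly in $t$, or even in the form $mt^2\ge c$ that your symmetrization step suggests, the inequality is false for $|\mathcal{G}|\ge 2$. Take $X=(S,1-S)$ with $S$ uniform on $[0,1]$. Then $F(x_1,x_2)=\mathbb{P}(1-x_2\le S\le x_1)$, so orthants become intervals. Hence $\sup_x|F_m(x)-F(x)|$ is exactly Kuiper's statistic of $S_1,\dots,S_m$. Its limit law is that of the range of a Brownian bridge, which gives, at $t=2/\sqrt{m}$,
\[
\lim_{m\to\infty}\mathbb{P}\Bigl[\sup_x|F_m(x)-F(x)|>t\Bigr]=2\sum_{k\ge1}(16k^2-1)e^{-8k^2}>30e^{-8}>4e^{-8}=2|\mathcal{G}|e^{-2mt^2}.
\]
The same failure occurs in every dimension $|\mathcal{G}|\ge 2$: pad with constant coordinates and take $t=\lambda/\sqrt{m}$ with $\lambda$ fixed, $\lambda^2\ge c$ and $4\lambda^2-1\ge|\mathcal{G}|$. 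So any sharp-exponent bound valid for all such $t$ must let its prefactor grow with $\sqrt{m}\,t$.

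\textbf{What is actually missing.} The gap is in the quantifiers, not in verifying hypotheses. The statement does hold in the weak sense ``for each fixed $t>0$ there is $m_0(t)$'', but for a large-deviation reason rather than a Massart-type one; any positive constant could replace $2|\mathcal{G}|$. In that sense it has a short self-contained proof.
\begin{itemize}
\item Write $X_i=F_i^{-1}(U_i)$, where $U$ has uniform marginals (the distributional transform). Then $\{X\le x\}=\{U\le(F_1(x_1),\dots,F_{|\mathcal{G}|}(x_{|\mathcal{G}|}))\}$, so it suffices to treat the CDF $G$ of $U$; this also handles atoms.
\item Bracket orthants by the grid of mesh $\delta/|\mathcal{G}|$ in each coordinate. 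This gives $\sup_u|G_m(u)-G(u)|\le\max_{k\le K}|G_m(u^{(k)})-G(u^{(k)})|+\delta$, with $K$ independent of $m$.
\item Apply Chernoff's bound with the binary relative entropy, together with the refinement of Pinsker's inequality $\mathrm{kl}(p\pm s\,\|\,p)\ge 2s^2+\tfrac{4}{9}s^4$. With $s=t-\delta$ this yields $\mathbb{P}[\sup_x|F_m-F|>t]\le 2K\exp\{-m(2s^2+\tfrac{4}{9}s^4)\}$.
\item Choose $\delta$ small enough that this exponent exceeds $2t^2$. Then the right-hand side is at most $2|\mathcal{G}|e^{-2mt^2}$ for all $m\ge m_0(t)$. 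The cases $t=0$ and $t\ge1$ are trivial.
\end{itemize}
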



\subsubsection{Gaussian random elements and Gaussian measures in RKHS.}
\label{app:context_method_diffusion_part4}

\label{app:Gaussian_random_element_and_measure}

To define a diffusion model over $\mathcal{H}_k$, we require a notion of Gaussian noise acting directly on the elements of $\calH_k$. In finite-dimensional spaces such as $\mathbb{R}^d$, Gaussian noise is represented by a multivariate normal random vector, whose distribution is fully characterized by a mean vector and a covariance matrix. 

When moving to infinite-dimensional Hilbert spaces, such as the RKHS $\mathcal{H}_k$ in our case, we want to capture the essential properties of Gaussian measures in Euclidean space while maintaining computational tractability. Similar to defining a Gaussian process by restricting its projections to finite-dimensional joint distributions, we define $\Xi$, a $\calH_k$-valued random variable, to be a \emph{Gaussian random element} (GRE) if projecting $\Xi$ in any ``direction'' induces a Gaussian distribution in Euclidean space, \emph{i.e.} the $\bbR$-valued random variable $\langle f,\Xi\rangle_{\calH_k}$ is a Gaussian for any $f\in\calH_k$. Note that for any GRE $\Xi\sim P_\Xi$, there exists a unique mean element $m\in\calH_k$ given by 
$
m\coloneqq \int_{\calH_k}\Xi\,dP_\Xi.
$
Moreover, there also exists a unique linear covariance operator $C:\calH_k\to\calH_k$ defined as
$$
    C:f\mapsto \int_{\calH_k}\langle f, \Xi\rangle_{\calH_k} dP_\Xi - \langle f, m\rangle_{\calH_k}m.
$$
Furthermore, the covariance operator $C$ is symmetric, positive semi-definite, compact, and is trace-class~\cite{Brezis2011}. 
\begin{definition}[Symmetric operator]
\label{def:symmetric_operator}
    An operator $C:\calH_k\to\calH_k$ is symmetric if $\langle Cf,g\rangle_{\calH_k} = \langle f, Cg\rangle_{\calH_k}$ for all $f,g\in\calH_k$.
\end{definition}

\begin{definition}[Positive semi-definite operator]
\label{def:psd_operator}
    An operator $C:\calH_k\to\calH_k$ is positive semi-definite if $\langle f,Cf\rangle_{\calH_k}\geq0$ for all $f\in\calH_k$.
\end{definition}
\begin{definition}[Compact operator]
\label{def:compact_operator}
    An operator $C:\calH_k\to\calH_k$ is compact if the image of the closed unit ball $\{f\in\calH_k:\|f\|_{\calH_k}\leq1\}$ under $C$ is relatively compact (i.e.\ its closure is compact) with respect to the topology induced by $\|\cdot\|_{\calH_k}$. 
\end{definition}
\begin{definition}[Trace-class operator]
\label{def:trace_class_operator}
    The covariance operator $C$ induced by a GRE $\Xi\sim P_\Xi$ is trace-class if it has finite trace, i.e.\ $\text{tr}(C)=\bbE_{\Xi\sim P_\Xi}[\|\Xi\|^2_{\calH_k}]<\infty$. 
\end{definition}
The converse also holds, for any $m\in\calH_k$ and symmetric, positive semi-definite, compact, and trace-class linear operator $C:\calH_k\to\calH_k$, there exists a $\calH_k$-valued Gaussian measure with mean and covariance $m$ and $C$, respectively. Thanks to this one-to-one correspondence, we can write the distribution of any GRE as $\Xi\sim P_\Xi=\calN_{\calH_k}(m,C)$. Additionally, the following properties also hold because of the linearity of the inner product.


\begin{lemma}[Affine transformations]
\label{lemma:affine}
Let $\Xi \sim \mathcal{N}_{\mathcal{H}_k}(m, C)$ be a Gaussian random element. Then for any $\gamma \in \mathbb{R}$ and $g \in \mathcal{H}_k$, 
\begin{equation}
\gamma  \Xi + g \sim     \mathcal{N}_{\mathcal{H}_k}(\gamma m + g, \gamma^2 C).
\end{equation}
\end{lemma}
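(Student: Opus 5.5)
The plan is to work directly from the paper's definition of a Gaussian random element: an $\mathcal{H}_k$-valued random variable $\Xi$ is Gaussian if $\langle f,\Xi\rangle_{\mathcal{H}_k}$ is a real Gaussian for every $f\in\mathcal{H}_k$. The mean element and covariance operator are then fixed by the first and second moments of these projections, and the one-to-one correspondence stated above identifies the law. So set $Y:=\gamma\Xi+g$ and proceed in three steps: show $Y$ is a Gaussian random element, compute its mean, and compute its covariance operator.

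\textbf{Gaussianity.} Fix $f\in\mathcal{H}_k$. By linearity of the inner product,
\begin{equation*}
\langle f, Y\rangle_{\mathcal{H}_k} = \gamma\langle f,\Xi\rangle_{\mathcal{H}_k} + \langle f, g\rangle_{\mathcal{H}_k}.
\end{equation*}
This is an affine function of the real Gaussian $\langle f,\Xi\rangle_{\mathcal{H}_k}$, hence itself Gaussian. The degenerate case $\gamma=0$ gives a point mass, which we count as a Gaussian with zero variance, consistent with the convention that the covariance operator need only be positive semi-definite. So $Y$ is a Gaussian random element.

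\textbf{Mean.} Take expectations of the projection:
\begin{equation*}
\mathbb{E}\langle f,Y\rangle_{\mathcal{H}_k} = \gamma\langle f,m\rangle_{\mathcal{H}_k} + \langle f,g\rangle_{\mathcal{H}_k} = \langle f,\gamma m+g\rangle_{\mathcal{H}_k}.
\end{equation*}
This holds for every $f$, and the mean element is unique, so the mean of $Y$ is $\gamma m+g$. Equivalently, this is the Bochner integral being linear, with integrability inherited from that of $\Xi$.

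\textbf{Covariance.} Use the characterization
\begin{equation*}
\langle C_Y f, h\rangle_{\mathcal{H}_k} = \mathrm{Cov}\bigl(\langle f,Y\rangle_{\mathcal{H}_k}, \langle h,Y\rangle_{\mathcal{H}_k}\bigr).
\end{equation*}
The deterministic shift $\langle\cdot,g\rangle_{\mathcal{H}_k}$ drops out of a covariance, and the scalar factor $\gamma$ comes out of each argument. This gives $\gamma^2\,\mathrm{Cov}(\langle f,\Xi\rangle_{\mathcal{H}_k},\langle h,\Xi\rangle_{\mathcal{H}_k}) = \langle \gamma^2 C f,h\rangle_{\mathcal{H}_k}$, and therefore $C_Y=\gamma^2 C$. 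The operator $\gamma^2 C$ is still symmetric, positive semi-definite, compact and trace-class, since $\operatorname{tr}(\gamma^2 C)=\gamma^2\operatorname{tr}(C)<\infty$. The correspondence between Gaussian measures and (mean, covariance) pairs then gives $Y\sim\mathcal{N}_{\mathcal{H}_k}(\gamma m+g,\gamma^2 C)$.

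\textbf{Main obstacle.} The only delicate point is notational. The paper writes the covariance operator loosely as $\int\langle f,\Xi\rangle\,dP_\Xi-\langle f,m\rangle m$, which presumably means $\int\langle f,\Xi\rangle\,\Xi\,dP_\Xi-\langle f,m\rangle m$. I would make that reading explicit and verify the covariance through the bilinear form above, which avoids any ambiguity in the formula.
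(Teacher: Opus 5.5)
Your proposal is correct and follows the paper's own justification. The paper only says the lemma holds ``because of the linearity of the inner product''; you apply that linearity to the projections $\langle f,\cdot\rangle_{\mathcal{H}_k}$ and write out the Gaussianity, mean and covariance steps explicitly. Your reading of the paper's covariance formula as $\int\langle f,\Xi\rangle_{\mathcal{H}_k}\,\Xi\,dP_\Xi-\langle f,m\rangle_{\mathcal{H}_k}\,m$ is also the right fix for its typo.
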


\begin{lemma}[Sum of independent Gaussian random elements]
\label{lemma:sum_independent}
If $\Xi_1 \sim \mathcal{N}_{\mathcal{H}_k}(m_1, C_1)$ and $\Xi_2 \sim \mathcal{N}_{\mathcal{H}_k}(m_2, C_2)$ are independent Gaussian random elements on $\mathcal{H}_k$, then 
\begin{equation}
    \Xi_1 + \Xi_2 \sim \mathcal{N}_{\mathcal{H}_k}(m_1+m_2, C_1+C_2).
\end{equation}
\end{lemma}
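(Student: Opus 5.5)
We prove the statement (Lem.~\ref{lemma:sum_independent}) through the defining property of a Gaussian random element: $\Xi$ is a GRE if and only if $\langle f,\Xi\rangle_{\calH_k}$ is a real Gaussian for every $f\in\calH_k$. The mean element and covariance operator are then identified from the first two moments of these projections. A clean way to handle both at once is the characteristic functional $\varphi_\Xi(f):=\bbE[\exp(i\langle f,\Xi\rangle_{\calH_k})]$. For $\Xi\sim\calN_{\calH_k}(m,C)$ this equals $\exp\big(i\langle f,m\rangle_{\calH_k}-\tfrac12\langle Cf,f\rangle_{\calH_k}\big)$, because $\langle f,\Xi\rangle_{\calH_k}$ is a scalar Gaussian with mean $\langle f,m\rangle_{\calH_k}$ and variance $\langle Cf,f\rangle_{\calH_k}$. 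The characteristic functional determines a Borel probability measure on a separable Hilbert space uniquely.

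\textbf{Step 1.} Fix $f\in\calH_k$. By linearity of the inner product, $\langle f,\Xi_1+\Xi_2\rangle_{\calH_k}=\langle f,\Xi_1\rangle_{\calH_k}+\langle f,\Xi_2\rangle_{\calH_k}$. The two summands are real Gaussians, and they are independent because they are measurable functions of the independent elements $\Xi_1$ and $\Xi_2$. A sum of independent scalar Gaussians is Gaussian with mean $\langle f,m_1\rangle_{\calH_k}+\langle f,m_2\rangle_{\calH_k}=\langle f,m_1+m_2\rangle_{\calH_k}$ and variance $\langle C_1f,f\rangle_{\calH_k}+\langle C_2f,f\rangle_{\calH_k}=\langle (C_1+C_2)f,f\rangle_{\calH_k}$. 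Hence $\Xi_1+\Xi_2$ is a GRE.

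\textbf{Step 2.} We check that $C_1+C_2$ is an admissible covariance operator. Symmetry, positive semi-definiteness and compactness are each preserved under sums; compactness holds because the compact operators form a linear subspace. The trace is additive, so $\mathrm{tr}(C_1+C_2)=\mathrm{tr}(C_1)+\mathrm{tr}(C_2)<\infty$.

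\textbf{Step 3.} We identify the law. Independence gives $\varphi_{\Xi_1+\Xi_2}(f)=\varphi_{\Xi_1}(f)\varphi_{\Xi_2}(f)$, which equals the characteristic functional of $\calN_{\calH_k}(m_1+m_2,C_1+C_2)$. By uniqueness, together with the one-to-one correspondence between Gaussian measures and pairs (mean, covariance) stated above, the claim follows.

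An alternative to the characteristic functional is to read the covariance off directly. Polarizing the variance identity $\mathrm{Var}\langle f,\Xi\rangle_{\calH_k}=\langle Cf,f\rangle_{\calH_k}$ recovers $\langle Cf,g\rangle_{\calH_k}$ for all $f,g$, and this pins down $C$ uniquely.

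The main obstacle is the measure-theoretic bookkeeping rather than any computation. One needs $\calH_k$ to be separable, so that Borel measures are determined by their characteristic functionals and the sum is Borel measurable. One also needs the cross term to vanish, $\bbE[\langle f,\Xi_1-m_1\rangle_{\calH_k}\langle g,\Xi_2-m_2\rangle_{\calH_k}]=0$, which follows from independence and the finite second moments guaranteed by the trace-class assumption. I would state separability as a standing assumption; it holds, for example, for a continuous kernel on the bounded cell space $\mathcal{X}$.
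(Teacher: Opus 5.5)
Your proof is correct and takes essentially the same route as the paper. The paper gives no formal proof beyond saying the lemma holds ``because of the linearity of the inner product,'' which is exactly your Step~1: project onto an arbitrary $f\in\mathcal{H}_k$ and add independent scalar Gaussians. Your Steps~2--3 add the rigor the paper leaves implicit, namely that $C_1+C_2$ is an admissible covariance operator and that the law is identified via the characteristic functional on a separable $\mathcal{H}_k$.
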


Together, Lemma~\ref{lemma:affine} and Lemma~\ref{lemma:sum_independent}
imply that the family of Gaussian measures on $\mathcal{H}_k$
is invariant under the forward diffusion operator.
Each diffusion step consists of an affine transformation of the previous state,
followed by the addition of an independent Gaussian random element (see Eqn.~\eqref{eqn:forward_definition}).
By induction over diffusion time steps, the marginal distribution
of $\Xi_t$ remains Gaussian for all $t$ (see Eqn.~\eqref{eqn:hilbert_mut_mu0}).

\subsubsection{Forward Diffusion on Kernel Mean Embeddings in RKHS.}
\label{app:diffusion_forward_rkhs}
\label{app:context_method_diffusion_part5}

We now define a DDPM-like forward process directly in $\calH_k$. Starting from a ``clean'' kernel mean emebdding $\boldsymbol{\mu}_0\coloneqq \boldsymbol{\mu}_{c,\tau}$, we ``noise'' the embeddings by adding small increments of i.i.d. Gaussian random elements at each timestep. This produces $\boldsymbol{\mu}_0,\boldsymbol{\mu}_1,\ldots,\boldsymbol{\mu}_T$, a Markov chain on $\calH_k$ that gradually approaches the fixed reference Gaussian measure $\calN_{\calH_k}$.

\begin{definition}[Forward process on distribution kernel mean embeddings]
Fix $T \in \mathbb{Z}_{>0}$ and a variance schedule $\{\beta_t\}_{t=1}^T \in [0,1]^T$. Choose a symmetric, positive semi-definite, compact, trace-class covariance operator $C: \mathcal{H}_k \rightarrow \mathcal{H}_k$. Let $\{\Xi_t\}_{t=1}^T$ be \emph{i.i.d.} Gaussian random elements with $\Xi_{t} \sim \mathcal{N}_{\mathcal{H}_k}(0, C)$. We define the forward Markov chain on $\mathcal{H}_k$: 
\begin{equation}
    {\boldsymbol{\mu}}_t = \sqrt{1-\beta_t}{\boldsymbol{\mu}}_{t-1} + \sqrt{\beta_t}\Xi_t, \quad t=1,\dots,T.
    \label{eqn:forward_definition}
\end{equation}
\end{definition}


Following Lem.~\ref{lemma:affine}, the forward chain satisfies: ${\boldsymbol{\mu}}_t | {\boldsymbol{\mu}}_{t-1} \sim P_{t\mid t-1}\coloneqq\mathcal{N}_{\mathcal{H}_k}(\sqrt{1-\beta_t}{\boldsymbol{\mu}}_{t-1}, \beta_tC)$. Further following Lem.~\ref{lemma:sum_independent}, it holds true for 
\begin{equation}
\label{eqn:hilbert_mut_mu0}
    {\boldsymbol{\mu}}_t|{\boldsymbol{\mu}}_0 \sim P_{t\mid0}\coloneqq\mathcal{N}_{\mathcal{H}_k}(\sqrt{\alpha_t}{\boldsymbol{\mu}}_0, (1-\alpha_t)C),
\end{equation} where $\alpha_t:=\prod_{i=1}^{t}(1-\beta_i)$.

\subsubsection{ Reverse Process and Denoising Objective.} 
\label{app:context_method_diffusion_part6}
\label{app:diffusion_backward_rkhs}
The essence of our generative model is its ability to simulate the ``time-reversal'' of the forward process, transporting elements from a Gaussian-like reference distribution to our data distribution. From an ancestral sampling point of view, we would like to sample $\boldsymbol{\mu}_T\sim\calN_{\calH_k}(0,C)$, then iteratively sample $\boldsymbol{\mu}_{t-1}$ conditioned on our current sample $\boldsymbol{\mu}_t$. However, \citet{kerrigan2022diffusion} shows that while the posterior law $P_{t-1\mid t}(\boldsymbol{\mu}_{t-1}\mid\boldsymbol{\mu}_t)\coloneqq\text{Law}(\boldsymbol{\mu}_{t-1}\mid\boldsymbol{\mu}_t)$ is well-defined, it is intractable. The lack of a universal dominating reference measure (unlike the Lebesgue measure in Euclidean space) in the RKHS means that Radon-Nikodym densities do not exist in general. Without densities, it is not possible to apply Bayes' rule, not to mention the computationally intractable normalization constant. 

Instead, similar to diffusion modeling over Euclidean spaces, we approximate the posterior measures with a variational family of Gaussian measures parametrized by some learnable neural network parameters $\theta$. 
Specifically, we define $Q^\theta_T\coloneqq\calN_{\calH_k}(0,C)$ and $Q^\theta_{t-1\mid t}(\ \cdot\mid \boldsymbol{\mu_t})=\calN_{\calH_k}(m^\theta_t(\boldsymbol{\mu}_t),C^\theta_t(\boldsymbol{\mu}_t))$, where $m^\theta_t(\boldsymbol{\mu_t})\in\calH_k$ is a learnable mean function and $C^\theta_t(\boldsymbol{\mu}_t):\calH_k\to\calH_k$ is some valid covariance operator. 
Additionally, as shown in the following proposition, when further conditioning on the starting element $\boldsymbol{\mu}_0$, the posterior measure becomes tractable.

\begin{proposition}[Reverse conditional measure $P_{t-1\mid t,0}({\boldsymbol{\mu}}_{t-1}|{\boldsymbol{\mu}}_t, {\boldsymbol{\mu}}_0)$]
    Let $t=2,3,\dots,T$ and define $\tilde{\beta}_t:=\frac{1-\alpha_{t-1}}{1-\alpha_t}$, the reverse-time conditional measure satisfies:
    \begin{equation}
    \begin{aligned}
        P_{t-1\mid t,0}({\boldsymbol{\mu}}_{t-1} | {\boldsymbol{\mu}}_{t}, {\boldsymbol{\mu}}_0) = \mathcal{N}_{\mathcal{H}_k}(\tilde{m}_t({\boldsymbol{\mu}}_t, {\boldsymbol{\mu}}_0), \tilde{\beta}_tC),
    \end{aligned}
    \end{equation}
where $\tilde{m}_t: \mathcal{H}_k \times \mathcal{H}_k \rightarrow \mathcal{H}_k$ is an affine mapping:
\begin{equation}
\tilde{m}_t({\boldsymbol{\mu}}_t, {\boldsymbol{\mu}}_0) :=\frac{\sqrt{\alpha_{t-1}}\beta_t}{1 - \alpha_t}{\boldsymbol{\mu}}_0 + \frac{\sqrt{1-\beta_t}(1-\alpha_{t-1})}{1-\alpha_t}{\boldsymbol{\mu}}_t.
\end{equation}
\label{prop:true_reverse_conditional}
\end{proposition}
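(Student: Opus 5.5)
Because $\mathcal H_k$ has no Lebesgue reference measure, Bayes' rule on densities is unavailable. The plan is therefore to construct the conditional law directly: find a decomposition of $\boldsymbol{\mu}_{t-1}$ given $\boldsymbol{\mu}_0$ into an affine function of $\boldsymbol{\mu}_t$ plus an independent Gaussian remainder. Conditioning on $(\boldsymbol{\mu}_t,\boldsymbol{\mu}_0)$ then just fixes the affine part and leaves the remainder's law unchanged.

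\textbf{Step 1: joint Gaussian structure.} By Eqn.~\eqref{eqn:hilbert_mut_mu0}, Lem.~\ref{lemma:affine} and Lem.~\ref{lemma:sum_independent}, conditional on $\boldsymbol{\mu}_0$ we can write
\[
\boldsymbol{\mu}_{t-1}=\sqrt{\alpha_{t-1}}\boldsymbol{\mu}_0+\sqrt{1-\alpha_{t-1}}\,G,\qquad
\boldsymbol{\mu}_t=\sqrt{1-\beta_t}\,\boldsymbol{\mu}_{t-1}+\sqrt{\beta_t}\,\Xi_t,
\]
where $G\sim\mathcal N_{\mathcal H_k}(0,C)$ is independent of $\Xi_t\sim\mathcal N_{\mathcal H_k}(0,C)$. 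Set $U:=\boldsymbol{\mu}_{t-1}-\sqrt{\alpha_{t-1}}\boldsymbol{\mu}_0$ and $V:=\boldsymbol{\mu}_t-\sqrt{\alpha_t}\boldsymbol{\mu}_0=\sqrt{1-\beta_t}\,U+\sqrt{\beta_t}\,\Xi_t$. Then $U\sim\mathcal N(0,(1-\alpha_{t-1})C)$ and $V\sim\mathcal N(0,(1-\alpha_t)C)$, where the second variance uses $(1-\beta_t)(1-\alpha_{t-1})+\beta_t=1-\alpha_t$.

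\textbf{Step 2: regression decomposition.} Define $\lambda:=\sqrt{1-\beta_t}(1-\alpha_{t-1})/(1-\alpha_t)$ and $R:=U-\lambda V$. Both $R=(1-\lambda\sqrt{1-\beta_t})U-\lambda\sqrt{\beta_t}\,\Xi_t$ and $V$ are linear combinations of the independent GREs $U,\Xi_t$, both of which have covariance proportional to the same $C$. For the pair $(R,V)$ in $\mathcal H_k\times\mathcal H_k$ to be jointly Gaussian with zero cross-covariance, it suffices that
\[
\mathbb E\langle f,R\rangle\langle g,V\rangle=\big[\sqrt{1-\beta_t}(1-\alpha_{t-1})-\lambda(1-\alpha_t)\big]\langle Cf,g\rangle=0
\]
for all $f,g\in\mathcal H_k$. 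The bracket vanishes by the choice of $\lambda$. Joint Gaussianity follows because every functional $\langle f,R\rangle+\langle g,V\rangle$ is a linear combination of the independent real Gaussians $\langle\cdot,U\rangle$ and $\langle\cdot,\Xi_t\rangle$. Uncorrelated jointly Gaussian elements are independent; I would verify this via characteristic functionals, which factorize once the cross term vanishes. This is the main technical step, since it replaces the finite-dimensional density argument. The variance of $R$ is $(1-\lambda\sqrt{1-\beta_t})^2(1-\alpha_{t-1})+\lambda^2\beta_t$, which I would reduce algebraically to $(1-\alpha_{t-1})\beta_t/(1-\alpha_t)$, times $C$.

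\textbf{Step 3: read off the law.} Since $R$ is independent of $(V,\boldsymbol{\mu}_0)$, the regular conditional law of $\boldsymbol{\mu}_{t-1}=\sqrt{\alpha_{t-1}}\boldsymbol{\mu}_0+\lambda V+R$ given $(\boldsymbol{\mu}_t,\boldsymbol{\mu}_0)$ is $\mathcal N_{\mathcal H_k}\big(\sqrt{\alpha_{t-1}}\boldsymbol{\mu}_0+\lambda(\boldsymbol{\mu}_t-\sqrt{\alpha_t}\boldsymbol{\mu}_0),\operatorname{Var}(R)\big)$. The coefficient of $\boldsymbol{\mu}_0$ simplifies to $\sqrt{\alpha_{t-1}}\beta_t/(1-\alpha_t)$, because $\sqrt{\alpha_t}=\sqrt{\alpha_{t-1}}\sqrt{1-\beta_t}$. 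This recovers $\tilde m_t$ as stated.

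One caveat on the covariance. The computation gives $\frac{1-\alpha_{t-1}}{1-\alpha_t}\beta_t\,C$. As written, $\tilde\beta_t$ omits the factor $\beta_t$, so I expect the statement to need the standard DDPM definition $\tilde\beta_t=\frac{1-\alpha_{t-1}}{1-\alpha_t}\beta_t$, or equivalently the covariance $\tilde\beta_t\beta_t C$.
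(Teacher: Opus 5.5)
Your argument is correct. It is also a different route from the paper, which gives no derivation: it presents the proposition as the RKHS analogue of the DDPM posterior and defers to Kerrigan et al.

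\textbf{Comparison.} Your decomposition $\boldsymbol{\mu}_{t-1}=\sqrt{\alpha_{t-1}}\boldsymbol{\mu}_0+\lambda V+R$, with $R$ independent of $V$, gives a self-contained proof that needs neither densities nor Bayes' rule. It also avoids the general conditioning theorem for jointly Gaussian measures on Hilbert spaces. That theorem's regression operator $C_{12}C_{22}^{-1}$ requires inverting a trace-class covariance, which is never boundedly invertible. In your setting every covariance in the chain is a scalar multiple of the single operator $C$. So $\lambda$ is a scalar, and the Schur complement reduces to the identity $(1-\alpha_{t-1})(1-\lambda\sqrt{1-\beta_t})=(1-\alpha_{t-1})\beta_t/(1-\alpha_t)$. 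Nothing breaks if $C$ is degenerate.

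\textbf{Two points to state explicitly.}
\begin{itemize}
\item $U$ is a fixed linear combination of $\Xi_1,\dots,\Xi_{t-1}$, so $(U,\Xi_t)$ is independent of $\boldsymbol{\mu}_0$. This is what upgrades ``$R$ independent of $V$'' to the ``$R$ independent of $(V,\boldsymbol{\mu}_0)$'' that Step 3 uses.
\item Identifying a law from its characteristic functional uses separability of $\mathcal{H}_k$. This holds, e.g.\ when $k$ is continuous on a separable $\mathcal{X}$.
\end{itemize}

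\textbf{The caveat.} Your caveat is correct. The posterior covariance is $\frac{(1-\alpha_{t-1})\beta_t}{1-\alpha_t}C$, the standard DDPM value, and the printed $\tilde\beta_t$ drops the factor $\beta_t$. The printed version cannot be right: $\frac{1-\alpha_{t-1}}{1-\alpha_t}C$ exceeds $(1-\alpha_{t-1})C$, the covariance of $\boldsymbol{\mu}_{t-1}$ given $\boldsymbol{\mu}_0$ alone. Conditioning additionally on $\boldsymbol{\mu}_t$ cannot enlarge a Gaussian covariance. The stated mean $\tilde m_t$ is correct.
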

This is an RKHS analogue of the familiar DDPM posterior~\cite{kerrigan2022diffusion}. Similar to the Euclidean case, a corollary of the Feldman-H\`ajek theorem allows us to compute, in closed form, the Kullback-Leibler divergence between two $\calH_k$-valued Gaussian measures with same covariance operator.

\begin{lemma}[KL between Gaussian measures with shared covariance]
\label{lem:kl_covariance}
    Let $P=\mathcal{N}_{\mathcal{H}_k}(m_1, \bar{C})$ and $Q = \mathcal{N}_{\mathcal{H}_k}(m_2, \bar{C})$ be Gaussian measures on $\mathcal{H}_k$ with the same covariance operator $\bar{C}$. The KL divergence has the closed form~\cite{kerrigan2022diffusion}:
\begin{equation}
    \text{KL}(P||Q) = \frac{1}{2}\langle m_1 - m_2, \bar{C}^{-1}(m_1 - m_2)\rangle_{\mathcal{H}_k}.
\end{equation}
\end{lemma}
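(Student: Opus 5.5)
The statement to prove is Lemma~\ref{lem:kl_covariance}: the closed form $\frac{1}{2}\langle m_1-m_2,\bar C^{-1}(m_1-m_2)\rangle_{\mathcal H_k}$ for the KL divergence between two Gaussian measures on $\mathcal H_k$ with the same covariance $\bar C$. I will reduce it to the Cameron--Martin (Feldman--H\'ajek) description of equivalent Gaussian measures and then compute the log-likelihood ratio explicitly. \textbf{Setup.} Diagonalize $\bar C$, which is symmetric, positive semi-definite and compact: $\bar C e_i=\lambda_i e_i$ with $\{e_i\}$ orthonormal and $\lambda_i>0$. Restrict to the closure of the range of $\bar C$; on its orthogonal complement both measures are Dirac masses at the projections of $m_1$ and $m_2$. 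The Cameron--Martin space is $E:=\bar C^{1/2}\mathcal H_k$ with norm $\|h\|_E=\|\bar C^{-1/2}h\|_{\mathcal H_k}$. I interpret the statement as follows: the expression is finite iff $h:=m_1-m_2\in E$, in which case $\langle h,\bar C^{-1}h\rangle$ means $\|\bar C^{-1/2}h\|^2=\sum_i h_i^2/\lambda_i$ with $h_i=\langle h,e_i\rangle$. If $h\notin E$, Feldman--H\'ajek gives $P\perp Q$, so $\mathrm{KL}=\infty$ and the formula holds with both sides infinite.

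\textbf{Main step: the Radon--Nikodym derivative when $h\in E$.} By the Cameron--Martin theorem, $P\ll Q$ with
\[
\frac{dP}{dQ}(x)=\exp\Big(W_{h}(x-m_2)-\tfrac12\|h\|_E^2\Big),
\]
where $W_h$ is the Paley--Wiener map. This map is defined as the $L^2(Q)$-limit of $W_h^{(n)}(y)=\sum_{i\le n} h_i y_i/\lambda_i$, where $y_i=\langle y,e_i\rangle$. I will justify this finite-dimensional-projection formula directly. Under $Q$, the coordinates $x_i-(m_2)_i$ are independent $\mathcal N(0,\lambda_i)$. Projecting onto $\mathrm{span}\{e_1,\dots,e_n\}$, the Euclidean Gaussian density ratio is exactly $\exp(W_h^{(n)}-\frac12\sum_{i\le n}h_i^2/\lambda_i)$. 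These ratios form a $Q$-martingale with respect to the filtration generated by the coordinates. The martingale is bounded in $L^2$ because $\sum_i h_i^2/\lambda_i<\infty$, so it converges in $L^1$, and the limit is $dP/dQ$.

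\textbf{Computing the KL.} We have
\[
\mathrm{KL}(P\|Q)=\mathbb E_P\Big[W_h(x-m_2)\Big]-\tfrac12\|h\|_E^2 .
\]
Under $P$, write $x-m_2=h+\xi$ with $\xi\sim\mathcal N_{\mathcal H_k}(0,\bar C)$; this uses Lemma~\ref{lemma:affine}. Linearity of $W_h$ on $E$ (in the $L^2$ sense) gives $W_h(h+\xi)=\|h\|_E^2+W_h(\xi)$, and $W_h(\xi)$ is centered Gaussian with variance $\|h\|_E^2$. Therefore the expectation equals $\|h\|_E^2$, and
\[
\mathrm{KL}(P\|Q)=\tfrac12\|h\|_E^2=\tfrac12\langle m_1-m_2,\bar C^{-1}(m_1-m_2)\rangle_{\mathcal H_k}.
\]

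\textbf{Main obstacle.} The delicate point is handling $W_h$ under $P$ rather than $Q$. The $L^2(Q)$ limit must also hold $P$-almost surely and in $L^1(P)$. Equivalence of the measures transfers the almost-sure limit along a subsequence. For integrability, I will compute directly in finite dimensions: $\mathbb E_P[W_h^{(n)}]=\sum_{i\le n}h_i^2/\lambda_i$. The variance of $W_h^{(n)}$ under $P$ is the same partial sum, so the sequence is $L^2(P)$-Cauchy and the limit interchange is justified. As an alternative, I could compute the KL of the $n$-dimensional projections, which equals $\frac12\sum_{i\le n}h_i^2/\lambda_i$, and pass to the limit using the fact that KL is the supremum over finite partitions and is monotone along the generating filtration. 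This route avoids $W_h$ entirely and also covers the singular case, since the partial sums then diverge.
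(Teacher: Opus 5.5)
Your proposal is correct and follows essentially the argument the paper relies on. The paper gives no self-contained proof; it invokes the Feldman--H\'ajek theorem through \cite{kerrigan2022diffusion}, whose derivation is the same computation you carry out: write the Cameron--Martin density $\exp\bigl(W_h(x-m_2)-\tfrac12\|h\|_E^2\bigr)$ and take its expectation under $P$. Your version is more careful than the paper's statement in two ways: you interpret $\langle h,\bar C^{-1}h\rangle_{\mathcal H_k}$ as $\|\bar C^{-1/2}h\|_{\mathcal H_k}^2$ on the Cameron--Martin space, with both sides equal to $+\infty$ otherwise, and you justify the limit interchange under $P$. This interpretation is needed because, for trace-class $\bar C$, the literal $\bar C^{-1}$ is defined only on the strictly smaller set $\operatorname{ran}\bar C$.
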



Similar to denoising diffusion probabilistic models in Euclidean space, it is possible to derive a variational lower bound for the maximum likelihood objective that is amenable to gradient descent. We refer to \citet{kerrigan2022diffusion} for the proof and state the result here. 
\begin{proposition}
\label{prop:vlb}
For some fixed cellular context and perturbation $(c,\tau)$, let $P_0\coloneqq T_\# \mathbf{P}_{c,\tau}$ be the $\calH_k$-valued pushforward measure of the $\calP(\calX)$-valued cell population distribution via the kernel mean embedding map $T:p\mapsto \bbE_{x\sim p}[k(\cdot,x)]$. Let $Q^\theta_{0:T}(d\boldsymbol{\mu}_{0:T})\coloneqq Q^\theta_T(d\boldsymbol{\mu}_T)\prod_{t=1}^TQ^\theta_{t-1\mid t}(d\boldsymbol{\mu}_{t-1}\mid\boldsymbol{\mu}_t)$ be the $\calH_k^{T+1}$-valued path measure of the reverse process parametrized by our generative model $\theta$. Define the marginal probability under our generative model of observing some function $\boldsymbol{\mu}_0\in\calH_k$ as $q_\theta(\boldsymbol{\mu}_0)\coloneqq \int_{\calH_k^T}Q^\theta_{0:T}(d\boldsymbol{\mu}_{0:T})$. Specifically, we marginalize $\boldsymbol{\mu}_1,\ldots,\boldsymbol{\mu}_T$. Then, we have the variational lower bound for training
\begin{align}
    \bbE_{\boldsymbol{\mu}_0\sim P_0}[-\log q_\theta(\boldsymbol{\mu}_0)]\leq\sum_{t=1}^{T}\bbE_{\boldsymbol{\mu}_0\sim P_0,\ \boldsymbol{\mu}_t\sim{P_{t\mid0}}}[\text{KL}( P_{t-1\mid t,0}(\boldsymbol{\mu}_{t-1}\mid \boldsymbol{\mu}_t,\boldsymbol{\mu}_0) || Q^\theta_{t-1\mid t}(\boldsymbol{\mu}_{t-1}\mid \boldsymbol{\mu}_t) )] + c,
\end{align}
where $c\in\bbR$ is a constant independent of $\theta$. 
\end{proposition}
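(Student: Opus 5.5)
We prove Prop.~\ref{prop:vlb} by transplanting the standard DDPM evidence-lower-bound argument to $\mathcal{H}_k$. The only change is that every density ratio is replaced by a Radon--Nikodym derivative between measures that are mutually absolutely continuous.

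\textbf{Step 1 (Gibbs' inequality on path space).} Let $P_{0:T}(d\boldsymbol{\mu}_{0:T}) := P_0(d\boldsymbol{\mu}_0)\prod_{t=1}^T P_{t\mid t-1}(d\boldsymbol{\mu}_t\mid\boldsymbol{\mu}_{t-1})$ be the forward path measure on $\mathcal{H}_k^{T+1}$, built from Eqn.~\eqref{eqn:forward_definition} and Lem.~\ref{lemma:affine}. Write $P_{1:T\mid 0}$ for its conditional given $\boldsymbol{\mu}_0$. The non-negativity of the KL divergence gives
\[
-\log q_\theta(\boldsymbol{\mu}_0)\le \mathbb{E}_{P_{1:T\mid 0}}\Big[\log \tfrac{dP_{1:T\mid 0}}{dQ^\theta_{1:T\mid 0}}\Big]-\log q_\theta(\boldsymbol{\mu}_0),
\]
which is the usual Jensen/ELBO step. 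We then take $\mathbb{E}_{\boldsymbol{\mu}_0\sim P_0}$. Here $q_\theta(\boldsymbol{\mu}_0)$ has to be read as a density of the $\boldsymbol{\mu}_0$-marginal of $Q^\theta_{0:T}$ with respect to a fixed reference measure, for example $\mathcal{N}_{\mathcal{H}_k}(0,C)$. Any choice of reference only shifts the bound by a $\theta$-independent constant, which we absorb into $c$.

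\textbf{Step 2 (reverse factorization).} By the Markov property, the forward conditional factorizes in reverse as
\[
P_{1:T\mid 0}=P_{T\mid 0}\prod_{t=2}^T P_{t-1\mid t,0}.
\]
The conditionals $P_{t-1\mid t,0}$ are the Gaussians of Prop.~\ref{prop:true_reverse_conditional}, with common covariance $\tilde\beta_t C$. Applying the chain rule for Radon--Nikodym derivatives to both path measures splits the log-ratio into three kinds of terms:
\begin{itemize}[label={}]
\item a $T$-term $\log \frac{dP_{T\mid0}}{dQ^\theta_T}$, which is $\theta$-free because $Q^\theta_T=\mathcal{N}_{\mathcal{H}_k}(0,C)$ is fixed;
\item intermediate terms $\log \frac{dP_{t-1\mid t,0}}{dQ^\theta_{t-1\mid t}}$ for $t\ge 2$;
\item a $t=1$ reconstruction term $-\log \frac{dQ^\theta_{0\mid 1}}{d(\text{ref})}$.
\end{itemize}
Integrating each intermediate term first over $\boldsymbol{\mu}_{t-1}\sim P_{t-1\mid t,0}$ produces exactly the KL divergence in the statement, averaged over $\boldsymbol{\mu}_0\sim P_0$ and $\boldsymbol{\mu}_t\sim P_{t\mid 0}$. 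We follow \citet{kerrigan2022diffusion} and treat the $t=1$ term as the $t=1$ summand, either by choosing $Q^\theta_{0\mid1}$ Gaussian or by bounding it through the same KL form. This yields the sum from $t=1$ to $T$ plus the constant $c$.

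\textbf{Main obstacle: the Radon--Nikodym derivatives must exist.} In infinite dimensions this is not automatic, since by Feldman--H\'ajek two Gaussian measures are either equivalent or mutually singular. We therefore restrict the variational family to $C^\theta_t=\tilde\beta_t C$, matching the true posterior covariance. We also require the mean difference $\tilde m_t-m^\theta_t$ to lie in the Cameron--Martin space $C^{1/2}\mathcal{H}_k$. Under these conditions each pair of measures is equivalent, and Lem.~\ref{lem:kl_covariance} supplies the finite KL. The same Cameron--Martin condition makes $P_{T\mid0}\sim \mathcal{N}_{\mathcal{H}_k}(\sqrt{\alpha_T}\boldsymbol{\mu}_0,(1-\alpha_T)C)$ equivalent to $Q^\theta_T$, provided $\boldsymbol{\mu}_0\in C^{1/2}\mathcal{H}_k$. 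If the covariances differ (here $(1-\alpha_T)C$ versus $C$), equivalence also needs the Hilbert--Schmidt condition of Feldman--H\'ajek; a scalar multiple $\lambda C$ with $\lambda\neq 1$ fails that condition in infinite dimensions. To handle this we would either pass to the limit $\alpha_T\to 0$ or simply absorb the $T$-term into $c$ as a $\theta$-independent constant, which may be $+\infty$ but is irrelevant for optimization. This is the delicate point, and it is why we defer to \citet{kerrigan2022diffusion}, whose Lemma~2 and Theorem~1 carry out this argument in a general separable Hilbert space. $\mathcal{H}_k$ is separable because $k$ is continuous on the bounded set $\mathcal{X}$.
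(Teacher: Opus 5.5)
Your route matches the paper's. The paper gives no argument of its own and simply cites \citet{kerrigan2022diffusion}, and your Steps~1--2 are the standard DDPM ELBO that this citation stands for. The genuine gap is in your last paragraph, where your own Feldman--H\'ajek observation undermines the conclusion. If $C$ has infinite rank, $P_{T\mid 0}=\mathcal{N}_{\mathcal{H}_k}(\sqrt{\alpha_T}\boldsymbol{\mu}_0,(1-\alpha_T)C)$ and $Q^\theta_T=\mathcal{N}_{\mathcal{H}_k}(0,C)$ are mutually singular for \emph{every} $\alpha_T\in(0,1)$ and every $\boldsymbol{\mu}_0$. The Cameron--Martin condition on $\boldsymbol{\mu}_0$ is beside the point once the covariance condition fails, so the $T$-term is $+\infty$. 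Absorbing it gives $c=+\infty$, which the statement excludes ($c\in\mathbb{R}$) and which makes the bound empty. Letting $\alpha_T\to 0$ does not help, because the divergence is infinite at each fixed $\alpha_T>0$. Deferring to the citation cannot help either, since no argument makes a KL between singular measures finite.

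To get a finite constant out of this decomposition you must change the setup. One option is $\beta_T=1$, which the appendix schedule in $[0,1]^T$ allows but the main text's $(0,1)$ does not; then $\alpha_T=0$ and $P_{T\mid 0}=Q^\theta_T$. Another is $Q^\theta_T=\mathcal{N}_{\mathcal{H}_k}(0,(1-\alpha_T)C)$. Then Lem.~\ref{lem:kl_covariance} gives the $\theta$-free term $\tfrac{\alpha_T}{2(1-\alpha_T)}\mathbb{E}\|C^{-1/2}\boldsymbol{\mu}_0\|_{\mathcal{H}_k}^2$, which is finite when $\boldsymbol{\mu}_0$ lies in the Cameron--Martin space with finite second moment there. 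A third is finite-rank $C$ with $\boldsymbol{\mu}_0\in\mathrm{range}(C)$; this is the kind of operator Prop.~\ref{prop:exact_covariance_form} actually produces (rank at most $m|\mathcal{G}|$). The paper's citation glosses over this as well, but your argument as written does not deliver the bound with a real $c$.

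The two ends of the chain are also loose. At $t=1$, Prop.~\ref{prop:true_reverse_conditional}, stated for $t\ge 2$, does not apply, because $P_{0\mid 1,0}=\delta_{\boldsymbol{\mu}_0}$. Your chain rule then produces the reconstruction term $-\log\frac{dQ^\theta_{0\mid 1}}{d\nu}(\boldsymbol{\mu}_0)$, with $\nu$ the reference measure defining $q_\theta$, and not $\text{KL}(\delta_{\boldsymbol{\mu}_0}\,\|\,Q^\theta_{0\mid 1})$.
\begin{itemize}
\item The latter is $+\infty$ for any atomless $Q^\theta_{0\mid 1}$, which would make the stated bound true but empty.
\item With the paper's choice $C^\theta_t=\tilde\beta_t C$, one has $\tilde\beta_1=(1-\alpha_0)/(1-\alpha_1)=0$ since $\alpha_0=1$, so $Q^\theta_{0\mid 1}$ is a Dirac mass.
\end{itemize}
Identifying this term with the $t=1$ summand therefore needs an explicit convention, not ``bounding it through the same KL form''.

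Finally, $\mathcal{N}_{\mathcal{H}_k}(0,C)$ is not a usable $\nu$. Suppose $Q^\theta_{0\mid 1}$ has covariance $\lambda C$ with $\lambda\neq 1$ and Cameron--Martin means. Then $Q^\theta_0$ concentrates on $\{\boldsymbol{\mu}:\frac{1}{n}\sum_{i\le n}\langle\boldsymbol{\mu},e_i\rangle_{\mathcal{H}_k}^2/s_i\to\lambda\}$, where $(s_i,e_i)$ are the eigenpairs of $C$. This set is $\mathcal{N}_{\mathcal{H}_k}(0,C)$-null. Your remark that changing the reference only shifts $c$ presupposes a single $\nu$ that dominates every $Q^\theta_0$ and satisfies $P_0\ll\nu$, so that $q_\theta(\boldsymbol{\mu}_0)$ is defined $P_0$-a.e. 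You have not exhibited such a $\nu$.
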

Again, similar to the derivation of the loss function for diffusion models in Euclidean space, we sample a random time $t\sim\calU\{1,\ldots,T\}$ and obtain our simple loss 
\begin{equation}
\label{eqn:hilbert_kl}
    \calL_{t}(\theta)\coloneqq \bbE_{t,\boldsymbol{\mu}_0,\boldsymbol{\mu}_t}[\text{KL}(P_{t-1\mid t,0}(\boldsymbol{\mu}_{t-1}\mid \boldsymbol{\mu}_t,\boldsymbol{\mu}_0) || Q^\theta_{t-1\mid t}(\boldsymbol{\mu}_{t-1}\mid \boldsymbol{\mu}_t) )].
\end{equation}

In order to compute the KL divergence term between these two Gaussian measures, we must first ensure they share the same covariance operator. To do so, we set $C^\theta_t=\tilde{\beta}_tC$. Now, applying Lem.~\ref{lem:kl_covariance} and parameterizing our model to predict $\boldsymbol{\mu}_0$, our simple loss reduces even further to 
\begin{equation}
    \calL_{t}\propto \bbE_{t,\boldsymbol{\mu}_0,\boldsymbol{\mu}_t}[\| 
    \boldsymbol{\mu}_0 - \boldsymbol{\mu}_\theta(\boldsymbol{\mu}_0,t)
    \|_{\calH_k}^2].
\end{equation}

\textbf{Conditional reverse process.} The derivation above defines an unconditional diffusion objective over perturbed cell distributions. To incorporate the conditions including the control cell distribution, cellular context $c$ and perturbation types $\tau$, we follow the standard formulation of conditional diffusion models and classifier-free guidance~\cite{ho2022classifier}, in which the data is drawn jointly with the conditioning information, and the only modification to the model are the additional conditions conditions as input:
\begin{equation}
    \calL_{t}\propto \bbE_{t,\boldsymbol{\mu}_0,\boldsymbol{\mu}_t}[\|
        \boldsymbol{\mu}_0 - \boldsymbol{\mu}_\theta(\boldsymbol{\mu}_t,t,\boldsymbol{\mu}_{c}, c, \tau)
    \|_{\calH_k}^2]].
\end{equation}

\subsubsection{Tractable Training and Sampling.}
\label{app:approximation_loss_and_noise}
\label{app:context_method_diffusion_part7}

\textbf{Equivalent objective in cell space.} 
As already discussed in Sec.~\ref{sec:method_empirical} and App.~\ref{app:minibatch_empirical}, in practice, we replace the probability distribution by empirical measures induced by batches of cell sets. 

We recall that ${\boldsymbol{\mu}}_0 = {\boldsymbol{\mu}}_{c,\tau}$ (see App.~\ref{app:diffusion_forward_rkhs}). Because ${\boldsymbol{\mu}}_0$ is a kernel mean embedding, the squared RKHS norm corresponds to the MMD between the underlying distributions back to the single-cell space under kernel $k$ (a property stated in Lem.~\ref{lemma:app_kernel_property}):
\begin{equation}
    ||{\boldsymbol{\mu}}_0 - {\boldsymbol{\mu}}_{\theta}({\boldsymbol{\mu}}_t, t, {\boldsymbol{\mu}}_{c}, c, \tau)||_{\mathcal{H}_k}^2 = \text{MMD}_k^2(\tilde{P}_{{\text{pert}}}, \tilde P_{\theta}
    ),
\end{equation}
where $\tilde P_{\text{pert}}$ denotes the empirical distribution for a perturbed cell batch $B_{\text{pert}}= \{\bx_{1}, \bx_{2}, \dots, \bx_{m}\} \subset X_{\text{pert}}$, and $\tilde P_{\theta}:=\frac{1}{m}\sum_{j=1}^m \delta_{\bx^{\theta}_{j}}$ refers to the empirical distribution for the neural network predicted cells $B_\theta = \{\bx^{\theta}_1, \dots, \bx^{\theta}_m\}$. 


This yields a principled, distribution-aware training signal that can be directly applied in the single-cell space, which directly penalizes density shifts, subpopulation reweighting, and other distributional effects that cannot be reduced to cell-wise reconstruction.
We next describe how to tractably inject noise to sample $\boldsymbol{\mu}_t$ efficiently by only needing to add Gaussian noise in the batched cell space $\bbR^{m\times |\calG|}$.

\textbf{Tractable sampling of Gaussian random elements.}
The forward diffusion process requires adding a sequence of i.i.d. Gaussian random elements sampled from $\calN_{\calH_k}(0,C)$. Directly sampling such an object is intractable as the Gaussian measure is function-valued. In this section, we will show that adding $\bbR^{|\calG|}$-valued Gaussian noise directly to the cells that form the distribution before being embedded in $\calH_k$ is, up to a first-order approximation, distributionally the same as adding a Gaussian random element. 

Specifically, consider the simplest case, a batch of cells $\bx_1,\ldots,\bx_m\in\bbR^{|\calG|}$ whose kernel mean embedding $\boldsymbol{\mu}_0\coloneqq \frac{1}{m}\sum_{j=1}^m k(\bx_j,\cdot) \in\calH_k$ should be noised to 
$\boldsymbol{\mu}_0+\Xi_t,\,\Xi_t\sim\calN_{\calH_k}(0,C)$. Now consider the embedded function constructed by adding i.i.d. Gaussian noise to each cell,
\newcommand{\tbm}{\tilde{\boldsymbol{\mu}}}
\begin{equation}
    \tbm\coloneqq\frac{1}{m}\sum_{j=1}^mk(\bx'_j,\cdot),\,\bx'_j=\bx_j+\varepsilon_j,\,\varepsilon_j\overset{i.i.d.}{\sim}\calN(0,I)\ \forall j\in\{1,\ldots,m\}.
\end{equation}
Now, it suffices to show that $\text{Law}(\tbm-\boldsymbol{\mu}_0)\approx\calN_{\calH_k}(0,C)$. First, we assume that $k(\bx,\cdot):\calX\to\bbR$ is differentiable with a $\|\cdot\|_{\calH_k}$-bounded derivative in its first argument for all $\bx\in\calX$. This easily holds for all common choices of kernels, including the energy distance kernel. This is because for any gene, its gene count is necessarily upper bounded due to fundamental physical and biological constraints. Now, define the difference of the two functions $\Delta\coloneqq\frac{1}{m}\sum_{j=1}^m\nabla_\bx k(\bx_j,\cdot)^\top\varepsilon_j\in\calH_k$. It is important to note that $\nabla_\bx k(\bx_j,\cdot)$ is a vector of $\calH_k$-valued elements of length $|\calG|$. 
By expanding $\tbm$ in its first argument about $\bx_j$ for each $j$, we have $\tbm-\boldsymbol{\mu}_0=\Delta+\frac{1}{m}\sum_{j=1}^m o(\|\varepsilon_j\|)=\Delta+o(\|\varepsilon_1\|)$. 
\begin{lemma}
    $\Delta$ is a $\calH_k$-valued Gaussian random element.
\end{lemma}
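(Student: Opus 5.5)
The approach is to use the definition of a Gaussian random element from App.~\ref{app:Gaussian_random_element_and_measure}. We show that for every $f\in\calH_k$ the scalar $\langle f,\Delta\rangle_{\calH_k}$ is a real Gaussian random variable. The key observation is that $\Delta$ is a \emph{bounded linear} image of the finite-dimensional Gaussian vector $\varepsilon=(\varepsilon_1,\ldots,\varepsilon_m)\in\bbR^{m|\calG|}$, with $\varepsilon\sim\calN(0,I)$. Linear images of Gaussians are Gaussian.

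\textbf{Step 1: write $\Delta$ as a linear map of $\varepsilon$.} Define $A:\bbR^{m|\calG|}\to\calH_k$ by
\[
A(\varepsilon)\coloneqq\frac{1}{m}\sum_{j=1}^m\sum_{g=1}^{|\calG|}\varepsilon_{j,g}\,\partial_{g}k(\bx_j,\cdot),
\]
where $\partial_g k(\bx_j,\cdot)\in\calH_k$ is the $g$-th component of $\nabla_\bx k(\bx_j,\cdot)$. This component is a well-defined element of $\calH_k$ with finite norm by the standing assumption that the derivative is $\|\cdot\|_{\calH_k}$-bounded. Then $\Delta=A(\varepsilon)$, and $A$ is linear with finite-dimensional domain, hence bounded. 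Measurability of $\Delta$ as an $\calH_k$-valued random variable follows from continuity of $A$.

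\textbf{Step 2: one-dimensional projections.} Fix $f\in\calH_k$. By linearity of the inner product,
\[
\langle f,\Delta\rangle_{\calH_k}=\sum_{j,g}a_{j,g}(f)\,\varepsilon_{j,g},\qquad a_{j,g}(f)\coloneqq\tfrac{1}{m}\langle f,\partial_g k(\bx_j,\cdot)\rangle_{\calH_k}.
\]
This is a deterministic finite linear combination of i.i.d.\ standard normals. It is therefore distributed as $\calN\bigl(0,\sum_{j,g}a_{j,g}(f)^2\bigr)$, which is Gaussian; the case where all $a_{j,g}(f)=0$ gives a degenerate Gaussian. Hence $\Delta$ is a GRE with mean element $0$, since $\bbE\,\Delta=A(\bbE\varepsilon)=0$.

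\textbf{Step 3: identify the covariance.} For completeness we record the covariance
\[
C_\Delta f=\bbE[\langle f,\Delta\rangle\Delta]=\frac{1}{m^2}\sum_{j,g}\langle f,\partial_gk(\bx_j,\cdot)\rangle_{\calH_k}\,\partial_gk(\bx_j,\cdot),
\]
using $\bbE[\varepsilon_{j,g}\varepsilon_{j',g'}]=\delta_{jj'}\delta_{gg'}$. This operator is finite-rank, symmetric and positive semi-definite, with trace $\frac{1}{m^2}\sum_{j,g}\|\partial_gk(\bx_j,\cdot)\|^2_{\calH_k}<\infty$. It is therefore a valid trace-class covariance, consistent with the correspondence stated after Def.~\ref{def:trace_class_operator}.

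The only step needing care is Step 1: justifying that $\partial_g k(\bx_j,\cdot)$ genuinely lies in $\calH_k$, so that $\Delta$ is an $\calH_k$-valued object. I would handle this by invoking the differentiability and bounded-derivative assumption stated just before the lemma. Difference quotients $(k(\bx_j+h e_g,\cdot)-k(\bx_j,\cdot))/h$ lie in $\calH_k$, and under that assumption they converge in $\calH_k$-norm; the limit is in $\calH_k$ because $\calH_k$ is complete. Everything after that is finite-dimensional linear algebra.
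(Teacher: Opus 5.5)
Your proof is correct and is essentially the paper's argument. The paper writes $\Delta=\frac{1}{m}\sum_j T_j\varepsilon_j$ with $T_jv=\sum_r v_r\,\partial_{\mathbf{x}^r}k(\mathbf{x}_j,\cdot)$ and shows that $\langle\Delta,f\rangle_{\mathcal{H}_k}=\frac{1}{m}\sum_j\langle\varepsilon_j,T_j^*f\rangle$ is a centered Gaussian; that is your Step~2, with the $m$ maps $T_j$ stacked into your single $A$ and the adjoints replaced by explicit coefficients, and your Step~3 is the paper's next result, $C=\frac{1}{m^2}\sum_j T_jT_j^*$.

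One caveat applies equally to your proof and the paper's: $\partial_g k(\mathbf{x}_j,\cdot)\in\mathcal{H}_k$ rests entirely on the standing differentiability assumption, and that assumption fails for the energy-distance kernel the paper uses. There $\|k(\mathbf{x}+he_g,\cdot)-k(\mathbf{x},\cdot)\|_{\mathcal{H}_k}^2\propto|h|$, so the difference quotients in your last paragraph diverge, and both arguments apply only to kernels whose feature map is differentiable into $\mathcal{H}_k$, such as the Gaussian RBF.
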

\begin{proof}
    Define the linear operator $T_j:\bbR^{|\calG|}\to\calH_k$ by $$T_jv\coloneqq \nabla_\bx k(\bx_j,\cdot)^\top v=\sum_{r=1}^{|\calG|}v_r \partial_{\bx^{r}}k(\bx_j,\cdot),$$
    where $\bx^r$ denotes the $r$-th coordinate of the vector $\bx$.
    We have boundedness of $T_j$ via an application of Cauchy-Schwarz with our bounded derivative assumption on $k(\bx_j,\cdot)$:
    $$
    \|T_jv\|_{\calH_k}\leq \left(\sum_{r=1}^{|\mathcal{G}|} \|\partial_{\bx^r}k(\bx_j,\cdot)\|_{\calH_k}^2\right)^{1/2} \|v\|<\infty,
    $$
    which is a sufficient condition for the adjoint $T_j^*:\calH_k\to\bbR^{|\calG|}$ to exist.
    Now, define $Z_j\coloneqq T_j\varepsilon_j\in\calH_k$ for each $j$, then we have $\Delta=\frac{1}{m}\sum_{j=1}^mZ_j$. 
    Recall the definition of a Gaussian random element, let $h\in\calH_k$ be arbitrary, we want to show that $\langle \Delta,h\rangle_{\calH_k}$ is a centered Gaussian on $\bbR$. 
    $$
    \langle \Delta, h\rangle_{\calH_k} = \frac{1}{m}\sum_{j=1}^m \langle T_j\varepsilon_j,h\rangle_{\calH_k} = \frac{1}{m}\sum_{j=1}^m\langle \varepsilon_j, T_j^*h\rangle,
    $$
    where $\langle\cdot,\cdot\rangle$ is the standard Euclidean inner product. Since each individual term is a linear functional of $\varepsilon_j$ and the sum of independent Gaussians is also a Gaussian, we have $\langle \Delta,h\rangle_{\calH_k}$ is a Gaussian in $\mathbb{R}$. Also, since each $\varepsilon_j$ is mean-zero and hence $\langle\Delta,h\rangle_{\calH_k}$ is a mean-zero Gaussian for every $h$, we know that $\Delta$ is a Gaussian random element with mean element $0$.
\end{proof}


Now, we find an explicit form for the covariance operator of this Gaussian measure and we are done since approximate sampling from $\calH_k$-valued Gaussian measures with covariance operators $\beta C$ for some scalar $\beta>0$ follows immediately from the linearity of the arguments in our proofs.
\begin{proposition}
    $\Delta\sim\calN_{\calH_k}(0,C)$, where $C=\frac{1}{m^2}\sum_{j=1}^m T_j T^*_j$.
    \label{prop:exact_covariance_form}
\end{proposition}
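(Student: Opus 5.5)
The preceding lemma already shows that $\Delta$ is a centered Gaussian random element. By the one-to-one correspondence between Gaussian measures and (mean, covariance operator) pairs, it therefore suffices to identify the covariance operator of $\Delta$. I would compute it through its defining bilinear form. For arbitrary $f,h\in\calH_k$, I would show
\[
\bbE\bigl[\langle f,\Delta\rangle_{\calH_k}\langle h,\Delta\rangle_{\calH_k}\bigr]=\Bigl\langle f,\tfrac{1}{m^2}\textstyle\sum_{j=1}^m T_jT_j^*h\Bigr\rangle_{\calH_k}.
\]
Since the mean is zero, the covariance operator $C$ is the unique operator with $\langle f,Ch\rangle=\bbE[\langle f,\Delta\rangle\langle h,\Delta\rangle]$, so this identity determines $C$.

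\textbf{Key steps.}
\begin{enumerate}
\item Reuse the identity from the previous proof, $\langle\Delta,h\rangle_{\calH_k}=\frac1m\sum_j\langle\varepsilon_j,T_j^*h\rangle$, and the analogous identity for $f$.
\item Multiply the two expressions and expand into a double sum over $j,l$.
\item Use independence and zero mean of the $\varepsilon_j$ to kill the cross terms $j\neq l$.
\item For the diagonal terms, use $\bbE[\varepsilon_j\varepsilon_j^\top]=I$ to get $\bbE[\langle\varepsilon_j,T_j^*f\rangle\langle\varepsilon_j,T_j^*h\rangle]=\langle T_j^*f,T_j^*h\rangle=\langle f,T_jT_j^*h\rangle_{\calH_k}$, by the definition of the adjoint.
\item Sum over $j$ and include the factor $1/m^2$ to obtain the claimed formula.
\end{enumerate}

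\textbf{Properties of $C$.} I would then check that $C$ satisfies the conditions required of a covariance operator.
\begin{itemize}
\item Symmetry and positive semi-definiteness are immediate, since $\langle f,T_jT_j^*f\rangle=\|T_j^*f\|^2\ge0$.
\item Each $T_j$ is bounded, as shown in the previous proof, and has finite rank (at most $|\calG|$). Hence $T_jT_j^*$ is finite rank, and therefore compact and trace-class.
\item Concretely, $\mathrm{tr}(T_jT_j^*)=\sum_r\|\partial_{\bx^r}k(\bx_j,\cdot)\|_{\calH_k}^2<\infty$, which also agrees with $\bbE\|\Delta\|^2$.
\end{itemize}

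\textbf{Main subtlety.} The delicate step is the identification of the covariance operator in the paper's convention. The definition in App.~\ref{app:Gaussian_random_element_and_measure} is written somewhat informally, so I would state explicitly that I use the weak form $\langle f,Ch\rangle=\bbE[\langle f,\Xi-m\rangle\langle h,\Xi-m\rangle]$. I would also note that equality of these bilinear forms for all $f,h$ determines $C$ uniquely, by Riesz representation.

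Finally, I would remark that the $o(\|\varepsilon\|)$ remainder is excluded here: the statement concerns $\Delta$ exactly, not $\tilde{\boldsymbol{\mu}}-\boldsymbol{\mu}_0$.
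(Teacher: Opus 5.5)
Your proposal is correct and follows essentially the same route as the paper. The paper likewise identifies $C$ through the bilinear form $\langle Cf,g\rangle_{\calH_k}=\bbE[\langle\Delta,f\rangle_{\calH_k}\langle\Delta,g\rangle_{\calH_k}]$, drops the cross terms by independence of the $\varepsilon_j$, and uses $\bbE[\langle\varepsilon,a\rangle\langle\varepsilon,b\rangle]=\langle a,b\rangle$ together with the adjoint to read off $\frac{1}{m^2}\sum_{j=1}^m T_jT_j^*$; your extra checks that $C$ is symmetric, positive semi-definite, finite-rank and trace-class are a small addition the paper leaves implicit, but they do not change the argument.
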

\begin{proof}
    Notice that the covariance operator $C$ is defined as 
    $$
    \langle Cf,g\rangle_{\calH_k}=\bbE[\langle\Delta,f\rangle_{\calH_k}\langle\Delta,g\rangle_{\calH_k}],
    $$
    for any $f,g\in\calH_k$. By independence of cells in the batch, we have 
    $$
    \bbE[\langle\Delta,f\rangle_{\calH_k}\langle\Delta,g\rangle_{\calH_k}]=\frac{1}{m^2}\sum_{j=1}^m\bbE[\langle\varepsilon_j,T^*_jf\rangle\langle\varepsilon_j,T^*_jg\rangle]=\frac{1}{m^2}\sum_{j=1}^m\langle T^*_jf,T^*_jg\rangle,
    $$
    where we used $\bbE[\langle\varepsilon,a\rangle\langle\varepsilon,b\rangle]=\langle a,b\rangle$ for any $a,b\in\bbR^{|\calG|}$ and $\varepsilon\sim\calN(0,I)$. Hence, we have the equality 
    $$
    \langle Cf,g\rangle_{\calH_k}=\bbE[\langle\Delta,f\rangle_{\calH_k} \langle\Delta,g\rangle_{\calH_k}] = \frac{1}{m^2}\sum_{j=1}^m\langle T^*_jf,T^*_jg\rangle = \frac{1}{m^2}\sum_{j=1}^m\langle T_jT^*_jf,g\rangle_{\calH_k}=\left\langle \frac{1}{m^2}\sum_{j=1}^mT_jT^*_jf ,g \right\rangle_{\calH_k}.
    $$
    Hence, by matching terms, we have our result as needed. 
\end{proof}
\let\tbm\undefined

\subsection{Diffusion Implementation Details}
\label{app:diffusion_details}
\label{app:context_diffusion_implementation}

As discussed in Sec.~\ref{sec:method_training_sampling}, we implement our diffusion framework directly in cell space for tractable computation. We adopt a variance-preserving (VP) diffusion and train an $x_0$-predictor to recover the clean perturbed cell batch from its noised version. Unless stated otherwise, we use the same diffusion configuration for (i) training from scratch, (ii) downstream fine-tuning, and (iii) marginal pretraining.

\textbf{Notation.}
Let $B_\text{pert}$ be a sampled perturbed cell batch and $B_{\text{ctrl}}$ be a control batch, both associated with the same cellular context $c$; $B_{\text{pert}}$ is further associated with a perturbation type $\tau$. For convenience, we represent each batch as a matrix: $B_{\text{ctrl}} \in \mathbb{R}^{m \times |\mathcal{G}|}$ and $B_{\text{pert}} \in \mathbb{R}^{m \times |\mathcal{G}|}$, where $m$ is the batch size and $|\mathcal{G}|$ is the number of genes.

We denote the clean data as $B_0:=B_{\text{pert}}$, the noised batch at diffusion step $t$ as $B_t$, and the model predicted batch as $B_\theta$.
For conditional generation, we write the conditioning information as $y := (B_{\text{ctrl}}, c, \tau)$, and our network $f_\theta$ is parameterized as an ``$x_0$-predictor''.
For $f_\theta$, we also apply a non-negative output head (\emph{e.g.}, \texttt{ReLU} or \texttt{softplus}) to enforce non-negativity of predicted expression values.

\subsubsection{Training}
\label{app:context_diffusion_implementation_training}

\textbf{Forward noising process.}
We discretize the VP diffusion into $T{=}1000$ steps with a linear variance schedule $\{\beta_t\}_{t=1}^T \subset (0,1)$.
Define the cumulative noise level 
$\alpha_t := \prod_{s=1}^t (1-\beta_s)$.
The forward noising process is given by
\begin{equation}
q(B_t \mid B_0)
= \mathcal{N}\big(\mathrm{vec}(B_t); \sqrt{{\alpha}_t}\mathrm{vec}(B_0), (1-{\alpha}_t) I_{m|\mathcal{G}|}\big),
\quad
B_t=
\sqrt{{\alpha}_t}B_0 + \sqrt{1-{\alpha}_t}\varepsilon,
\quad \varepsilon_{ij}\stackrel{\text{i.i.d.}}{\sim}\mathcal{N}(0,1).
\label{eq:vp_forward}
\end{equation}

During training, we sample timesteps uniformly: $t \sim \mathrm{Unif}\{1,\dots,T\}$.

\textbf{$x_0$-prediction parameterization.}
We adopt an $x_0$-prediction parameterization and train the model to directly predict
the clean perturbed cell batch.
Given a noised batch $B_t$ at timestep $t$ and conditioning information
$y := (B_{\mathrm{ctrl}}, c, \tau)$,
the model output is
\begin{equation}
B_\theta = f_\theta(B_t, t; y).
\label{eq:x0_predictor}
\end{equation}

The corresponding noise prediction implied by $B_\theta$ is
\begin{equation}
\hat{\varepsilon}_\theta(B_t,t;y)
=
\frac{B_t-\sqrt{{\alpha}_t}\,B_\theta}{\sqrt{1-{\alpha}_t}}.
\label{eq:eps_from_x0}
\end{equation}

\textbf{Objective.}
The training objective combines a standard DDPM mean-squared error (MSE) loss
with an MMD-based loss derived from our diffusion framework over cell distributions
in $\mathcal{H}_k$:
\begin{equation}
\mathcal{L}_{\text{total}}
=
\mathcal{L}_{\mathrm{MMD}} + \lambda_{\mathrm{MSE}}\,\mathcal{L}_{\mathrm{MSE}},
\qquad
\lambda_{\mathrm{MSE}}=1 \text{ by default.}
\label{eq:train_objective}
\end{equation}
The MSE term is a cell-wise reconstruction objective on the perturbed batch:
\begin{equation}
\mathcal{L}_{\mathrm{MSE}}
=
\mathbb{E}_{B_0,y,t,\varepsilon}\Big[
\|B_0 - f_\theta(B_t,t;y)\|_2^2
\Big],
\label{eq:mse_x0}
\end{equation}
where $B_t$ is generated by Eqn.~\eqref{eq:vp_forward}.

To align training with our distribution-level diffusion formulation in $\mathcal{H}_k$,
we incorporate the MMD term using the energy distance kernel.
Let $\tilde{P}_{B_0}$ and $\tilde{P}_{B_\theta}$ denote the empirical distributions
induced by the true and predicted cell batches, respectively.
The energy distance between them is
\begin{equation}
\mathrm{ED}(\tilde{P}_{B_0},\tilde{P}_{B_\theta})
=
2\,\mathbb{E}\|X-Y\|
-
\mathbb{E}\|X-X'\|
-
\mathbb{E}\|Y-Y'\|,
\quad
X,X'\sim \tilde{P}_{B_0},\;\; Y,Y'\sim \tilde{P}_{B_\theta},
\label{eq:energy_distance}
\end{equation}
and we set $\mathcal{L}_{\mathrm{MMD}} := \mathrm{ED}(\tilde{P}_{B_0},\tilde{P}_{B_\theta})$ in Eqn.~\eqref{eq:train_objective}.

\textbf{Self-conditioning.}
We employ self-conditioning with probability $p_{\mathrm{sc}}$ to stabilize and improve training.
Specifically, with probability $p_{\mathrm{sc}}$, we first obtain a stop-gradient prediction
\(
\bar{B}_\theta = \mathrm{sg}\!\left(f_\theta(B_t, t; y)\right)
\)
and feed it back to the network by concatenation:
\begin{equation}
B_\theta
=
f_\theta\big([B_t,\bar{B}_\theta],t;y\big),
\qquad
\bar{B}_\theta = \mathrm{sg}\!\left(f_\theta(B_t,t;y)\right),
\label{eq:self_conditioning}
\end{equation}
where $\mathrm{sg}(\cdot)$ denotes the stop-gradient operator. When self-conditioning is disabled, the model reduces to the standard formulation
$B_\theta = f_\theta(B_t,t;y)$.

\textbf{Classifier-free guidance (CFG) dropout during training.}
To enable CFG at inference time, we randomly drop the \emph{metadata} condition with probability $p_{\mathrm{drop}}$.
Specifically, the cellular context $c$ and perturbation label $\tau$ are masked,
while the control cell batch $B_{\mathrm{ctrl}}$ is always provided to the network
and is not treated as a drop-able condition.

\textbf{Exponential Moving Average (EMA).}
For evaluation and sampling, we maintain an exponential moving average (EMA)
of model parameters with decay $0.99$, updated every 10 steps.


\subsubsection{Sampling}
\label{app:context_diffusion_implementation_sampling}

\textbf{Reverse-time generation.}
At inference, we initialize the reverse process from $\mathrm{vec}(B_T)\sim\mathcal{N}(0,I_{m|\mathcal{G}|})$ and run the reverse process using DDIM, with an optional DDPM sampler.
Given a decreasing timestep sequence $\{t_k\}_{k=1}^K$ (\emph{e.g.}, $K{=}100$ for fast sampling or $K{=}1000$ for full sampling), we perform DDIM sampling with noise parameter $\eta\in[0,1]$,
where $\eta=0$ corresponds to deterministic sampling. The DDIM update with $\eta=0$ is given by
\begin{equation}
B_\theta^{(k)} = f_\theta(B_{t_k}, t_k; y),
\qquad
\hat{\varepsilon}^{(k)} =
\frac{B_{t_k}-\sqrt{{\alpha}_{t_k}}B_\theta^{(k)}}{\sqrt{1-{\alpha}_{t_k}}},
\qquad
B_{t_{k+1}}
=
\sqrt{{\alpha}_{t_{k+1}}}\,B_\theta^{(k)}
+
\sqrt{1-{\alpha}_{t_{k+1}}}\,\hat{\varepsilon}^{(k)}.
\label{eq:ddim_eta0}
\end{equation}
Self-conditioning is enabled during sampling by feeding the previous $B_\theta^{(k-1)}$ into the next step, consistent with Eqn.~\eqref{eq:self_conditioning}.

\textbf{Classifier-free guidance.}
We apply CFG in $\varepsilon$-space:
\begin{equation}
\hat{\varepsilon}_{\mathrm{cfg}}
=
(1+w)\,\hat{\varepsilon}_{c}
-
w\,\hat{\varepsilon}_{u},
\label{eq:cfg_eps}
\end{equation}
where $\hat{\varepsilon}_{c}$ and $\hat{\varepsilon}_{u}$ are the conditional and unconditional noise predictions, respectively.
In our default setting, the unconditional branch drops the metadata
conditioning (cellular context $c$ and perturbation label $\tau$),
while \emph{retaining} the control cell batch $B_{\mathrm{ctrl}}$.
This ensures that generation remains anchored to the matched control
population even under guidance.

\subsection{Architecture Implementation Details}
\label{app:architecture_details}
\label{app:context_architecture_implementation}



We instantiate our diffusion backbone as a multi-modal DiT (MM-DiT) transformer
for perturbed cell generation.
The model predicts the denoised perturbed cell batch $B_0$
(or equivalently the noise $\hat{\varepsilon}$ via Eqn.~\eqref{eq:eps_from_x0})
from a noised batch $B_t$, while conditioning on the matched control batch $B_{\text{ctrl}}$
and associated metadata $(c,\tau)$. Specifically, the perturbed batch and control batch are fed into the two token streams in MM-DiT, while $(c,\tau)$ are fed as conditioning encodings.

\textbf{Inputs and outputs.} 
Let $B_t \in \mathbb{R}^{m\times |\mathcal{G}|}$ denote a batch of noised perturbed cell profiles, where $m$ is the batch size, and $|\mathcal{G}|$ is the gene dimension (\emph{e.g.}, $|\mathcal{G}|{=}2000$ HVGs). We condition on $y := (B_{\text{ctrl}}, c, \tau)$, where $B_{\text{ctrl}}$ is a sampled control batch, $c$ denotes cellular context (\emph{i.e.}, cell type and optionally experimental batch), and $\tau$ denotes the perturbation type (and dosage when applicable). Note that both $B_{\text{ctrl}}$ and $B_{\text{pert}}$ are sampled under the same cellular context $c$, with $B_{\text{pert}}$ additionally associated with perturbation $\tau$. If the number of available cells is smaller than the batch size $m$, sampling is performed with replacement to ensure a consistent batch size.

The network is parameterized as an $x_0$-predictor \begin{equation} B_\theta = f_\theta(B_t, t; y), \end{equation} and we apply an \texttt{ReLU} head to $B_\theta$ when enforcing non-negative expression.

\textbf{Tokenization and embedding.} Each cell expression vector $\bx \in \mathbb{R}^{|\mathcal{G}|}$ is first projected into the model dimension $d$ through a linear input projection, producing a sequence of tokens $h \in \mathbb{R}^{m \times d}$. Thus, $B_{\text{pert}}$ results in perturbed tokens $h_{\text{pert}}$ and $B_{\text{ctrl}}$ results in control tokens $h_{\text{ctrl}}$. We adopt this cell-wise tokenization scheme---treating each cell as a token and a batch of cells as a ``sentence''---which is substantially more efficient when the gene dimension $|\mathcal{G}|$ is large.

Optionally, gene-level semantic information can be incorporated by associating each gene with an external embedding. Specifically, a short gene summary generated by \texttt{gpt-5-mini} is encoded using a pretrained text embedding model (\texttt{text-embedding-3-large}) and used as the corresponding gene embedding. When gene embeddings are disabled, all genes share a single dummy embedding. In practice, for the final reported model, we use the shared dummy embedding for simplicity.




The diffusion timestep $t$ is embedded by a standard sinusoidal timestep embedding followed by an MLP,
producing $e_t \in \mathbb{R}^{d}$.

\textbf{Condition embedding.}
We encode the metadata $(c,\tau)$ into a condition vector $e_y\in\mathbb{R}^{d}$ using a covariate encoder
(App.~\ref{app:embedding_for_meta_data}).
When semantic embeddings are available (\emph{e.g.}, drug/gene embeddings), they are linearly projected to dimension $d$ and combined with learned embeddings.
The resulting representations are concatenated with the diffusion time embedding $e_t$ and passed through an MLP to form a global conditioning vector,
\begin{equation}
s := \mathrm{MLP}\big([e_t, e_y]\big) \in \mathbb{R}^{d},
\end{equation}
which is used to modulate every transformer block via AdaLN-Zero (described below).


\textbf{MM-DiT fusion blocks.}
Our backbone consists of $L$ transformer blocks. Each block jointly updates
the perturbed and control token streams via a symmetric MM-DiT-style fusion mechanism.
Within each block, both streams are processed by two sequential sublayers:
a multi-head self-attention (MSA) sublayer followed by an MLP sublayer.

For each stream and each sublayer, we use AdaLN-Zero~\cite{peebles2023scalable} conditioned on the context $s$ to produce a scale, shift,
and a residual gate:
\begin{equation}
(\beta^{\text{attn}}_{\star}, \gamma^{\text{attn}}_{\star}, g^{\text{attn}}_{\star},
 \beta^{\text{mlp}}_{\star}, \gamma^{\text{mlp}}_{\star}, g^{\text{mlp}}_{\star})
= f_{\theta,\star}(s), \quad \star \in \{\mathrm{pert}, \mathrm{ctrl}\},
\end{equation}
and define $\mathrm{Mod}(h;\beta,\gamma)=\gamma \odot \mathrm{LN}(h)+\beta$.

We first modulate the normalized inputs for the attention sublayer
\begin{equation}
\tilde{h}^{\text{attn}}_{\mathrm{pert}}=\mathrm{Mod}(h_{\mathrm{pert}};\beta^{\text{attn}}_{\mathrm{pert}},\gamma^{\text{attn}}_{\mathrm{pert}}),\quad
\tilde{h}^{\text{attn}}_{\mathrm{ctrl}}=\mathrm{Mod}(h_{\mathrm{ctrl}};\beta^{\text{attn}}_{\mathrm{ctrl}},\gamma^{\text{attn}}_{\mathrm{ctrl}}),
\end{equation}
and concatenate the two streams along the feature dimension
\begin{equation}
u = \left[\tilde{h}^{\text{attn}}_{\mathrm{pert}};\,\tilde{h}^{\text{attn}}_{\mathrm{ctrl}}\right]\in \mathbb{R}^{m\times 2d},
\end{equation}
and processed by a shared multi-head self-attention module.
The output is split back into perturbed and control components,
\begin{equation}
    \quad
u'=\mathrm{MSA}(u),\quad
[\Delta h_{\mathrm{pert}};\,\Delta h_{\mathrm{ctrl}}]=\mathrm{Split}(u').
\end{equation}

The two streams are then updated by gated residual connections:
\begin{equation}
h_{\mathrm{pert}} \leftarrow h_{\mathrm{pert}} + g^{\text{attn}}_{\mathrm{pert}}\odot \Delta h_{\mathrm{pert}},\qquad
h_{\mathrm{ctrl}} \leftarrow h_{\mathrm{ctrl}} + g^{\text{attn}}_{\mathrm{ctrl}}\odot \Delta h_{\mathrm{ctrl}}.
\end{equation}

Next, each stream is passed through an MLP sublayer independently, again preceded by AdaLN-Zero modulation and followed by a gated residual update:
\begin{equation}
h_{\star} \leftarrow h_{\star} + g^{\text{mlp}}_{\star}\odot
\mathrm{MLP}_{\star}\!\left(\mathrm{Mod}(h_{\star};\beta^{\text{mlp}}_{\star},\gamma^{\text{mlp}}_{\star})\right), \quad \star \in \{\mathrm{pert}, \mathrm{ctrl}\}.
\end{equation}
The separate residual gates for the attention $g_{\mathrm{attn}}(s)$ and MLP
sublayers $g_{\mathrm{mlp}}(s)$ are used to control their information contributions.

Although both streams are updated throughout the network, only the perturbed stream $h_{\mathrm{pert}}$
is connected to the denoising head and contributes to the diffusion loss. The control stream is updated
within blocks to enhance conditioning capacity, but no denoising head or reconstruction loss is imposed on $h_{\mathrm{ctrl}}$.




\textbf{AdaLN-Zero conditioning and initialization.}
Each transformer block uses AdaLN-Zero to inject conditioning.
For a block input $h$, AdaLN-Zero applies an affine modulation to LayerNorm-normalized
activations, with scale and shift predicted from the conditioning variable $s$:
\begin{equation}
\mathrm{AdaLN}(h; s) = \mathrm{LN}(h)\odot \big(1+\Gamma(s)\big) + \Delta(s),
\end{equation}
where $\Gamma(\cdot)$ and $\Delta(\cdot)$ are linear projections of a small MLP on $s$.

Following standard AdaLN-Zero practice, the final linear layer of the modulation
network is initialized to zero, such that the scale, shift, and residual gates
are near zero at initialization. As a result, each block is initialized close to
an identity mapping, which stabilizes optimization in deep conditional diffusion
models.



\textbf{Classifier-free guidance compatibility.}
To support CFG at sampling time (App.~\ref{app:diffusion_details}), we apply conditional dropout during training:
with probability $p_{\mathrm{drop}}$ we drop metadata embeddings $e_y$ (while retaining the control context),
producing an unconditional branch that shares the same architecture. This enables Eqn.~\eqref{eq:cfg_eps}
without requiring any auxiliary classifier.

\label{app:embedding_for_meta_data}
\label{app:content_meta_data}

\textbf{Covariate Encoder.} We implement a covariate encoder (\texttt{CovEncoder}) that maps experimental and biological
metadata—including perturbation identity, dosage, cell type, and optional experimental batch information—
into dense vectors. Each covariate is embedded independently, and all resulting embeddings
are concatenated and linearly projected into the model dimension.

\textbf{Perturbation Identity Embeddings.}
We provide multiple options to encode perturbation identity, depending on the perturbation type
(cytokine, drug, or gene) and dataset characteristics.
For perturbation identity, we support both semantic embeddings and simple one-hot encodings,
allowing flexible choices across datasets.

\begin{itemize}
\item \textbf{Cytokine Embeddings (ESM2).} For cytokine perturbations, which correspond to protein molecules, we extract semantic embeddings using a pretrained protein language model (ESM2). Cytokine embeddings are loaded from disk, indexed by the cytokien identity.

    \item \textbf{Drug Semantic Embeddings (ChemBERTa + Dose).} For chemical perturbations, we load pre-computed semantic embeddings (e.g., the [CLS] representation from a ChemBERTa-like encoder~\cite{chithrananda2020chembertalargescaleselfsupervisedpretraining}) from disk and construct an embedding table indexed by the drug identity. The control perturbation embedding is set to the mean embedding across all drugs. To model dosage effects, we discretize observed dose values into a vocabulary and learn a corresponding dose embedding. The final perturbation representation is obtained by summing the drug semantic embedding and the associated dose embedding.


\item \textbf{Gene Perturbation Embeddings (GenePT / LLM).} For gene perturbations, we support loading semantic gene embeddings from disk, including GenePT~\cite{Chen2024GenePT} embeddings or LLM-derived embeddings (\emph{e.g.}, \texttt{text-embedding-3-large}) computed from LLM-generated gene summaries (\texttt{gpt-5-mini}). These embeddings are stored in a frozen lookup table aligned with gene identity, with the ``non-targeting'' control embedding set to the mean embedding.


\end{itemize}

\textbf{Cell Type Embeddings.}
We support a frozen semantic cell type embedding loaded from disk, derived from LLM-generated cell type summaries using \texttt{gpt-5} and encoded by LLM like \texttt{text-embedding-3-large}.

\textbf{Experimental Batch Embeddings (Optional).}
If batch covariates are provided, we use a learnable embedding table indexed by experimental batch identity.
We assume that the batch IDs are mapped into a contiguous range $\{0,\dots, N_{\mathrm{batch}}-1\}$ prior to feeding into \texttt{CovEncoder}.


\subsection{Score Matching Discussion Details}
\label{ssec:scorematching}
\label{app:context_score_matching}

We also provide a perspective to interpret our MMD-based distribution alignment objective through the lens of score matching by analyzing its local behavior around a reference cell population. Let $x$ denote a fixed reference cell set (\emph{e.g.}, a batch of training data), and let $x'$ be a generated cell set produced by the model. We consider the squared MMD, $\text{MMD}^2_k(x',x)$, as a function of $x'$ and study its Taylor expansion around $x'=x$.
\begin{align*}
    \text{MMD}^2_k&(x',x) = \text{MMD}_k^2(x,x) \\
    &+\langle x'-x,\nabla_{x'}\text{MMD}_k^2(x',x)|_{x'=x}\rangle \\
    &+\frac{1}{2}\langle x'-x, \nabla_{x'}^2\text{MMD}_k^2(x',x)|_{x'=x}(x'-x)\rangle \\
    &+o(\|x'-x\|_2^2),
\end{align*}
The zeroth-order term is constant with respect to the model parameters and can be ignored during optimization. The first-order term vanishes at $x'=x$, since $\text{MMD}^2_k(\cdot,x)$ attains its minimum there and behaves locally as a convex, distance-like function. Consequently, the second-order term dominates in a neighborhood of the optimum, yielding the local approximation
\begin{align*}
\mathrm{MMD}_k^2(x',x)
= ||x' - x||_H^2 + o(||x' - x||_2^2),
\end{align*}
where $H:=\nabla_{x'}^2\text{MMD}_k^2(x',x)|_{x'=x}$, and $\|x\|^2_H:=x^\top Hx$. This result shows that, up to second order, minimizing MMD is equivalent to minimizing a matrix-weighted mean squared error, where the weighting matrix $H$ depends on the kernel and the local geometry of the reference population. Unlike a standard Euclidean MSE, this Hessian-induced norm emphasizes directions corresponding to informative higher-order population statistics encoded by the kernel, rather than treating all directions uniformly.

This local quadratic structure naturally connects MMD-based distribution alignment to score matching. In particular, minimizing a weighted quadratic deviation between generated and reference samples corresponds to matching the score of the data distribution under a non-Euclidean geometry induced by $H$. In \cref{ssec:scorematching}, we formalize this connection by showing that score matching under a general symmetric positive definite norm $||\cdot||_H$ admits an equivalent denoising score-matching formulation. To this end, we abstract the local second-order structure of the MMD objective into a generic sample-level matching function.

Specifically, let $x\in\bbR^d$ be a random variable with density $p$ and let $\tilde{x}\coloneqq \alpha x+\beta\epsilon$, where $(\alpha,\beta)\in[0,1]^2$, and $\epsilon\sim\calN(0,I)$. We can factor the joint density as $p_{\tilde{x},x}=p_{\tilde{x}\mid x}p_x$. Note that we can also write the marginal of $\tilde{x}$ as $p(\tilde{x})=\int p(\tilde{x}\mid x)p(x)dx$.

To simplify the analysis, we introduce a generic smooth matching function $M:\bbR^d\times\bbR^d\to\bbR$, which captures the local second-order behavior of the squared MMD objective derived above. Let $M$ be infinitely differentiable. Suppose $M$ only depends on the 2-norm between the two arguments, \emph{i.e.}\ $M(x,x')=M(\|x'-x\|_2)$. Also suppose $M$ is strongly convex with respect to the first argument with the second argument fixed. Fix some $x$ in the second argument of $M$, then we can compute the second-order Taylor expansion about $x$ as 
$$
M(x',x) = M(x,x) + 
\langle x'-x, \nabla_{x'}M(x',x)|_{x'=x}\rangle  + 
\frac{1}{2}\langle x'-x,\nabla_{x'}^2M(x',x)|_{x'=x}(x'-x)\rangle
+o(\|x'-x\|^2).
$$

Importantly, from our assumptions, we have that the Hessian $\nabla_{x'}^2M(x',x)|_{x'=x}$ is symmetric and positive definite. Let $H=\nabla_{x'}^2M(x'x)|_{x'=x}$. 

\begin{proposition}[Augmented Denoising Score Matching]
    $$
    \argmin_\theta \bbE_{\tilde{x}}[\|\nabla\log p(\tilde{x}) - f(\tilde{x})\|_H^2] = \argmin_\theta \bbE_{\tilde{x},x}[\|\nabla_{\tilde{x}}\log p(\tilde{x}\mid x) - f(\tilde{x})\|_H^2]
    $$    
\end{proposition}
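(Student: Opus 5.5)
We follow the classical argument of Vincent for denoising score matching, with the Euclidean inner product replaced by $\langle a,b\rangle_H := a^\top H b$. Since $H$ is symmetric positive definite, $\|\cdot\|_H$ is a genuine norm induced by this inner product. Expanding the square in both objectives gives
\begin{align*}
\bbE_{\tilde{x}}\big[\|\nabla\log p(\tilde{x}) - f(\tilde{x})\|_H^2\big] &= \bbE_{\tilde{x}}\big[\|f(\tilde{x})\|_H^2\big] - 2\,\bbE_{\tilde{x}}\big[\langle \nabla\log p(\tilde{x}), f(\tilde{x})\rangle_H\big] + C_1,\\
\bbE_{\tilde{x},x}\big[\|\nabla_{\tilde{x}}\log p(\tilde{x}\mid x) - f(\tilde{x})\|_H^2\big] &= \bbE_{\tilde{x}}\big[\|f(\tilde{x})\|_H^2\big] - 2\,\bbE_{\tilde{x},x}\big[\langle \nabla_{\tilde{x}}\log p(\tilde{x}\mid x), f(\tilde{x})\rangle_H\big] + C_2.
\end{align*}
Here $C_1$ and $C_2$ do not depend on $\theta$ (only $f=f_\theta$ does). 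In the second line, the term $\|f(\tilde x)\|_H^2$ depends on $x$ only through $\tilde x$, so its joint expectation equals the $\tilde x$-marginal expectation. It therefore suffices to show that the two cross terms coincide for every $\theta$. The two objectives then differ by the constant $C_1-C_2$, and their argmins agree.

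To show the cross terms coincide, write the first one as an integral against $p(\tilde x)$ and use $p(\tilde x)\nabla\log p(\tilde x)=\nabla p(\tilde x)$:
\[
\int \nabla p(\tilde{x})^\top H f(\tilde{x})\,d\tilde{x}.
\]
Next substitute $p(\tilde x)=\int p(\tilde x\mid x)p(x)\,dx$ and move the gradient inside the integral, which gives $\nabla p(\tilde x)=\int \nabla_{\tilde x}p(\tilde x\mid x)\,p(x)\,dx$. Then write $\nabla_{\tilde x}p(\tilde x\mid x)=p(\tilde x\mid x)\nabla_{\tilde x}\log p(\tilde x\mid x)$. By Fubini, the cross term becomes
\[
\iint p(\tilde{x}, x)\,\big\langle \nabla_{\tilde{x}}\log p(\tilde{x}\mid x), f(\tilde{x})\big\rangle_H \,dx\,d\tilde{x},
\]
which is exactly the cross term of the conditional objective. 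The matrix $H$ is constant: it is evaluated at $x'=x$ and, because $M$ depends only on $\|x'-x\|_2$, it does not depend on the reference point. It therefore passes through every integral, and nothing beyond Vincent's identity is needed.

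The main obstacle is justifying the interchange of differentiation and integration, the use of Fubini, and the finiteness of $C_1$ and $C_2$. The conditional law here is Gaussian, $p(\tilde x\mid x)=\calN(\alpha x,\beta^2 I)$, which is smooth in $\tilde x$ and has the explicit score $-(\tilde x-\alpha x)/\beta^2$. We therefore assume $\beta>0$, $\bbE\|x\|^2<\infty$, and $\bbE\|f(\tilde x)\|^2<\infty$. Under these assumptions dominated convergence justifies differentiating under the integral sign, and Cauchy--Schwarz in the $H$-norm gives integrability of the cross terms. If these regularity conditions cannot be taken for granted, they should be stated explicitly as hypotheses, as in Vincent's original argument.
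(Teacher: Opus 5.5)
Your proof is correct and follows essentially the same route as the paper. Both reduce the claim to Vincent's identity $\int \nabla p(\tilde{x})^\top H f(\tilde{x})\,d\tilde{x} = \mathbb{E}_{\tilde{x},x}\bigl[\nabla_{\tilde{x}}\log p(\tilde{x}\mid x)^\top H f(\tilde{x})\bigr]$, obtained by writing $p(\tilde{x})$ as a mixture over $x$ and exchanging gradient and integral; the only cosmetic difference is that the paper completes the square in the first objective while you expand both objectives and match cross terms, and your explicit hypotheses ($\beta>0$, square-integrable $f$) supply rigor the paper leaves implicit.
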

\begin{proof}
We want to show $L(f)\coloneqq\frac{1}{2}\bbE_{\tilde{x}}[\|\nabla\log p(\tilde{x})-f(\tilde{x})\|_H^2]$ is equal, up to a constant not dependent on $f$, to $L'(f)=\frac{1}{2}\bbE_{\tilde{x},x}[\|\nabla_{\tilde{x}}\log p(\tilde{x}\mid x)-f(\tilde{x})\|_H^2]$, where $\|x\|_H^2=x^\top Hx$. 

We start with $L(f)$:
\begin{align*}
    L(f) &= \frac{1}{2}\bbE_{\tilde{x}}[\|\nabla\log p(\tilde{x})-f(\tilde{x})\|_H^2]
    \\ 
    &= 
    \frac{1}{2}\int f(\tilde{x})^\top Hf(\tilde{x}) p(\tilde{x})d\tilde{x} - 
    \int\langle 
    f(\tilde{x})H^\top 
    , 
    \underbrace{p(\tilde{x})\nabla\log p(\tilde{x})}_{=\nabla p(\tilde{x})}
    \rangle d\tilde{x} + 
    \underbrace{\frac{1}{2}\int \|\nabla\log p(\tilde{x})\|_H^2 p(\tilde{x}) d\tilde{x}}_{\eqqcolon C_1} \\
    &=\frac{1}{2}\bbE_{\tilde{x}}[f(\tilde{x})^\top Hf(\tilde{x})] - 
    \int\langle
    f(\tilde{x})H^\top,
    \nabla p(\tilde{x})
    \rangle d\tilde{x} + C_1
\end{align*}

Focusing on the second term, we have 
\begin{align*}
    \int\langle
    f(\tilde{x})H^\top,
    \nabla p(\tilde{x})
    \rangle d\tilde{x} &= 
    \int\langle 
    f(\tilde{x})H^\top
    ,
    \underbrace{\nabla_{\tilde{x}}\int}_{\text{swap}} p(\tilde{x}\mid x)p(x)dx
    \rangle d\tilde{x} \\ 
    &= \iint 
    \langle 
    f(\tilde{x})H^\top
    ,
    \underbrace{\nabla_{\tilde{x}} p(\tilde{x}\mid x)}_{p(\tilde{x}\mid x)\nabla_{\tilde{x}}\log p(\tilde{x}\mid x)}
    \rangle p(x)dxd\tilde{x} \\ 
    &= \iint 
    \langle f(\tilde{x})H^\top, 
    \nabla_{\tilde{x}}\log p(\tilde{x}\mid x)\rangle \underbrace{p(\tilde{x}\mid x)p(x)}_{=p(\tilde{x},x)}dxd\tilde{x} \\
    &=
    \bbE_{(\tilde{x},x)}\left[
    f(\tilde{x})^\top H(\nabla_{\tilde{x}}\log p(\tilde{x}\mid x))
    \right]
\end{align*}

Plugging this back in, and also adding in and subtracting a `complete the square' term, we get
\begin{align*}
    L(f) &= 
    \frac{1}{2}\bbE[f(\tilde{x})^\top H f(\tilde{x})] - \bbE[f(\tilde{x})^\top H(\nabla_{\tilde{x}}\log p(\tilde{x}\mid x))] \\& +
    \frac{1}{2}\bbE[(\nabla_{\tilde{x}}\log p(\tilde{x}\mid x))^\top H (\nabla_{\tilde{x}}\log p(\tilde{x}\mid x))] - \underbrace{\frac{1}{2}\bbE[(\nabla_{\tilde{x}}\log p(\tilde{x}\mid x))^\top H (\nabla_{\tilde{x}}\log p(\tilde{x}\mid x))]}_{\eqqcolon C_2} + C_1 \\
    &=\frac{1}{2}\bbE[\|f(\tilde{x}) - \nabla_{\tilde{x}}\log p(\tilde{x}\mid x)\|_H^2] + C \\ 
    &= L'(f) + C,
\end{align*}
where $C\coloneqq C_1-C_2$ doesn't depend on $f$, and we are done. 
\end{proof}



%






%

\section{Experimental Details}
\label{app:context_experiment}

\subsection{Training Details}
\label{app:exp_setup_details}
\label{app:context_experiment_training}

\textbf{Pretraining and dowmstream data.} Detailed discussion for pretraining and dowmstream data composition and preprocessing can be found in App.~\ref{app:data_downstream} and App.~\ref{app:data_pretraining}, respectively. 

\textbf{Optimization.}
We optimize all models using AdamW with $(\beta_1,\beta_2)=(0.9,0.98)$
and weight decay $10^{-2}$. 
The learning rate follows a cosine decay schedule with 200 warmup steps
and a minimum plateau at $0.1\times$ the initial learning rate.
We use a learning rate of $2\times10^{-4}$ for PBMC and Tahoe100M,
$2\times10^{-3}$ for Replogle in downstream training and fine-tuning,
and $2\times10^{-4}$ for pretraining.
Gradients are clipped to a maximum global norm of $1.0$.

\textbf{Early stopping for model checkpoint selection (training from scratch / finetuning).} As shown in Fig.~\ref{fig:main_scaling} and Fig.~\ref{fig:app_scaling}, diffusion models are sensitive to the number of training steps, or equivalently, the training compute budget. This behavior is also commonly observed in image diffusion models~\cite{ho2022cascaded, baptista2025memorizationregularizationgenerativediffusion}. To ensure a fair and consistent evaluation, we train all model configurations for a fixed 20k steps and select the best checkpoint based on performance evaluated every 2k steps on validation set. This applies to both training from scratch and finetuning from a pretrained model.

\textbf{Model checkpoint selection for pretraining.} As shown in Fig.~\ref{fig:exp_zeroshot}, the pretraining dynamics differ across datasets. On PBMC, R$^2$ steadily improves with training steps, reflecting stable gains in capturing marginal perturbation patterns. In contrast, Replogle shows a sharp drop in R$^2$ after the first epoch, followed by stagnation. This discrepancy motivates our checkpoint selection strategy: for downstream finetuning, we select the end-of-first-epoch checkpoint, which consistently balances performance and stability across datasets.

\textbf{Input/output layer initialization during finetuning.} We consider three strategies for transferring the pretrained model to downstream finetuning:
(1) directly leveraging the pretrained $12,626$-gene input/output layer weights for finetuning; (2) replacing the pretrained input/output layers for the $12,626$ gene vocabulary with randomly initialized layers; and
(3) replacing the pretrained layers with newly initialized input/output layers restricted to a 2k-gene vocabulary. Empirically, we find that strategy strategy (1) performs best on PBMC, strategy (2) on Replogle, and strategy (3) on Tahoe-100M.

\textbf{Zero control stream during pretraining.}
To enable a unified model architecture for both pretraining and finetuning, we simplify the input relative to training from scratch: the control token stream is set to all zeros, and conditioning is restricted to cellular context only.

\textbf{Dataset-specific linear transformation layer for pretraining.} To project high-dimensional gene expression vectors into a shared low-dimensional feature space, we apply a linear transformation $W\in \mathbb{R}^{|\mathcal{G}| \times D}$, where $D$ denotes the model feature dimension. Because datasets originate from different sources and exhibit distinct expression distributions, we use dataset-specific linear transformations rather than a single shared projection. Concretely, we instantiate separate linear layers for cells from PBMC, Tahoe100M, Replogle, and CellxGene.

\textbf{More details.} Besides the details above, App.~\ref{app:diffusion_details} describes diffusion implementation, and App.~\ref{app:architecture_details} details the architecture.

\subsection{Metrics}
\label{app:metric_def}
\label{app:context_experiment_metrics}

We adopted the R$^2$ metric from CellFlow~\cite{klein2025cellflow} and the entire evaluation framework Cell-Eval~\cite{adduri2025predicting} (version 0.6.6) from STATE to comprehensively assess perturbation predictions. This framework addresses a key challenge: since individual cells cannot be directly compared to a specific ground truth (due to biological variability and the destructive nature of sequencing), evaluation must rely on population-level statistics. 

We categorize the leveraged metrics in two ways: (1) \textbf{Across the dimension of cells}:  it measures \textbf{expression-level accuracy} by comparing statistical distributions between the population of predicted cells and the population of real observed cells; (2) \textbf{Across the dimension of genes}: it assesses \textbf{biologically meaningful differential patterns} by comparing the  differentially expressed genes sets derived from predictions versus those derived from ground truth data.

To ensure fair comparison, we fix the control cells to be identical across predicted and ground-truth evaluations and across all our methods and baselines, so that differences arise solely from perturbed-cell predictions.

\textbf{Notations.} Let $M$ be the total number of perturbations. For each perturbation condition $\lambda$, we denote by $\{\bx_{\lambda,i}^{\text{pert}}\}_{i=1}^{N^{\lambda}_{\text{pert}}}$ and $\{\bx_{\lambda,i}^{\text{ctrl}}\}_{i=1}^{N^{\lambda}_{\text{ctrl}}}$ the ground-truth expression profiles for perturbed and control cells, respectively, and by $\{\hat{\bx}_{\lambda,i}^{\text{pert}}\}_{i=1}^{N^{\lambda}_{\text{pert}}}$ the model-predicted expression profiles for the perturbed cells. Empirically, Cell-Eval chooses ${N^{\lambda}_{\text{ctrl}}} = {N^{\lambda}_{\text{pert}}}$. We define the \textbf{relative perturbation effects} as the difference between the pseudobulk profiles for perturbed and control cells. Specifically, the pseudobulk profiles for ground-truth perturbed and control cells are defined as \begin{equation}{\bar{\bx}}_{\lambda}^{\text{pert}} = \frac{1}{N^{\lambda}_{\text{pert}}}\sum_{i=1}^{N^{\lambda}_{\text{pert}}}{\bx}_{\lambda, i}^{\text{pert}}, \quad {\bar{\bx}}_{\lambda}^{\text{ctrl}} =\frac{1}{N^{\lambda}_{\text{ctrl}}}\sum_{i=1}^{N^{\lambda}_{\text{ctrl}}} \bx_{\lambda, i}^{\text{ctrl}},
\end{equation}
respectively. And the pseudobulk profiles for the predicted perturbed cells are defined as 
\begin{equation}
    {\bar{\hat{\bx}}}_{\lambda}^{\text{pert}} = \frac{1}{N^{\lambda}_{\text{pert}}}\sum_{i=1}^{N^{\lambda}_{\text{pert}}}\hat{\bx}_{\lambda, i}^{\text{pert}}.
\end{equation}
Then we can formally define the ground-truth perturbation effect as 
\begin{equation}
    \Delta{{\bx}_{\lambda}} := {\bar{\bx}}_{\lambda}^{\text{pert}} - {\bar{\bx}}_{\lambda}^{\text{ctrl}},
\end{equation}
and the predicted perturbation effect as
 \begin{equation}
     \Delta{\hat{\bx}_{\lambda}} := {\bar{\hat{\bx}}}_{\lambda}^{\text{pert}} - {\bar{\bx}}_{\lambda}^{\text{ctrl}}.
 \end{equation}

\subsubsection{Averaged Expression Accuracy}
\label{app:metric_accuracy}
\label{app:context_experiment_metrics_acc}

\textbf{Coefficient of Determination (R$^2$).} R$^2$ measures the fraction of variance in the ground-truth data explained by model predictions, relative to the empirical mean baseline:
\begin{equation}
    R^2 = 1 - \frac{\sum_i (y_i - \hat{y}_i)^2}{\sum_i (y_i - \bar{y})^2}.
\end{equation}
Higher values indicate better predictive performance.

\textbf{Perturbation Discrimination Score (PDS).} To evaluate whether models's ability to distinguish different perturbations, the PDS score is defined as the normalized rank of the ground truth from the predicted perturbation with respect to all ground truth perturbations:
\begin{equation}
\text{PDS} = 1 - \frac{1}{M}\sum_{\lambda=1}^{M} \frac{r_{\lambda}}{M}, \quad r_{\lambda} = \sum_{p \neq \lambda} \mathbbm{1}\!\left[ d(\Delta{\hat{\mathbf{{x}}}}_{\lambda}, \Delta{\mathbf{x}}_p) < d(\Delta{\hat{\mathbf{x}}}_{\lambda}, \Delta{\mathbf{x}}_{\lambda}) \right].
\end{equation}

Intuitively, the prediction for perturbation $\lambda$ should be closest to its own ground truth, and farther from the ground truths of other perturbations. A PDS value of $1$ indicates perfect discrimination, while a value of approximately
$0.5$ corresponds to random performance.
We report PDS using three distance functions: L1 distance (PDS$_{\mathrm{L1}}$),
L2 distance (PDS$_{\mathrm{L2}}$), and cosine distance (PDS$_{\mathrm{cos}}$).

\textbf{Pearson Delta Correlation (PDCorr).} For each perturbation $\lambda$, the PDCorr metric evaluates the Pearson correlation coefficient between the predicted and the ground-truth relative perturbation effects:
\begin{equation}
\text{PDCorr} = \frac{1}{M}\sum_{\lambda=1}^M \text{PearsonR}(\Delta{\hat{\bx}_{\lambda}}, \Delta{{\bx}_{\lambda}} ).
\end{equation}

\textbf{Mean Absolute Error (MAE).} The MAE metric evaluates how accurately a model preserves the average expression shift induced by each perturbation.
Specifically, it measures the discrepancy between the predicted and ground-truth pseudobulk profiles.
Formally, for a given perturbation $\lambda$, the MAE is defined as:
\begin{equation}
\mathrm{MAE}
=
\frac{1}{M}
\sum_{\lambda=1}^{M}
||
\Delta \hat{\bx}_{\lambda}
-
\Delta \bx_{\lambda}
||.
\end{equation}

\textbf{Mean Squared Error (MSE)
.} The MSE metric follows the same principle as MAE but penalizes larger deviations more strongly by measuring
the squared Euclidean distance between predicted and ground-truth perturbation effects.
Formally, the MSE is defined as:

\begin{equation}
\mathrm{MAE}
=
\frac{1}{M}
\sum_{\lambda=1}^{M}
||
\Delta \hat{\mathbf{x}}_{\lambda}
-
\Delta \mathbf{x}_{\lambda}
||_2^2.
\end{equation}

\subsubsection{Biologically Meaningful Differential Patterns}
\label{app:context_experiment_metrics_depattern}

The most important criterion for evaluating generated cells is their biological relevance. Cell-Eval applies a standard differential
expression (DE) analysis pipeline based on the Wilcoxon rank-sum test~\cite{Hollander2013nonparametric}.
To control for false positives arising from multiple hypothesis testing across thousands of genes,
$p$-values are adjusted using the Benjamini--Hochberg (BH) procedure~\cite{Giraud2021highdimensionalstatistics}, which controls the false discovery
rate (FDR).
This DE analysis is applied independently to both the ground-truth cells and the model-predicted cells.

\textbf{Notations related to DEGs.} We consider the top 2,000 HVG set used for perturbation prediction as $\mathcal{G}$. And we perform differential expression (DE) analysis on both the ground‑truth and the predicted cells for each perturbation $\lambda$. For each gene $g \in \mathcal{G}$, it's considered significantly differentially expressed if its BH-adjusted $p$-value
$p_{\mathrm{adj}} < 0.05$, where $p_{\mathrm{adj}}$ denotes the $p$-value after correction for multiple
testing. We denote by $\mathcal{G}^{\mathrm{DE}}_{\lambda}$ and
$\hat{\mathcal{G}}^{\mathrm{DE}}_{\lambda}$ the complete sets of significant DE genes detected for the ground-truth
and predicted cells, respectively. Furthermore, DE genes can be ranked by the absolute log-fold change $|\log{\text{FC}}_{\lambda,g}|:=|\log_2 \frac{ \bar{\bx}_{\lambda, g}^{\text{pert}}+\epsilon}{\bar{\bx}_{\lambda, g}^{\text{ctrl}} +\epsilon}|$, where $\bar{\mathbf{x}}_{\lambda,g}^{\mathrm{pert}}$ and
$\bar{\mathbf{x}}_{\lambda,g}^{\mathrm{ctrl}}$ denote the mean expression of gene $g$
in ground-truth perturbed and control cells, respectively, and  $\epsilon$ is a small float constant (e.g., $10^{-8}$) added for numerical stability. The same definition is applied to the predicted perturbed cells by replacing
$\bar{\mathbf{x}}_{\lambda,g}^{\mathrm{pert}}$ with the corresponding predicted mean to calculate $|\widehat{\log{\text{FC}}}_{\lambda,g}|$. We define $\mathcal{G}^{k}_{\lambda}$ and $\hat{\mathcal{G}}^{k}_{\lambda}$ as the top-$k$ differentially
expressed (DE) genes identified from the ground-truth and predicted cells, respectively.

\textbf{DE Overlap (DEOver).} For each perturbation $\lambda$, we evaluate the agreement between predicted and ground-truth
differentially expressed (DE) genes by measuring the overlap among the top-ranked genes. And the DEO metric is defined as the fraction of overlapping genes:
\begin{equation}
\mathrm{DEOver}_{k}
=
\frac{1}{M} \sum_{\lambda=1}^{M}\frac{
\left|
\mathcal{G}^{k}_{\lambda}
\cap
\hat{\mathcal{G}}^{k}_{\lambda}
\right|
}{k}.
\end{equation}
Here, $k$ is chosen as the number of significant DE genes in the ground-truth set,
i.e., $k = \lvert \mathcal{G}^{\mathrm{DE}}_{\lambda} \rvert$,
ensuring that the comparison is normalized with respect to the true perturbation signal.

\textbf{DE Precision (DEPrec).} For each perturbation $\lambda$, we evaluate how many of the top $k$ DEGs from the ground truth appear in the top $k$ DEGs from the predicted cells. That is,
\begin{equation}
\text{DEPrec}_{k} = \frac{1}{M}\sum_{\lambda=1}^{M} \frac{\left|
\mathcal{G}^{k}_{\lambda}
\cap
\hat{\mathcal{G}}^{k}_{\lambda}
\right|}{|\hat{\mathcal{G}}^{k}_{\lambda}|}.
\end{equation}

Here, $k$ is chosen as the number of significant DE genes in the predicted set,
i.e., $k = \lvert \hat{\mathcal{G}}^{\mathrm{DE}}_{\lambda} \rvert$,
ensuring that the comparison is normalized with respect to the predicted perturbation signal.

\textbf{Direction Agreement (DirAgr).} For genes that are identified as differentially expressed in both the ground-truth
and predicted cells, we further assess whether the model correctly captures the
direction of the perturbation effect.
Specifically, DirAgr is defined as the fraction of
overlapping DE genes for which the predicted and ground-truth log-fold changes have
the same sign. Let
\(
\mathcal{G}^{\cap}_{\lambda}
=
\hat{\mathcal{G}}^{\mathrm{DE}}_{\lambda}
\cap
\mathcal{G}^{\mathrm{DE}}_{\lambda}
\)
denote the set of DE genes shared between prediction and ground truth for perturbation
$\lambda$.
The DirAgr metric is then defined as
\begin{equation}
\mathrm{DirAgr}
=
\frac{1}{M}
\sum_{\lambda=1}^{M}
\frac{
\left|
\left\{
g \in \mathcal{G}^{\cap}_{\lambda}
\;:\;
\mathrm{sgn}\!\left(\widehat{\log\text{FC}}_{\lambda,g}\right)
=
\mathrm{sgn}\!\left(\log\text{FC}_{\lambda,g}\right)
\right\}
\right|
}{
\left|
\mathcal{G}^{\cap}_{\lambda}
\right|
},
\end{equation}
where $\mathrm{sgn}(\cdot)$ denotes the sign function.

\textbf{Log Fold-change Spearman Correlation (LFCSpear).} This metric evaluates how well a model preserves the relative ordering of gene-level perturbation effects. For each perturbation $\lambda$, we compute the Spearman rank correlation between the predicted and ground-truth log fold changes over the set of significantly differentially expressed genes identified from the ground truth, and report the average correlation across all perturbations.

\begin{equation}
    \text{LFCSpear} = \frac{1}{M}\sum_{\lambda=1}^M \text{SpearmanR}(\big(\log{\text{FC}}_{\lambda, g} \big)_{g \in \mathcal{G}^{\mathrm{DE}}_{\lambda}}, \big(\widehat{\log{\text{FC}}}_{\lambda,g}\big)_{g \in \mathcal{G}^{\mathrm{DE}}_{\lambda} }).
\end{equation}

\textbf{ROC-AUC (AUROC).} This metric evaluates the model’s ability to distinguish truly differentially expressed genes from non-differentially expressed ones. For each perturbation $\lambda$, genes identified as significant DEGs in the ground truth ($\mathcal{G}^{\mathrm{DE}}_{\lambda}$) are treated as positive samples, while all remaining genes are treated as negatives. The predicted gene-level significance scores are given by the negative log-transformed adjusted p-values, and the area under the ROC curve (AUROC) is computed to quantify how well the model separates significant from non-significant genes across all possible thresholds. The final score is obtained by averaging AUROC across perturbations. Essentially, AUROC measures the probability that a randomly chosen true DEG is assigned a higher confidence score than a randomly chosen non-DEG.

\textbf{PR-AUC (AUPRC).} Besides AUROC, the AUPRC metric is similarly defined to evaluate the model’s ability to prioritize truly differentially expressed genes among all predicted significant genes. For each perturbation $\lambda$, genes identified as significant DEGs in the ground truth ($\mathcal{G}^{\mathrm{DE}}_{\lambda}$) are treated as positive samples, while all remaining genes are treated as negatives. The predicted gene-level significance scores are given by the negative log-transformed adjusted p-values. AUPRC is calculated as the area under the precision-recall (PR) curve, which summarizes the trade-off between precision and recall across all possible thresholds. The final score is obtained by averaging PR-AUC across perturbations. 

\textbf{Effect sizes (ES).} To compare the relative effect sizes of perturbations, this ES metric computes the Spearman rank correlation on the number of differentially expressed genes between predicted and ground-truth for each perturbation. 

\begin{equation}
    \text{ES} = \text{SpearmanR}(\big( |\mathcal{G}^{\mathrm{DE}}_{\lambda}| \big)_{\lambda=1}^M, \big( |\hat{\mathcal{G}}^{\mathrm{DE}}_{\lambda}| \big)_{\lambda=1}^M).
\end{equation}

\subsection{More Perturbation Prediction Results}
\label{app:exp_more_pert_pred_result}
\label{app:exp_numerical}
\label{app:context_experiment_perturbation_result}

\textbf{Full numerical results for perturbation prediction performance in Fig.~\ref{fig:exp_perturbation_radar}.} The radar plots in Fig.~\ref{fig:exp_perturbation_radar} report the relative comparison across diverse methods and multiple metrics. We provide their corresponding detailed numerical results in Tab.~\ref{tab:numerical_details} for reference.

\begin{table}[t]
\centering
\caption{Detailed numerical results for perturbation prediction across metrics and datasets (the same as reported in Fig.~\ref{fig:exp_perturbation_radar}).}
\begin{adjustbox}{max width=1\linewidth}
\centering
\begin{tabular}{lcccccccccccccc}
\toprule
Model & R$^2$ & DEOver & DEPrec & ES & DirAgr & LFCSpear & AUPRC & AUROC & PDCorr & MSE & MAE & PDS$_{\mathrm{L1}}$ & PDS$_{\mathrm{L2}}$ & PDS$_{\mathrm{cos}}$ \\
\midrule
\multicolumn{15}{c}{\textbf{PBMC}} \\
\midrule
PerturbDiff (Scratch) & {0.997} & \best{0.564} & \second{0.581} & \second{0.288} & \second{0.751} & \second{0.519} & \second{0.607} & \second{0.603} & \best{0.816} & \second{1.91e{-4}} & \second{0.006} & \best{0.972} & \best{0.975} & \best{0.986} \\
PerturbDiff (Finetuned) & \third{0.991} & 0.533 & \best{0.588} & \best{0.387} & 0.701 & 0.431 & \best{0.669} & \best{0.647} & \third{0.705} & \third{3.15e{-4}} & 0.008 & \third{0.941} & \second{0.959} & \third{0.972} \\
STATE & \second{0.998} & 0.512 & \third{0.547} & -0.363 & \best{0.789} & \best{0.602} & 0.547 & 0.513 & \second{0.796} & \best{1.41e{-4}} & \best{0.005} & \second{0.954} & \third{0.943} & \second{0.985} \\
Mean & 0.959 & \best{0.564} & 0.544 & 0.011 & \third{0.740} & \third{0.478} & 0.545 & 0.506 & 0.642 & 8.49e{-4} & 0.013 & 0.730 & 0.777 & 0.787 \\
CPA & 0.919 & 0.488 & 0.515 & -0.154 & 0.539 & 0.369 & 0.515 & 0.500 & 0.181 & 1.10e{-2} & 0.052 & 0.576 & 0.613 & 0.597 \\
Linear & \third{0.997} & 0.549 & \second{0.581} & \third{0.194} & 0.666 & 0.366 & \third{0.591} & \third{0.556} & 0.646 & 4.44e{-4} & 0.009 & 0.658 & 0.670 & 0.903 \\
CellFlow & 0.994 & 0.270 & 0.350 & 0.042 & 0.652 & 0.377 & 0.354 & 0.502 & 0.628 & 3.58e{-4} & 0.009 & 0.639 & 0.689 & 0.725 \\
Squidiff& -218.00& 0.359& \third{0.547}& NaN& 0.379& 0.022& 0.547& 0.500& 0.033& 4.387& 2.062& 0.508& 0.508& 0.508 \\

Mean Variant (per Cell Type) & 0.992 & 0.506 & 0.542 & -0.108 & 0.635 & 0.247 &
0.542 & 0.500 & 0.400 & \sci{6.39}{-4} & 0.009  &
0.625 & 0.596 & 0.612 \\

Mean Variant (per Batch) & \best{0.999} & \third{0.554} & {0.542} & 0.019 & 0.727 & 0.489 &
0.543 & 0.501 & 0.517 & \sci{5.59}{-4} & \third{0.007} &
0.612 & 0.587 & 0.655 \\

Mean Variant (Overall) & 0.973 & \second{0.557} & 0.542 & NaN & 0.656 & 0.250 &
0.543 & 0.500 & 0.408 & \sci{1.01}{-3} & 0.013 &
0.508 & 0.508 & 0.508 \\

\midrule
\multicolumn{15}{c}{\textbf{Tahoe100M}} \\
\midrule
PerturbDiff (Scratch) & \third{0.963} & \second{0.522} & \third{0.572} & \best{0.531} & \second{0.734} & 0.445 & \third{0.584} & \third{0.621} & \second{0.686} & \second{6.30e{-4}} & {0.012} & \best{0.970} & \best{0.978} & \best{0.975} \\
PerturbDiff (Finetuned) & 0.893 & \best{0.580} & \best{0.598} & \second{0.478} & 0.641 & \best{0.691} & \best{0.618} & \best{0.658} & \third{0.648} & 1.04e{-3} & 0.017 & 0.784 & \second{0.883} & \third{0.903} \\
STATE & 0.807 & 0.499 & 0.522 & 0.301 & 0.665 & \second{0.644} & 0.520 & 0.506 & 0.535 & 2.11e{-3} & 0.021 & \third{0.789} & 0.846 & 0.854 \\
Mean & 0.703 & 0.430 & 0.504 & 0.350 & 0.595 & 0.280 & 0.506 & 0.502 & 0.205 & 3.09e{-3} & 0.026 & 0.515 & 0.508 & 0.508 \\

CPA & 0.946 & 0.502 & 0.504 & 0.190 & \third{0.710} & 0.466 & 0.505 & 0.500 & 0.425 & 8.89e-4 & \third{0.011} & 0.654 & 0.593 & 0.585 \\
Linear & \best{0.993} & \third{0.505} & \second{0.576} & 0.382 & \best{0.760} & 0.514 & \second{0.605} & \second{0.632} & \best{0.723} & \best{4.15e{-4}} & \best{0.009} & \second{0.887} & \third{0.881} & \second{0.912} \\
CellFlow & 0.682 & 0.183 & 0.313 & 0.346 & 0.638 & 0.298 & 0.320 & 0.500 & 0.269 & 3.10e{-3} & 0.027 & 0.608 & 0.564 & 0.550 \\
Squidiff& -616.98& 0.420& 0.581& NaN& 0.501& 0.276& 0.581& 0.500& 0.011& 5.627& 2.244& 0.505& 0.506& 0.505 \\

Mean Variant (per Cell Type) & \second{0.976} & 0.504 & 0.508 & 0.023 & {0.688} & \third{0.578} &
0.508 & 0.508 & 0.461 & \third{\sci{6.96}{-4}} & \second{0.010} &
0.682 & 0.694 & 0.760 \\

Mean Variant (per Batch) & 0.705 & 0.427 & 0.504 & \third{0.475} & 0.587 & 0.270 &
0.505 & 0.500 & 0.185 & \sci{3.11}{-3} & 0.026  &
0.492 & 0.496 & 0.498 \\

Mean Variant (Overall) & 0.707 & 0.429 & 0.504 & 0.469 & 0.591 & 0.275 &
0.505 & 0.500 & 0.193 & \sci{3.06}{-3} & 0.026 &
0.501 & 0.501 & 0.501 \\

\midrule
\multicolumn{15}{c}{\textbf{Replogle}} \\
\midrule

PerturbDiff (Scratch) & 0.984 & \third{0.190} & \third{0.174} & 0.623 & 0.702 & 0.342 & \third{0.210} & \third{0.625} & 0.340 & 1.47e-2 & 0.081 & \third{0.690} & \third{0.700} & 0.575 \\
PerturbDiff (Finetuned) & 0.988 & \best{0.214} & \second{0.192} & \second{0.762} & 0.723 & 0.388 & \best{0.257} & \best{0.652} & 0.376 & 1.46e-2 & 0.079 & \second{0.758} & \second{0.762} & \third{0.639} \\
STATE & \second{0.998} & \second{0.196} & \best{0.193} & \best{0.818} & \best{0.778} & \best{0.506} & \second{0.239} & \second{0.633} & \best{0.437} & \best{6.40e-3} & \second{0.055} & \best{0.788} & \best{0.802} & \best{0.676} \\
Mean & 0.742 & 0.127 & 0.094 & 0.417 & 0.532 & 0.077 & 0.092 & 0.467 & 0.048 & 9.90e-2 & 0.206 & 0.582 & 0.599 & 0.608 \\
CPA & \best{1.000} & 0.173 & 0.087 & \third{0.637} & \second{0.746} & \second{0.499} & 0.081 & 0.401 & \second{0.418} & 7.50e-2 & \best{0.054} & 0.582 & 0.581 & 0.521 \\
Linear & \third{0.991} & 0.068 & 0.074 & 0.353 & 0.535 & 0.056 & 0.085 & 0.435 & 0.058 & \third{1.30e-2} & 0.074 & 0.519 & 0.517 & \second{0.667} \\
CellFlow & 0.757 & 0.110 & 0.093 & 0.442 & 0.472 & -0.033 & 0.086 & 0.434 & -0.003 & 9.49e-2 & 0.205 & 0.507 & 0.507 & 0.495 \\
Squidiff& -10.19& 0.039& 0.091& 0.419& 0.432& -0.000& 0.079& 0.366& 0.089& 4.641& 2.077& 0.500& 0.501& 0.501 \\

Mean Variant (per Cell Type) & \best{1.000} & 0.178 & 0.087 & 0.417 & \third{0.743} & \third{0.492} &
0.078 & 0.404 & \third{0.412} & \second{\sci{8.45}{-3}} & \third{0.057} &
0.502 & 0.505 & 0.502 \\ 

Mean Variant (per Batch) & 0.798 & 0.110 & 0.092 & 0.421 & 0.475 & -0.024 &
0.088 & 0.450 & 0.002 & \sci{6.99}{-2} & 0.174 &
0.505 & 0.505 & 0.506 \\

Mean Variant (Overall) & 0.794 & 0.111 & 0.093 & 0.416 & 0.474 & -0.027 &
0.089 & 0.453 & -0.001 &\sci{7.11}{-2} & 0.175 &
0.502 & 0.503 & 0.501 \\

\bottomrule
\end{tabular}
\end{adjustbox}
\label{tab:numerical_details}
\end{table}

\textbf{MSE and MAE results.} The radar plots in Fig.~\ref{fig:exp_perturbation_radar} report higher-is-better metrics, while the lower-is-better metrics (MSE and MAE) are summarized in Tab.~\ref{tab:numerical_details}. Across all three datasets, {\methodscratch&} achieves low reconstruction errors comparable to or better than strong baselines, while STATE and Linear remain competitive on MSE/MAE especially in lower-variance settings.

\textbf{Mean baseline variants performance.} Mean baselines predict perturbed cells by assigning all cells the same averaged expression profile computed from observed data. The averaging can be performed at different levels:
(1) Mean - per perturbation (\emph{i.e.}, the reported ``Mean''): average expression across cells sharing the same perturbation and use this mean to predict cells under that perturbation;
(2) Mean - per cell type: average across cells of the same cell type and use this mean for corresponding perturbed cells;
(3) Mean - per batch: average across cells from the same experimental batch and use this mean for corresponding perturbed cells;
(4) Mean - overall: average across all cells and use this global mean for all predictions. Among these variants, ``Mean - per perturbation" achieves the best performance and is therefore reported in Sec.~\ref{sec:exp_main_result}, while performance for all mean baseline variants are summarized in Tab.~\ref{tab:numerical_details}.

\subsection{More Scatter Plot Comparison Results}
\label{app:exp_more_scatter}
\label{app:context_experiment_scatter_result}

Fig.~\ref{fig:app_scatter_pbmc} and Fig.~\ref{fig:app_scatter_tahoe100m} report per-perturbation scatter plot comparisons between {\methodscratch&} and STATE on PBMC and Tahoe100M, respectively. Each point corresponds to a single perturbation, and the diagonal indicates equal performance. ES and $R^2$ are not included, as they are not defined at the perturbation level.

On PBMC, {\methodscratch&} consistently outperforms STATE across perturbations on key DE-metrics including PRAUC, DEOver, and DEPrec, with points tightly concentrated above the diagonal. In addition, {\methodscratch&} achieves high win rates for distributional metrics, exceeding 88\% on PDS$_\text{L1}$, PDS$_\text{L2}$, and PDS$_\text{cos}$. These results indicate that our method provides more reliable perturbation-level predictions, particularly in capturing distributional shifts rather than isolated cell-level effects.

On the larger and more diverse Tahoe100M dataset, {\methodscratch&} similarly demonstrates strong advantages, achieving win rates above 87\% on PRAUC, DEPrec, MAE, MSE, and all PDS variants. Moreover, {\methodscratch&} attains win rates above 50\% on all reported metrics except LFCSpear, indicating broadly consistent improvements across perturbations despite increased dataset heterogeneity.

Together, these scatter comparisons show that the gains of {\methodscratch&} are not driven by a small subset of perturbations, but instead reflect systematic and robust improvements across diverse perturbation settings, particularly on metrics that emphasize distribution-level response fidelity and perturbation discrimination.

\subsection{More Zero-shot Results}
\label{app:exp_more_zeroshot}
\label{app:context_experiment_zeroshot_result}

We further analyze zero-shot behavior under different amounts of marginal pretraining, where inference is performed without access to perturbation labels or control cells, and the model relies solely on the pretrained marginal cell manifold.

As shown in Fig.~\ref{fig:app_zeroshot_pbmc} and Fig.~\ref{fig:app_zeroshot_replogle}, metrics such as MAE and MSE remain relatively stable across pretraining steps on both PBMC and Replogle, indicating limited sensitivity of pointwise reconstruction errors to marginal pretraining. In contrast, many distribution- and perturbation-aware metrics (\emph{e.g.}, DEOver, DEPrec, LFCSpear, DirAgr, PDCorr, and ES) exhibit a consistent U-shaped trend: zero-shot performance initially degrades as pretraining proceeds, followed by a gradual recovery with increased training steps. This pattern suggests that early stages of marginal pretraining may temporarily distort task-relevant directions, while longer pretraining progressively organizes biologically meaningful structure that can be reused for perturbation inference.

Notably, on PBMC, the randomly initialized model attains seemingly non-trivial scores on several metrics (\emph{e.g.}, AUPRC, AUROC, DEPrec, and ES), despite the absence of any training. This phenomenon is not observed on Replogle, and likely reflects dataset-specific structure or metric sensitivity rather than genuine perturbation understanding. Indeed, on more stringent metrics—including DEOver, $R^2$, PDCorr, DirAgr, MAE, and MSE—the randomly initialized model substantially underperforms pretrained counterparts. These observations highlight the necessity of multi-metric evaluation when assessing zero-shot perturbation performance, as isolated metrics may overestimate capability in the absence of meaningful biological modeling.

\subsection{More Limited Data Results}
\label{app:exp_more_fewshot}
\label{app:context_experiment_fewshot_result}

Fig.~\ref{fig:app_fewshot} evaluates few-shot adaptation performance across training steps on downsampled PBMC, comparing training from scratch and finetuning a marginally pretrained model at sample ratios of 1\% and 5\%. 

Across all metrics, finetuning consistently leads to faster convergence and substantially improved training stability. In contrast, models trained from scratch exhibit pronounced sensitivity to training steps, with large fluctuations and frequent performance degradation in early and mid training stages, particularly under the 1\% sample regime. Finetuned models reach stable performance within fewer training steps and maintain more robust throughout optimization.

The advantages of finetuning are especially pronounced on many metrics, including DEOver, DEPrec, DirAgr, LFCSpear, and all PDS variants. On these metrics, finetuned models consistently outperform training from scratch across training steps, with larger gaps observed under more extreme low-data settings (ratio=1\%). This suggests that marginal pretraining provides a structured initialization that preserves perturbation-relevant geometry, enabling effective reuse of distribution-level information during adaptation.

Overall, these results demonstrate that marginal pretraining substantially improves data efficiency, optimization stability, and robustness under low-data regimes.

\subsection{More Scaling Results}
\label{app:exp_more_scaling}
\label{app:context_experiment_scaling_result}

Fig.~\ref{fig:app_scaling} reports the rest metrics for the scaling behavior of {\methodscratch&} by varying both model size and training compute.  Overall, scaling exhibits non-monotonic behavior along both model size and compute dimensions. Increasing training compute does not uniformly improve performance. These results indicate that effective perturbation modeling benefits from balanced model capacity and compute, rather than aggressive scaling.

\begin{figure}
    \begin{minipage}[t]{\textwidth}
        \centering
\begin{subfigure}[b]{0.245\textwidth}     
\centering
\includegraphics[width=\textwidth]
{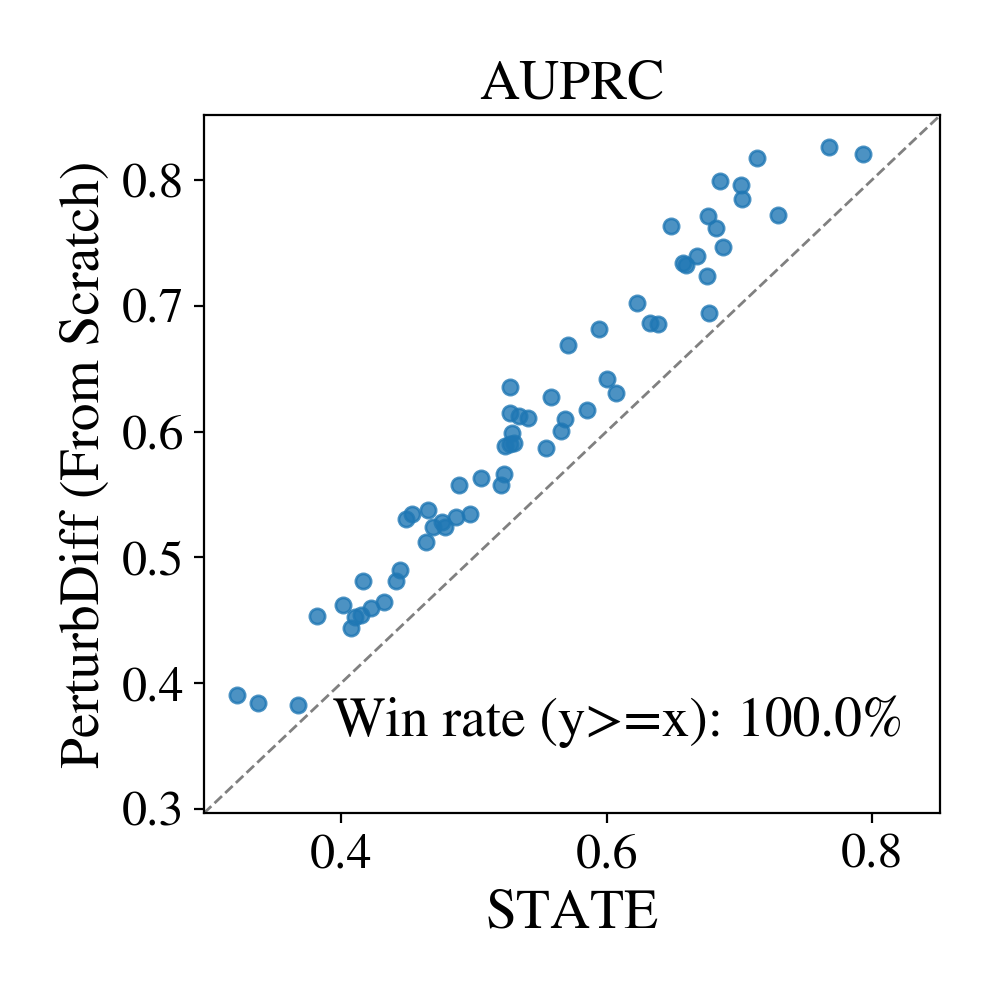}
\vspace{-0.8cm}
            \caption{{PRAUC.}}
         \end{subfigure}
         \begin{subfigure}[b]{0.245\textwidth}     
\centering
\includegraphics[width=\textwidth]
{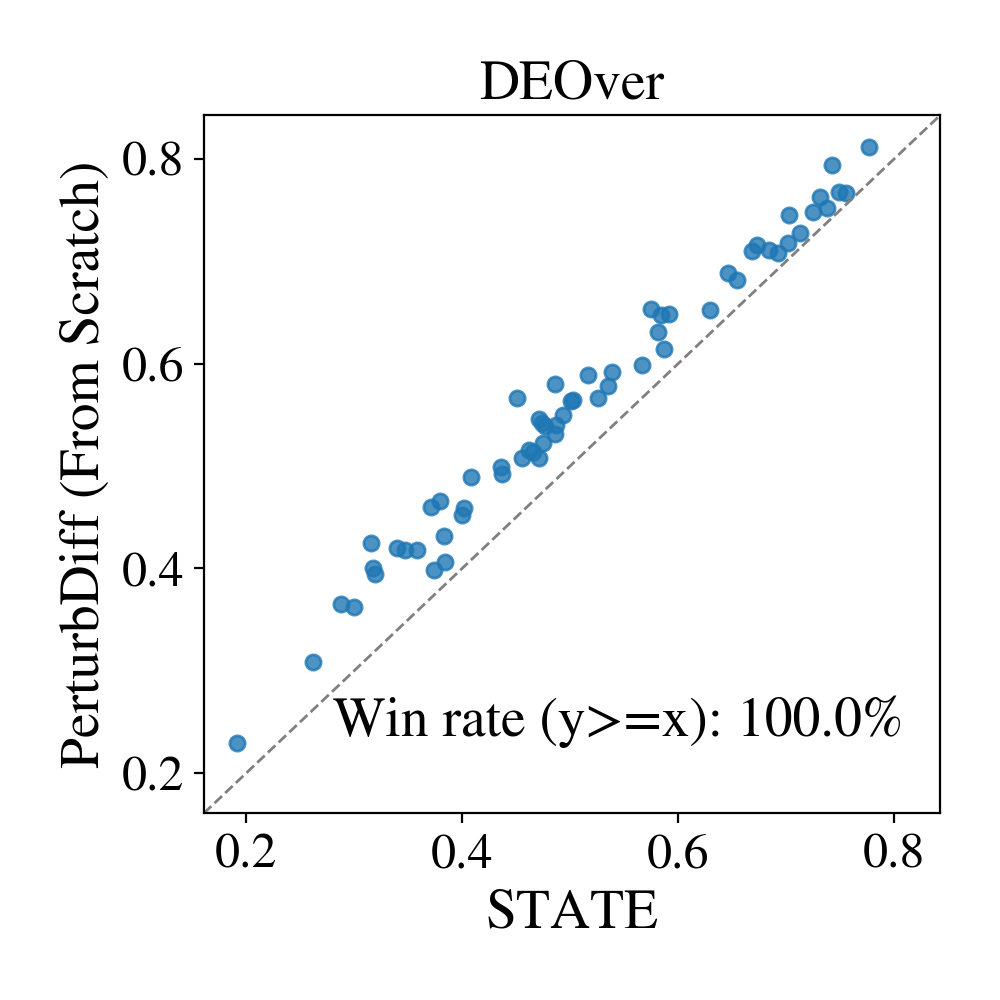}
\vspace{-0.8cm}
            \caption{{DEOver.}}
         \end{subfigure}
         \begin{subfigure}[b]{0.245\textwidth}     
\centering
\includegraphics[width=\textwidth]
{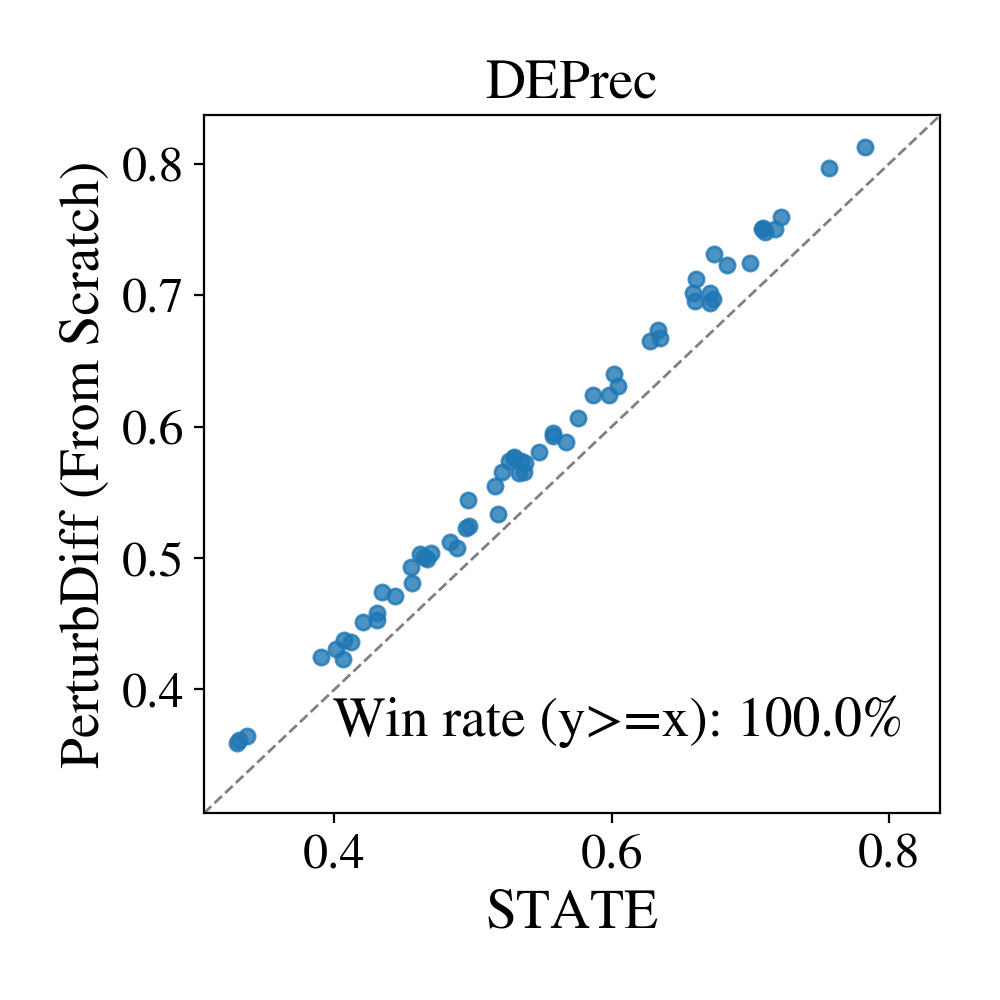}
\vspace{-0.8cm}
            \caption{{DEPrec.}}
         \end{subfigure}
         \begin{subfigure}[b]{0.245\textwidth}           \centering\includegraphics[width=\textwidth]{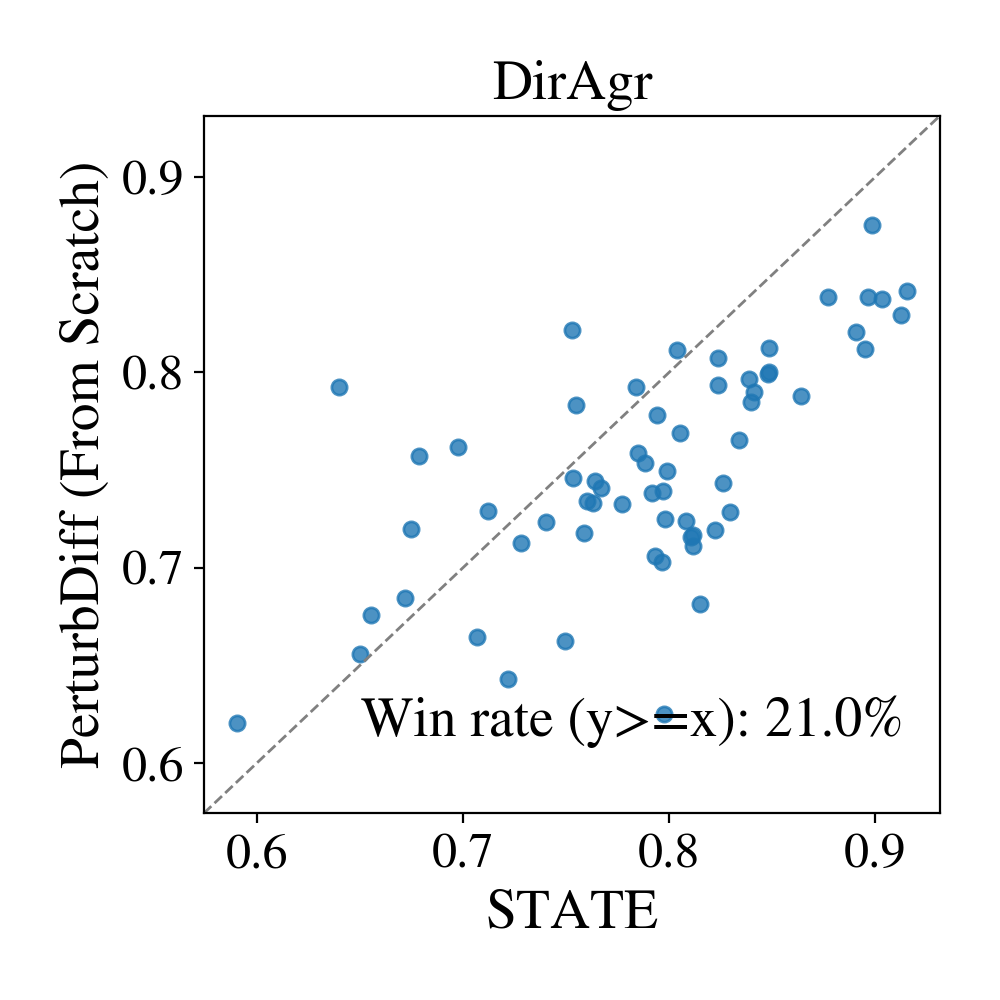}
         \vspace{-0.8cm}
            \caption{{DirAgr.}}
         \end{subfigure}
         \begin{subfigure}[b]{0.245\textwidth}           \centering\includegraphics[width=\textwidth]{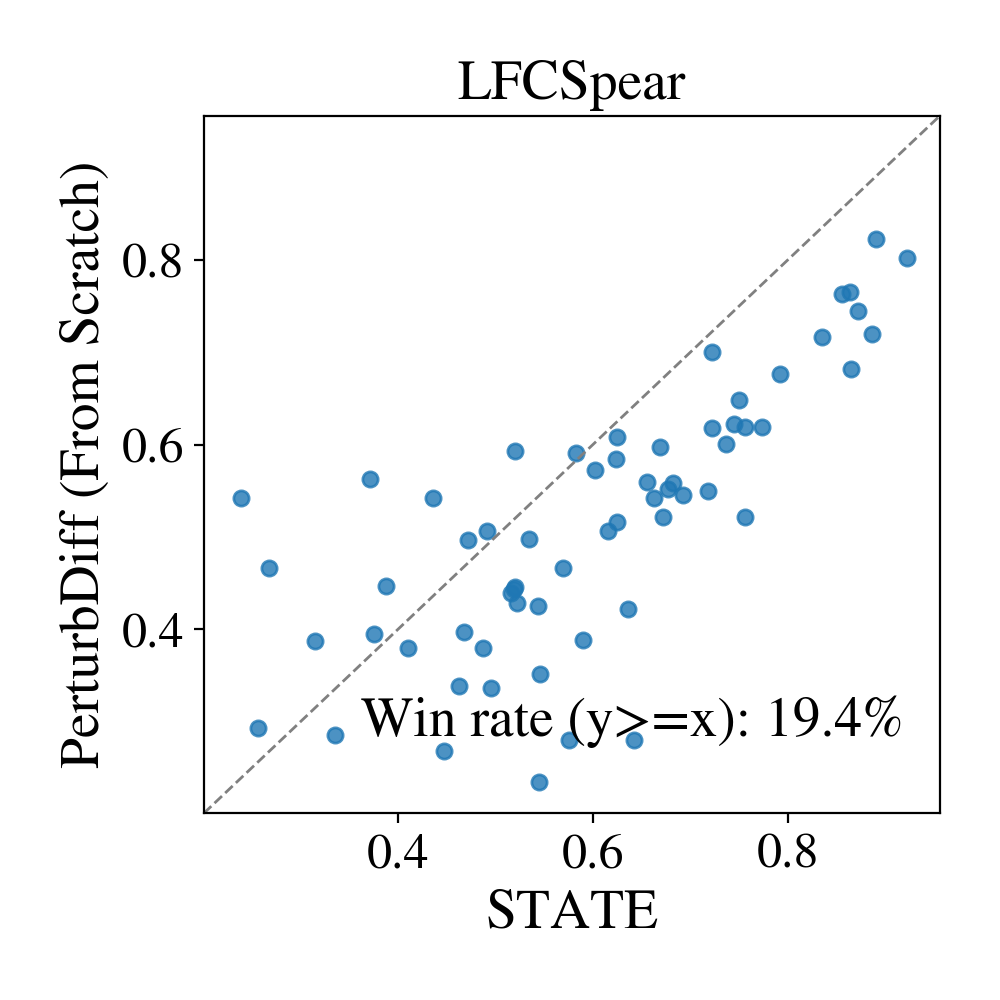}
         \vspace{-0.8cm}
            \caption{{LFCSpear.}}
         \end{subfigure}
         \begin{subfigure}[b]{0.245\textwidth}           \centering\includegraphics[width=\textwidth]{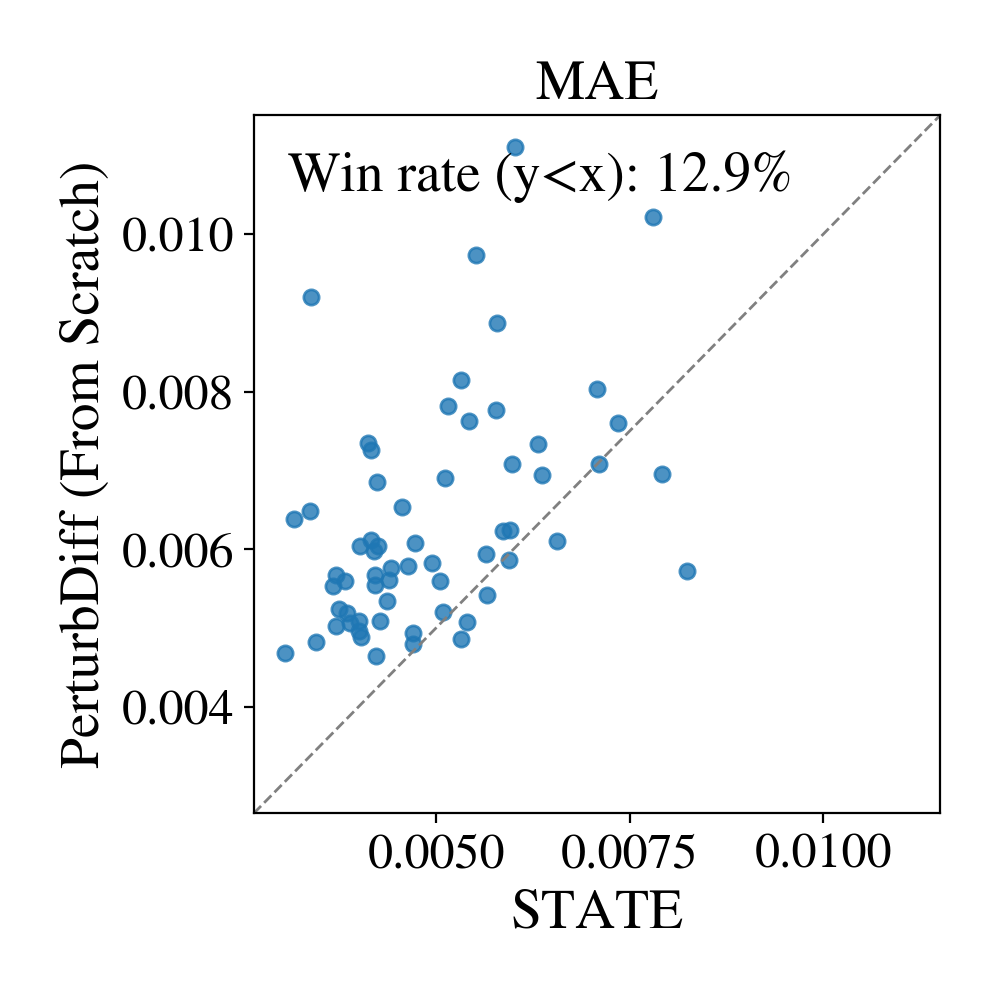}
         \vspace{-0.8cm}
            \caption{{MAE.}}
         \end{subfigure}
         \begin{subfigure}[b]{0.245\textwidth}           \centering\includegraphics[width=\textwidth]{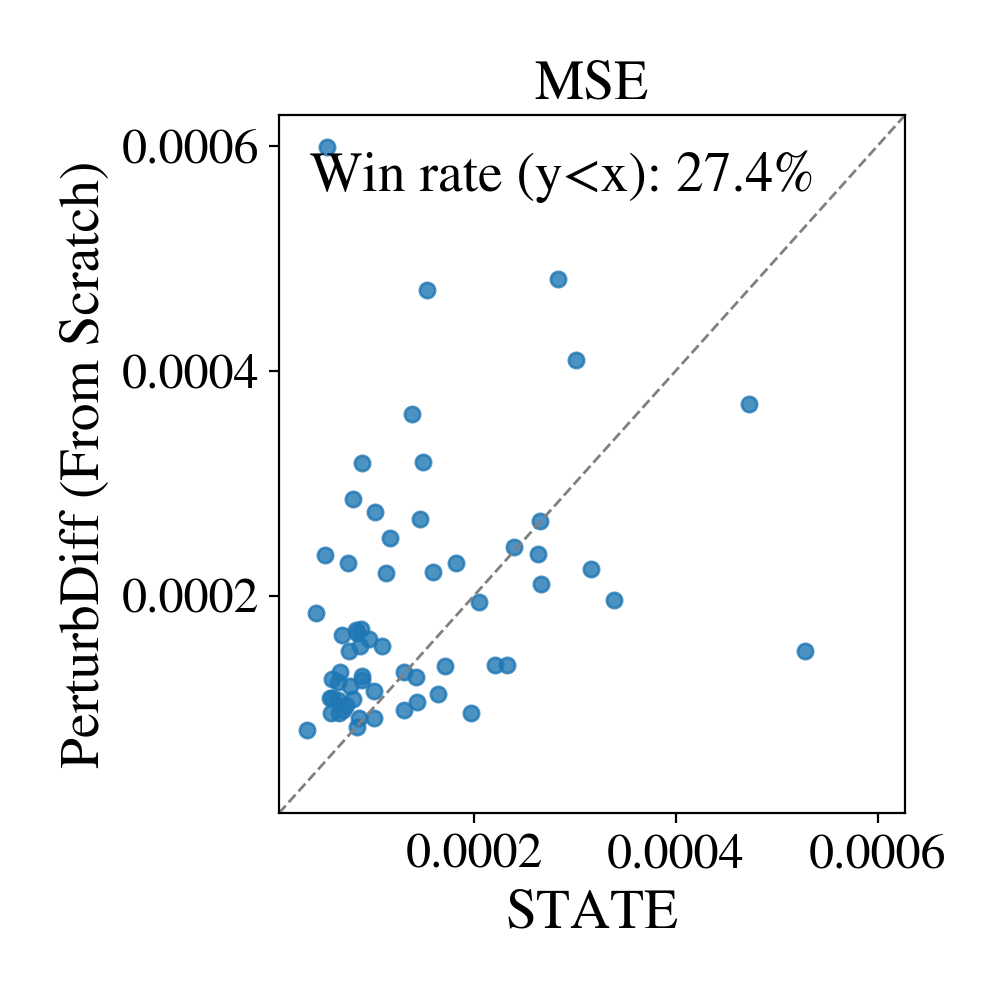}
         \vspace{-0.8cm}
            \caption{{MSE.}}
         \end{subfigure}
         \begin{subfigure}[b]{0.245\textwidth}           \centering\includegraphics[width=\textwidth]{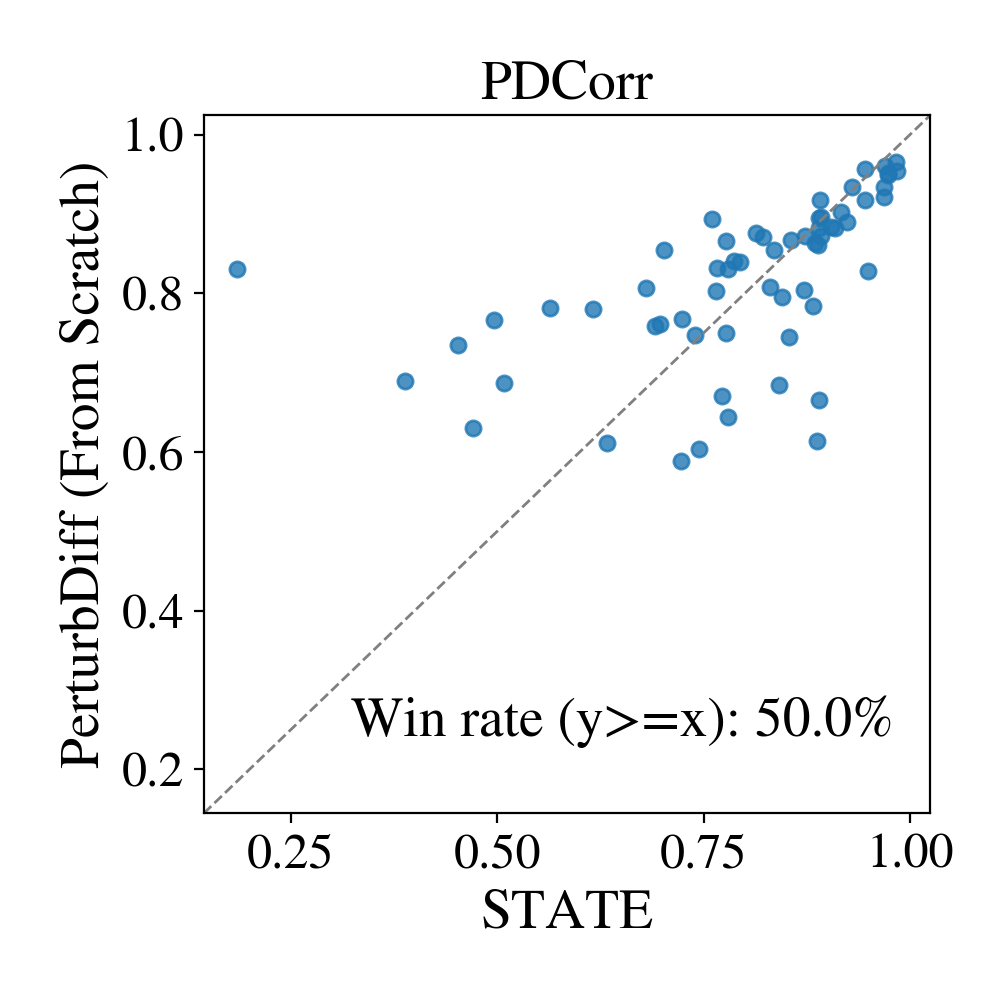}
         \vspace{-0.8cm}
            \caption{{PDCorr.}}
         \end{subfigure}
         \begin{subfigure}[b]{0.245\textwidth}           \centering\includegraphics[width=\textwidth]{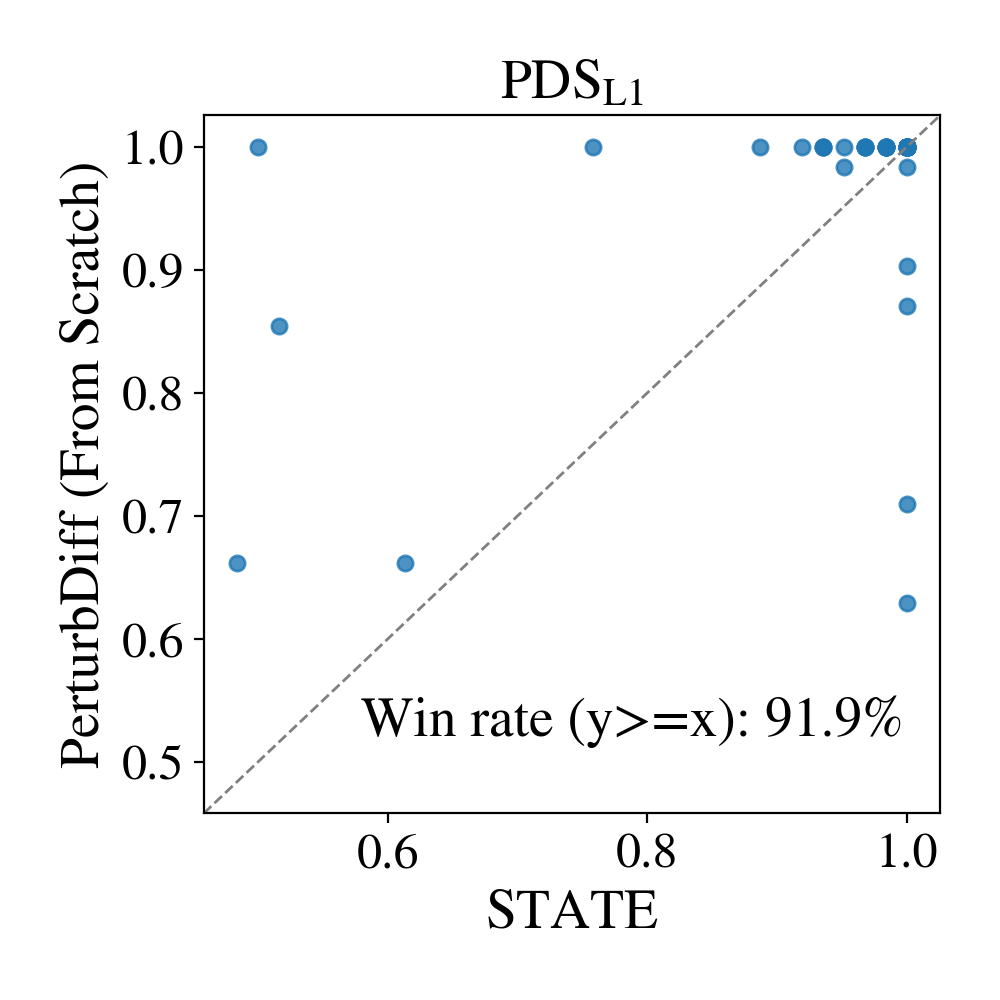}
         \vspace{-0.8cm}
            \caption{{PDS$_{\text{L1}}$.}}
        \end{subfigure}
        \begin{subfigure}[b]{0.245\textwidth}           \centering\includegraphics[width=\textwidth]{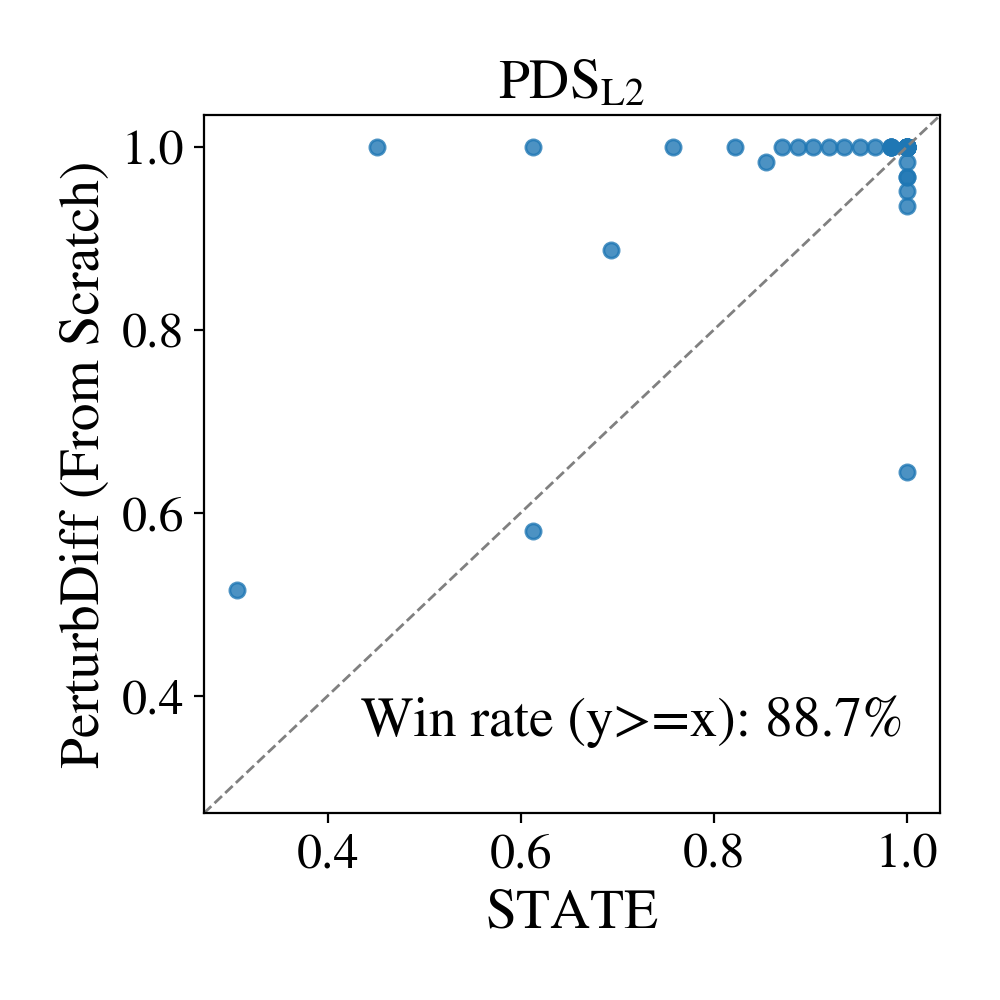}
         \vspace{-0.8cm}
            \caption{{PDS$_{\text{L2}}$.}}
        \end{subfigure}
            \begin{subfigure}[b]{0.245\textwidth}           \centering\includegraphics[width=\textwidth]{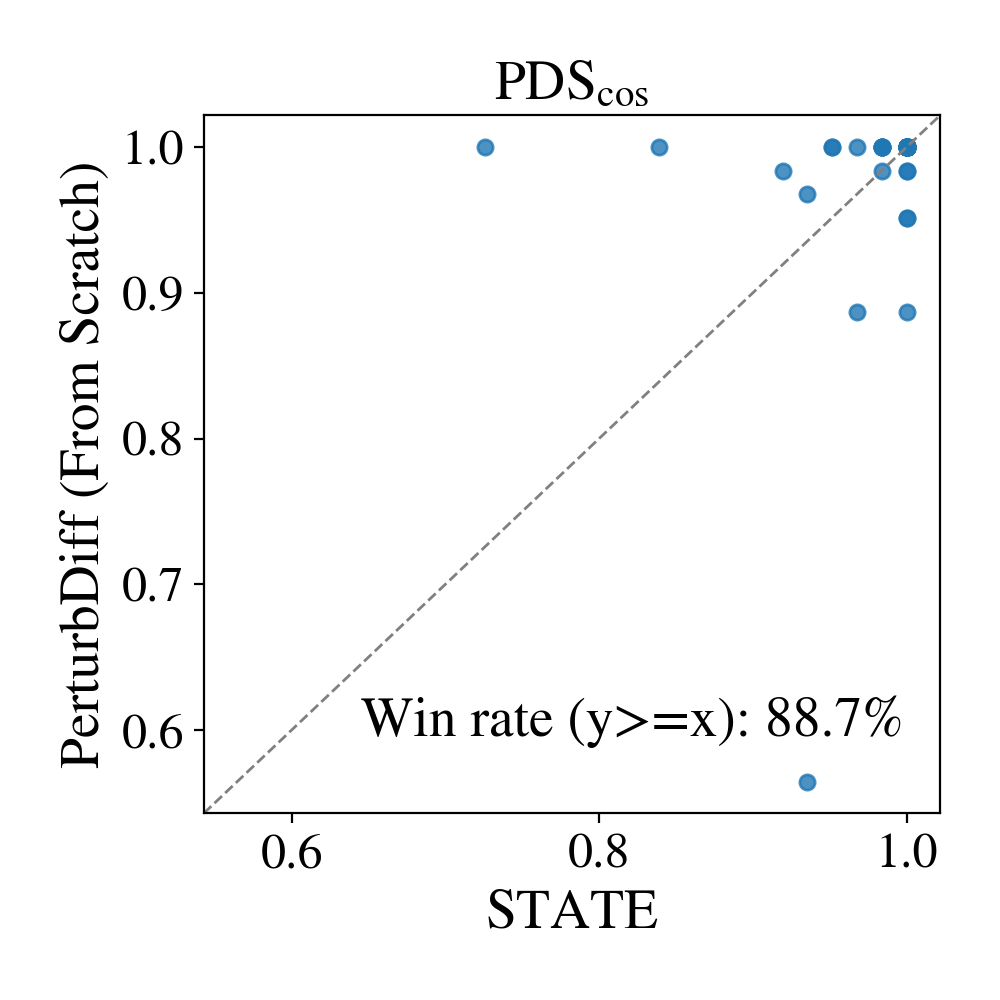}
         \vspace{-0.8cm}
            \caption{{PDS$_{\text{cos}}$.}}
         \end{subfigure}
         
\caption{Per-metric scatter plot comparison between {\methodscratch&} and STATE on PBMC dataset.
Each panel corresponds to a different evaluation metric, plotting STATE (x-axis) against {\methodscratch&} (y-axis) across held-out perturbations. The dashed diagonal indicates equal performance; points above the diagonal indicate higher performance for {\methodscratch&} (reversed for MAE and MSE). Reported win rates denote the fraction of perturbations where {\methodscratch&} achieves equal or better performance.}
        \label{fig:app_scatter_pbmc}
    \end{minipage}
\end{figure}

\begin{figure}
    \begin{minipage}[t]{\textwidth}
        \centering
\begin{subfigure}[b]{0.245\textwidth}     
\centering
\includegraphics[width=\textwidth]
{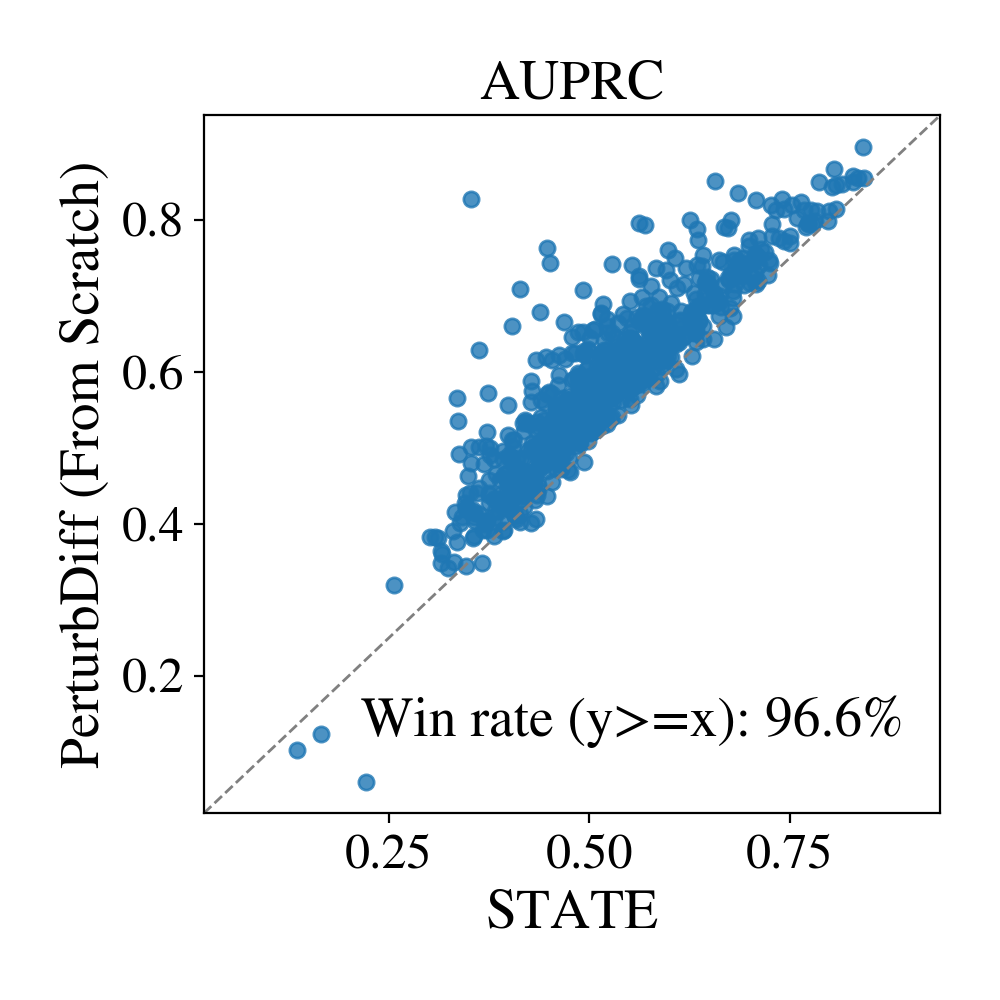}
\vspace{-0.8cm}
            \caption{{PRAUC.}}
         \end{subfigure}
         \begin{subfigure}[b]{0.245\textwidth}     
\centering
\includegraphics[width=\textwidth]
{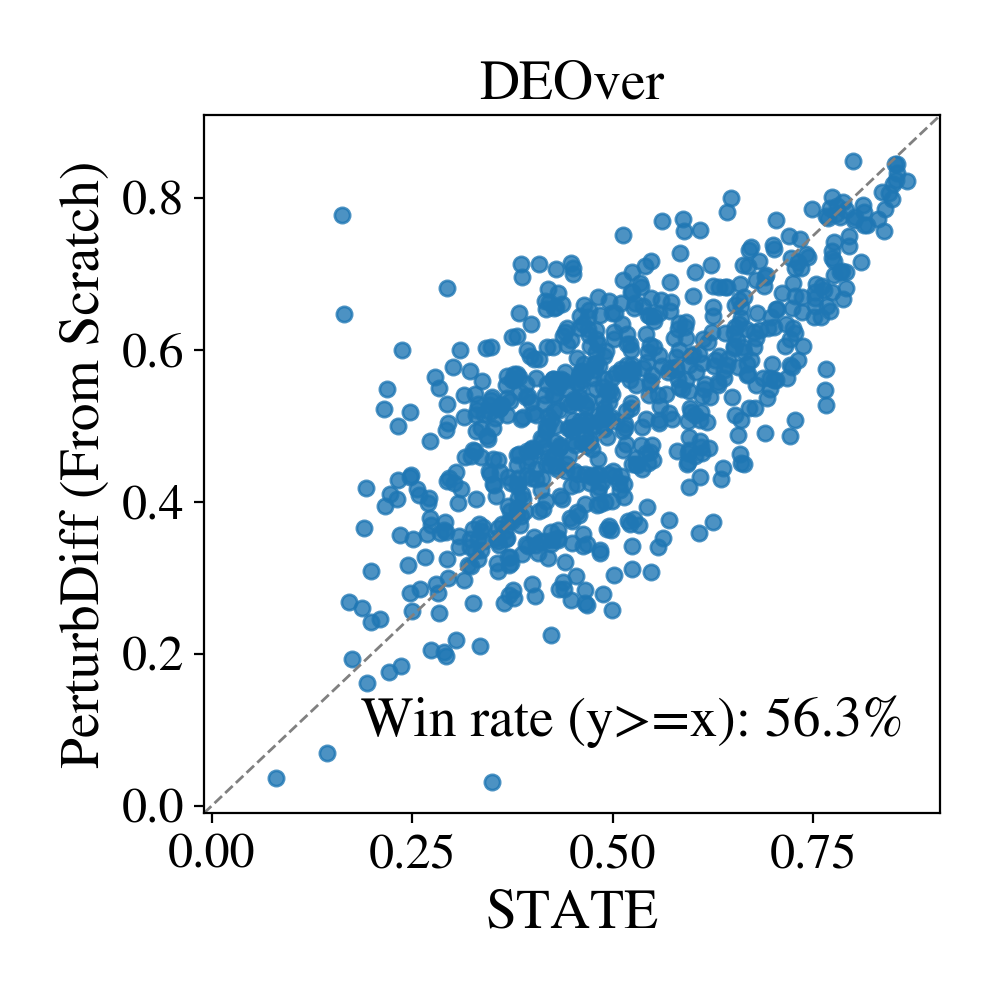}
\vspace{-0.8cm}
            \caption{{DEOver.}}
         \end{subfigure}
         \begin{subfigure}[b]{0.245\textwidth}     
\centering
\includegraphics[width=\textwidth]
{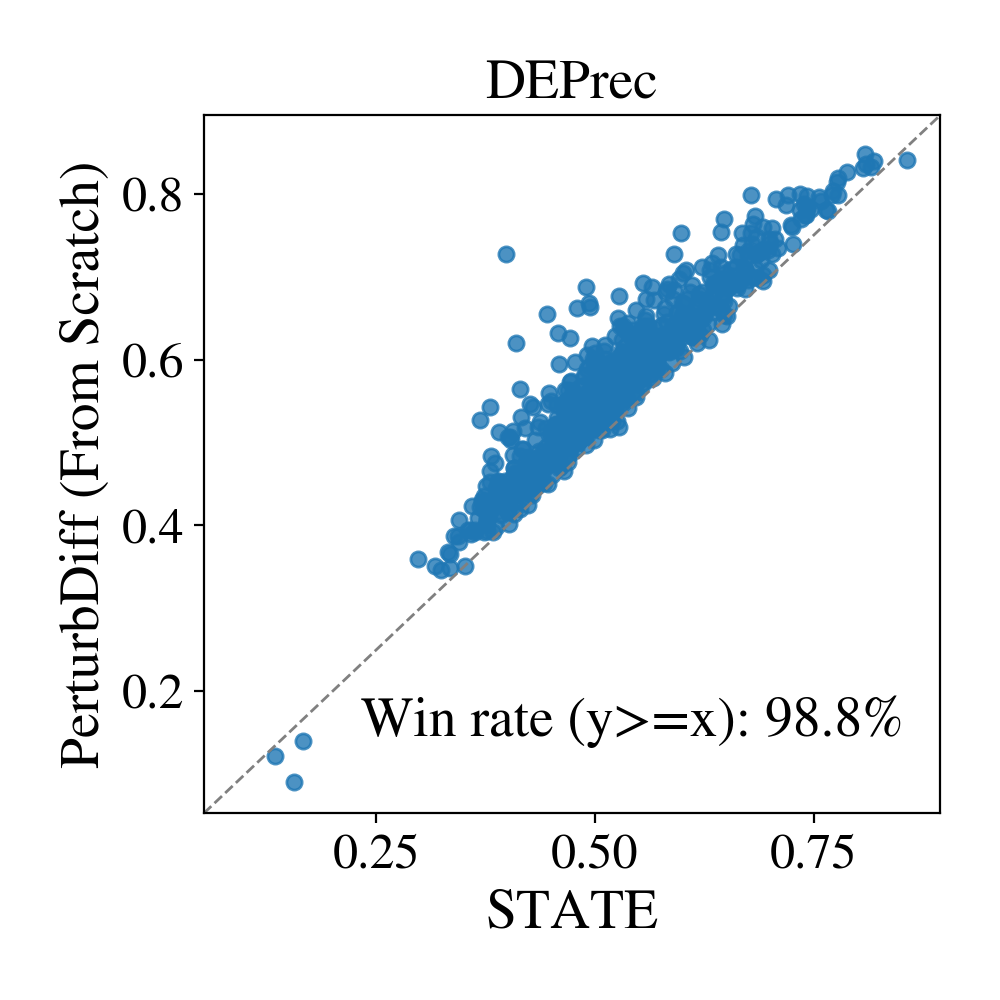}
\vspace{-0.8cm}
            \caption{{DEPrec.}}
         \end{subfigure}
         \begin{subfigure}[b]{0.245\textwidth}           \centering\includegraphics[width=\textwidth]{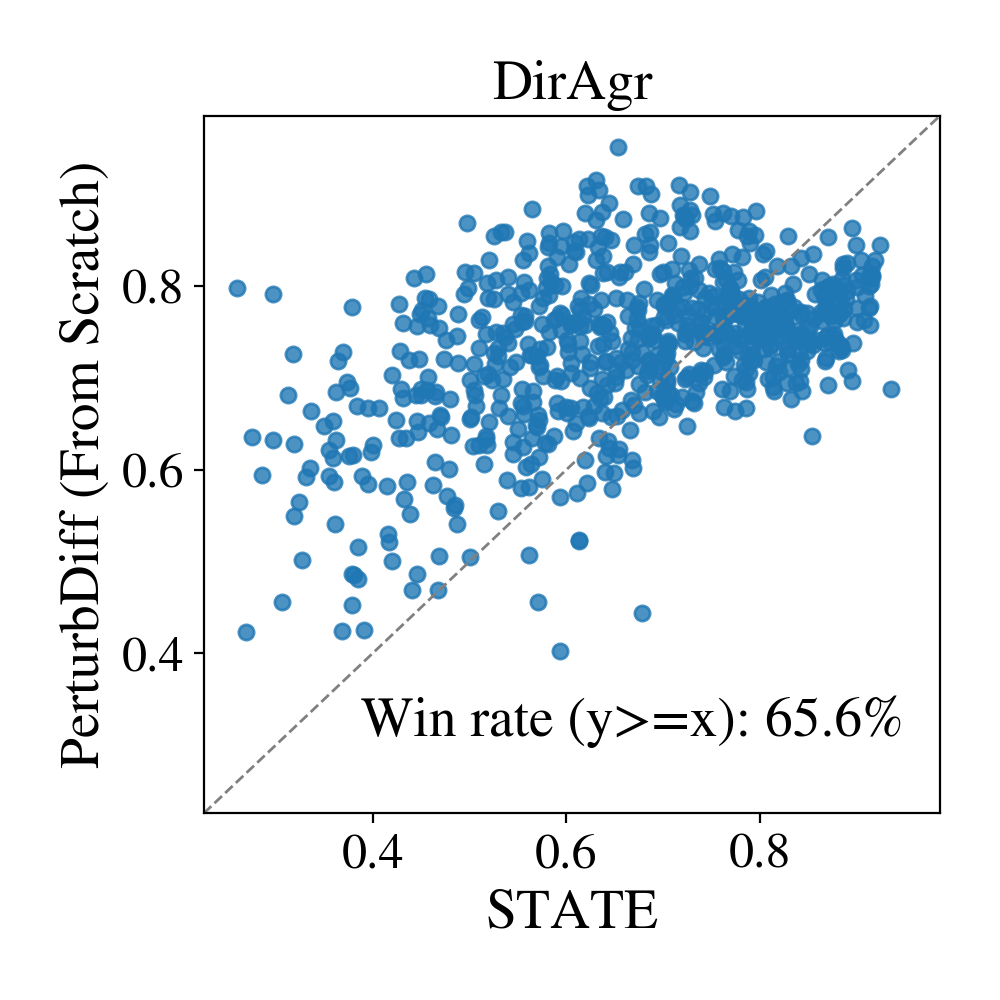}
         \vspace{-0.8cm}
            \caption{{DirAgr.}}
         \end{subfigure}
         \begin{subfigure}[b]{0.245\textwidth}           \centering\includegraphics[width=\textwidth]{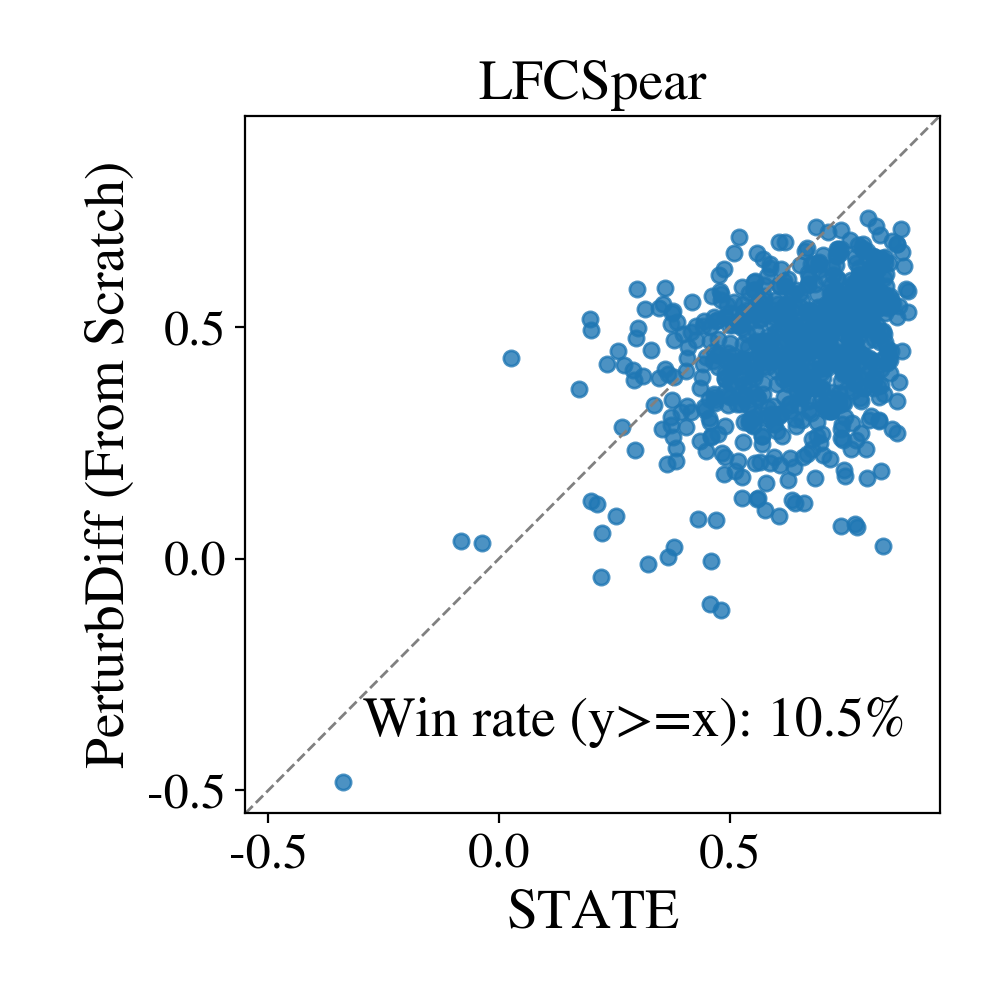}
         \vspace{-0.8cm}
            \caption{{LFCSpear.}}
         \end{subfigure}
         \begin{subfigure}[b]{0.245\textwidth}           \centering\includegraphics[width=\textwidth]{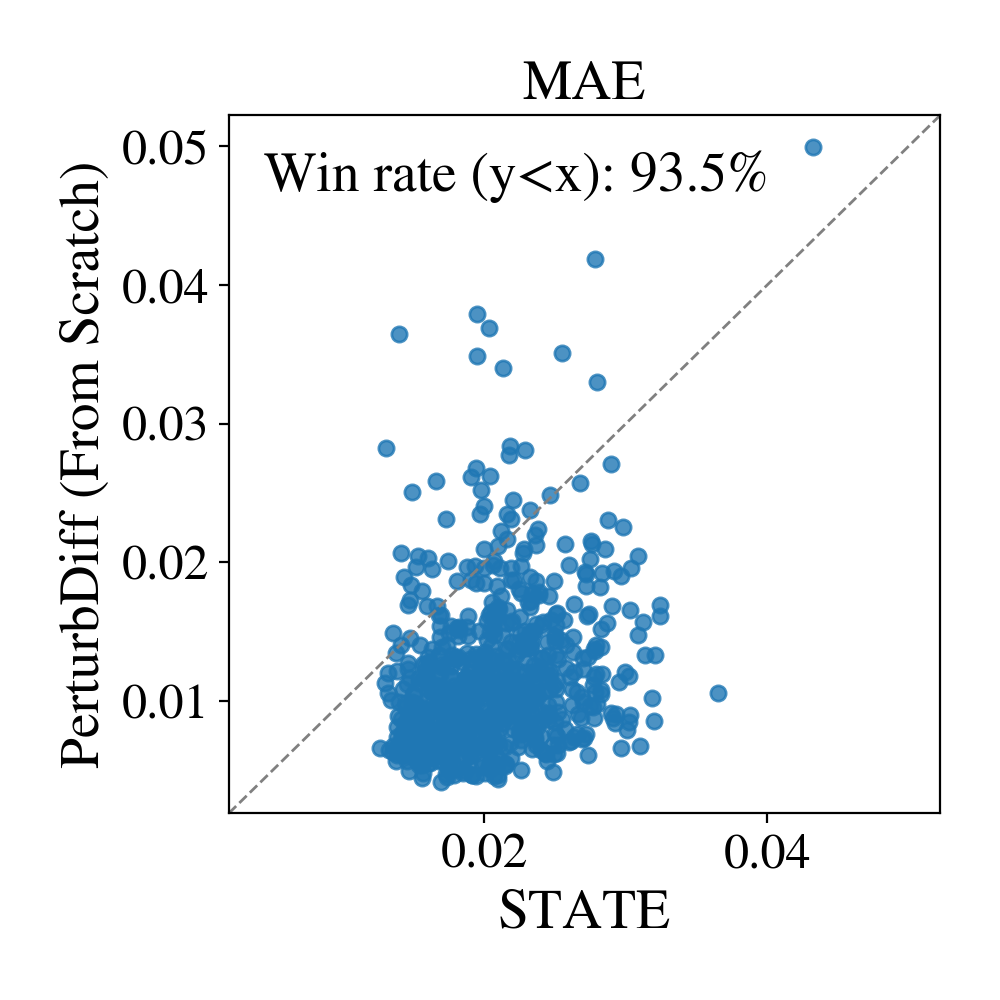}
         \vspace{-0.8cm}
            \caption{{MAE.}}
         \end{subfigure}
         \begin{subfigure}[b]{0.245\textwidth}           \centering\includegraphics[width=\textwidth]{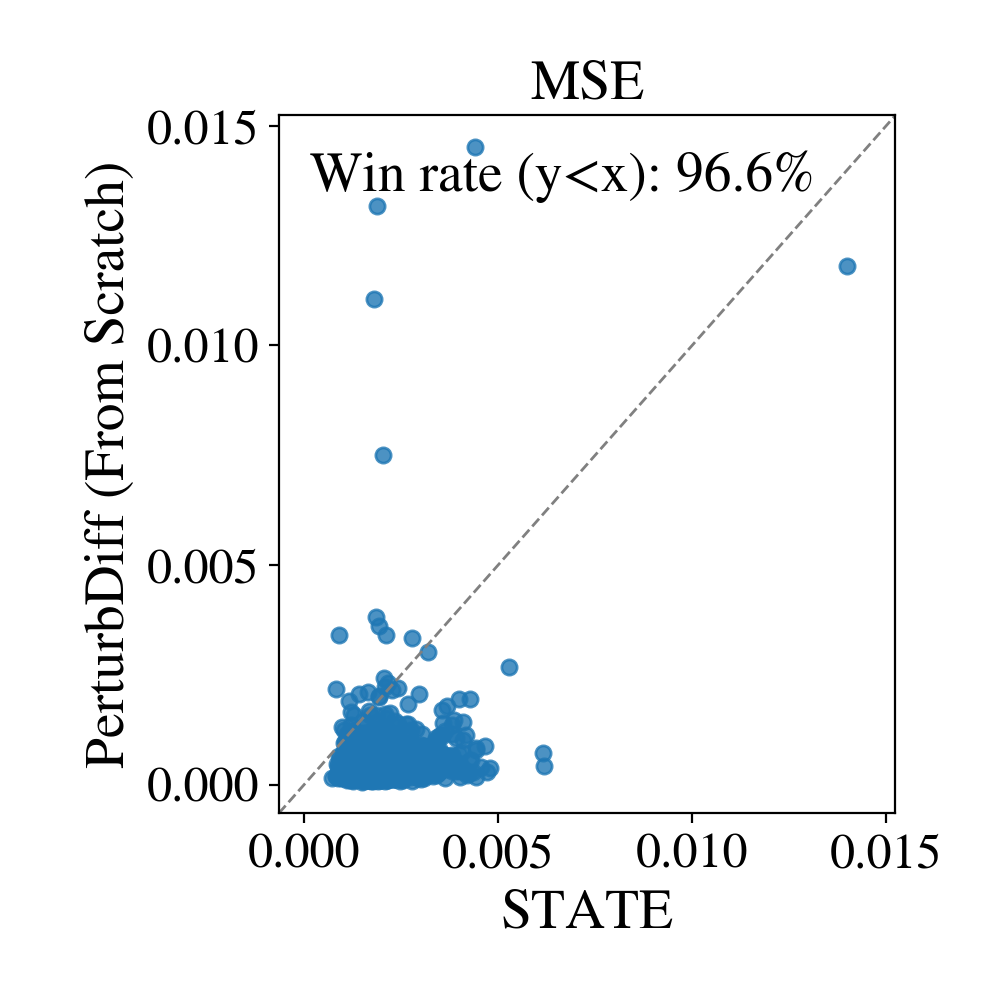}
         \vspace{-0.8cm}
            \caption{{MSE.}}
         \end{subfigure}
         \begin{subfigure}[b]{0.245\textwidth}           \centering\includegraphics[width=\textwidth]{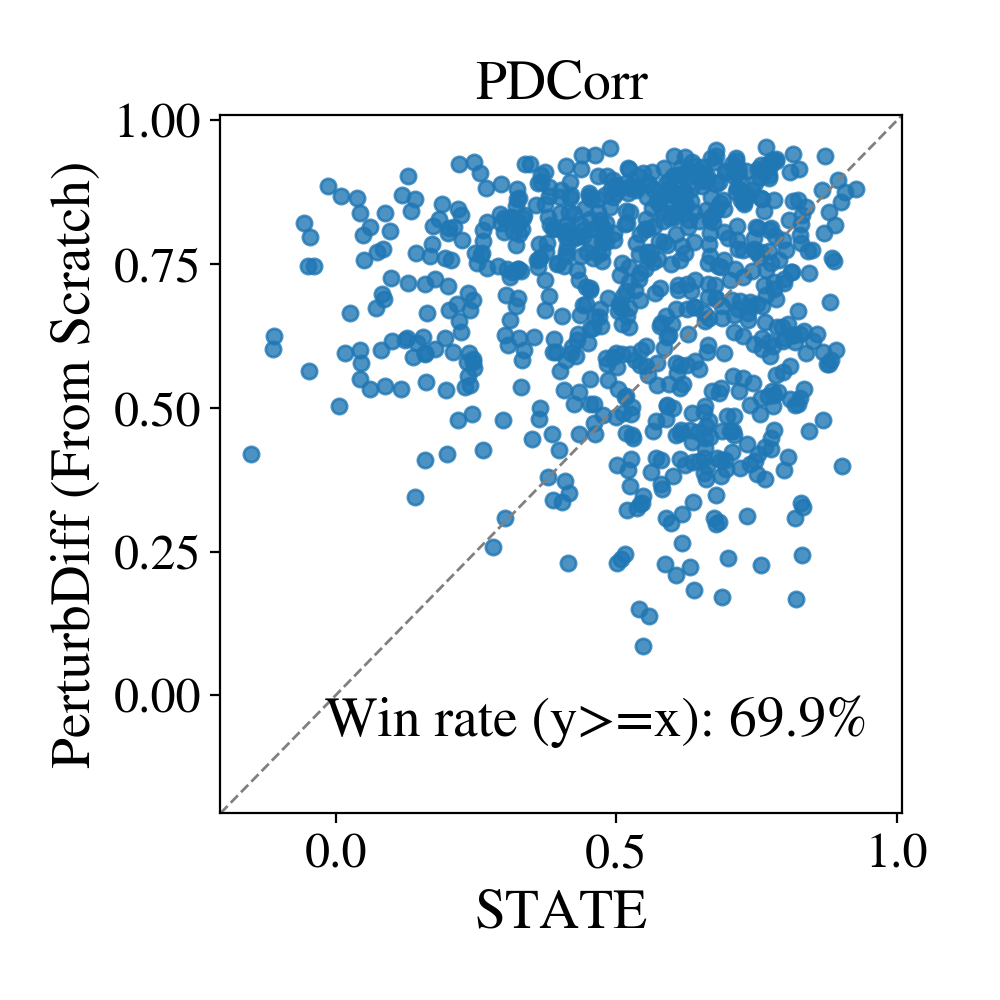}
         \vspace{-0.8cm}
            \caption{{PDCorr.}}
         \end{subfigure}
         \begin{subfigure}[b]{0.245\textwidth}           \centering\includegraphics[width=\textwidth]{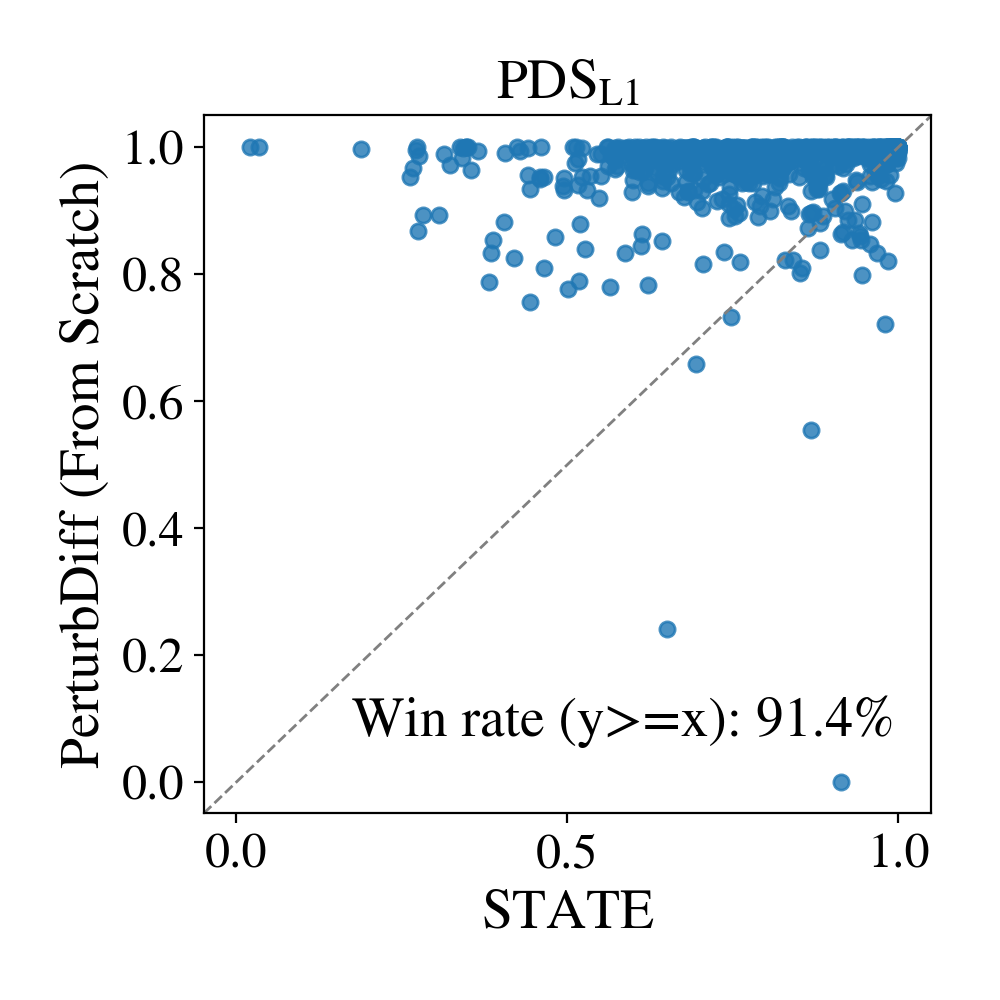}
         \vspace{-0.8cm}
            \caption{{PDS$_{\text{L1}}$.}}
        \end{subfigure}
        \begin{subfigure}[b]{0.245\textwidth}           \centering\includegraphics[width=\textwidth]{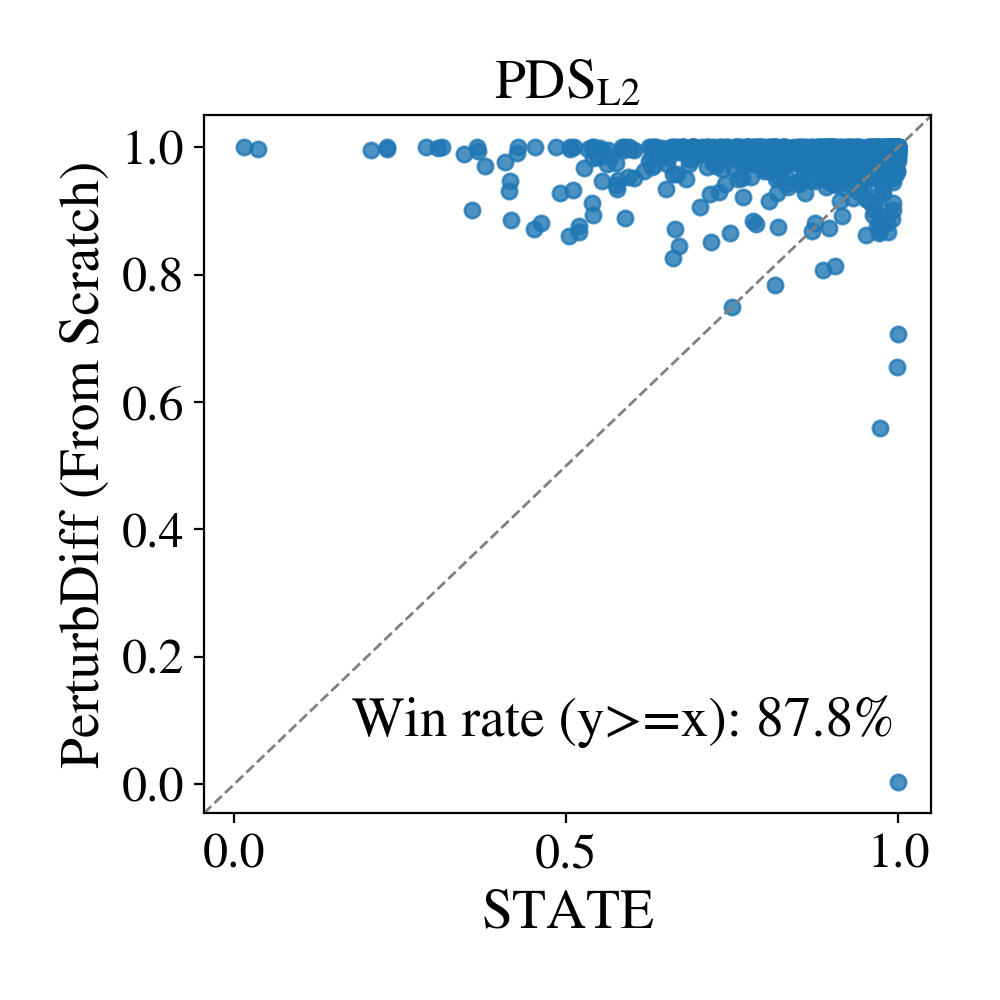}
         \vspace{-0.8cm}
            \caption{{PDS$_{\text{L2}}$.}}
        \end{subfigure}
            \begin{subfigure}[b]{0.245\textwidth}           \centering\includegraphics[width=\textwidth]{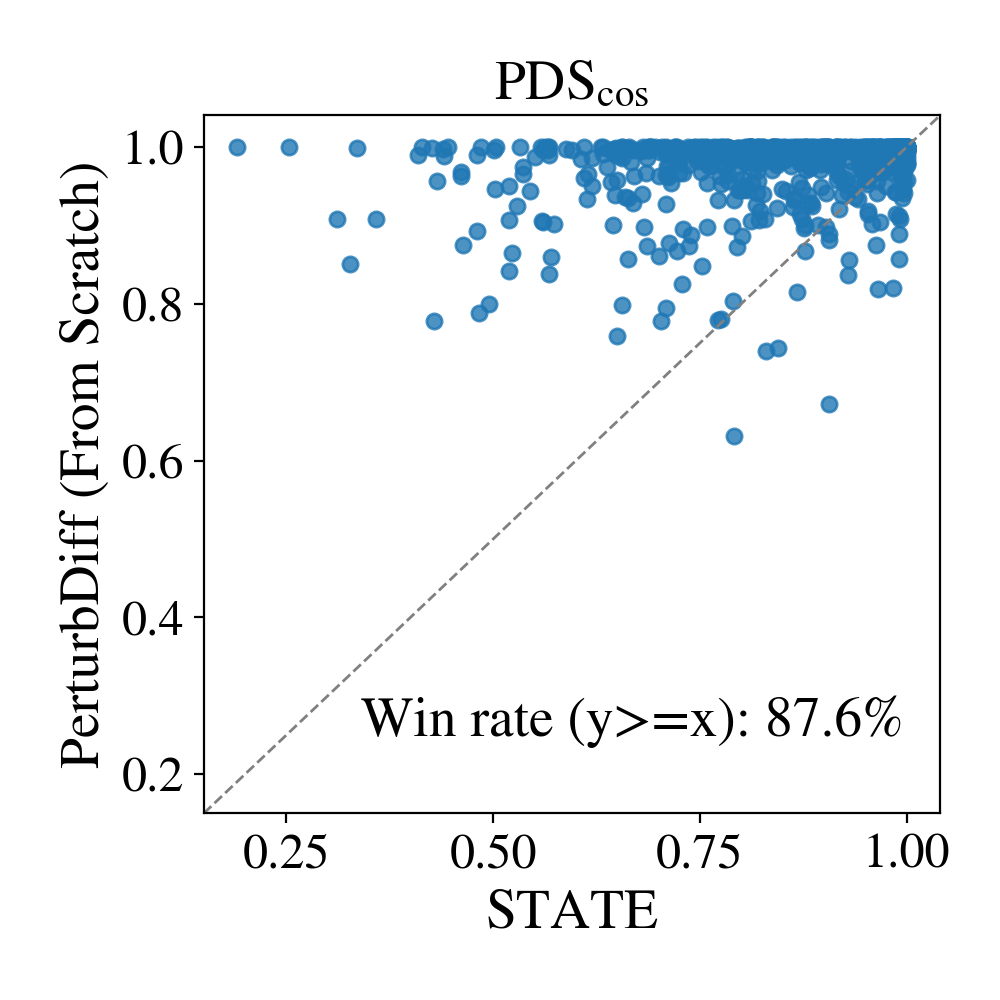}
         \vspace{-0.8cm}
            \caption{{PDS$_{\text{cos}}$.}}
         \end{subfigure}
         
\caption{Per-metric scatter plot comparison between {\methodscratch&} and STATE on Tahoe100M dataset.
Each panel corresponds to a different evaluation metric, plotting STATE (x-axis) against {\methodscratch&} (y-axis) across held-out perturbations. The dashed diagonal indicates equal performance; points above the diagonal indicate higher performance for {\methodscratch&} (reversed for MAE and MSE). Reported win rates denote the fraction of perturbations where {\methodscratch&} achieves equal or better performance.}
        \label{fig:app_scatter_tahoe100m}
    \end{minipage}
\end{figure}

\begin{figure}
    \begin{minipage}[t]{1.0\textwidth}
        \centering
\begin{subfigure}[b]{0.245\textwidth} 
\centering
\includegraphics[width=0.95\textwidth]{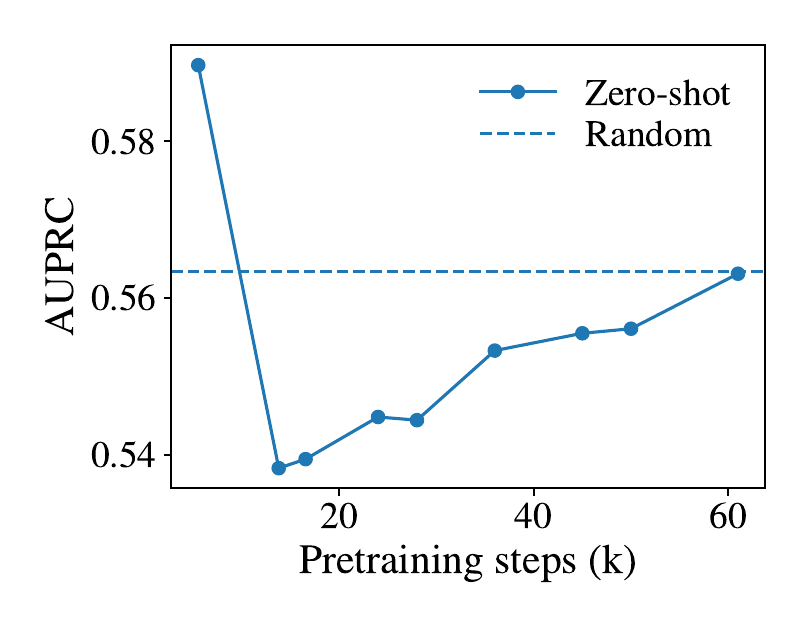}
            \vspace{-6pt}
            \caption{{AUPRC.}}
         \end{subfigure}
         \begin{subfigure}[b]{0.245\textwidth}      
         \centering
         \includegraphics[width=0.95\textwidth]{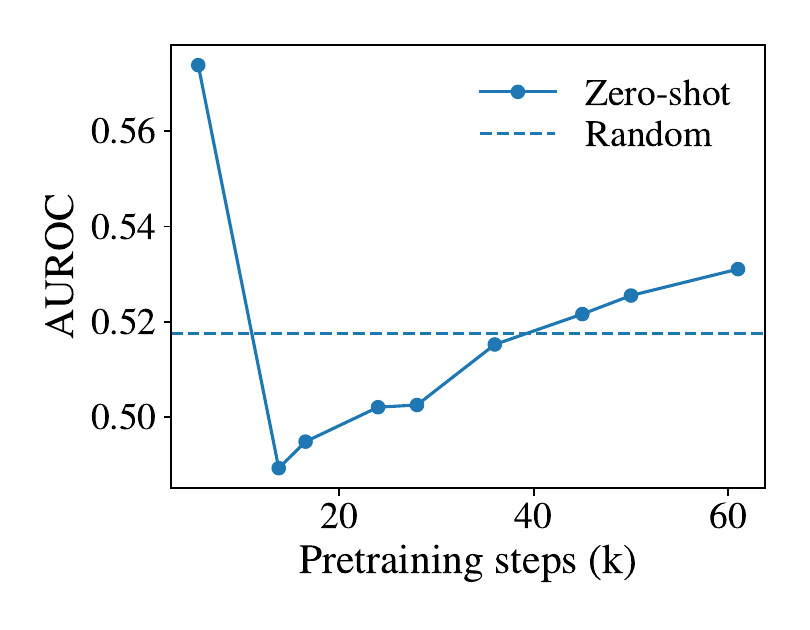}
            \vspace{-6pt}
            \caption{{AUROC.}}
         \end{subfigure}
         \begin{subfigure}[b]{0.245\textwidth}      
         \centering
         \includegraphics[width=0.95\textwidth]{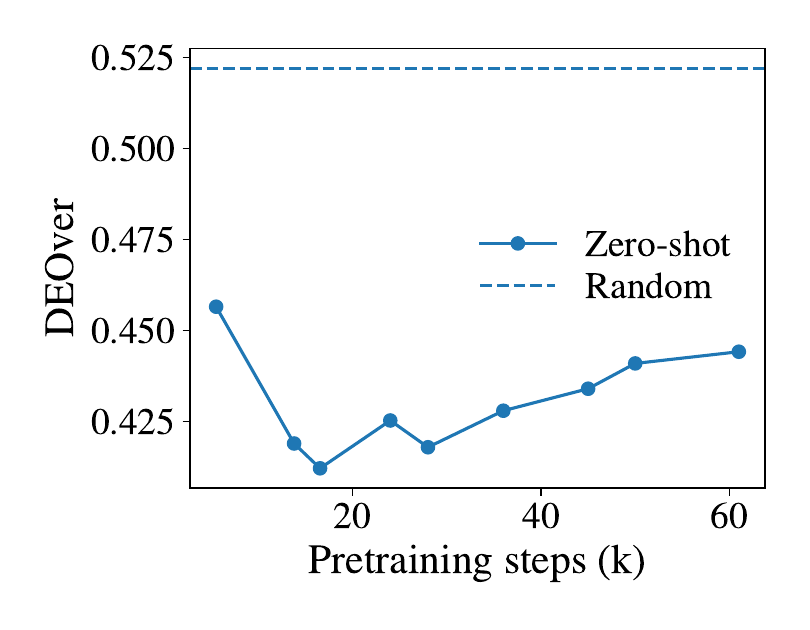}
            \vspace{-6pt}
            \caption{{DEOver.}}
         \end{subfigure}
         \begin{subfigure}[b]{0.245\textwidth}      
         \centering
         \includegraphics[width=0.95\textwidth]{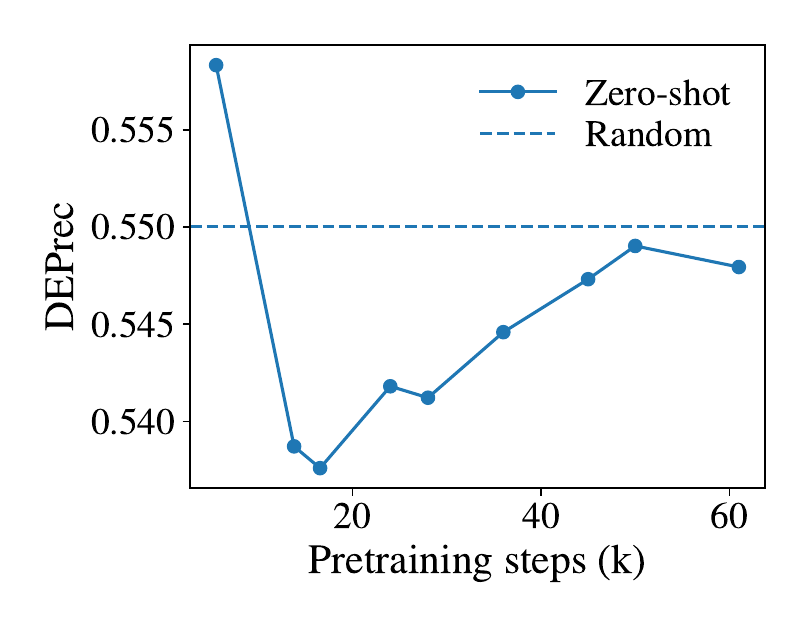}
            \vspace{-6pt}
            \caption{{DEPrec.}}
         \end{subfigure}
         \begin{subfigure}[b]{0.245\textwidth}      
         \centering
         \includegraphics[width=0.95\textwidth]{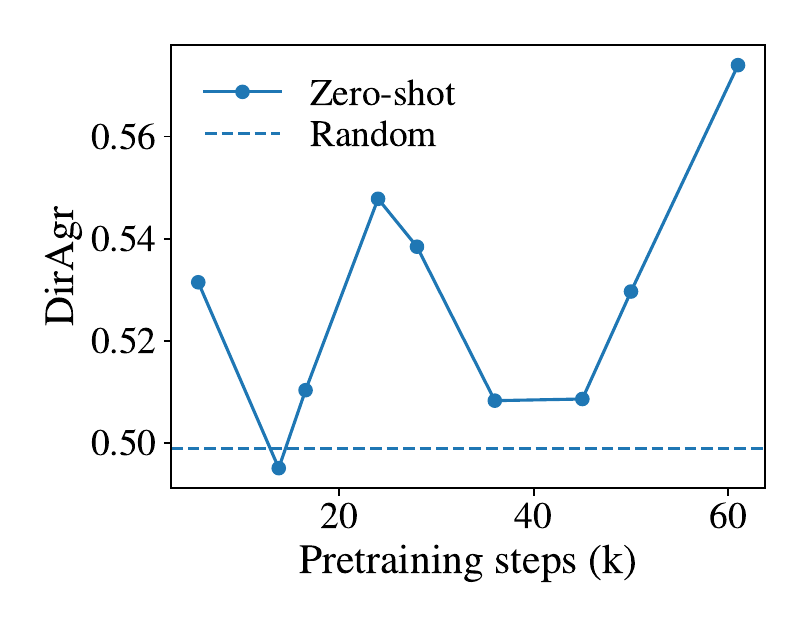}
            \vspace{-6pt}
            \caption{{DirAgr.}}
         \end{subfigure}
         \begin{subfigure}[b]{0.245\textwidth}      
         \centering
         \includegraphics[width=0.95\textwidth]{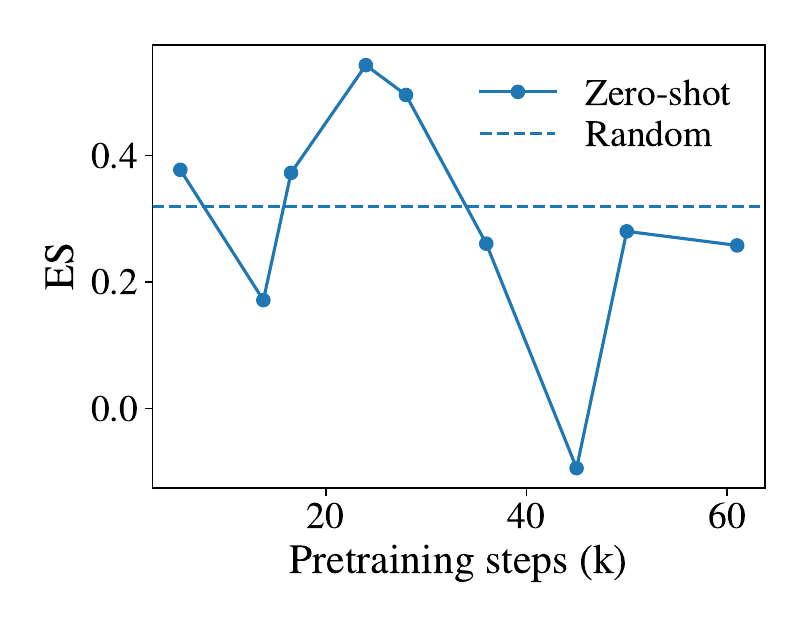}
            \vspace{-6pt}\caption{{ES.}}
         \end{subfigure}
         \begin{subfigure}[b]{0.245\textwidth}      
         \centering
         \includegraphics[width=0.95\textwidth]{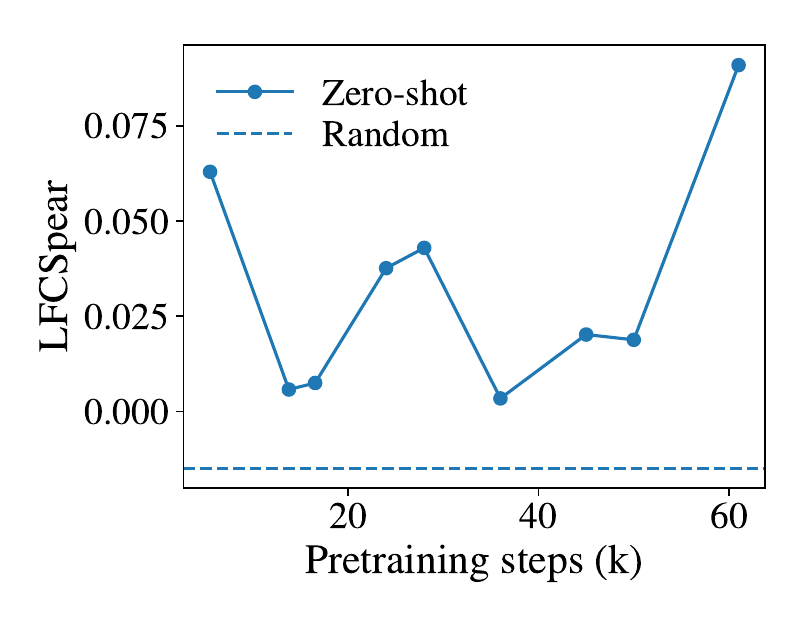}
            \vspace{-6pt}\caption{{LFCSpear.}}
         \end{subfigure}
         \begin{subfigure}[b]{0.245\textwidth}      
         \centering
         \includegraphics[width=0.95\textwidth]{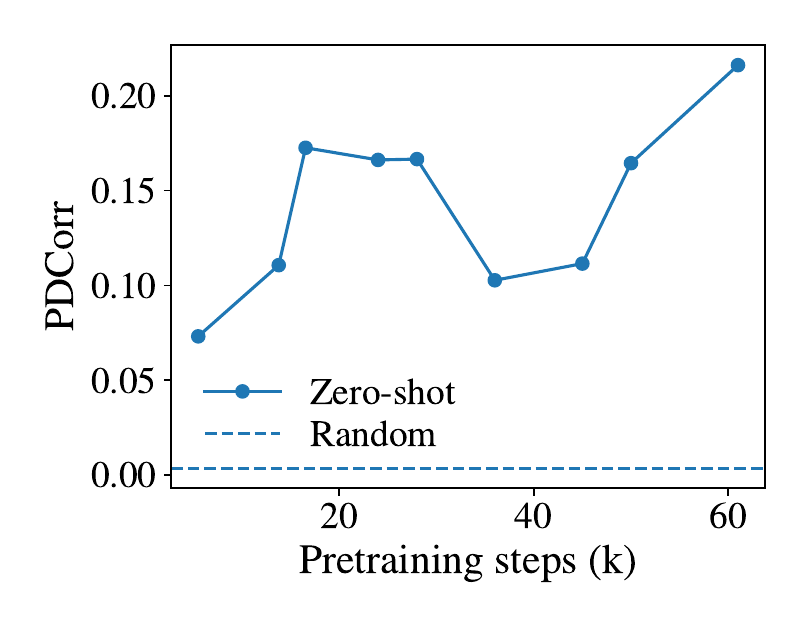}
            \vspace{-6pt}
            \caption{{PDCorr.}}
         \end{subfigure}
         \begin{subfigure}[b]{0.245\textwidth}      
         \centering
         \includegraphics[width=0.95\textwidth]{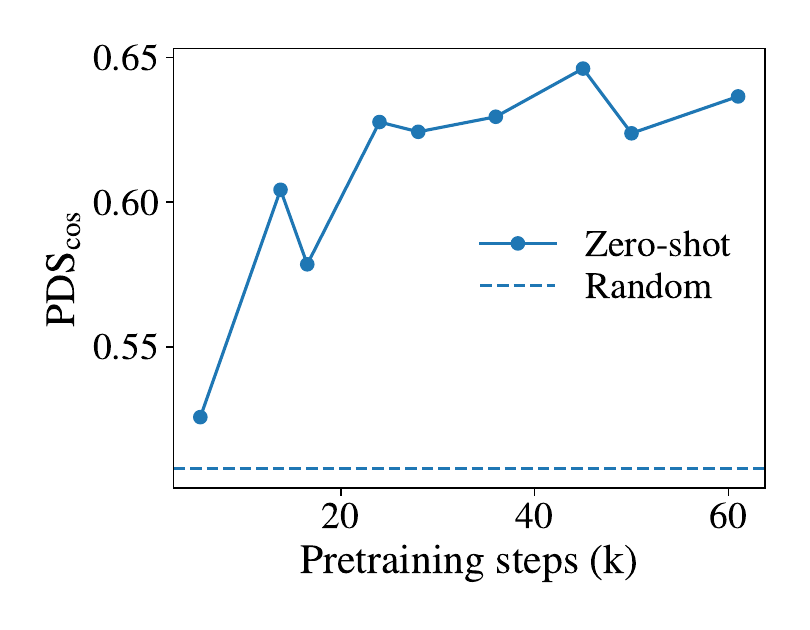}
            \vspace{-6pt}
            \caption{{PDS$_\text{cos}$.}}
         \end{subfigure}
         \begin{subfigure}[b]{0.245\textwidth}      
         \centering
         \includegraphics[width=0.95\textwidth]{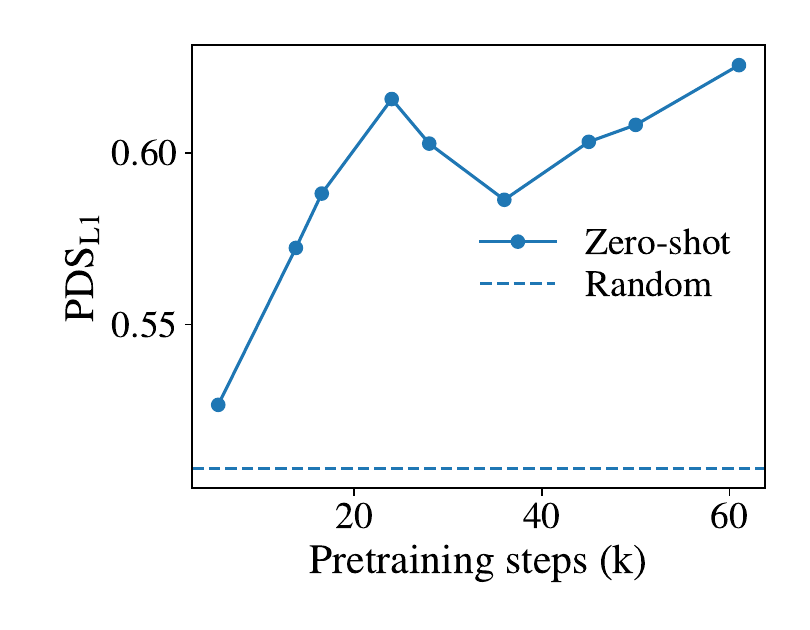}
            \vspace{-6pt}\caption{{PDS$_\text{L1}$.}}
         \end{subfigure}
         \begin{subfigure}[b]{0.245\textwidth}      
         \centering
         \includegraphics[width=0.95\textwidth]{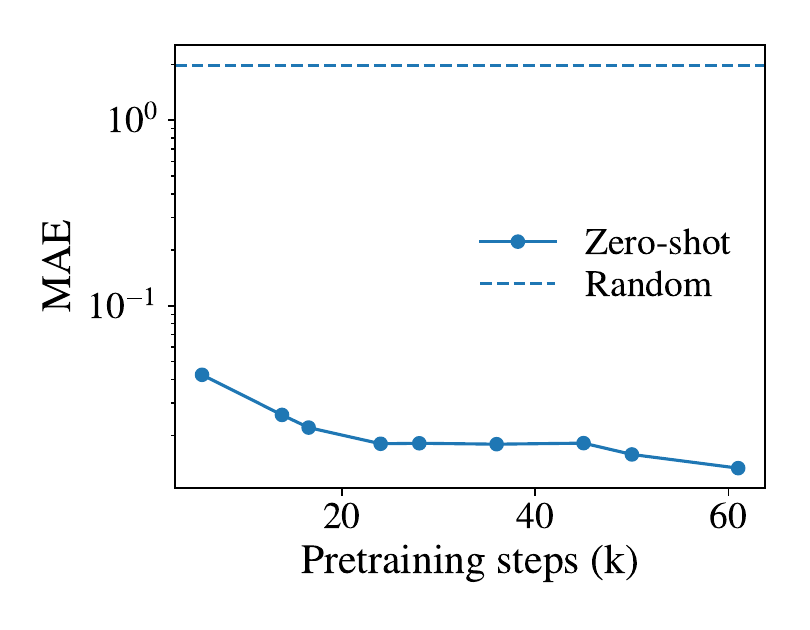}
            \vspace{-6pt}\caption{{MAE.}}
         \end{subfigure}
         \begin{subfigure}[b]{0.245\textwidth}      
         \centering
         \includegraphics[width=0.95\textwidth]{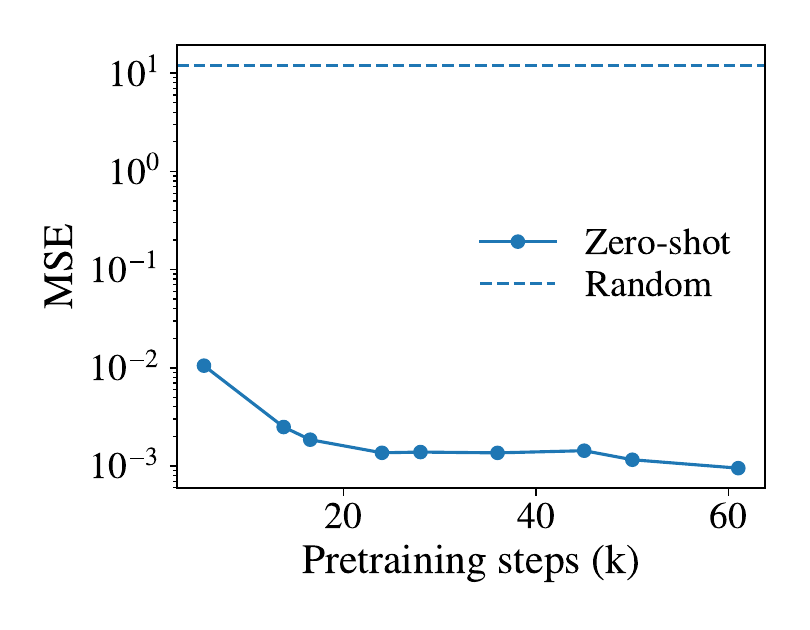}
            \vspace{-6pt}\caption{{MSE}}
         \end{subfigure}
\caption{Zero-shot performance on PBMC across pretraining steps, compared to a random baseline.}
        \label{fig:app_zeroshot_pbmc}
    \end{minipage}
\end{figure}

\begin{figure}
    \begin{minipage}[t]{1.0\textwidth}
        \centering
\begin{subfigure}[b]{0.245\textwidth} 
\centering
\includegraphics[width=0.95\textwidth]{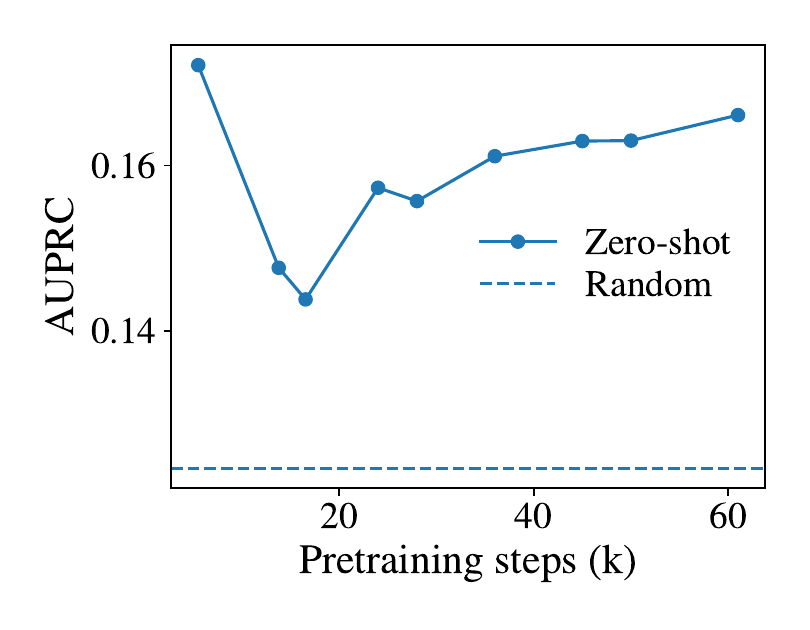}
            \vspace{-6pt}
            \caption{{AUPRC.}}
         \end{subfigure}
         \begin{subfigure}[b]{0.245\textwidth}      
         \centering
         \includegraphics[width=0.95\textwidth]{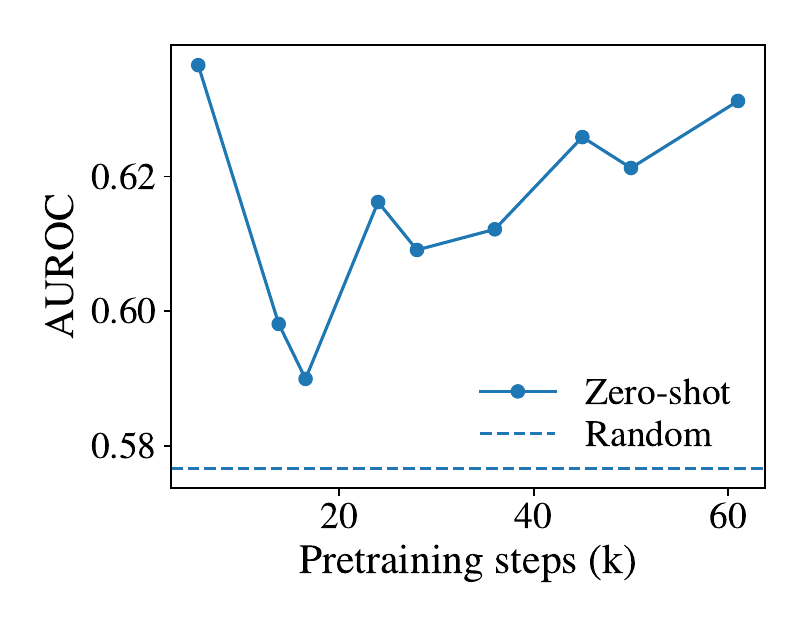}
            \vspace{-6pt}\caption{{AUROC.}}
         \end{subfigure}
         \begin{subfigure}[b]{0.245\textwidth}      
         \centering
         \includegraphics[width=0.95\textwidth]{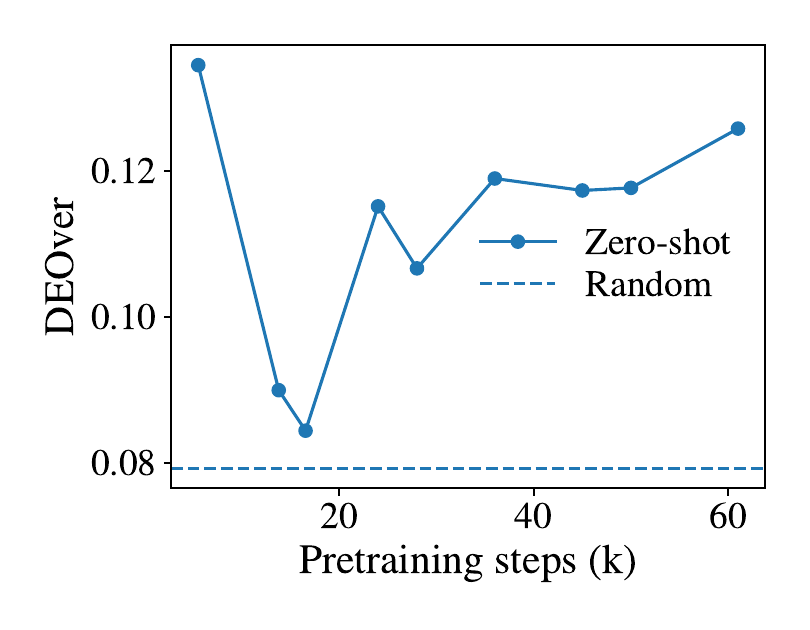}
            \vspace{-6pt}\caption{{DEOver.}}
         \end{subfigure}
         \begin{subfigure}[b]{0.245\textwidth}      
         \centering
         \includegraphics[width=0.95\textwidth]{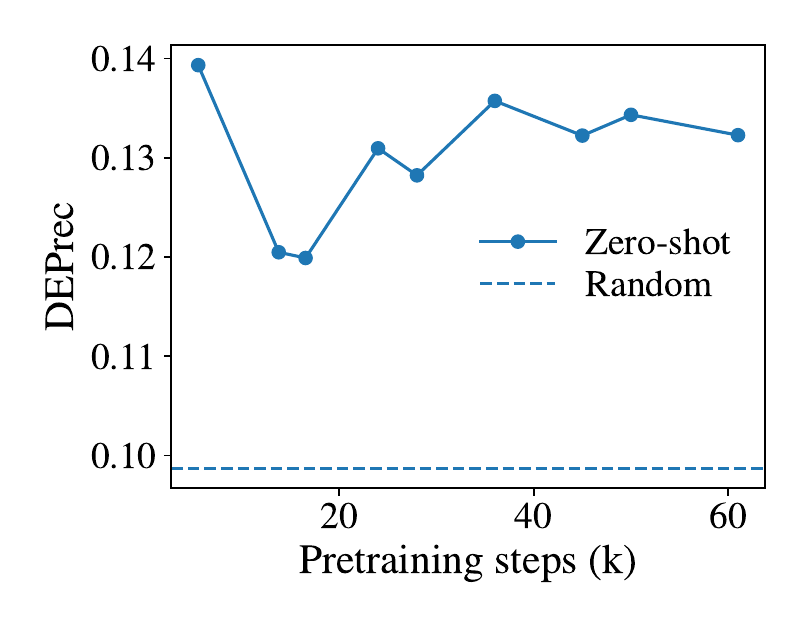}
            \vspace{-6pt}\caption{{DEPrec.}}
         \end{subfigure}
         \begin{subfigure}[b]{0.245\textwidth}      
         \centering
         \includegraphics[width=0.95\textwidth]{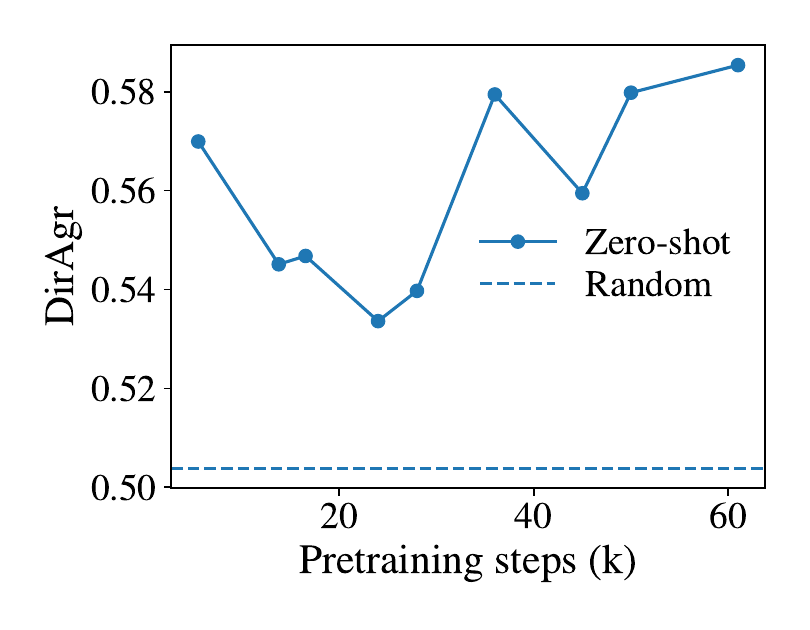}
            \vspace{-6pt}
            \caption{{DirAgr.}}
         \end{subfigure}
         \begin{subfigure}[b]{0.245\textwidth}      
         \centering
         \includegraphics[width=0.95\textwidth]{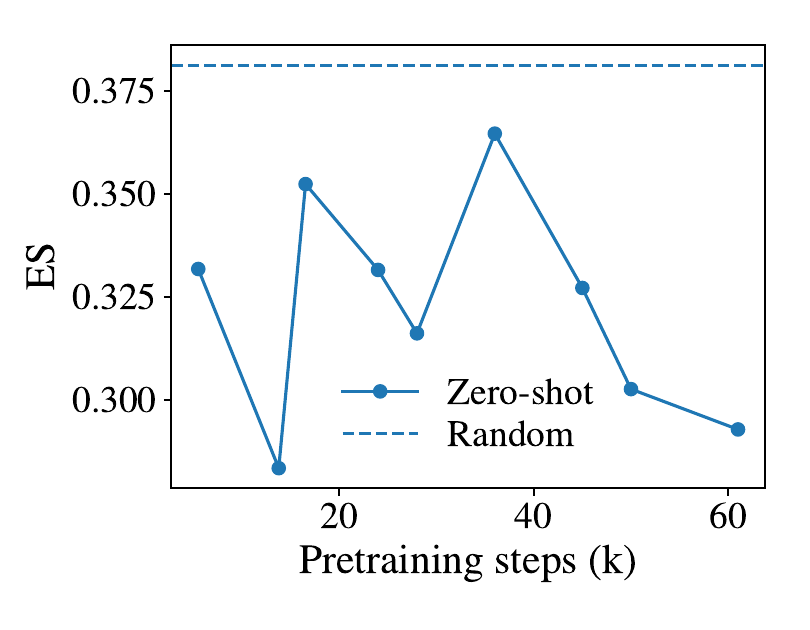}
            \vspace{-6pt}\caption{{ES.}}
         \end{subfigure}
         \begin{subfigure}[b]{0.245\textwidth}      
         \centering
         \includegraphics[width=0.95\textwidth]{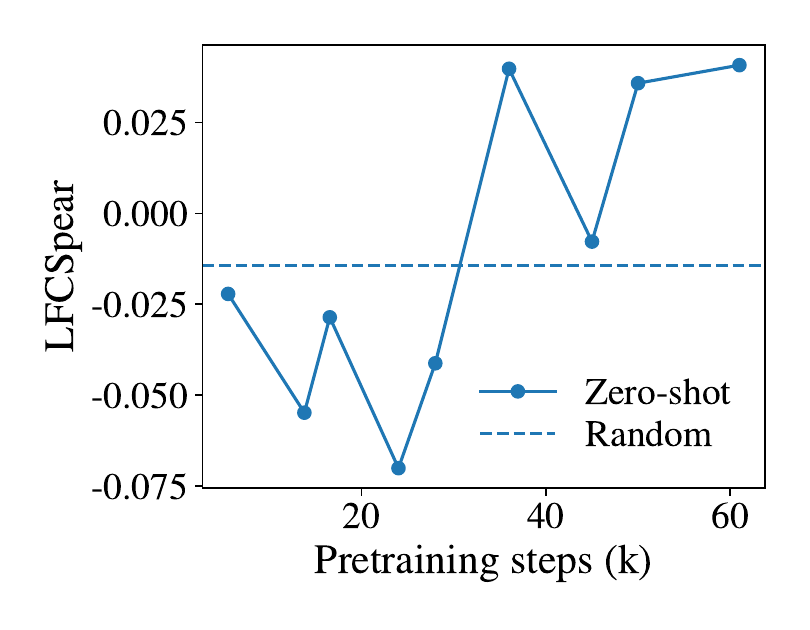}
            \vspace{-6pt}
            \caption{{LFCSpear.}}
         \end{subfigure}
         \begin{subfigure}[b]{0.245\textwidth}      
         \centering
         \includegraphics[width=0.95\textwidth]{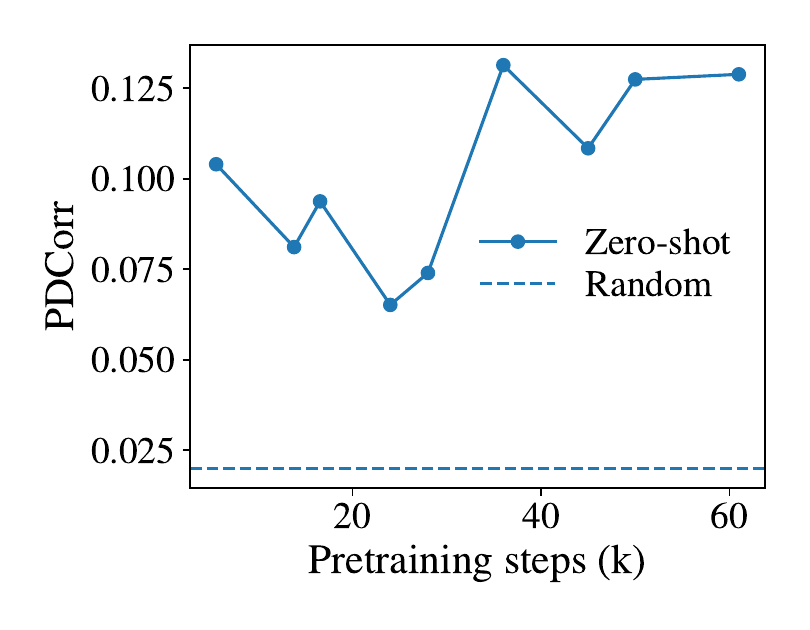}
            \vspace{-6pt}
            \caption{{PDCorr.}}
         \end{subfigure}
         \begin{subfigure}[b]{0.245\textwidth}      
         \centering
         \includegraphics[width=0.95\textwidth]{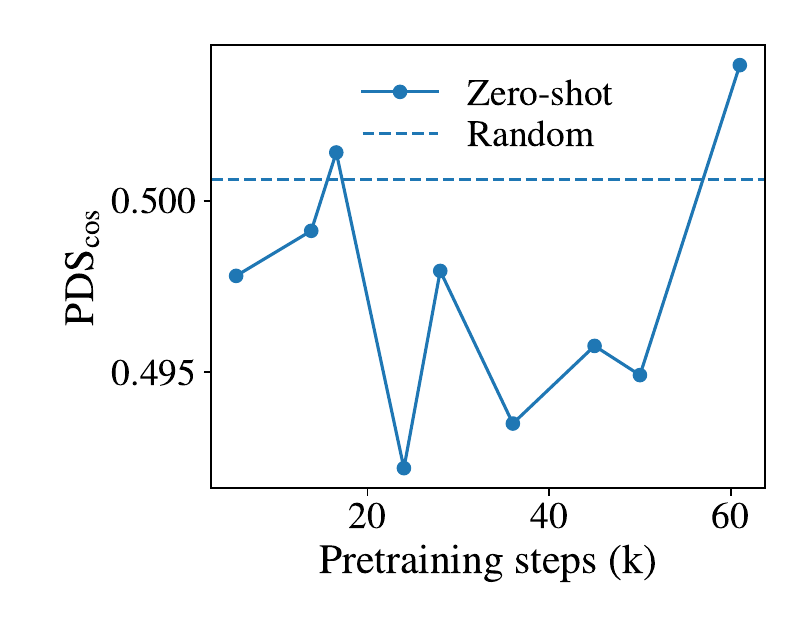}
            \vspace{-6pt}
            \caption{{PDS$_\text{cos}$.}}
         \end{subfigure}
         \begin{subfigure}[b]{0.245\textwidth}      
         \centering
         \includegraphics[width=0.95\textwidth]{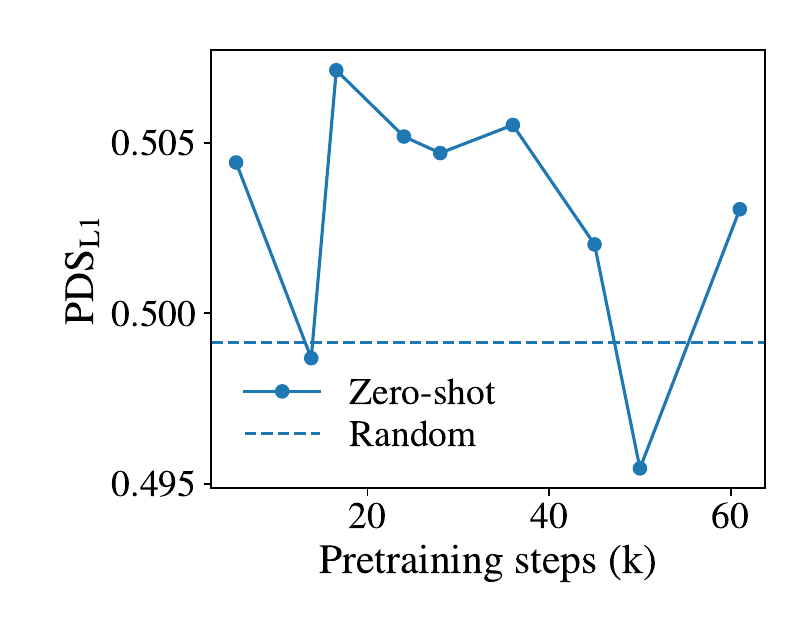}
            \vspace{-6pt}
            \caption{{PDS$_\text{L1}$.}}
         \end{subfigure}
         \begin{subfigure}[b]{0.245\textwidth}      
         \centering
         \includegraphics[width=0.95\textwidth]{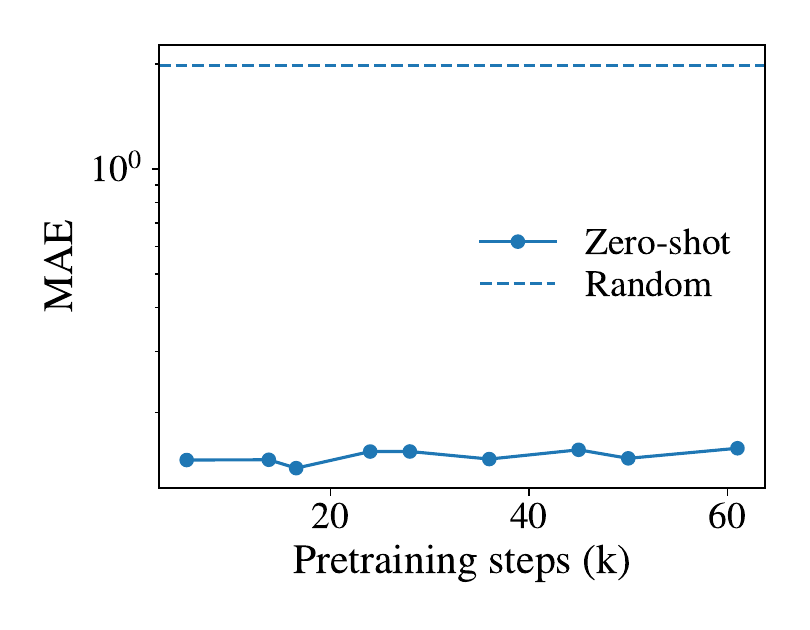}
            \vspace{-6pt}
            \caption{{MAE.}}
         \end{subfigure}
         \begin{subfigure}[b]{0.245\textwidth}      
         \centering
         \includegraphics[width=0.95\textwidth]{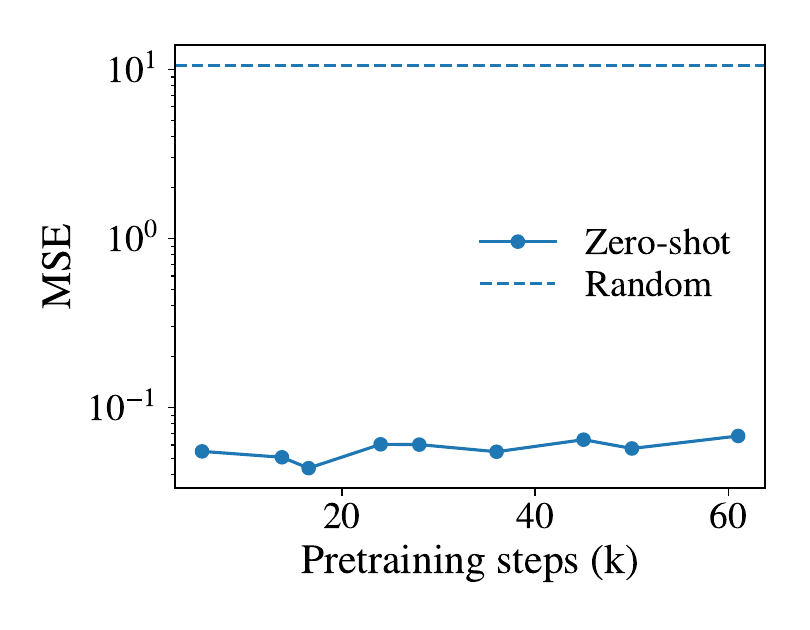}
            \vspace{-6pt}\caption{{MSE.}}
         \end{subfigure}
\caption{Zero-shot performance on Replogle across pretraining steps, compared to a random baseline.}
        \label{fig:app_zeroshot_replogle}
    \end{minipage}
\end{figure}

\begin{figure}
    \begin{minipage}[t]{\textwidth}
        \centering
        \begin{subfigure}[b]{0.245\textwidth} 
\centering
\includegraphics[width=\textwidth]{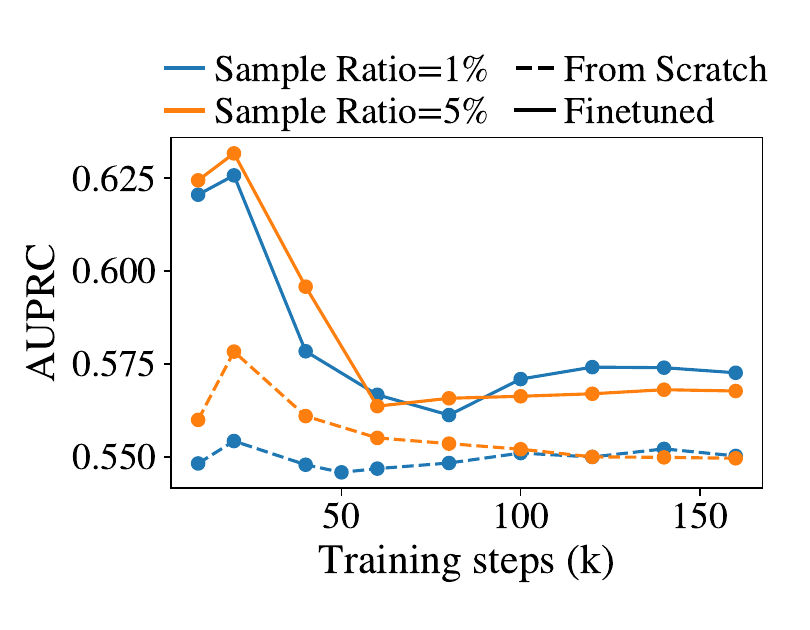}
            \vspace{-20pt}\caption{{AUPRC.}}
         \end{subfigure}
         \begin{subfigure}[b]{0.245\textwidth}      
         \centering
         \includegraphics[width=\textwidth]{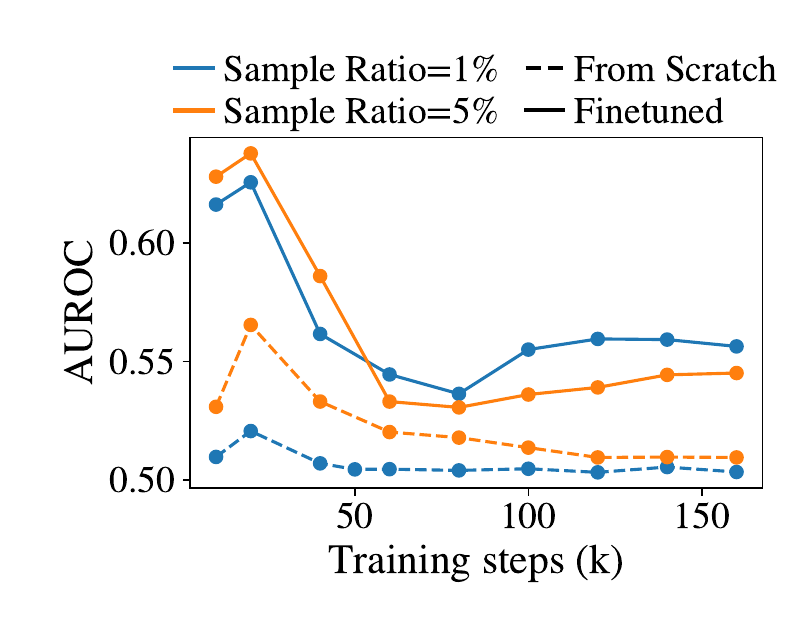}
            \vspace{-20pt}
            \caption{{AUROC.}}
         \end{subfigure}
\begin{subfigure}[b]{0.245\textwidth} 
\centering
\includegraphics[width=\textwidth]{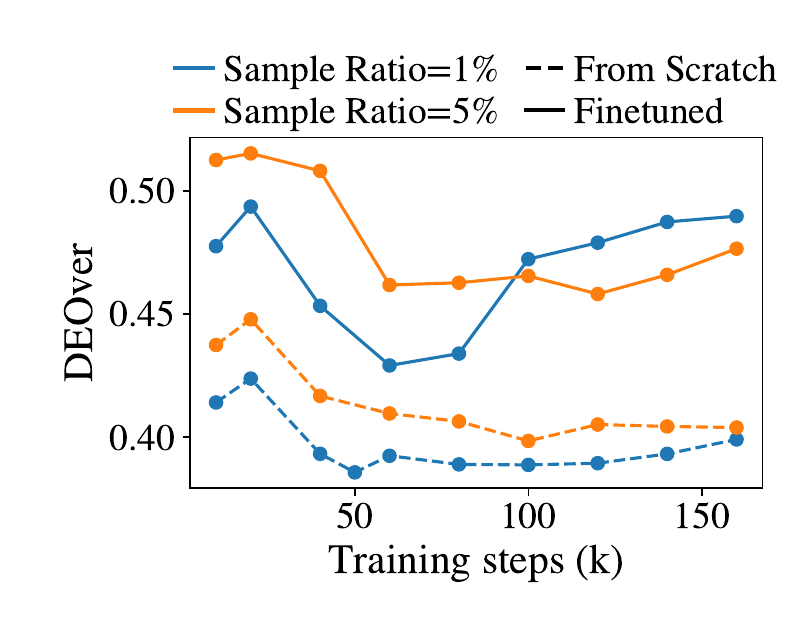}
            \vspace{-20pt}
            \caption{{DEOver.}}
         \end{subfigure}
         \begin{subfigure}[b]{0.245\textwidth}      
         \centering
         \includegraphics[width=\textwidth]{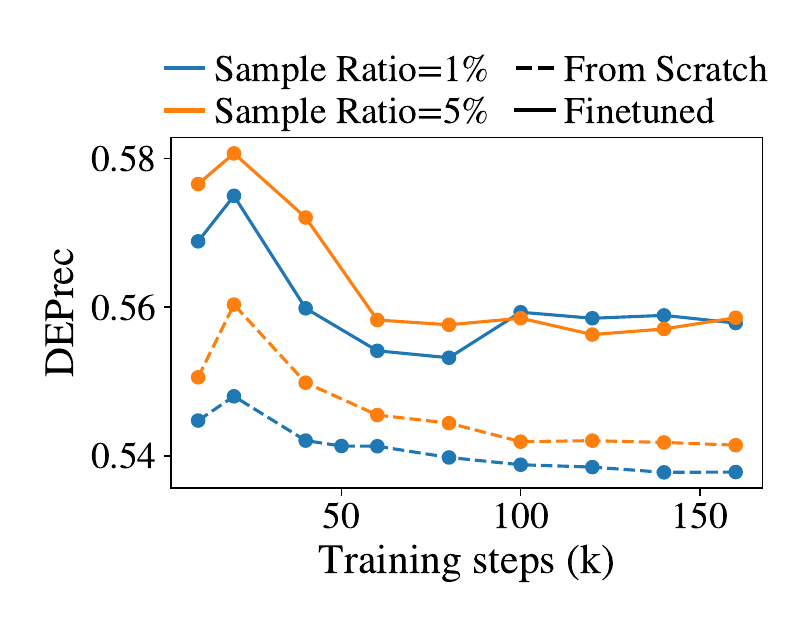}
            \vspace{-20pt}
            \caption{{DEPrec.}}
         \end{subfigure}
         
         \begin{subfigure}[b]{0.245\textwidth}    
         \centering
         \includegraphics[width=\textwidth]{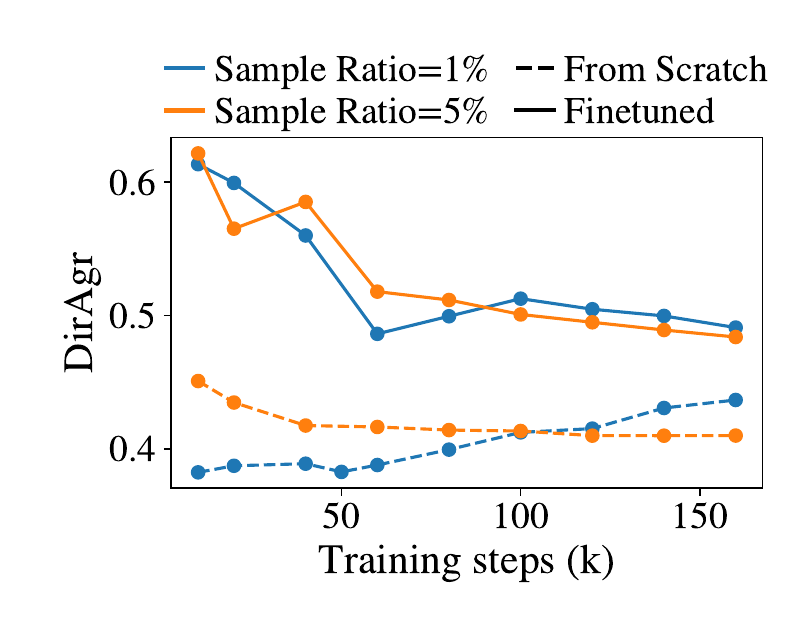}
            \vspace{-20pt}
            \caption{{DirAgr.}}
         \end{subfigure}
         \begin{subfigure}[b]{0.245\textwidth}    
         \centering
         \includegraphics[width=\textwidth]{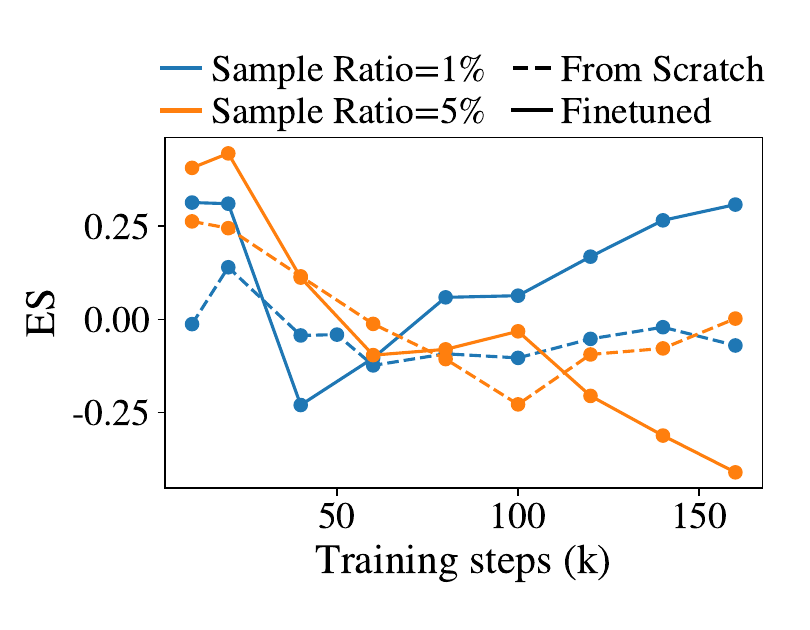}
            \vspace{-20pt}
            \caption{{ES.}}
         \end{subfigure}
         \begin{subfigure}[b]{0.245\textwidth}    
         \centering
         \includegraphics[width=\textwidth]{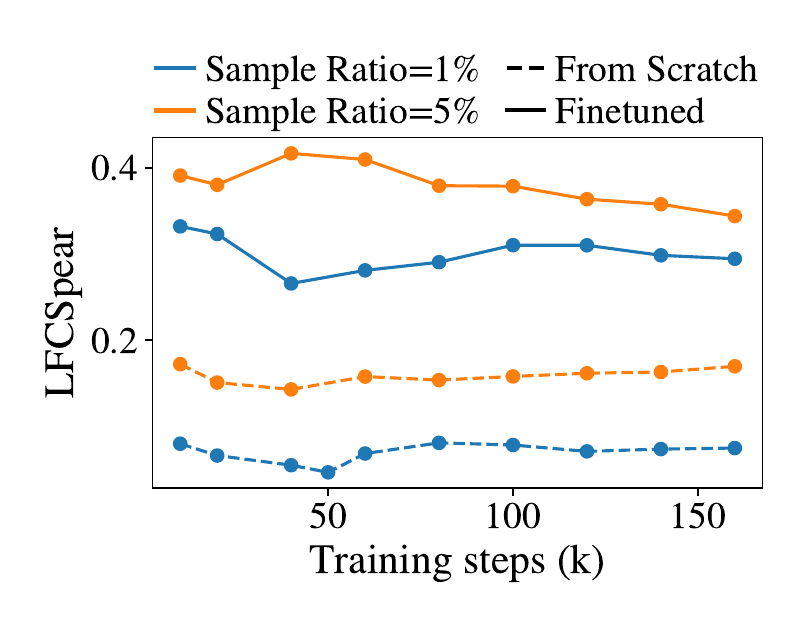}
            \vspace{-20pt}
            \caption{{LFCSpear}}
         \end{subfigure}
         \begin{subfigure}[b]{0.245\textwidth}    
         \centering
         \includegraphics[width=\textwidth]{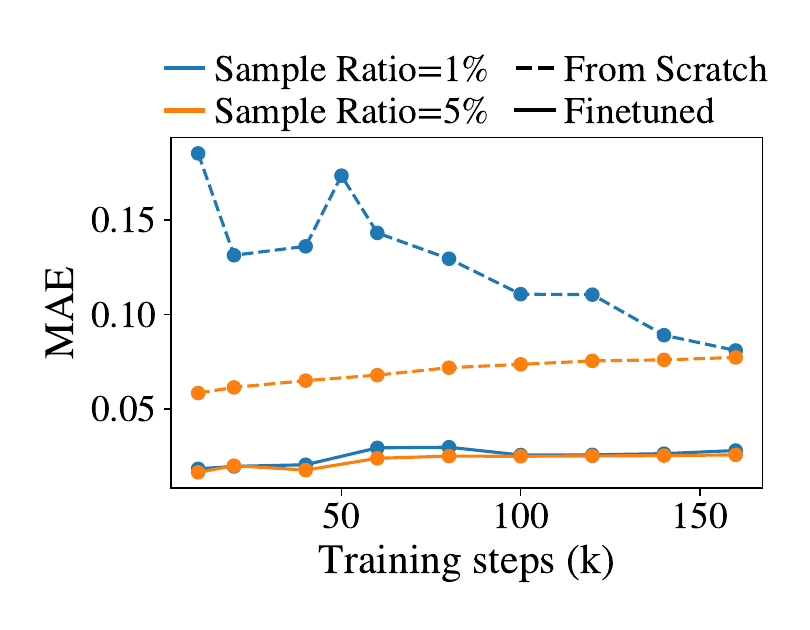}
            \vspace{-20pt}
            \caption{{MAE.}}
         \end{subfigure}
         \begin{subfigure}[b]{0.245\textwidth}    
         \centering
         \includegraphics[width=\textwidth]{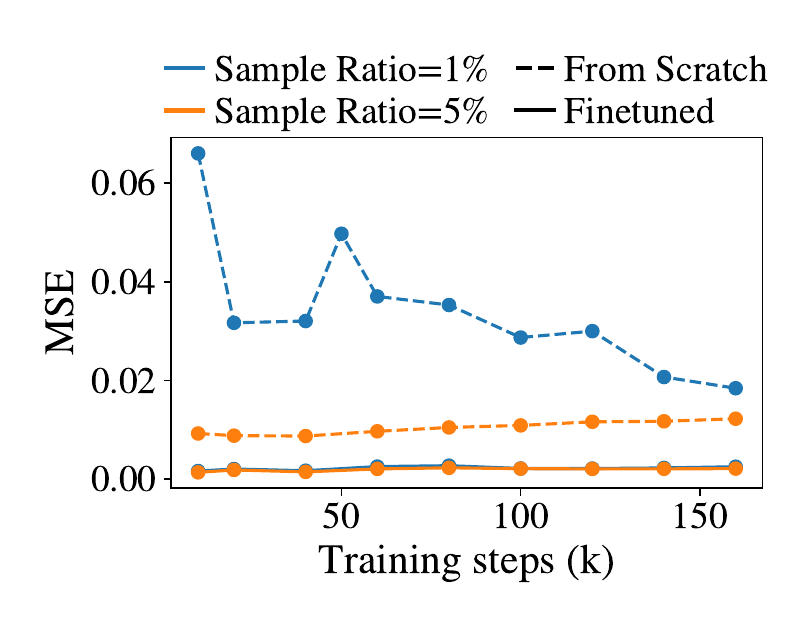}
            \vspace{-20pt}
            \caption{{MSE.}}
         \end{subfigure}
         \begin{subfigure}[b]{0.245\textwidth}    
         \centering
         \includegraphics[width=\textwidth]{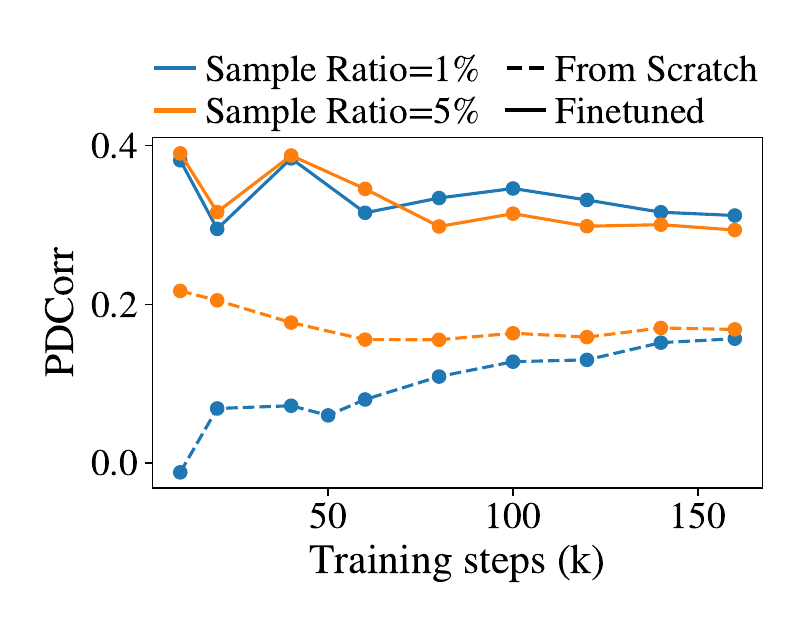}
            \vspace{-20pt}
            \caption{{PDCorr.}}
         \end{subfigure}
         \begin{subfigure}[b]{0.245\textwidth}    
         \centering
         \includegraphics[width=\textwidth]{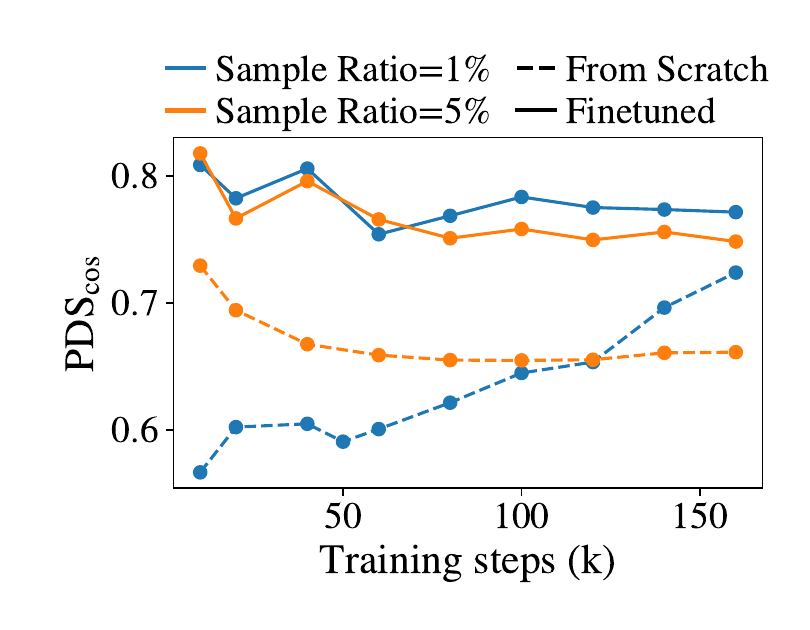}
            \vspace{-20pt}
            \caption{{PDS$_\text{cos}$.}}
         \end{subfigure}
         \begin{subfigure}[b]{0.245\textwidth}    
         \centering
         \includegraphics[width=\textwidth]{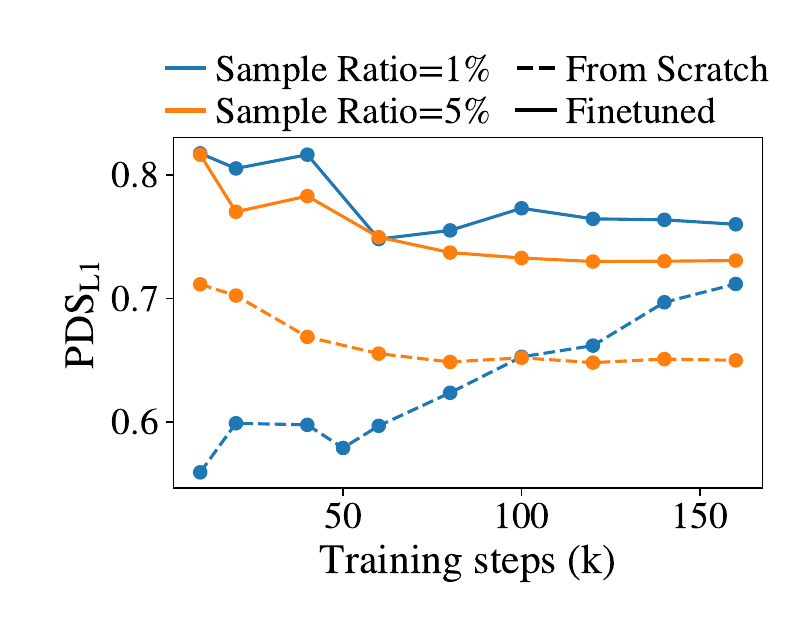}
            \vspace{-20pt}
            \caption{{PDS$_\text{L1}$.}}
         \end{subfigure}
         \begin{subfigure}[b]{0.245\textwidth}    
         \centering
         \includegraphics[width=\textwidth]{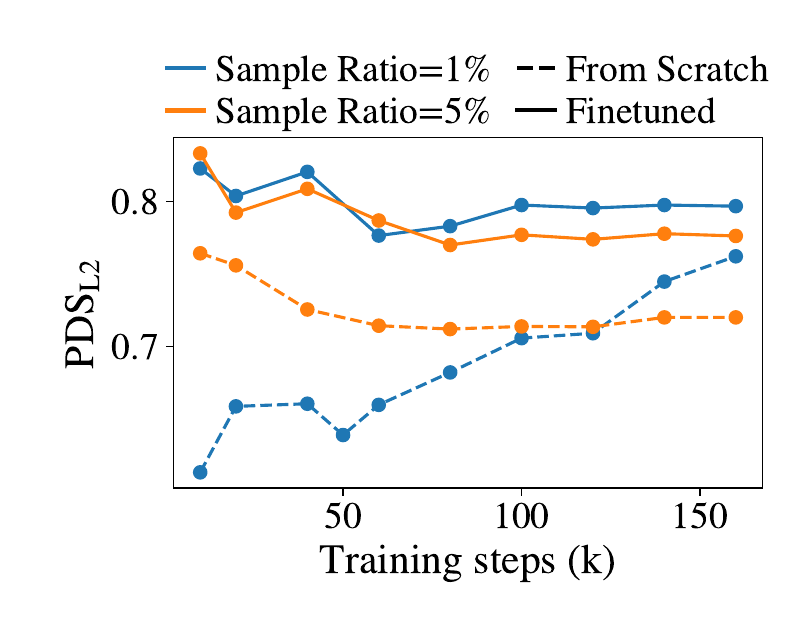}
            \vspace{-20pt}
            \caption{{PDS$_\text{L2}$.}}
         \end{subfigure}
         \begin{subfigure}[b]{0.245\textwidth}    
         \centering
         \includegraphics[width=\textwidth]{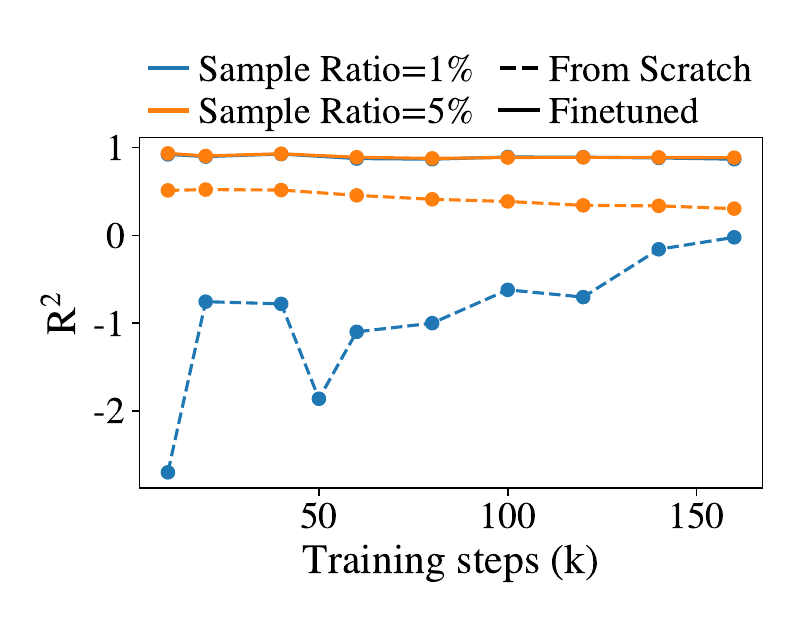}
            \vspace{-20pt}
            \caption{{R$^2$.}}
         \end{subfigure}
         
\caption{Performance on downsampled PBMC (sample ratio 1\% and 5\%) across training steps for diverse metrics.}
        \label{fig:app_fewshot}
    \end{minipage}
\end{figure}

\begin{figure}
    \begin{minipage}[t]{1.0\textwidth}
        \centering
        \begin{subfigure}[b]{0.245\textwidth}
        \includegraphics[width=\textwidth]{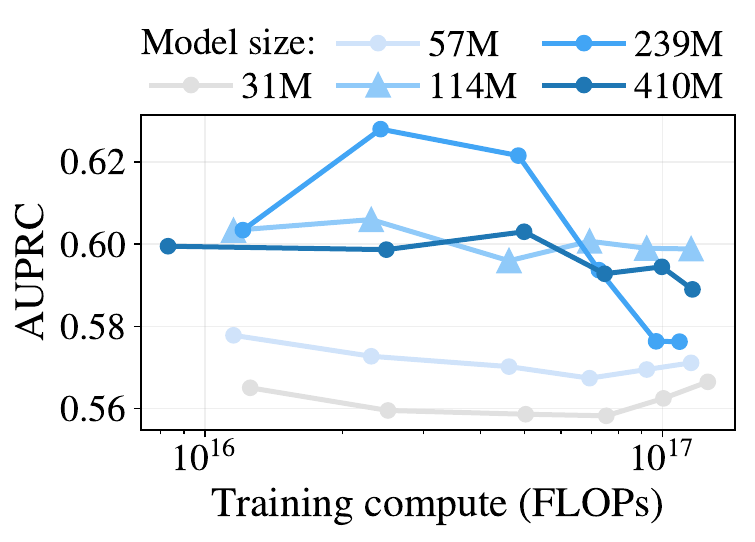}
            \caption{{AUPRC.}}
         \end{subfigure}
         \begin{subfigure}[b]{0.245\textwidth}            \includegraphics[width=\textwidth]{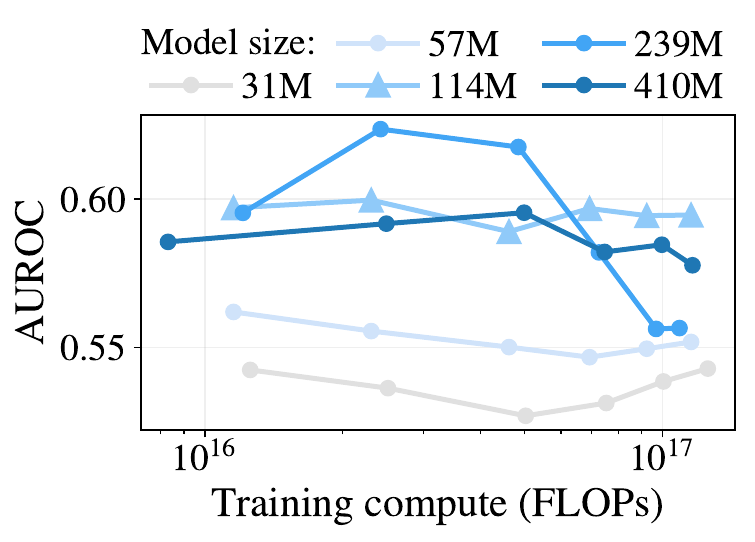}
            \caption{AUROC.}
         \end{subfigure}
         \begin{subfigure}[b]{0.245\textwidth}            \includegraphics[width=\textwidth]{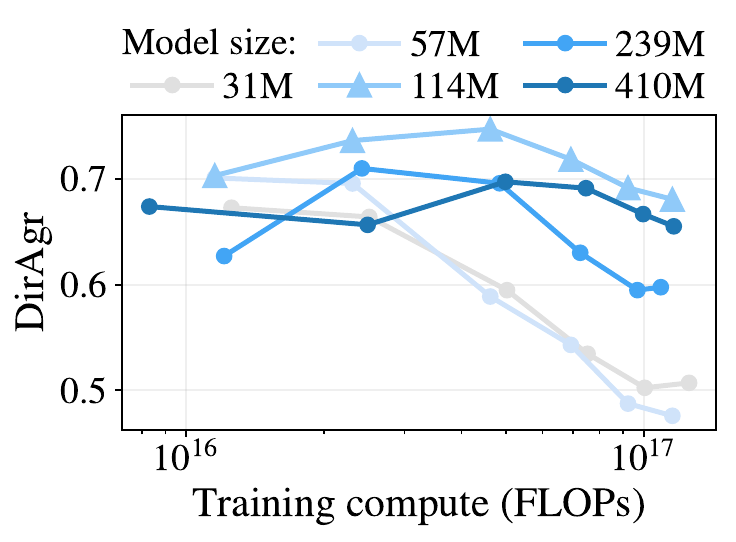}
            \caption{{DirAgr.}}
         \end{subfigure}
         \begin{subfigure}[b]{0.245\textwidth}            \includegraphics[width=\textwidth]{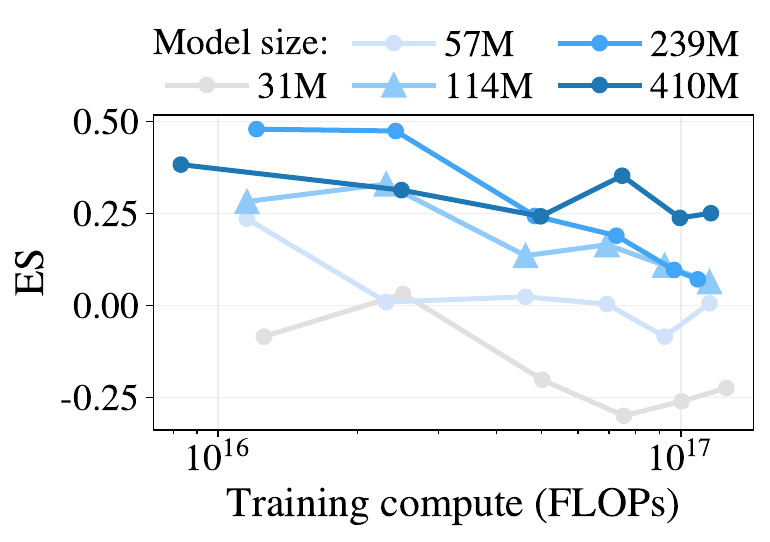}
            \caption{{ES.}}
         \end{subfigure}
         \begin{subfigure}[b]{0.245\textwidth}            \includegraphics[width=\textwidth]{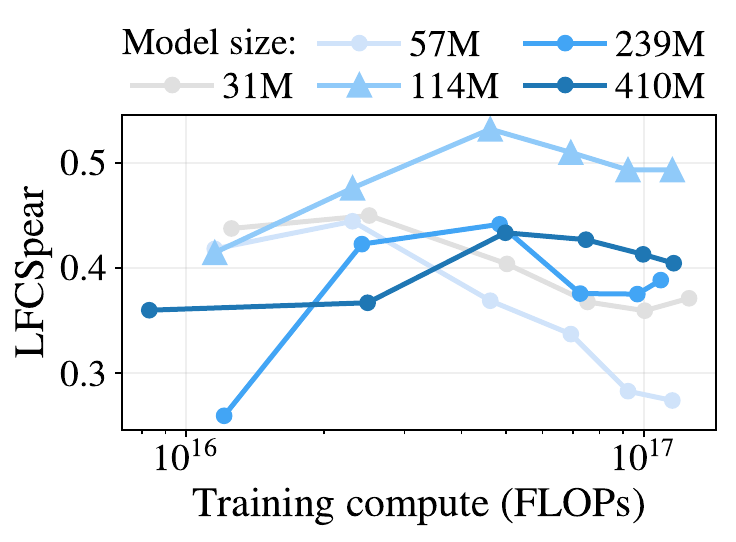}
            \caption{{LFCSpear.}}
         \end{subfigure}
         \begin{subfigure}[b]{0.245\textwidth}            \includegraphics[width=\textwidth]{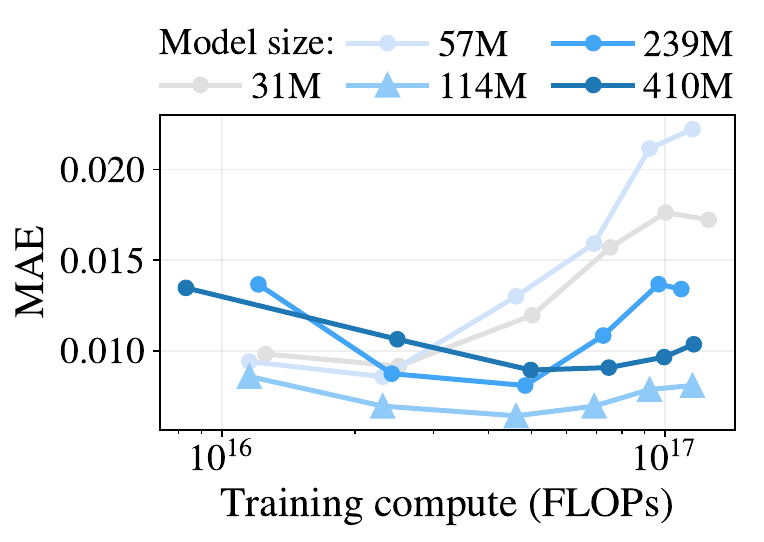}
            \caption{{MAE.}}
         \end{subfigure}
         \begin{subfigure}[b]{0.245\textwidth}            \includegraphics[width=\textwidth]{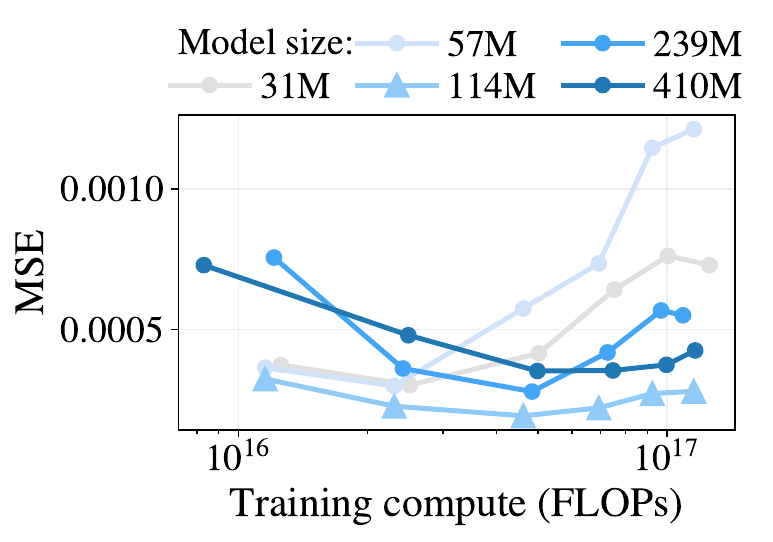}
            \caption{{MSE.}}
         \end{subfigure}
         \begin{subfigure}[b]{0.245\textwidth}            \includegraphics[width=\textwidth]{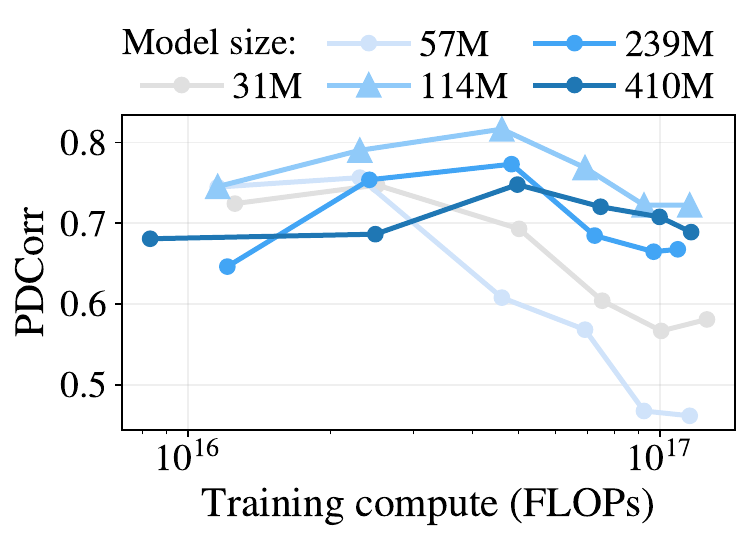}
            \caption{{PDCorr.}}
         \end{subfigure}
         \begin{subfigure}[b]{0.245\textwidth}            \includegraphics[width=\textwidth]{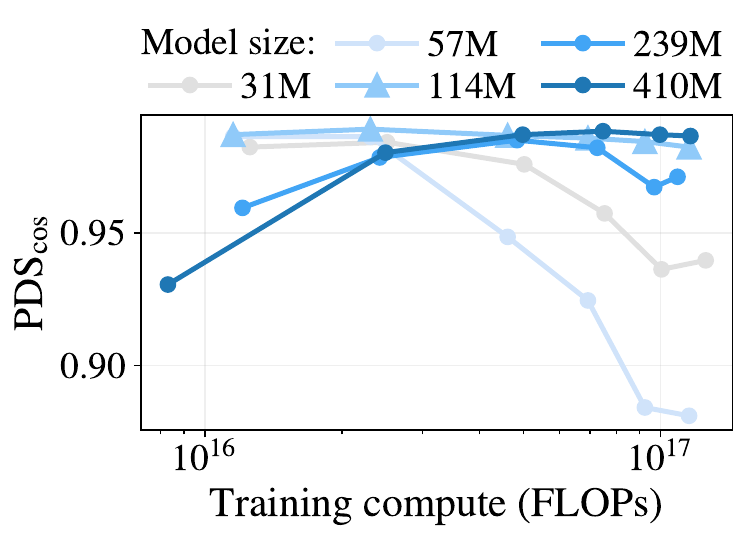}
            \caption{{PDS$_\text{cos}$.}}
         \end{subfigure}
         \begin{subfigure}[b]{0.245\textwidth}            \includegraphics[width=\textwidth]{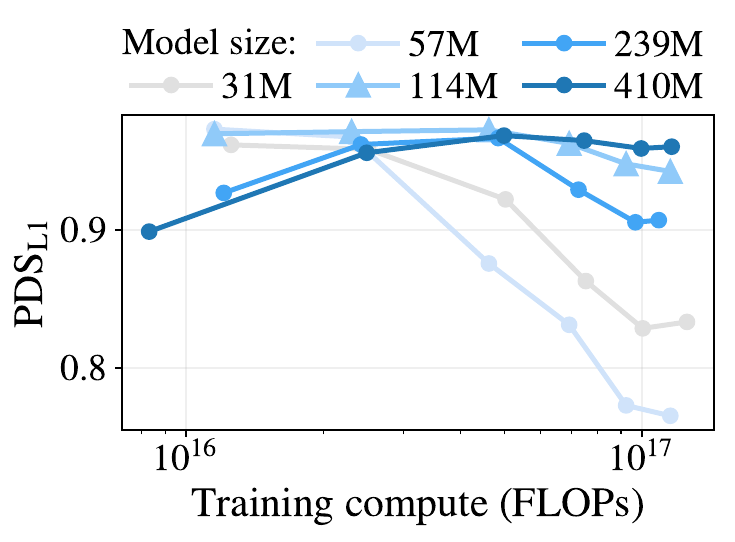}
            \caption{{PDS$_\text{L1}$.}}
         \end{subfigure}
         \begin{subfigure}[b]{0.245\textwidth}            \includegraphics[width=\textwidth]{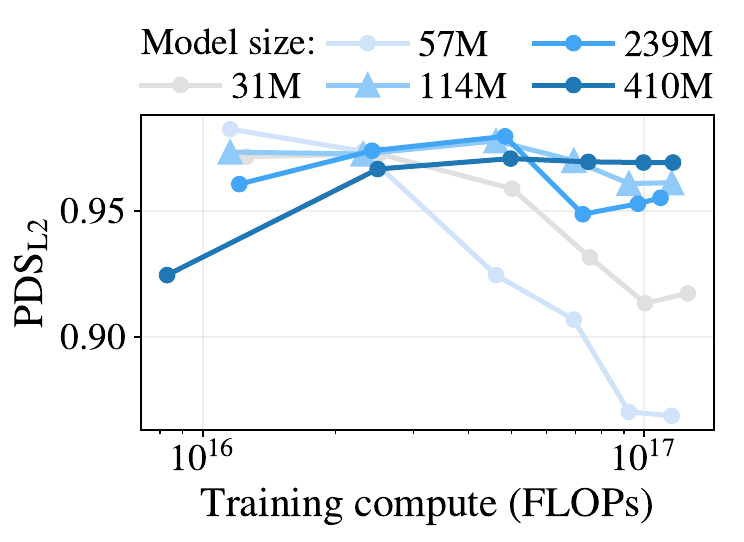}
            \caption{{PDS$_\text{L2}$.}}
         \end{subfigure}
         \begin{subfigure}[b]{0.245\textwidth}            \includegraphics[width=\textwidth]{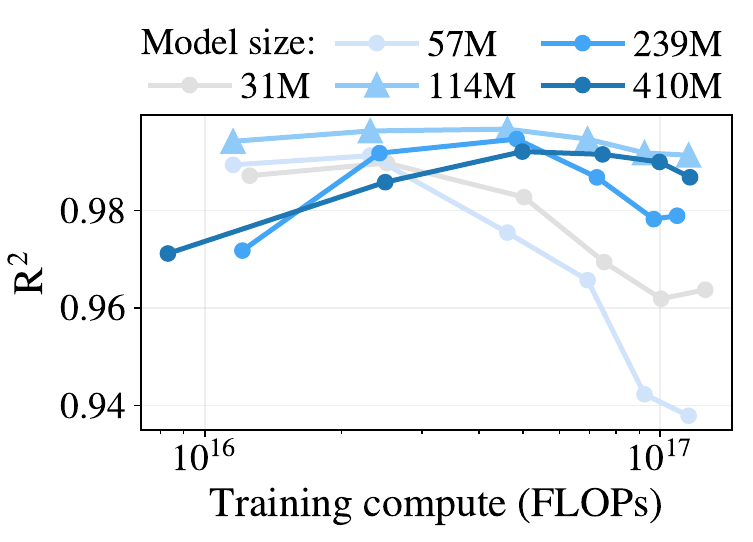}
            \caption{{R$^2$.}}
         \end{subfigure}
         \caption{Compute and model-size scaling of {\method&} on PBMC for diverse metrics.}
        \label{fig:app_scaling}
    \end{minipage}
\end{figure}




\end{document}